\newenvironment{Msg}[1]
  {\mdfsetup{
    frametitle={\colorbox{white}{\space \large #1\space}},
    innertopmargin=-3pt,
    innerbottommargin=7pt,
    innerrightmargin=7pt,
    innerleftmargin=7pt,
    frametitleaboveskip=-\ht\strutbox,
    frametitlealignment=\center,
    linewidth=1pt
    }
  \begin{mdframed}%
  }
{\end{mdframed}}
\newcommand{\R}{\mathbb{R}}
\newcommand{\E}{\mathbb{E}}
\newcommand{\N}{\mathcal{N}}
\newcommand{\vbrack}[1]{\langle #1\rangle}
\renewcommand{\S}{\mathcal{S}}
\newcommand{\D}{\mathcal{D}}
\newcommand{\B}{\mathcal{B}}
\newcommand{\Ecal}{\mathcal{E}}
\newcommand{\F}{\mathfrak{F}}
\newcommand{\1}{\mathds{1}}
\newcommand{\sign}{\mathrm{sign}}
\newcommand{\DD}{\mathbf{D}}
\newcommand{\Id}{\mathbf{I}}
\newcommand{\M}{\mathbf{M}}
\newcommand{\Mperp}{\mathbf{M}^{\perp}}
\newcommand{\poly}{\mathsf{poly}}
\newcommand{\polylog}{\mathsf{polylog}}
\newcommand{\Mcal}{\mathcal{M}}
\newcommand{\pos}{\mathsf{pos}}
\newcommand{\Nfr}{\mathfrak{N}}
\newcommand{\Bfr}{\mathfrak{B}}
\newcommand{\Sim}{\mathsf{Sim}}
\newcommand{\ReLU}{\mathsf{ReLU}}
\newcommand{\Obj}{\mathbf{Obj}}
\newcommand{\diag}{\mathbf{diag}}
\renewcommand{\Pr}{\mathbf{Pr}}
\newcommand{\StopGrad}{\mathsf{StopGrad}}
\newcommand{\RandomMask}{\mathsf{RandomMask}}
\newcommand{\myref}[2]{\hyperref[#1]{#2 \ref*{#1}}}
\algnewcommand{\Input}{\textbf{Input:} }
\newcounter{main}
\numberwithin{main}{section}
\newtheorem{theorem}[main]{Theorem}
\newtheorem{lemma}[main]{Lemma}
\newtheorem{induct}[main]{Induction Hypothesis}
\newtheorem{corollary}[main]{Corollary}
\theoremstyle{definition}
\newtheorem{definition}[main]{Definition}
\newtheorem{fact}[main]{Fact}
\theoremstyle{remark}
\newtheorem{remark}[main]{Remark}
\numberwithin{equation}{section}
\tikzstyle{arrow} = [thick,->,>=stealth]
\begin{document}

\title{Toward Understanding the Feature Learning Process of Self-supervised Contrastive Learning}
\author{
    Zixin Wen \\
    \url{zixinw@andrew.cmu.edu} \\
    UIBE Beijing
\and 
    Yuanzhi Li \\
    \url{yuanzhil@andrew.cmu.edu} \\
    Carnegie Mellon University 
}
\date{June 1, 2021\footnotetext{V1 appeared on June 1, 2021. V2 polished writing and added citations, V3 corrected related works. We would like to thank Zeyuan Allen-Zhu for many helpful suggestions on the experiments, and thank Qi Lei, Jason D. Lee for clarifying results of their paper.}}

\maketitle

\begin{abstract}
    How can neural networks trained by contrastive learning extract features from the unlabeled data? Why does contrastive learning usually need much stronger data augmentations than supervised learning to ensure good representations? These questions involve both the optimization and statistical aspects of deep learning, but can hardly be answered by the analysis of supervised learning, where the target functions are the highest pursuit. Indeed, in self-supervised learning, it is inevitable to relate to the optimization/generalization of neural networks to how they can encode the latent structures in the data, which we refer to as the \textit{feature learning process}.

    In this work, we formally study how contrastive learning learns the feature representations for neural networks by analyzing its feature learning process. We consider the case where our data are comprised of two types of features: the more semantically aligned sparse features which we want to learn from, and the other dense features we want to avoid. Theoretically, we prove that contrastive learning using \textbf{ReLU} networks provably learns the desired sparse features if proper augmentations are adopted. We present an underlying principle called \textbf{feature decoupling} to explain the effects of augmentations, where we theoretically characterize how augmentations can reduce the correlations of dense features between positive samples while keeping the correlations of sparse features intact, thereby forcing the neural networks to learn from the self-supervision of sparse features. Empirically, we verified that the feature decoupling principle matches the underlying mechanism of contrastive learning in practice.
\end{abstract}


\newpage

\section{Introduction}

Self-supervised learning \cite{devlin2019bert,mikolov2013efficient,sutskever2014sequence,jing2020self} has demonstrated its immense power in different areas of machine learning (e.g. BERT \citep{devlin2019bert} in natural language processing). Recently, it has been discovered that contrastive learning \cite{tian2019contrastive,he2020momentum,chen2020a,chen_big_2020,grill_bootstrap_2020,chen_exploring_2020}, one of the most typical forms of self-supervised learning, can indeed learn representations of image data that achieve superior performance in many downstream vision tasks. Moreover, as shown by the seminal work \cite{he2020momentum}, the learned feature representations can even outperform those learned by supervised learning in several downstream tasks. The remakable potential of contrastive learning methods poses challenges for researchers to understand and improve upon such simple but effective algorithms.

Contrastive learning in vision learns the feature representations by minimizing pretext task objectives similar to the cross-entropy loss used in supervised learning, where both the inputs and ``labels'' are derived from the unlabeled data, especially by using augmentations to create multiple views of the same image. The seminal paper \cite{Chen2020} has demonstrated the effects of stronger augmentations (comparing to supervised learning) for the improvement of feature quality. \cite{tian2020what} showed that as the augmentations become stronger, the quality of representations displayed a U-shaped curve. Such observations provided insights into the inner-workings of contrastive learning. But it remains unclear \emph{what has happened in the learning process} that renders augmentations necessary for successful contrastive learning.

Some recent works have been done to understand contrastive learning from theoretical perspective \citep{arora2019theoretical,wang2020understanding,tsai2020demystifying}. However, these works have not analyzed how \textbf{data augmentations} affect the \textbf{feature learning process} of \textbf{neural networks}, which we deem as crucial to understand how contrastive learning works in practice. We state the fundamental questions we want to address below, and provide tentative answers to all the questions by building theory on a simplified model that shares similar structures with real scenarios, and we provide some empirical evidence through experiments to verify the validity of our models.

\begin{Msg}{Fundamental Questions}
    1. How do \textbf{neural networks} trained by contrastive learning learn their feature representations \textbf{efficiently}, and are the representations similar to those learned in supervised learning?\\
    2. Why does contrastive learning {in deep learning} collapse in practice when no augmentation is used, and how do standard augmentations on the data help contrastive learning?
\end{Msg}

\subsection{Our Contributions}

\begin{figure*}[t!]
    \includegraphics[width=\textwidth]{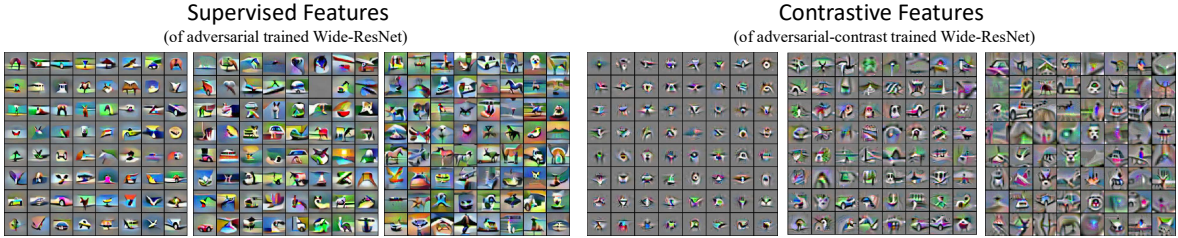}\caption{The difference between supervised features and contrastive features (in the higher layers of Wide-ResNet 34x5 over CIFAR10). While both features contain shapes of objects, the supervised features are more colorful than the contrastive features. (here both crop-resize and color distortion were used in contrastive learning, while no color distortion was used in supervised learning. The adversarial-contrast learning follows \cite{kim2020adversarial}). And we use the visualization technique in~\cite{allen-zhu2020feature}.} 
    \label{fig:contrastive vs supervised}
\end{figure*}

In this paper we directly analyze the \textbf{feature learning process} of contrastive learning for neural networks (i.e. learning the hidden layers of the neural network). Our results hold for certain data distributions based on \emph{sparse coding model}. Mathematically, we assume our input data are of the form \(x = \M z+\xi\) , where \(\M z\) is called the sparse signal such that \(\|z\|_0 = \widetilde{O}(1)\), and \(\xi\) is \textbf{the spurious dense noise}, where we simply assume that \(\xi\) follows from certain dense distributions (such that \(\mathrm{span}(\xi) \equiv \mathrm{span}(x)\)) with large norm (e.g., \(\|\xi\|_2 = \poly(d) \gg \|\M z\|_2 \approx \widetilde{O}(1)\)). Formal definition will be presented in \myref{sec:setup}{Section}, as we argue that sparse coding model is indeed a proper \emph{provisional} model to study the feature learning process of contrastive learning.

\paragraph{Theoretical results.}
Over our data distributions based on sparse coding model, when we perform contrastive learning by using stochastic gradient descent (SGD) to train a one-hidden-layer neural networks with ReLU activations:
\begin{enumerate}
    \item If no augmentation is applied to the data inputs, \textbf{the neural networks will learn feature representations that emphasize the spurious dense noise}, which can easily overwhelm the sparse signals.
    \item If \emph{natural} augmentation techniques  (in particular, the \(\RandomMask\) defined in \myref{def:random-mask}{Definition}) are applied to the training data, \textbf{the neural networks will avoid learning the features associated with dense noise but pick up the features on the sparse signals}. Such a difference of features brought by data augmentation is due to a principle we refer to as {\bfseries``feature decoupling''}. Moreover, these features can be learned \emph{efficiently} simply by doing a variant of Stochastic Gradient Descent (SGD) over the contrastive training objective (after data augmentations). 
    \item The features learned by neural networks via contrastive learning (with augmentations) is similar to the features learned via supervised learning (under sparse coding model). This claim holds as long as two requirements are satisfied: (1) The sparse signals in the data have not been corrupted by augmentations in contrastive learning; (2) The labels in supervised learning mostly depends on the sparse signals. 
\end{enumerate}
Therefore, our theory indicates that in our model, the success of contrastive learning of neural networks relies essentially on the data augmentations to remove the features associated with the spurious dense noise. We abstract this process into a principle below, which we show to hold in neural networks used in real-world settings as well. 

\begin{Msg}{Feature Decoupling}
    Augmentations in contrastive learning serve to \textbf{decouple the correlations of spurious features} between the representations of positive samples. Moreover, after the augmentations, the neural networks will ignore the decoupled features and learn from the similarities of features that are \emph{more resistant to data augmentations}.
\end{Msg}

We will prove that contrastive learning can successfully learn the desired sparse features using this principle. The intuitions of our proof will be present in \myref{sec:proof-intuition}{Section}.

\begin{figure*}[t!] 
    \centering
    \includegraphics[width=1.0\textwidth]{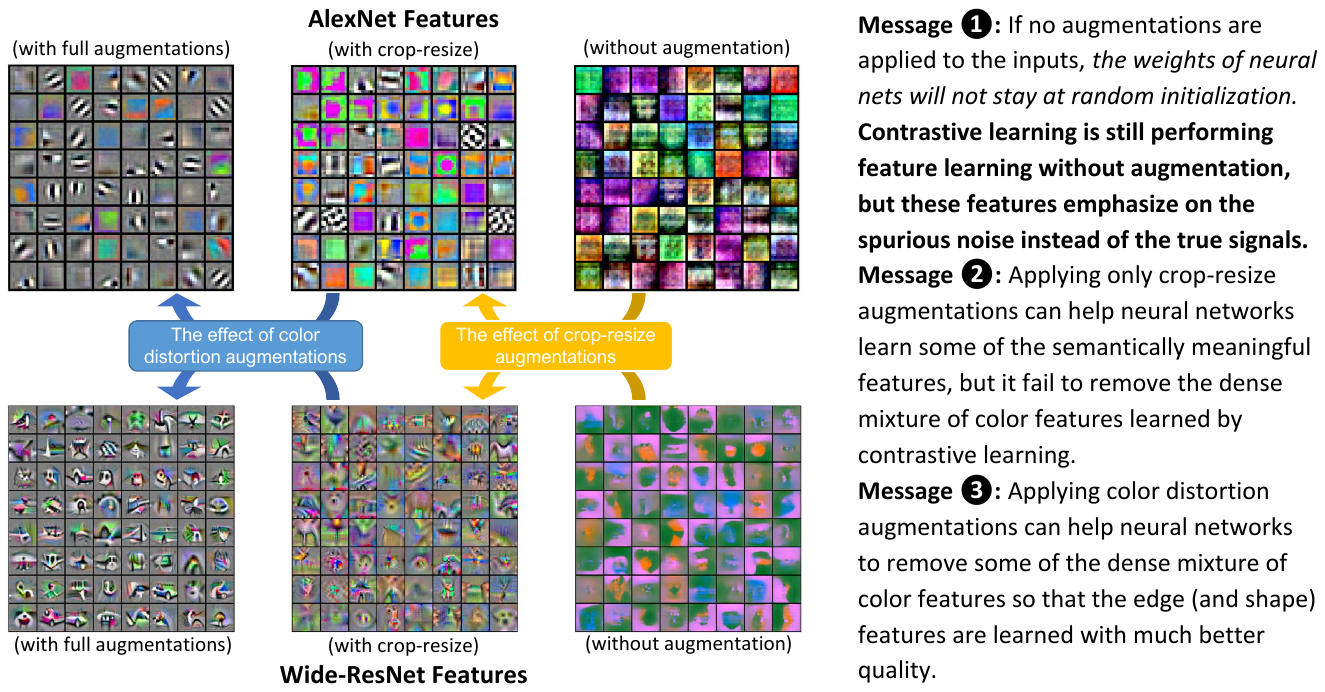}\caption{Evidence of \textbf{feature decoupling}: how do augmentations affect the features learned by neural networks in contrastive learning. The two different augmentations we have conducted here are \emph{color distortions} and \emph{crop-resize}.The color distortions we used consist of color jittering and random grayscale.}
    \label{fig:feature-decouple}
\end{figure*}
\paragraph{Empirical evidence of our theory.} Empirically, we conduct multiple experiments to justify our theoretical results, and the results indeed matches our theory. We show in contrastive learning: 
\begin{itemize}
    \item \textbf{When no proper augmentation is applied to the data, the neural network will learn features with dense patterns.} As shown in \myref{fig:feature-decouple}{Figure}, \myref{fig:tsne}{Figure} and \myref{fig:sparsity}{Figure}: If no augmentations are used, the learned features are completely meaningless and the representations are dense; If only crop-resize augmentations are used, then the mixture of color features (which also generate dense firing patterns) will remain in the neural network and prevent further separation of clusters.
    \item {\textbf{Standard augmentations removes features associated with dense patterns, and the remaining features do exhibit sparse firing pattern.}} As shown in \myref{fig:tsne}{Figure} and \myref{fig:sparsity}{Figure}, if no (suitable) augmentations are applied, the neural networks will learn dense representations of image data. After the augmentations, neural networks will successfully form separable clusters of representations for image data, and the \emph{learned features  indeed emphasizes sparse signals}.
    \item \textbf{The features learned in contrastive learning resemble the features learned in supervised learning}. As shown in \myref{fig:contrastive vs supervised}{Figure}, the shape features (filters that exhibit shape images) of the higher layer of Wide-ResNet via supervised learning are similar to those learned in contrastive learning. However, color features learned in supervised learning are much more than those in contrastive learning. This verifies our theoretical results that features preserved under augmentations will be learned by both contrastive and supervised learning.
    
\end{itemize}

\subsection{Related Work}

\begin{figure*}[t!] 
    \centering
    \includegraphics[width=0.9\textwidth]{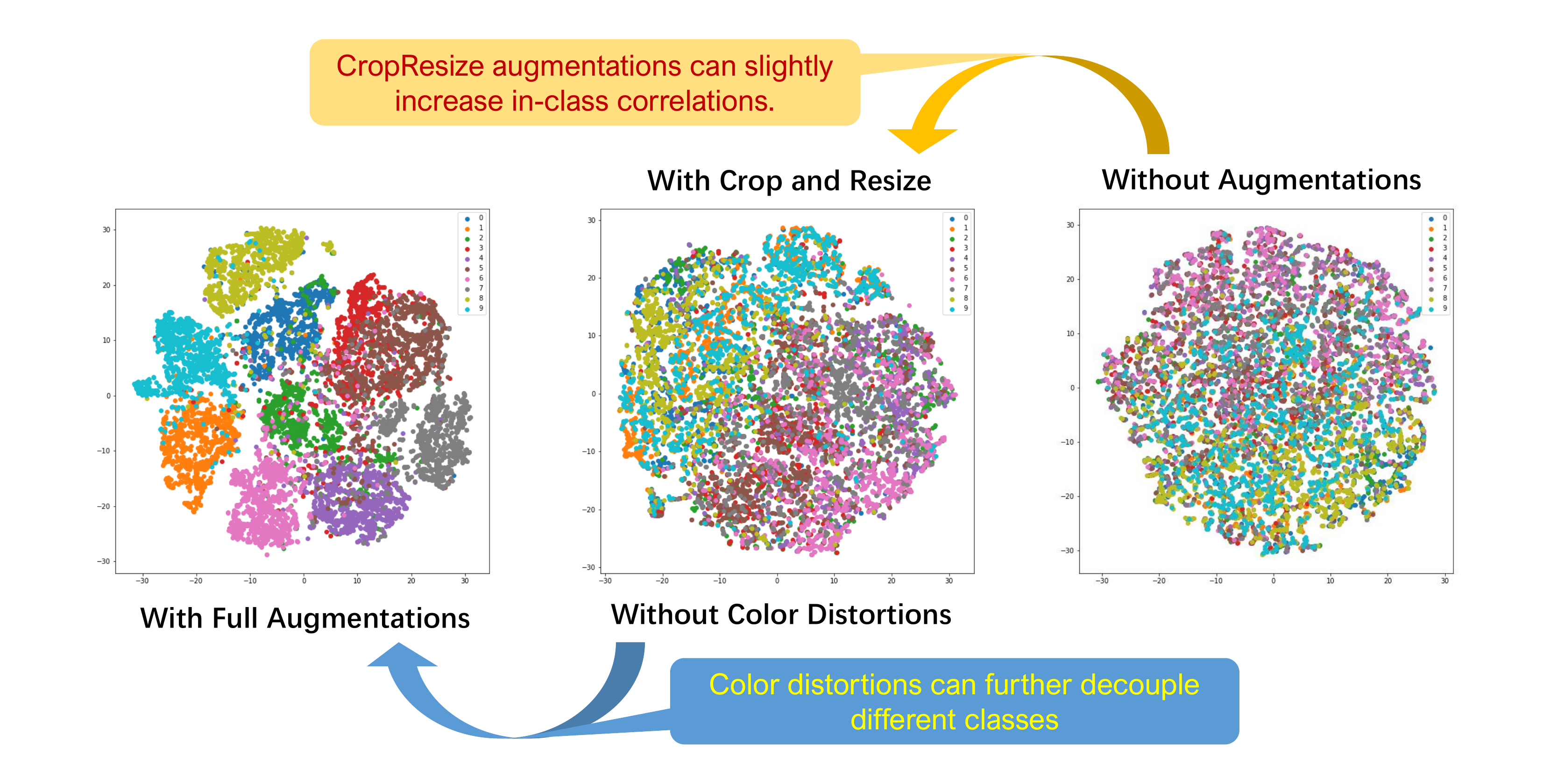}\caption{Evidence supporting our theoretical framework: the effects of augmentations on the learned representations of Wide-ResNet 34x5 over CIFAR10 visualized via t-SNE. The differences bewteen features learned under different augmentations shows that the neural networks will indeed learn \textbf{dense representations} if augmentation is not powerful enough.}
    \label{fig:tsne}
\end{figure*}

\paragraph{Self-supervised learning.} Self/un-supervised representation learning has a long history in the literature. In natural language processing (NLP), self-supervised learning has been the major approach \cite{mikolov2013efficient,devlin2019bert}. The initial works \cite{Carreira-Perpinan2005,Smith2005,gutmann2012noise} of contrastive learning focus on learning the hidden latent variables of the data. Later the attempts to use self-supervised to help pretraining brought the contrastive learning to visual feature learning \citep{oord2018representation,tian2019contrastive,he2020momentum,chen2020a,chen_big_2020,grill_bootstrap_2020,chen_exploring_2020}. On the theoretical side, there has been a lot of papers trying to understand un/self-supervised learning \citep{Coates2011AnAO,radhakrishnan2018memorization,arora2019theoretical,nguyen2019benefits,lee_predicting_2020,wang2020understanding,tsai2020demystifying,tian2020what,tosh2020contrastive,tosh2021contrastive,haochen2021provable,haochen2021provable}. For contrastive learning, \cite{arora2019theoretical} assume that different positive samples are independently drawn from the same latent class, which can be deemed as supervised learning. \cite{wang2020understanding} pointed out the tradeoff between alignment and uniformity. \cite{tsai2020demystifying,tian2020what} proposed to analyze contrastive learning via information-theoretic techniques. \cite{lee_predicting_2020} analyzed the optimal solution of a generative self-supervised pretext task, and \cite{tosh2021contrastive} analyzed contrastive loss from the same perspective. \cite{haochen2021provable} analyzed a spectral version of contrastive loss and analyzed its statistical behaviors. However, the above theoretical works do not study \emph{how features are learned} by \textbf{neural networks} and \emph{how augmentations affect the learned features}, which are essential to understand contrastive learning in practice. \cite{tian2020understanding} tries to analyze the learning process, but their augmentation can fix the class-related node and resample all latent nodes in their generative models, reducing the problem to supervised learning. 

\paragraph{Optimization theory of neural networks.}
There are many prior works on the supervised learning of neural networks. The works \cite{li2017convergence,brutzkus2017globally,ge2018learning,soltanolkotabi2017learning,li2018algorithmic} focus on the scenarios where data inputs are sampled from Gaussian distributions. \emph{We consider in our paper the Gaussian part of the data to be spurious} and use augmentation to prevent learning from them. Our approach is also fundamentally different from the \emph{neural tangent kernel} (NTK) point of view \cite{jacot2018neural,Li2018,du2019gradient,allen-zhu2019a,allen-zhu2019learning,allen-zhu2019on,chen2019much}. The NTK approach relies first order taylor-expansion with extreme over-parameterization, and cannot explain the \textbf{feature learning process} of neural networks, because it is merely linear regression over \emph{prescribed feature map}. Some works consider the regimes beyond NTK \citep{allen-zhu2019what,allen-zhu2020backward,allen-zhu2020feature,allen2020towards,li2020learning,bai_beyond_2020,allen2021forward}, which shedded insights to the innerworkings of neural networks in practice.

\section{Problem Setup}\label{sec:setup}

\begin{figure*}[t!]\centering
    {\includegraphics[width = \textwidth]{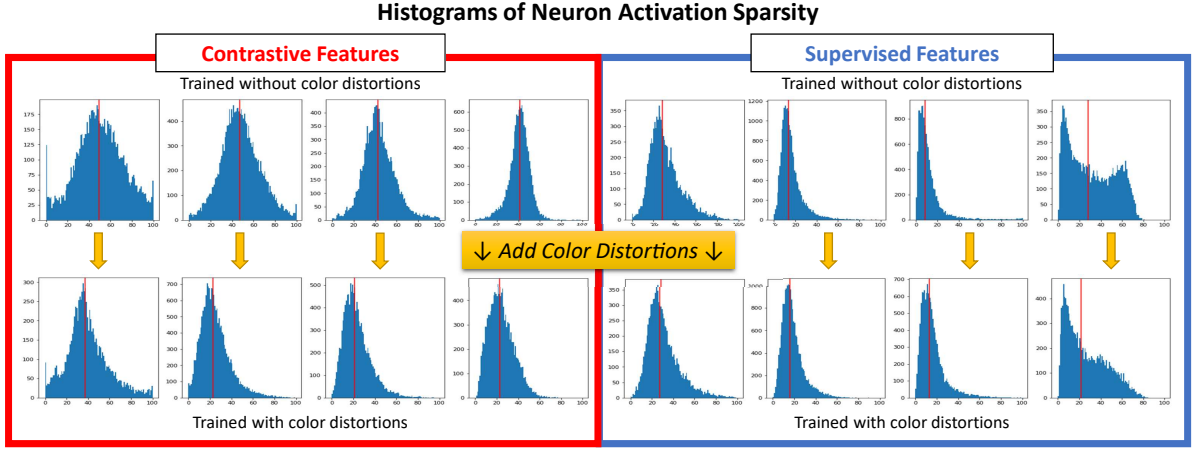}}\caption{Another evidence supporting our theoretical framework. After adding the color distortion to augmentation, the  neurons of AlexNet (2nd to 5th layer) exhibit sparser firing patterns over input images of CIFAR10. Meanwhile the networks obtained from supervised learning always have sparse activations regardless of augmentations. These observations indicate that (1). In contrastive learning, augmentations can indeed help neural nets focus on the sparse signals. (2). Sparse signals are indeed more important for the downstream tasks (such as supervised classification).}
    \label{fig:sparsity}
\end{figure*}

\paragraph{Notations.} We use \(O,\Omega,\Theta\) notations to hide universal constants with respect to \(d\) and \(\widetilde{O},\widetilde{\Omega},\widetilde{\Theta}\) notations to hide polylogarithmic factors of \(d\). We use the notations \(\poly(d),\ \polylog(d)\) to represent constant degree polynomials of \(d\) or \(\log d\). We use \([d]\) as a shorthand for the index set \(\{1,\dots, d\}\). For a matrix \(\M\in\R^{d'\times d}\), we use \(\M_j\), where \(j\in[d]\), to denote its \(j\)-th column. We say an event happens with high probability (or w.h.p. for short) if the event happens with probability at least \(1 - e^{-\Omega(\log^2 d)}\). We use \(\N(\mu,\Sigma)\) to denote standard normal distribution in with mean \(\mu\) and covariance matrix \(\Sigma\).

\subsection{Data Distribution.} 
We present our sparse coding model below, which form the basis of our analysis.

\begin{definition}[sparse coding model (\(\D_x, \D_z, \D_{\xi}\))]\label{def:sparse-coding}
    We assume our raw data samples $x \in \mathbb{R}^{d_1}$ are generated i.i.d. from distribution \(\D_x\) in the following form:
    \begin{displaymath}
        x = \M z + \xi \sim \D_x,\quad z \sim \D_z,\quad \xi \sim \D_{\xi} = \N(\mathbf{0},\sigma_{\xi}^2\Id_{d_1})
    \end{displaymath}
    Where $z \in \mathbb{R}^d$. We refer to \( z\) as the \textbf{sparse signal} and \(\xi\) as the \textbf{spurious dense noise}.  We assume \(d_1 = \poly(d)\) for simplicity. We have the following assumptions on \(\M, z, \xi\) respectively:\footnote{ The choice of \(\Pr(|z_j|=1) = \Theta(\frac{\log\log d}{d})\) instead of \(\Theta(\frac{1}{d})\) here is to avoid the scenario where \(z \) could be zero with probability \(\geq \Omega(1)\). One can also assume the noise vector \(\xi\) to be non-spherical Gaussian or has certain directions with larger variance than \(\Theta(\sqrt{\log d}/d)\). Although our theory tolerates a wider range of these parameters, we choose to present the simplest setting.}
    
    \begin{itemize}
        \item The dictionary matrix \(\M = [\M_1,\dots,\M_d] \in \R^{d_1\times d}\) is a column-orthonormal matrix, and satisfies \(\|\M_j\|_{\infty} \leq \widetilde{O}\big(\frac{1}{\sqrt{d_1}}\big)\) for all \(j \in [d]\).
        \item The sparse latent variable \(z = (z_1,\dots, z_d)^{\top} \in \{-1,0,1\}^d \) is sampled from \(\D_z\), we assume all \(z_j\)'s are symmetric around zero, satisfying \(\Pr(|z_j| = 1) = \Theta\big(\frac{\log\log d}{d}\big)\), and are identically distributed and independent across all \(j \in [d]\).
        \item For the spurious dense noise \(\xi\sim \N(\mathbf{0},\sigma_{\xi}^2\Id_{d_1})\), we assume its variance \(\sigma_{\xi}^2 = \Theta(\frac{\sqrt{\log d}}{d})\).
    \end{itemize}

\end{definition}

\paragraph{Why sparse coding model.} Sparse coding model was first proposed by neuroscientists to model human visual systems \citep{olshausen1997sparse,olshausen2004sparse}, where they provided experimental evidence that sparse codes can produce coding matrices for image patches that resemble known features in certain portion of the visual cortex. It has been further studied by \cite{Foeldiak1998,vinje2000sparse,olshausen2004sparse,protter2009image,yang2009linear,mairal2014sparse} to model images based on the sparse occurences of objects. For the natural language data, sparse code is also found to be helpful in modelling the polysemy of words \citep{arora2018linear}. Thus we believe our setting share some similar structures with practical scenarios. 

\paragraph{Why sparse signals are more favorable than the dense signal.} Theoretically, we argue that sparse signals are more favorable as we can see from the properties of our sparse signals \(\M z\) and dense signals \(\xi\):
\begin{enumerate}
    \item \textbf{The significance of sparse signal.} Since \(\sigma_{\xi}^2 = \Theta\big(\frac{\sqrt{\log d}}{d}\big)\), the \(\ell_2\)-norm of \(\xi\) becomes \(\|\xi\|_2^2 \geq \Omega(\poly(d)) \gg \|\M z\|_2\) w.h.p. However, whenever there is one \(z_j\neq 0\), we have \(|\vbrack{\M z,\M_j}| \geq \Omega(1)\) while \(|\vbrack{\xi,\M_j}| \leq \widetilde{O}(\frac{1}{\sqrt{d}})\) with high probability. {This indicates that even if the dense signal is extremely large in norm, it cannot corrupt the sparse signal.} 
    \item \textbf{The individuality of dense signal.} For each \(j \in [d]\), the sparse feature \(\pm\M_j\) are shared by at least \(\widetilde{\Omega}(\frac{1}{d})\) of the population. However, for polynomially many independent dense signal \(\xi_i\), with high probability we have \(|\left\langle\frac{\xi_i}{\|\xi_i\|_2},\frac{\xi_j}{\|\xi_j\|_2}\right\rangle| \leq \widetilde{O}(\frac{1}{\poly(d)})\) for any \(i \neq j\), which shows that the dense signal \(\xi\) is in some sense \textbf{``individual to each sample"}. \textbf{This also suggests that any representations of the dense signal can hardly form separable clusters other than isolated points.}
\end{enumerate}

\subsection{Learner Network and Contrastive Learning Algorithm}

We use a single-layer neural net \(f:\R^{d_1}\to \R^{m}\) with ReLU activation as our contrastive learner, where \(m\) is the number of neurons. More precisely, it is defined as follows:
\begin{align*}
    f(x) &= (h_{1}(x),\dots,h_{m}(x))^\top \in \R^m,&
    h_{i}(x) &= \ReLU(\vbrack{w_i,x} - b_i) - \ReLU(-\vbrack{w_i,x} - b_i)
\end{align*}
Such activation function \(h_i\) is a symmetrized version of \(\ReLU\) activation. We initialize the parameters by \(w_i^{(0)}\sim\N(0,\sigma_0^2\Id_{d_1})\) and \(b_i^{(0)} = 0\), where \(\sigma_0^2 = \Theta(\frac{1}{d_1 \poly(d)})\) is small (and also theoretically friendly). Corresponding to the two types of signals in \myref{def:sparse-coding}{Definition}, \textbf{we call the learned weights of neural networks \(\{w_i\}_{i \in [m]} \) ``features'', and we expand the weight of a neuron as} \[ \textstyle w_i = \sum_{j\in[d]} \vbrack{w_i,\M_j}\M_j + \sum_{j\in[d_1]\setminus [d]}\vbrack{w_i,\M^{\perp}_j}\M^{\perp}_j \] 
where we name the (unit-norm) directions \(\M_j\) and \(\M^{\perp}_j\) as follows:
\begin{itemize}
    \item We call \(\M = [\M_j]_{j\in[d]}\) the \textbf{sparse features}, which is the features associated with our sparse signals \(\M z\). These are the desired features we want our learner network to learn.
    \item We call \(\M^{\perp} = [\M^{\perp}_j]_{j\in[d_1]\setminus [d]} \) (the orthogonal complement of \(\M\)) the \textbf{spurious dense features}, which is associated with the dense signal \(\xi\) only. These are the undesired features for our learner.
\end{itemize}

Our contrastive loss function is based on the similarity measure defined as follows: let \(x\) and \(x'\) be two samples in \(\R^{d_1}\), and \(f:\R^{d_1}\to \R^d\) be a feature map, the similarity of the representations of \(x\) and \(x'\) is defined as 
\begin{equation}\label{eqdef:sim-measure}
    \Sim_{f}(x,x') := \vbrack{f(x), \StopGrad(f(x'))}
\end{equation}
The \(\StopGrad(\cdot)\) operator here means that we do not compute its gradient in optimization, which is inspired by recent works \cite{grill_bootstrap_2020,chen_exploring_2020}. Below we present the definition of contrastive loss.

\begin{definition}[Contrastive loss function]
    Given a pair of positive data samples \(x_p,\, x_p'\) and a batch of negative data samples \(\Nfr = \{x_{n,s}\}_{s\in[\Nfr]}\), letting \(\tau\) be the temperature parameter, and denoting \(\mathfrak{B} = \{x_p'\}\cup\Nfr\), the contrastive loss is defined as\footnote{Our contrastive loss \eqref{eqdef:contrastive-loss} here uses the unnormalized representations instead of the normalized ones, which is simpler to analyze theoretically. As shown in \cite{chen2020a}, contrastive learning using unnormalized representation can also achieve meaningful (more than 57\%) ImageNet top-1 accuracy in linear evaluation of the learned representations.} 
\begin{align}\label{eqdef:contrastive-loss}
    \mathcal{L}(f, x_p,x_p',\Nfr) :=  - \tau\log\left( \frac{e^{\Sim_f(x_p,x_p')/\tau}}{\sum_{x \in \Bfr} e^{\Sim_f(x_p,x)/\tau}}\right)
\end{align}
\end{definition}

Nevertheless, as shown by our experiments (see \myref{fig:feature-decouple}{Figure} or \myref{fig:tsne}{Figure}), the success of contrastive learning rely on the data augmentations adopted in generating the positive samples. We present our augmentation method \(\mathsf{RandomMask}\) below, which is an analog of the random cropping data augmentation used in practice. 

\begin{definition}[\(\RandomMask\) and \(\D_{\DD}\)] \label{def:random-mask} 
We first define a distribution \(\D_{\DD}\) over the space \(\R^{d_1\times d_1}\) of diagonal matrices as follows: let \(\DD = \diag(\DD_{\ell,\ell})_{\ell\in[d_1]} \sim \D_{\DD}\) be a diagonal matrix with $\{0, 1\}$ entries, its diagonal entries \(\DD_{\ell,\ell}\) are sampled from \(\mathrm{Bernoulli}\big(\frac{1}{2}\big)\) independently. Now given a positive sample \(x_p \sim \D_x\), we generate \(\DD \sim \D_{\DD}\), and then apply \(\DD\) to generate \(x_p^+\) and \(x_p^{++}\) as follows:
\begin{displaymath}
    x_p^+ := 2\DD x_p, \qquad x_p^{++} := 2(\Id - \DD)x_p
\end{displaymath}
\end{definition}

\begin{remark}
    We do not apply any augmentation to our negative samples for simplicity of theory. And also we point out that adding such augmentations do not reveal any further insights, since we do not expect the augmentation to decouple any correlations other than that between positive samples. Nevertheless our theory can easily adapt to the setting where augmentations are applied to all input data.
\end{remark}

\paragraph{Intuitions behind the \(\RandomMask\) augmentation.} Intuitively, the \(\RandomMask\) data augmentation simply masks out roughly a half of the coordinates in the data. The contrastive learning objective asks to learn features that can match \emph{two disjoint set of the coordinates} of given data points. Suppose we can maintain the correlations of desired signals between the disjoint coordinates and remove the undesired correlations, then we can force the algorithm to learn from the desired signals. We will discuss the effects of augmentations with more detail in \myref{sec:proof-intuition}{Section}.

\paragraph{Significance of our analysis on the data augmentations.} Our  analysis on the data augmentation are fundamentally different from those in \cite{tsai2020demystifying,tian2020understanding,wei_theoretical_2020,lee_predicting_2020}. In \cite{tsai2020demystifying,tian2020understanding}, they argued their data augmentations can change the latent variables unretaled to the downstream tasks, while real-life augmentations can only affect the observables, and cannot identify which latents are the task-specific ones. \cite{wei_theoretical_2020} assumed their augmentations are only picking data points inside a small neighborhood of the original data (in the observable space), which is also untrue in practice. Indeed, common augmentations such as crop-resize and color distortions can considerably change the data, making it very distant to the original data in the observable space. Our analysis of \(\RandomMask\) makes a step toward understanding realistic data augmentations in deep learning.

\paragraph{Training algorithm using SGD.} We consider two cases: training with augmentation and without augmentation: 
\begin{itemize}
    \item \textbf{With augmentations.} We perform stochastic gradient descent on the following objectives: let \(f_t\) be the contrastive learner at each iterations \(t \geq 0\), the objectives is defined as follows:
    \begin{align*}
        &L(f_t) := \underset{x_p^+, x_p^{++},\Nfr}\E\left[\mathcal{L}(f_t,x_p^+,x_{p}^{++},\Nfr)\right], \\
        &\Obj(f_t) := L(f_t) + \frac{\lambda}{2} \sum_{i\in[m]}\|w_i^{(t)}\|_2^2
    \end{align*}
    where \(\lambda \in [\frac{1}{d^{1.001}}, \frac{1}{d^{1.499}}]\) is the regularization parameter, \(L(f_t)\) is the population loss and \(x_p,\{x_{n,s}\}_{\Nfr}\) are sampled from \(\D_x\), \(x_p^+,x_p^{++}\) are obtained by applying \(\RandomMask\) to \(x_p\). At each iteration \(t\), let \(\eta = \frac{1}{\poly(d)}\) be the learning rate, we update as:
    \begin{displaymath}
        w_i^{(t+1)} \gets w_i^{(t)} - \eta \nabla_{w_i}\Obj(f_t)
    \end{displaymath}
    \item \textbf{Without augmentations.} We perform stochastic gradient descent on the following modified objectives \(\Obj_{\textsf{NA}}(f_t)\):
    \begin{align*}
        &L_{\mathsf{NA}}(f_t) := \underset{x_p,{\Nfr}}\E\left[\mathcal{L}(f_t,x_p,x_{p},\Nfr)\right],\\
        & \Obj_{\mathsf{NA}}(f_t) := L_{\mathsf{NA}}(f_t) + \frac{\lambda}{2} \sum_{i\in[m]}\|w_i^{(t)}\|_2^2
    \end{align*} 
    where \(\lambda \leq O(1/d)\) can be arbitrary. The learning rate \(\eta \leq o(1)\) can also be arbitrary. We update as:
    \begin{displaymath}
        w_i^{(t+1)} \gets w_i^{(t)} - \eta \nabla_{w_i}\Obj_{\mathsf{NA}} (f_t)
    \end{displaymath}
\end{itemize}

We manually tune bias\footnote{In fact, when trained without augmentations, the biases can be tuned arbitrarily as long as the neurons are not killed. It will not affect our results.} \(b_i^{(t)}\) during the training process as follows: let \(T_1 = \Theta\left(\frac{d \log d_1}{\eta \log\log d}\right)\) be the iteration when all \(\|w_i^{(0)}\|_2 \leq \frac{c_0}{1000}\|w_i^{(t)}\|_2 \). At \(t = T_1\), we reset the bias \(b_i^{(t)} = \sqrt{\frac{2\log d}{d}}\|w_i^{(t)}\|_2\) and update by \(b_i^{(t+1)} = b_i^{(t)} \left(1 + \eta_{b,t}\right)\), where \(\eta_{b,t} = \max\{\frac{\eta}{d}, \frac{\|w_i^{(t+1)}\|_2}{\|w_i^{(t)}\|_2}-1\}\) if \( b_i^{(t)} \leq \frac{\polylog(d)}{\sqrt{d}}\).\footnote{We manually increase the bias after the weights are updated in order to simplify the proof. It can be verified that the biases will indeed increase over synthetic sparse coding data. More importantly, in synthetic experiments, the bias will decrease if no augmentation is used.}

\section{Main Results} \label{sec:main-results}

We now state the main theorems of this paper in our setting. We argue that contrastive learning objective learns completely different features with/without data augmentation. \textbf{Moreover, to further illustrate the how these learned features are different with/without data augmentation, we also consider two simple downstream tasks to evaluate the performance of contrastive learning}. We argue that using a linear function taking the learned representation as input to perform these tasks can be more efficient than using raw inputs, it should be considered as successful representation learning. 

\begin{definition}[downstream tasks]\label{def:downstream-task}
    We consider two simple supervised tasks, regression and classification, based on the label functions defined below:
    \begin{itemize}
        \item Regression: For each \(x = \M z+ \xi \sim \D_x\), we define its label \(y = \vbrack{w^{\star}, z}\), where \(w^{\star} \in \R^{d}\).
        \item Classification: For each \(x = \M z + \xi \sim \D_x\), we define \(y = \sign(\vbrack{w^{\star},z})\), where \(w^{\star} \in \R^{d}\).
    \end{itemize}
    where in both cases we assume \(w^{\star}\) satisfies \(|w^{\star}_j| = \Theta(1) \) for all \(j \in [d]\).
\end{definition}

Given these downstream tasks, our goal of representations learning is to obtain suitable feature representations and train a linear classifier over them. Specifically, let \(f(\cdot)\) be the obtained representation map, we use optimization tool\footnote{Since the downstream learning tasks only involve linear learners on convex objectives, for simplicity, we directly argue the properties of the minimizers for these downstream training objectives. } to find \(w^{*}\) such that 
\begin{displaymath}
    w^* = \mathrm{arg}\min_{w \in \R^{m}} \E [ \widetilde{\mathcal{L}}(w^{\top}f(x),y)]
\end{displaymath}
where \(\widetilde{\mathcal{L}}(\cdot,\cdot)\) is the loss function for the downstream tasks considered: For regression, it is the \(\ell_2\) loss \(\widetilde{\mathcal{L}}(\hat{y},y) = (\hat{y}-y)^2\); For classification, it is the logistic loss \(\widetilde{\mathcal{L}}(\hat{y},y) = \log(1 + e^{-\hat{y}y})\). It should be noted that these tasks can be done by neural networks via supervised learning as shown in \cite{allen-zhu2020feature}, where the sample complexities have not been calculated exactly. However, using linear regression over the input \(x\) to find \(\M\) requires sample complexity at least \(\sqrt{d_1}\sigma_{\xi} \gg \poly(d)\), which under our setting can be as large as \(d^{100}\), much larger than those of linear regression over contrastive features from our results. Furthermore, even if one can locate the desired features \(\M\), the noise level \(\sigma_{\xi}^2 = \Theta(\frac{\sqrt{\log d}}{d})\) is still much larger than the signal size \(\E[z_j^2] = \Theta(\frac{\log\log d}{d})\), thus linear models will fail with constant probability.

\subsection{Contrastive Learning Without Augmentations}

We present our theorem for the learned features without using any augmentations.

\begin{theorem}[Contrastive features learned without augmentation]
    Let \(f_t^{\mathsf{NA}} \) be the neural network trained by conrtastive learning without any data augmentations, and using \(|\Nfr| = \poly(d)\) many negative samples, we have objective guarantees \(L_{\mathsf{NA}}(f_t^{\mathsf{NA}}) = o(1)\) for any \(t \geq \frac{\poly(d)}{\eta}\). Moreover, given a data sample \(x = \M z +\xi \sim \D_x\), with high probability it holds:
    \begin{displaymath}
         \left\vbrack{\frac{f_t^{\mathsf{NA}}(x)}{\|f_t^{\mathsf{NA}}(x)\|_2},\frac{f_t^{\mathsf{NA}}(\xi)}{\|f_t^{\mathsf{NA}}(\xi)\|_2} \right} \geq 1 - \widetilde{O}\left(\frac{1}{\poly(d)}\right)
    \end{displaymath}
\end{theorem}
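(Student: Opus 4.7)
The plan is to exploit the fact that without augmentation, the positive pair is literally the same sample $x_p$, so the positive similarity reduces to $\|f(x_p)\|_2^2$, and the optimal strategy for SGD is to make each $f(x_p)$ as large as possible in a direction that is essentially orthogonal to every other sample's representation. Since $\|\xi\|_2^2 = \poly(d) \gg \|\M z\|_2^2 \approx \widetilde{O}(1)$ and since independent Gaussian noise vectors in $d_1 = \poly(d)$ dimensions are nearly orthogonal to each other, the dense noise is the natural vehicle to achieve both goals simultaneously. So I would set up the argument around the claim: \emph{the weights $w_i^{(t)}$ grow almost entirely in the $\Mperp$ subspace}, after which the conclusion on representation alignment follows from direct computation.

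First I would analyze a single SGD step. Because $h_i$ is (symmetrized) ReLU, $\nabla_{w_i}\|f(x_p)\|_2^2 \propto h_i'(\langle w_i,x_p\rangle)\,x_p$, and the cross terms from the negatives contribute $\approx h_i'(\langle w_i,x_p\rangle)\,x_{n,s}$ weighted by the softmax. Decomposing $x_p = \M z_p + \xi_p$ and using $\|\xi_p\|_2^2 = \poly(d)$ versus $\|\M z_p\|_2^2 = \widetilde{O}(1)$, the update to $w_i$ is dominated by the $\xi_p$ component. I would then track, inductively, the projection $P_{\M} w_i^{(t)}$ versus $P_{\Mperp}w_i^{(t)}$: at every step the $\Mperp$-part receives a $\poly(d)$-larger increment, while the regularizer $\frac{\lambda}{2}\|w_i\|_2^2$ and the random initialization (of scale $\sigma_0^2 = \Theta(1/(d_1\poly(d)))$) ensure the $\M$-part never catches up. With the assumption $\lambda \leq O(1/d)$, a geometric-growth argument on $\|P_{\Mperp} w_i^{(t)}\|_2$ gives $\|P_{\M} w_i^{(t)}\|_2/\|P_{\Mperp} w_i^{(t)}\|_2 \leq 1/\poly(d)$ at any $t \geq \poly(d)/\eta$.

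Next I would establish the objective bound $L_{\mathsf{NA}}(f_t^{\mathsf{NA}})=o(1)$. The key probabilistic fact is that for polynomially many independent noise vectors $\xi_i,\xi_j$ we have $|\langle\xi_i,\xi_j\rangle|/(\|\xi_i\|_2\|\xi_j\|_2) \leq \widetilde{O}(1/\poly(d))$ w.h.p. Combined with $w_i \approx P_{\Mperp}w_i$, this gives $\langle f(x_p), f(x_{n,s})\rangle \ll \|f(x_p)\|_2^2$ once the weights are of nontrivial size, so the log-softmax in \eqref{eqdef:contrastive-loss} is close to $0$. Growth of $\|f(x_p)\|_2$ is ensured because the $\xi_p$-direction gradients accumulate constructively within a sample's own forward pass; the manual bias schedule keeps neurons active without killing them, which is why the theorem allows bias to be tuned ``arbitrarily'' in this regime.

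Finally, for the representation-alignment claim, I would argue that once $\|P_{\Mperp}w_i\|_2 \gg \|P_{\M}w_i\|_2$, a direct computation gives
\begin{align*}
\langle w_i, x\rangle &= \langle P_{\M}w_i, \M z\rangle + \langle P_{\Mperp}w_i, \xi\rangle = \langle w_i,\xi\rangle + \widetilde{O}(1/\poly(d))\cdot\|w_i\|_2,
\end{align*}
since $|\langle P_{\M}w_i,\M z\rangle| \leq \|P_{\M}w_i\|_2\cdot\widetilde{O}(1)$ while $|\langle P_{\Mperp}w_i,\xi\rangle|$ is $\widetilde{\Theta}(\|w_i\|_2\sigma_\xi\sqrt{d_1})$. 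Because $h_i$ is $1$-Lipschitz and the bias scaling keeps $|\langle w_i,\xi\rangle| \gg b_i^{(t)}$, this implies $h_i(x) = h_i(\xi) + \widetilde{O}(1/\poly(d))\cdot\|w_i\|_2$ coordinatewise. Summing over the $m$ coordinates and normalizing gives the stated cosine bound $\langle f_t^{\mathsf{NA}}(x)/\|f_t^{\mathsf{NA}}(x)\|_2, f_t^{\mathsf{NA}}(\xi)/\|f_t^{\mathsf{NA}}(\xi)\|_2\rangle \geq 1 - \widetilde{O}(1/\poly(d))$.

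The main obstacle I expect is the inductive control of $P_{\M}w_i^{(t)}$: a priori, sparse features could be reinforced whenever the same $z_j=1$ pattern recurs across samples (unlike noise, which is sample-individual). I would need to show that the softmax weights on negatives, combined with the relative sizes of gradients in the $\M$ versus $\Mperp$ directions (a $\poly(d)$ gap driven purely by norms), prevent any $\M_j$-direction from compounding faster than regularization decays it throughout the entire $t \geq \poly(d)/\eta$ training horizon. The remaining calculations -- bias tracking, Gaussian concentration of $\|\xi\|_2$ and $\langle \xi_i,\xi_j\rangle$, and the softmax approximation -- are routine.
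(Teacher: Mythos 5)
Your high-level conclusion and your final alignment computation match the paper's, and your argument for $L_{\mathsf{NA}} = o(1)$ (near-orthogonality of independent noise representations, so the positive similarity $\|f(x_p)\|_2^2$ dominates all cross terms once the weights have nontrivial norm) is essentially the paper's Johnson--Lindenstrauss argument. But the central mechanism you propose for why the weights end up in the $\Mperp$ subspace is wrong, and you have in fact flagged the exact place where it breaks.

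You claim that "at every step the $\Mperp$-part receives a $\poly(d)$-larger increment," attributing this to a "$\poly(d)$ gap driven purely by norms" between the $\M$- and $\Mperp$-components of the gradient. There is no such gap per direction. Computing the expected gradient with zero bias (as the paper does) gives, for any unit feature direction $v$, $\E[h_i(x_p)\1_{|\vbrack{w_i,x_p}|\geq b_i}\vbrack{x_p,v}] = \vbrack{w_i,v}\,\E[\vbrack{x_p,v}^2]$. Hence the multiplicative growth rate of $\vbrack{w_i,\M_j}$ is $\E[z_j^2]+\sigma_\xi^2 = \Theta(\frac{\log\log d}{d}) + \Theta(\frac{\sqrt{\log d}}{d})$, while that of $\vbrack{w_i,\Mperp_j}$ is $\sigma_\xi^2 = \Theta(\frac{\sqrt{\log d}}{d})$: the sparse directions grow \emph{slightly faster}, not $\poly(d)$ slower. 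The large norm of $\xi_p$ only means the increment summed over the $d_1 - d$ dense directions is large; it says nothing about the ratio $\vbrack{w_i^{(t)},\M_j}/\vbrack{w_i^{(t)},\Mperp_{j'}}$, which is what the cosine bound requires. Your proposed resolution of "the main obstacle" (that recurring sparse patterns could compound) therefore rests on a false premise, and the inductive step as you describe it would fail. The regularizer does not help either, since it decays both projections identically.

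The correct mechanism, which is what the paper uses, is dimension counting plus early convergence: at initialization $\|\Mperp(\Mperp)^\top w_i^{(0)}\|_2^2 = (1 - O(d/d_1))\|w_i^{(0)}\|_2^2$, both projections grow exponentially at rates that agree up to a factor $1 + \Theta(\frac{\log\log d}{\sqrt{\log d}})$, and the loss converges (so the gradient prefactor $1-\ell'_{p,t}$ vanishes) once $\sum_i\|w_i^{(t)}\|_2^2 = \Omega(\tau d\,\polylog(d))$, which occurs after only $\widetilde{O}(d/\eta)$ iterations --- before the tiny rate advantage of the sparse directions can compound into anything larger than $\polylog(d)$. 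You would also need this convergence-freezes-the-features argument to justify the claim for \emph{all} $t \geq \poly(d)/\eta$, which your proposal does not address: without it, the slightly faster growth of the $\M_j$ directions would eventually overturn your inductive hypothesis on a long enough horizon.
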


This results means that in the representations of \(f_t\), the sparse signal \(\M z\) are completely overwhelmed by the spurious dense signal \(\xi\). It would be easy to verify the following corollary:

\begin{corollary}[Downstream task performance]
    The learned network \(f_t^{\mathsf{NA}}\), where \(t \geq 0\), fail to achieve meaningful \(\ell_2\)-loss/accuracy in the downstream tasks in \myref{def:downstream-task}{Definition}. More specifically, no matter how many labeled data we have for downstream linear evaluation (where \(f_t^{\mathsf{NA}}\) is frozen):
    \begin{itemize}
        \item For regression, we have 
        \begin{displaymath}
            \underset{x\sim\D_x}{\E} |y - \vbrack{w^{*}, f_t^{\mathsf{NA}}}(x) |^2  \geq  \Omega(1)
        \end{displaymath}
        \item For classification, we have
        \begin{displaymath}
            \underset{x\sim\D_x}{\Pr} [ y = \sign(\vbrack{w^{*},f_t^{\mathsf{NA}}(x) }) ] = o(1)
        \end{displaymath}
    \end{itemize}
\end{corollary}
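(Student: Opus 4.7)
The strategy is to derive both downstream failures directly from the alignment guarantee of the preceding theorem. Setting $\alpha(x) := \|f_t^{\mathsf{NA}}(x)\|_2 \geq 0$ and $v(\xi) := f_t^{\mathsf{NA}}(\xi)/\|f_t^{\mathsf{NA}}(\xi)\|_2$, the inner-product bound translates via the unit-vector identity $\|u-v\|_2^2 = 2-2\langle u,v\rangle$ into the structural decomposition
\begin{equation*}
    f_t^{\mathsf{NA}}(x) = \alpha(x)\,v(\xi) + e(x), \qquad \|e(x)\|_2 \leq \widetilde{O}\bigl(\poly(d)^{-1/2}\bigr)\cdot \alpha(x),
\end{equation*}
so that any linear head $w^*\in\R^m$ computes $\langle w^*, f_t^{\mathsf{NA}}(x)\rangle = \alpha(x)\,\gamma(\xi) + \langle w^*, e(x)\rangle$, where $\gamma(\xi) := \langle w^*, v(\xi)\rangle$ depends on $\xi$ alone and the remainder is negligible by Cauchy-Schwarz. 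The central observation is that, up to this remainder, every linear read-out is a non-negative scaling of a $\xi$-measurable scalar; this is where the apparent $m$-dimensional freedom in $w^*$ collapses.

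For the classification claim, $\alpha(x)\geq 0$ yields $\sign(\langle w^*, f_t^{\mathsf{NA}}(x)\rangle) = \sign(\gamma(\xi))$ up to the $\widetilde{O}(\poly(d)^{-1/2})$ error. Since $\xi$ is independent of $z$ (and hence of $y$) under \myref{def:sparse-coding}{Definition}, and $y$ is symmetric around zero under $\D_z$, any such $\xi$-measurable classifier agrees with $y$ with probability arbitrarily close to $1/2$; the statement then reduces to the assertion that no additional advantage over chance level can be extracted from the feature map (equivalently, the accuracy cannot be made close to $1$).

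For the regression claim I would first establish a baseline: for any $\xi$-measurable $g(\xi)$, using $y\perp\xi$ and $\E[y] = 0$ gives
\begin{equation*}
    \E[(y-g(\xi))^2] = \E[y^2] + \E[g(\xi)^2] \geq \E[y^2] = \sum_j (w^{\star}_j)^2\,\E[z_j^2] = \Theta(\log\log d) = \Omega(1).
\end{equation*}
Then I would argue the actual predictor $\alpha(x)\gamma(\xi)$ cannot improve on this: its $z$-dependence enters only through $\alpha(x) = \|f_t^{\mathsf{NA}}(\M z+\xi)\|_2$, and a Lipschitz estimate $|\alpha(\M z+\xi) - \alpha(\xi)| \leq L_f\,\|\M z\|_2$, with $L_f$ controlled by the $\ell_2$-regularized weight norms maintained throughout SGD, combined with $\|\M z\|_2 = O(\sqrt{\log\log d})$, shows that $\alpha(x)\gamma(\xi)$ differs from its $\xi$-conditional mean by an $o(1)$ term in $L^2$. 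Testing the regression loss against $\bar g(\xi) := \gamma(\xi)\,\E_z[\alpha(\M z+\xi)]$ then reduces to the baseline bound up to an $o(1)$ correction, leaving MSE $\geq \Omega(1)$.

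The main obstacle is this last reduction: one must rigorously rule out that the weak $z$-dependence of $\alpha(x)$ correlates with $y = \langle w^{\star},z\rangle$ strongly enough to cancel $\Omega(1)$ of the MSE. The cleanest handle is the Lipschitz constant of $f_t^{\mathsf{NA}}$, which requires reading off end-of-training weight magnitudes from the preceding theorem's analysis; alternatively, bounding $\E[(\alpha(x)\gamma(\xi) - \bar g(\xi))\cdot y]$ directly via Cauchy-Schwarz against the same Lipschitz estimate suffices.
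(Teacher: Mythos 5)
The paper itself offers essentially no proof of this corollary: the appendix closes the no-augmentation analysis by stating the alignment bound and asserting that ``trivially, one cannot perform any linear regression or classification over such feature map.'' Your proposal follows the same conceptual route (the representation is, up to a small residual, a function of \(\xi\) alone, and \(\xi\) is independent of \(y\)), and fleshing it out is the right instinct. However, there are two genuine gaps. First, the residual: writing \(f_t^{\mathsf{NA}}(x) = \alpha(x)v(\xi) + e(x)\) with \(\|e(x)\|_2 \leq \widetilde{O}(\poly(d)^{-1/2})\alpha(x)\) only bounds the residual's \emph{norm}; the residual may still contain essentially all of the information about \(z\), and the linear head \(w^{*}\) is unconstrained. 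For the classifier this is fatal to the argument as written, because \(\sign(\vbrack{w^{*},\cdot})\) is scale-invariant: one can orient \(w^{*}\) so that \(\gamma(\xi)=\vbrack{w^{*},v(\xi)}\) is typically as small as, or smaller than, \(\vbrack{w^{*},e(x)}\), and then the sign is decided by the residual. Your Cauchy--Schwarz dismissal only works when \(|\gamma(\xi)|\) dominates, which is exactly what an adversarial (or loss-minimizing) \(w^{*}\) need not satisfy. Closing this requires a quantitative statement that the residual itself is nearly uncorrelated with \(y\) --- e.g.\ via the fact from the appendix that \(\|\M\M^{\top}w_i^{(t)}\|_2 \leq \frac{1}{\poly(d)}\|w_i^{(t)}\|_2\), which bounds \(|h_{i,t}(x)-h_{i,t}(\xi)|\leq|\vbrack{w_i^{(t)},\M z}|\) coordinatewise and lets you compare \(\E[\vbrack{w^{*},f(x)-f(\xi)}^2]\) against \(\E[\vbrack{w^{*},f(\xi)}^2]\) (or against the blow-up of the regression loss for large \(\|w^{*}\|\)) rather than against an absolute scale. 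Your Lipschitz idea for the regression case is the right ingredient, but it must be paired with an argument that disposes of arbitrarily large \(\|w^{*}\|\); for regression this is doable since \(\E|y-\hat y|^2 \geq (\|\hat y\|_{L^2}-\|y\|_{L^2})^2\) forces \(\|\hat y\|_{L^2}=O(1)\), but you do not say this.

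Second, your classification target is wrong: you aim for accuracy ``arbitrarily close to \(1/2\),'' but the corollary claims accuracy \(o(1)\). These are very different statements, and \(o(1)\) cannot follow from an independence-plus-symmetry argument applied to an arbitrary \(\pm1\)-valued predictor (a coin flip already achieves \(\approx 1/2\)). The \(o(1)\) bound comes from the fact that \(w^{*}\) is defined in \myref{def:downstream-task}{Definition}'s evaluation protocol as the \emph{minimizer} of the population logistic loss: when the features are (nearly) independent of the symmetric label, that minimizer is (nearly) \(w^{*}=0\), so the predictor outputs \(\sign(0)=0\), which agrees with \(y=\sign(\vbrack{w^{\star},z})\) only on the event \(\vbrack{w^{\star},z}=0\), of probability \(o(1)\). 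Your proof plan does not use the optimality of \(w^{*}\) at all, so it cannot reach the stated conclusion.
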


\subsection{Contrastive Learning With Augmentation}

We present our results of the learned features after successful training with augmentations.

\begin{theorem}[Contrastive features learned with augmentation]
    Let \(m = d^{1.01}\) be the number of neurons, \(\tau=\polylog(d)\), and \(|\Nfr| = \poly(d)\) be the number of negative samples. Suppose we train the neural net \(f_t\) via contrastive learning with augmentation, then for some small constant \(c<\frac{1}{1000}\), and some iterations \(T \in [T_3,T_4]\), where \(T_3 = \frac{d^{1.01}}{\eta}, T_4 = \frac{d^{1.99}}{\eta}\), we have objective guarantees
    \begin{align*}
        \frac{1}{T}\sum_{t\in [T]} \Obj(f_t)\leq o(1),\qquad \frac{1}{T}\sum_{t\in [T]}L(f_t) \leq o(1)
    \end{align*}
    Moreover, for each neuron \(i \in [m]\) and \(t \in [T_3,T_4]\), contrastive learning will learn the following set of features:
    \begin{displaymath}
        w_i^{(t)} = \sum_{j\in \N_i}\alpha_{i,j}\M_j +\sum_{j\notin \N_i } \alpha'_{i,j}\M_j + \sum_{j\in[d_1]\setminus[d]} \beta_{i,j}\M^{\perp}_{j}
    \end{displaymath}
    where \(\alpha_{i,j} \in [\frac{\tau}{d^c},\tau]\), \(|\N_i|=O(1)\), \(\alpha'_{i,j} \leq o(\frac{1}{\sqrt{d}})\|w_i^{(t)}\|_2\) and \(|\beta_{i,j}| \leq o(\frac{1}{\sqrt{d_1}})\|w_i^{(t)}\|_2\). Furthermore, for each dictionary atom \(\M_j\), there are at most \(o(m/d)\) many \(i \in [m]\) such that \( j\in \N_i \), and at least \(\Omega(1)\) many \(i \in [m]\) such that \( \N_i = \{j\}\).
\end{theorem}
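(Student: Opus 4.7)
The plan is to track, throughout SGD, the decomposition $w_i^{(t)} = \sum_{j\in[d]}\alpha_{i,j}^{(t)}\M_j + \beta_i^{(t)}$ with $\beta_i^{(t)}\in\mathrm{span}(\Mperp)$, and to prove three assertions in sequence: (a) $\beta_i^{(t)}$ stays negligible throughout, (b) exactly $O(1)$ of the sparse coefficients per neuron survive as ``winners'' at level $\Theta(\tau)$, and (c) the winners cover every atom as claimed. First I would compute the population gradient of the augmented contrastive objective. Because of $\StopGrad$, the gradient on $w_i$ collapses to
\begin{align*}
    \nabla_{w_i} L(f_t) \;=\; -\,\E\Big[\,h_i'(x_p^+)\,x_p^+ \cdot \Big(h_i(x_p^{++}) - \sum_{x\in\Bfr} p_t(x)\,h_i(x)\Big)\Big],
\end{align*}
where $p_t(\cdot)$ is the contrastive softmax and $h_i'$ denotes the (sub)gradient of the symmetrised ReLU. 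Projecting this expression onto $\M_j$ and onto $\Mperp_k$ yields separate evolution equations for $\alpha_{i,j}^{(t)}$ and for $\beta_i^{(t)}$.

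The mathematical core is the \textbf{feature-decoupling identity} for $\DD\sim\D_{\DD}$ applied to $x_p = \M z+\xi$: for every $j\in[d]$, $\E_\DD\big[\vbrack{\M_j,x_p^+}\vbrack{\M_j,x_p^{++}}\big] = z_j^2 \pm \widetilde{O}(d_1^{-1/2})$, because $\|\M_j\|_\infty\le\widetilde{O}(1/\sqrt{d_1})$ forces a $1/2\pm o(1)$ energy split of $\M_j$ onto each half; whereas for any dense direction $\E_\DD\big[\vbrack{\Mperp_k,x_p^+}\vbrack{\Mperp_k,x_p^{++}}\big] = \widetilde{O}(d_1^{-1})$, since $\DD\xi$ and $(\Id-\DD)\xi$ live on disjoint coordinates. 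Plugging these into the gradient, each $\alpha_{i,j}^{(t)}$ undergoes a \emph{multiplicative} update of positive rate $\widetilde{\Theta}(\Pr[z_j\ne 0]) = \widetilde{\Theta}(1/d)$, while $\beta_i^{(t)}$ receives only noise-level pressure dominated by weight decay: $\|\beta_i^{(t+1)}\|_2\le(1-\eta\lambda/2)\|\beta_i^{(t)}\|_2 + \widetilde{O}(\eta/\poly(d))$, so that $\|\beta_i^{(t)}\|_2 = \widetilde{O}(\|w_i^{(t)}\|_2/\sqrt{d_1})$ throughout, giving the claimed $|\beta_{i,j}|\le o(\|w_i^{(t)}\|_2/\sqrt{d_1})$ bound.

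Next I would analyse the sparse coefficients in three phases. In phase one, $t\in[0,T_1]$, the bias is $0$ and $h_i$ is essentially linear, so $\alpha_{i,j}^{(t)}\approx (1+\widetilde{\Theta}(\eta/d))^t\,\alpha_{i,j}^{(0)}$; by time $T_1$ the relative gaps inside $\{\alpha_{i,j}^{(T_1)}\}_j$ reach order $\log d$ via Gaussian order statistics of the initialisation. In phase two, $t\in[T_1,T_3]$, the manually raised bias $b_i^{(t)}\approx\sqrt{2\log d/d}\,\|w_i^{(t)}\|_2$ turns $h_i$ selective: only those atoms with $\alpha_{i,j}^{(t)}$ above threshold fire, and the update on $\alpha_{i,j}^{(t)}$ becomes a higher-degree polynomial with the structure of the tensor-power method used in \cite{allen-zhu2020feature}, producing a sharp winner-take-all separation with $|\N_i| = O(1)$ winners growing and losers pinned at $o(1/\sqrt{d})\|w_i^{(t)}\|_2$, exactly the $\alpha'_{i,j}$ bound claimed. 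In phase three, $t\in[T_3,T_4]$, the winners balance the contrastive drift against $\lambda w_i^{(t)}$ and settle at $\alpha_{i,j}\in[\tau/d^c,\tau]$; a bounded potential argument shows that no atom drops back out of its $\N_i$.

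Coverage follows from the i.i.d.\ Gaussian initialization and $m=d^{1.01}$: by symmetry each $j\in[d]$ has probability $1/d$ of being the top coordinate of a given neuron, so a balls-into-bins argument gives both $\Omega(1)$ neurons with $\N_i=\{j\}$ and at most $o(m/d)=o(d^{0.01})$ neurons containing $j$ in $\N_i$ (using $\sum_i|\N_i|=O(m)$). With coverage in hand, one shows $\Sim_{f_t}(x_p^+,x_p^{++}) - \Sim_{f_t}(x_p^+, x_{n,s}) = \widetilde{\Omega}(\tau)$ on positives versus random negatives; plugging into \eqref{eqdef:contrastive-loss} and using $\|w_i^{(t)}\|_2^2 = O(\tau^2)$ at equilibrium yields $L(f_t),\Obj(f_t) = o(1)$ on average over $[T_3,T_4]$. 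The main obstacle will be phase two: the softmax weights $p_t(\cdot)$, the bias $b_i^{(t)}$, and the coefficients $\alpha_{i,j}^{(t)}$ co-evolve, and the $\alpha$-dynamics becomes non-linear and coupled across all $(i,j)$. I plan to handle it by an inductive invariant controlling the rank order of $\{\alpha_{i,j}^{(t)}\}_j$ within each neuron, together with Freedman-type martingale concentration to pass from the population gradient to the empirical one using the $|\Nfr|=\poly(d)$ negatives.
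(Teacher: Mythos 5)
Your three-phase architecture (exponential growth of the whole sparse subspace while the mask decorrelates the dense noise, then bias-driven winner-take-all, then equilibrium against the negatives) matches the paper's Stages I--III, and your second-moment decoupling identity is essentially the paper's Lemma \ref{lem:grad-positive-1}. However, two steps as written would not go through.

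First, the coverage count is wrong as stated. If $\N_i$ is defined by ``$j$ is the top coordinate of neuron $i$ at initialization,'' then each atom is claimed by $\Theta(m/d)$ neurons in expectation, which violates the required $o(m/d)$ bound; and the constraint $\sum_i|\N_i|=O(m)$ only controls the \emph{average} atom, not every atom. The paper instead defines winners by a threshold strictly above the typical Gaussian maximum: $\vbrack{w_i^{(0)},\M_j}^2\geq \frac{c_1\log d}{d}\|\M\M^{\top}w_i^{(0)}\|_2^2$ with $c_1=2+2(1-\gamma)c_0>2$ (Definition \ref{def:neuron-sets}), so that the per-neuron claim probability is $d^{-1-\Theta(c_0)}$ and each atom is won by $d^{\Theta(c_0)}$ neurons --- simultaneously $\omega(1)$ and $o(d^{0.01})=o(m/d)$. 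You need this threshold-based (not rank-based) definition, and you need $m$ polynomially larger than $d$ for the window to be nonempty.

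Second, you have misplaced the difficulty in phase two. The $\alpha$-updates do not become a tensor-power iteration; they stay multiplicative, $\alpha_{i,j}^{(t+1)}\approx(1+\eta\E[z_j^2\1_{|\vbrack{w_i^{(t)},x_p}|\geq b_i^{(t)}}])\alpha_{i,j}^{(t)}$, and separation comes purely from the activation probability collapsing for sub-threshold coordinates. The real obstacle is that once $b_i^{(t)}>0$ your decoupling identity no longer transfers to the gradient: the indicator $\1_{|\vbrack{w_i^{(t)},x_p^+}|\geq b_i^{(t)}}$ correlates the sparse part of $x_p^+$ with its dense part and with $h_{i,t}(x_p^{++})$, so $\E[h_{i,t}(x_p^{++})\1_{\cdot}\vbrack{2\DD\xi_p,\Mperp_j}]$ is not controlled by a second-moment computation. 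The paper's Lemmas \ref{lem:positive-gd-sparse-2} and \ref{lem:positive-gd-noise-2} (the $A_1$--$A_4$ event decomposition and the anti-concentration of $\vbrack{w_i,\xi_p}$ used to bound the indicator-difference terms) exist precisely to break this chain of correlations; your plan needs an analogous ingredient. Finally, note that for the objective guarantee the paper does not evaluate $L(f_t)$ directly: it exploits that $\StopGrad$ makes the loss linear (hence convex) in the first-argument weights and runs an online-learning regret bound against a fixed comparator $\theta^{\star}$ (Definition \ref{def:optimal}, Lemma \ref{lem:optimal}), which only yields --- and only requires --- an \emph{average}-over-$t$ bound. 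Your pointwise evaluation would need uniform-in-$t$ lower bounds on $\sum_{i\in\Mcal_j}\vbrack{w_i^{(t)},\M_j}^2$ for every $j$, which is strictly harder than what the regret argument consumes.
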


This result indicates the following: let \(x = \M z +\xi \sim \D_x\) be a data sample and \(f_t, t\in [T_3,T_4]\) be the trained nerwork, then \(\|f_t(x) - f_t(\M z)\|_2 \leq \widetilde{O}(\frac{1}{\sqrt{d}}) \) with high probability, while \(\|f(\M z)\|_2 \geq \Omega(1)\) with probability at least \(1 - \frac{1}{\polylog(d)}\). Thus the learned feature map has successfully removed the spurious dense noise \(\xi\) from the model/representation. We have a direct corollary following this theorem.

\begin{corollary}[Downstream task performance]
    The learned feature map \(f_t\), \(t \in [T_3,T_4]\) obtained by contrastive learning perform well in all the downstream tasks defined in \myref{def:downstream-task}{Definition}. Specifically, we have 
    \begin{enumerate}
        \item For the regression task, with sample complexity at most \(\widetilde{O}(d^{1.001})\), we can obtain \(w^{*} \in \R^{m}\) such that
        \begin{displaymath}
            \underset{x\sim\D_x}{\E} |y - \vbrack{w^{*}, f_t(x)} |^2  = o(1)
        \end{displaymath}
        \item For the classification task, again by using logistic regression over feature map \(f_t\), with sample complexity at most \(\widetilde{O}(d^{1.001})\), we can find \(w^{*} \in \R^{d}\) such that
        \begin{displaymath}
            \underset{x\sim\D_x}{\Pr} [ y = \sign(\vbrack{w^{*},f_t(x)}) ] = 1 - o(1)
        \end{displaymath}
    \end{enumerate}
\end{corollary}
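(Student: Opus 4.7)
The plan is to \emph{construct an explicit} $w^* \in \R^m$ using the structural description of $f_t$ from the preceding theorem, show that this $w^*$ achieves $o(1)$ population loss on both tasks, and then invoke a standard Rademacher--complexity bound for convex linear ERM to obtain the sample complexity $\widetilde{O}(d^{1.001})$. Since the downstream objectives are convex, showing existence of a good $w^*$ in population is enough; the empirical minimizer can only do better, and will concentrate around a comparable solution at the stated sample size. All of the action therefore takes place in building $w^*$ and bounding the error of $\vbrack{w^*,f_t(x)}$ pointwise/in $L^2$.

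First I would analyze a single \textbf{pure} neuron, i.e.\ an $i$ with $\N_i = \{j\}$ (the theorem guarantees $\Omega(1)$ such neurons per $j\in[d]$). For $x = \M z + \xi$, I expand
\[
\vbrack{w_i^{(t)}, x} = \alpha_{i,j}\, z_j \;+\; \sum_{j'\neq j}\alpha'_{i,j'} z_{j'} \;+\; \sum_{\ell\in[d_1]\setminus[d]} \beta_{i,\ell}\vbrack{\M^{\perp}_{\ell},\xi}.
\]
The cross-sparse term is at most $o(1/\sqrt{d})\|w_i\|_2 \cdot \|z\|_0 = \widetilde{O}(\polylog(d)/\sqrt{d})$ since $\|z\|_0 = \widetilde{O}(1)$ w.h.p. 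The dense--noise term is a Gaussian with variance $\leq \sigma_\xi^2 \|w_i\|_2^2 = \widetilde{O}(\polylog(d)/d)$, hence $\widetilde{O}(\polylog(d)/\sqrt{d})$ w.h.p. Combining with $\|w_i\|_2 = \Theta(\tau)$, the signal $\alpha_{i,j}\in[\tau/d^c,\tau]$ dominates by a factor of at least $d^{1/2-c}/\polylog(d)$ whenever $z_j \neq 0$. Applied to the symmetrized ReLU with bias $b_i \asymp \polylog(d)/\sqrt{d}$, this gives
\[
 h_i(x) \;=\; \sign(z_j)\bigl(\alpha_{i,j} - b_i\bigr) \;+\; \widetilde{O}(\polylog(d)/\sqrt{d})\quad \text{if } z_j\neq 0,
\]
and $h_i(x)$ is killed by the bias threshold up to $o(1)$ probability when $z_j = 0$.

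Next I construct $w^*$: for each $j \in [d]$, fix one pure neuron $i_j$ and set $w^*_{i_j} = w^{\star}_j/(\alpha_{i_j,j}-b_{i_j})$; set $w^*_i = 0$ on all other neurons. Then
\[
\vbrack{w^*, f_t(x)} \;=\; \sum_{j:\, z_j\neq 0} w^{\star}_j \sign(z_j) \;+\; R(x) \;=\; \vbrack{w^{\star},z} \;+\; R(x),
\]
where the remainder $R(x)$ aggregates the $O(\log\log d)$ active pure neurons' noise terms, each scaled by $|w^*_{i_j}| = O(d^c/\tau)$. A union bound over active coordinates plus the dense--noise Gaussian concentration gives $|R(x)| \leq \widetilde{O}(d^{c}/\sqrt{d}) = o(1)$ w.h.p., and an $L^2$ moment bound using $\Pr[z_j\neq 0]=\widetilde{\Theta}(1/d)$ converts this into $\E[R(x)^2] = o(1)$. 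This immediately yields $\E|y-\vbrack{w^*,f_t(x)}|^2 = o(1)$ for regression. For classification, I use that $\vbrack{w^{\star},z} = \sum_j w^{\star}_j z_j$ is an $\widetilde{O}(1)$-sparse signed sum of $\Theta(1)$ terms; with probability $1-o(1)$ at least one $z_j$ is nonzero so $|\vbrack{w^{\star},z}|\geq \Omega(1)$, and the $o(1)$ perturbation $R(x)$ cannot flip the sign, yielding $\Pr[y=\sign\vbrack{w^*,f_t(x)}] = 1-o(1)$.

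Finally, the sample complexity follows from a standard generalization bound for $\ell_2$-regularized linear ERM over the feature map $f_t$: $\|w^*\|_2^2 = O(d/\tau^2 \cdot d^{2c})$ and $\|f_t(x)\|_2^2 = \widetilde{O}(\tau^2 \cdot d^{0.01})$ w.h.p.\ (since only $\widetilde{O}(d^{0.01})$ neurons activate with magnitude $\lesssim \tau$), so the Rademacher complexity of the linear class multiplied by the bounded Lipschitz loss contributes $\widetilde{O}(d^{1+2c+0.01}/n)$ to the excess risk, and choosing $n = \widetilde{O}(d^{1.001})$ suffices with $c<1/1000$. \textbf{The main obstacle} is the tight handling of the bias threshold: the dense--noise standard deviation at the pure neuron is the same polylogarithmic order as $b_i$, so when $z_j = 0$ a non-negligible fraction of samples could spuriously fire a pure neuron. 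Bounding the spurious-activation contribution requires combining Gaussian tail estimates for $\vbrack{\xi,w_i}$ with the fact that spurious outputs scale only as $b_i = \widetilde{O}(1/\sqrt{d})$, and summing across the $d$ pure neurons with careful control of the weights $w^*_{i_j}$ so that the $L^2$ error remains $o(1)$ despite the aggregation.
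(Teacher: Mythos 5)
Your construction is sound and is in fact far more of a proof than the paper itself gives: the paper disposes of this corollary in a single sentence at the end of the Stage~III section (``simple application of linear regression analysis\dots the representations form separable clusters w.r.t.\ the latent variables''), whereas you actually build the witness $w^*$ from the pure neurons with $\N_i=\{j\}$ guaranteed by the theorem, verify $\vbrack{w^*,f_t(x)}=\vbrack{w^{\star},z}+R(x)$ with $\E[R(x)^2]=o(1)$, and appeal to convexity of the downstream objective so that the empirical minimizer can only do better. That is the intended argument, written out. Three points deserve attention. First, the ``main obstacle'' you flag (spurious firing when $z_j=0$) is a non-issue under the paper's stated convention that every explicitly written $\polylog(d)$ dominates the implicit ones: the dense pre-activation has standard deviation $o(\log^{1/4}d/\sqrt{d})\,\|w_i\|_2$ while $b_i=\polylog(d)\|w_i\|_2/\sqrt{d}$, so the spurious-firing probability is $e^{-\Omega(\polylog(d))}$ and its aggregate $L^2$ contribution over the $d$ pure neurons is negligible. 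Second, for classification, ``at least one $z_j\neq 0$ implies $|\vbrack{w^{\star},z}|\geq\Omega(1)$'' is not literally true, since the $\Theta(\log\log d)$ active terms can cancel; you need an anti-concentration step (the sum takes discrete values, vanishes only with probability $o(1)$ by a Littlewood--Offord argument, and its smallest nonzero value is $\geq 1/\polylog(d)\gg |R(x)|$), after which the conclusion holds. Third, and this is the only substantive gap: the Rademacher bound as you set it up gives excess risk $\widetilde{O}(\sqrt{d^{1.01+2c}/n})$ (from $\|w^*\|_2^2=\Theta(d^{1+2c}/\tau^2)$ and $\E\|f_t(x)\|_2^2=\widetilde{O}(\tau^2 d^{0.01})$), which forces $n=\widetilde{\omega}(d^{1.01+2c})$ --- strictly more than the claimed $\widetilde{O}(d^{1.001})$, so your final arithmetic does not close. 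Reaching $\widetilde{O}(d^{1.001})$ requires a sparsity-aware argument instead of the generic norm-based one, e.g.\ that each coordinate $w^*_{i_j}$ is determined once $\widetilde{O}(1)$ samples with $z_j\neq 0$ have been observed, which a coupon-collector bound supplies at $n=\widetilde{O}(d^{1.001})$ since $\Pr[z_j\neq0]=\Theta(\log\log d/d)$; the paper offers no justification for its exponent either, so this refinement is needed in any complete proof.
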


\section{Proof Intuition: The Feature Decoupling Principle} \label{sec:proof-intuition}

Theoretically speaking, contrastive learning objectives can be view as two parts, as is also observed in \cite{wang2020understanding}:
\begin{displaymath}
    \mathcal{L}  = -\Sim_{f}(x_p ,x'_{p}) + \tau\log\left(\sum_{x \in \Bfr}e^{\Sim_f(x_p,x)/\tau} \right)
\end{displaymath}
where the first part \(-\Sim_{f}(x_p ,x'_{p})\) emphasize similarity between positive samples, and the second part \(\tau\log\left\{\sum_{x \in \Bfr}e^{\Sim_f(x_p,x)/\tau} \right\}\) emphasize dissimilarities between the positive and negative samples. To understand what happens in the learning process, we separately discuss the cases of learning with/without augmentations below:

\paragraph{Why does contrastive learning prefer spurious dense noise without augmentation?}
Without data augmentation, we simply have $x_p = x_p'$. In this case, contrastive learning will learn to emphasize the signals that simultaneously maximize the correlation \(\vbrack{f(x_p^{++}),f(x_p^+)} = \|f_t(x_p)\|_2^2\) and minimize \(\vbrack{f(x_{n,s}), f(x_p^+)}\) by learning from all the available signals. \textbf{However, in our sparse coding model \(x = \M z+ \xi\), the spurious dense features \(\xi\) has much larger \(\ell_2\)-norm and the least correlations between different samples} (see \myref{sec:setup}{Section} for discussion). In contrast, the sparse signals \(\M z = \sum_{j}\M_j z_{j}\) display larger correlations between different samples because of possible co-occurences of features \(\M_j\) (i.e., at least \(\widetilde{\Omega}(\frac{1}{d})\) portion of the data contain feature \(\M_j\)). Thus the our contrastive learner will focus on learning the features associated with the dense noise \(\xi\), and fail to emphasize sparse features.

\paragraph{Feature Decoupling: How does augmentation remove the spurious dense noise:} Theoretically, we show how data augmentations help contrastive learning, which demonstrate the principle of \textbf{feature decoupling}. The spirit is that the augmentation should be able to making the dense signals completely different between the positive samples while preserve the correlations of sparse signals. 

Specifically, under our data model, if no augmentations are applied to the two positive samples \(x_p^+,x_p^{++}\) generated from \(x_p = \M z_p + \xi_p \sim \D_x\), their correlations will mostly come from the inner product of noise \(\vbrack{\xi_p,\xi_p}\), which can easily overwhelm those from the sparse signals \(\vbrack{\M z_p, \M z_p}\). Nevertheless, we have a simple observation: different coordinate \(\xi_{p,j}\) of our dense noise \(\xi_p\) are independent to each other, which enables a simple method to decorrelate the dense noise: by randomly applying two completely opposite masks \(\DD\) and \(\Id - \DD\) to the data \(x_p\) to generate two positive samples \(x_p^+ = 2\DD x_p\) and \(x_p^{++} = 2(\Id-\DD) x_p\). From our observation, such data augmentations can make the dense signals \(\DD\xi_p\) and \((\Id -\DD)\xi_p\) of \(x_p^+\) and \(x_p^{++}\) independent to each other. This independence will decouple the dense features between positive samples, which substantially reduces the gradients of the dense features.

\textbf{However, the sparse signals are more resistant to data augmentation.} As long as the sparse signals \(\M z = \sum_{j\in[d]}\M_j z_j\) span across the space, they will show up in both \(x_p^+\) and \(x_p^{++}\), so that their correlations will remain in the representations. More precisely, whenever a sparse signal \(\M_j\) is present (meaning its latent variable \(z_j \neq 0\)), it can be recovered both from \(2\DD \M z\) and from \( 2(\Id - \DD)\M  z\) with the correct decoding: e.g. we have $T_b (\langle \M_j, 2\DD x \rangle) \approx  T_b ( \langle \M_j, 2(\Id - \DD)x \rangle) \approx z_j $, where \(T_b(x) = x\1_{|x|\geq b}\) is a threshold operator with a proper bias $b > 0$. Unless in very rare case \(\M_j\) is completely masked by augmentations (that is \(\DD\M_j = 0\) or \((\Id-\DD)\M_j = 0\)), the sparse signals will remain their correlations in the feature representations, which will later be reinforced by neural networks following the SGD trajectory.

\section{Conclusion and Discussion}

In this work, we show a theoretical result toward understanding how contrastive learning method learns the feature representations in deep learning. We present the feature decoupling principle to tentatively explain how augmentations work in contrastive learning. We also provide empirical evidence supporting our theory, which suggest that augmentations are necessary if we want to learn the desired features and remove the undesired ones. We hope our theory could shed light on the innerworkings of how neural networks perform representation learning in self-supervised setting.

However, we also believe that our results can be significantly improved if we can build on more realistic data distributions. For example, real life image data  should be more suitably modeled as ``hierachical sparse coding model'' instead of the current simple linear sparse coding model. We believe that deeper network would be needed in the new model. Studying contrastive learning over those data models and deep networks is an important open direction.

\newpage

\appendix

\begin{center}
    \LARGE \textsc{Appendix: Complete Proofs}
\end{center}

\section{Proof Overview}

In this section we present an overview of our full proof. Before going into the proof, we describe some preliminaries.

\subsection{Preliminaries and Notations}
At every iteration \(t \geq 0\), we denote the weights of the neurons as \(w^{(t)} = \{w_i^{(t)}\}_{i\in [m]}\), and given \(x\in \R^{d_1}\) as input, the output of the network is denoted as
\begin{align*}
    f_t(x) &= (h_{1,t}(x),\dots,h_{m,t}(x))^\top \in \R^m,\\
    h_{i,t}(x) &= \ReLU(\vbrack{w_i^{(t)},x} - b_i^{(t)}) - \ReLU(-\vbrack{w_i^{(t)},x} - b_i^{(t)})
\end{align*}

\paragraph{Data Preparation and Loss Objective.}
Given a positive sample \(x_p \sim \D_x\), the augmented data are defined as follows: we generate random mask \(\DD \sim \D_{\DD}\) (defined in \myref{def:random-mask}{Def.}) and apply to \(x_p\) as:
\begin{displaymath}
    x_p^+ \gets 2\DD x_p,\qquad x_p^{++} \gets 2(\Id - \DD)x_p
\end{displaymath}
where the \(2\)-factor is to renormalize the data. Now recall our similarity measure is defined as \(\Sim_{f}(x_1,x_2) = \vbrack{f(x_1),\StopGrad(f(x_2))} \) for inputs \(x_1,x_2\). Our population contrastive loss objective is defined as follows: suppose in addition to \(x_p^+,\ x_p^{++}\), we are given a batch of negative samples \(\Nfr = \{x_{n,s}\}_{s\in[|\Nfr|]}\), where each \(x_{n,s} \sim \D_x\) independently (for short), we write \(\Bfr = \{x_p^{++}\}\cup\Nfr\) and define
\begin{align*}
    L(f_t) := \underset{x_p^+, \Bfr}\E\left[ - \Sim_{f_t}(x_p^+,x_p^{++}) + \tau \log\left(\sum_{x \in \Bfr} e^{\Sim_{f_t}(x_p^+,x) /\tau}\right) \right]
\end{align*}

\paragraph{The Gradient of Weights.}
We perform stochastic gradient descent on the objective \(\Obj(f_t) = L(f_t) + \lambda \|w^{(t)}\|_F^2\) as follows: At iteration \(t\geq 0\), we first sample \( \{ x_{p,\ell}, \Nfr^{\ell} = \{x_{n,s,\ell}\}_{s\in[\mathbf{N}]} \}_{\ell \in [K]} \) independently for \(K = \poly(d)\) many batches of data.\footnote{Note that such an assumption of sampling from populations without resorting to a finite dataset is very close to reality, in that the unlabeled data are much cheaper to obtain as opposed to labeled data used in supervised learning. Indeed, \cite{he2020momentum} have used an unlabeled dataset of one billion images, which is larger than any labeled datasets in vision.} We augmented all the positives as \(x_{p,\ell}^{+}, x_{p,\ell}^{++} \gets \RandomMask(x_{p,\ell})\) as defined in \myref{def:random-mask}{Def.}. And regroup the data into \(K\) batches of data of the form \(\{\Bfr^{\ell}\}_{\ell\in[K]} = \{\{x_{p,\ell}^{++}\}\cup\Nfr^{\ell}\}_{\ell\in[K]}\). Now we evaluate the empirical loss and gradient as 
\begin{itemize}
    \item \textbf{empirical objective}: \(\widehat{\Obj}(f_t) = \frac{1}{K}\sum_{\ell\in[K]} \mathcal{L}(f_t,x_{p,\ell}^+,\Bfr^{\ell}) + \lambda \|w^{(t)}\|_F^2\);
    \item \textbf{empirical gradient of weight \(w_i\)}: \(\nabla_{w_i}\widehat{\Obj}(f_t) = \frac{1}{K}\sum_{\ell\in[K]} \nabla_{w_i}\mathcal{L}(f_t,x_{p,\ell}^+,\Bfr^{\ell}) + \lambda w_i^{(t)} \)
\end{itemize}
and we update weights \(\{w_i^{(t)} \}_{i\in[m]}\) at each iteration \(t \geq 0\) as follows:
\begin{displaymath}
    w_i^{(t+1)} \gets w_i^{(t)} - \nabla_{w_i}\widehat{\Obj}(f_t) = (1 - \lambda)w_i^{(t)} - \frac{1}{K}\sum_{\ell\in[K]} \nabla_{w_i}\mathcal{L}(f_t,x_{p,\ell}^+,\Bfr^{\ell})
\end{displaymath}
Note that as long as \(\|w^{(t)}\|_F^2 = \sum_{i\in[m]}\|w_i^{(t)}\|_2^2 \leq \poly(d)\), the following fact always holds (which can be easily obtained by Bernstein concentration, and note that we do not need to use uniform convergence):
\begin{fact}[approximation of populatiion gradients by empirical gradients]
    As long as \(\|w^{(t)}\|_F^2 \leq \poly(d)\), there exist \(K = \poly(d)\) such that the following inequality holds with high probability for all iteration \(t\):
    \begin{displaymath}
        \left\|\nabla_{w_i}\widehat{\Obj}(f_t) - \nabla_{w_i}\Obj(f_t)\right\|_2 \leq \frac{\|w_i^{(t)}\|_2}{\poly(d_1)}\quad \text{ for all } i\in[m]
    \end{displaymath}
\end{fact}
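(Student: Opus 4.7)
The plan is to reduce to a standard vector Bernstein concentration, applied neuron by neuron and iteration by iteration, with a final union bound. The quantitative input we need is that each per-sample gradient has Euclidean norm bounded by \(\poly(d)\cdot\|w_i^{(t)}\|_2\) on a high-probability event; combining this with \(K=\poly(d)\) fresh i.i.d.~samples per iteration then pushes the empirical/population deviation below \(\|w_i^{(t)}\|_2/\poly(d_1)\). As a first step I would write down the per-sample gradient explicitly. Because of \(\StopGrad\), only the \(x_p^{+}\) slot contributes derivatives, so for every \(x\in\Bfr\),
\[\nabla_{w_i}\Sim_{f_t}(x_p^{+},x)\;=\;h_{i,t}(x)\,\cdot\,\nabla_{w_i}h_{i,t}(x_p^{+}),\qquad \|\nabla_{w_i}h_{i,t}(x_p^{+})\|_2\leq\|x_p^{+}\|_2.\]
Differentiating the log-sum-exp gives a softmax-weighted combination of such terms (coefficients summing to one), so the full per-sample \(\nabla_{w_i}\mathcal{L}\) is bounded in norm by \(2\,\|x_p^{+}\|_2\cdot\max_{x\in\Bfr\cup\{x_p^{++}\}}|h_{i,t}(x)|\). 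The regularizer contributes the deterministic term \(\lambda w_i^{(t)}\), which cancels identically between \(\widehat{\Obj}\) and \(\Obj\), so it suffices to handle the loss term.

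Next I would control the data and activation norms. Since \(\xi\sim\N(0,\sigma_\xi^{2}\Id_{d_1})\) with \(d_1=\poly(d)\) and \(\sigma_\xi^{2}=\Theta(\sqrt{\log d}/d)\), standard Gaussian tail bounds give \(\|\xi\|_2\leq\poly(d)\) w.h.p., while \(\|\M z\|_2=\widetilde O(1)\). Since \(\DD\) and \(\Id-\DD\) are coordinate projections, all of \(\|x_p\|_2,\|x_p^{+}\|_2,\|x_p^{++}\|_2\) are \(\leq\poly(d)\) on the same event, and by a union bound the same holds simultaneously over every negative sample in \(\Nfr\) (\(|\Nfr|=\poly(d)\)). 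On this good event \(|h_{i,t}(x)|\leq\|w_i^{(t)}\|_2\cdot\|x\|_2\leq\poly(d)\cdot\|w_i^{(t)}\|_2\), so the per-sample gradient norm is bounded by \(R:=\poly(d)\cdot\|w_i^{(t)}\|_2\). After truncating each sample gradient at the cutoff \(R\), the Gaussian tail bound \(e^{-\Omega(\log^2 d)}\) for the bad event makes the truncation bias in both the empirical and population means \(\leq\|w_i^{(t)}\|_2/\poly(d_1)\), well within the target accuracy.

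Finally I would apply vector Bernstein to the \(K\) truncated, centered, i.i.d.\ summands at iteration \(t\), conditional on the history (so \(w^{(t)}\) is deterministic). Each summand is bounded in norm by \(R\) and has variance at most \(R^{2}\), so for \(K=\poly(d)\) chosen sufficiently large,
\[\left\|\tfrac{1}{K}\sum_{\ell\in[K]}\nabla_{w_i}\mathcal{L}(f_t,x_{p,\ell}^{+},\Bfr^{\ell})-\nabla_{w_i}L(f_t)\right\|_2\;\leq\;\widetilde O(R/\sqrt{K})\;\leq\;\|w_i^{(t)}\|_2/\poly(d_1)\]
with failure probability at most \(e^{-\Omega(\log^2 d)}\); a union bound over the \(m=\poly(d)\) neurons and the at most \(T_4/\eta=\poly(d)\) iterations of SGD preserves this rate. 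The only subtle point---and the one would-be obstacle---is that the target inequality must scale with the single-neuron norm \(\|w_i^{(t)}\|_2\), not the aggregate \(\|w^{(t)}\|_F\); this is automatic from the observation above that each summand in \(\nabla_{w_i}\mathcal{L}\) depends on \(w_i\) only, because \(\Sim_{f_t}\) factors as a sum over neurons in which each neuron contributes an independent product \(h_{i,t}(x_p^{+})\,h_{i,t}(x)\). This prevents any leakage of \(\poly(d)\) factors from the other \(m-1\) weights and makes the per-neuron Bernstein application clean.
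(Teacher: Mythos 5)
Your proposal is correct and follows essentially the same route the paper intends: the paper offers no written proof beyond the remark that the fact ``can be easily obtained by Bernstein concentration'' without uniform convergence, and your argument---per-sample gradient norm control via the softmax weights summing to one, truncation on the $e^{-\Omega(\log^2 d)}$-probability bad event, vector Bernstein conditional on the history, and a union bound over $\poly(d)$ neurons and iterations---is the standard instantiation of exactly that. (One small imprecision: the per-sample gradient $\nabla_{w_i}\mathcal{L}$ does depend on the other neurons through the softmax logits, but since those weights lie in $[0,1]$ your quantitative bound $2\|x_p^{+}\|_2\max_{x}|h_{i,t}(x)|$ is unaffected.)
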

To compute the gradient of our loss function \(\mathcal{L}(f_t,x_{p,\ell}^+ ,\Bfr^{\ell})\) with respect to the weights \(\{w_i^{(t)}\}_{i\in[m]}\), we define the following notations: positive logit \(\ell'_{p,t}(x_{p}^+,\Bfr) \) and negative logits \(\ell'_{s,t}(x_{p}^+,\Bfr)\):
\begin{displaymath}
    \ell'_{p,t}(x_{p}^+,\Bfr)  := \frac{e^{ \Sim_{f_t}(x_p^+,x_{p}^{++})/\tau}}{\sum_{x \in\Bfr} e^{ \Sim_{f_t}(x_p^+,x)/\tau}}\qquad \ell'_{s,t}(x_{p}^+,\Bfr)  := \frac{e^{ \Sim_{f_t}(x_{p}^+,x_{n,s})/\tau}}{\sum_{x \in\Bfr} e^{\Sim_{f_t}(x_{p}^+,x)/\tau}}
\end{displaymath}
Then the empirical gradient of \(L(f_t)\) with respect to weight \(w_i^{(t)}\) at iteration \(t\) can be expressed as (recall that we have used \(\StopGrad\) operation in our similarity measure \(\Sim_{f_t}\)):
\begin{align*}
    \nabla_{w_i}L(f_t) = \E\left[ (1 - \ell'_{p,t})\cdot h_{i,t}(x_p^{++})\1_{|\vbrack{w_i^{(t)},x_p^{+}}|\geq b_i^{(t)}}x_p^{+} + \sum_{x_{n,s}\in\Nfr}\ell'_{s,t}\cdot h_{i,t}(x_{n,s})\1_{|\vbrack{w_i^{(t)},x_p^+}|\geq b_i^{(t)} }x_p^+ \right]
\end{align*}
These notations will be frequently used in our proof in later sections.

\paragraph{Global Notations.} We define some specific notations we will use throughout the proof. 
\begin{itemize}
    \item We let \(C_z\) be the constant inside the \(\Theta\) notation of \(\Pr(|z_j|=1) = \Theta(\frac{\log \log d}{d}) = \frac{C_z\log\log d}{d}\) defined in \myref{def:sparse-coding}{Definition}.
    \item We assume in our paper all the \(\polylog(d)\) explicitly written to be much bigger than the \(\polylog(d)\) factors in our \(\widetilde{O}(\cdot)\) notations.
    \item For any \(j \in [d]\) or set \(\S \subset [d]\), let \(x = \sum_{j\in[d]}\M z_j + \xi \sim \D_x\) be an input, we denote the superscript \(^{\setminus j}\) (or \(^{\setminus \S}\)) as an operation to subtract features \(\M_j\) (or \(\{\M_j\}_{j\in\S}\)) in the data as follows:
    \begin{displaymath}
        x^{\setminus j}: = x - \M_j z_j \quad\text{or}\quad x^{\setminus \S}: = x - \sum_{j\in \S}\M_j z_j
    \end{displaymath}
    Furthermore, for augmented input \(x^+ = 2\DD x\) (or \(x^{++} = 2(\Id-\DD)x\)), we also define:
    \begin{align*}
        x^{,\setminus j}: = x^+ - \M_j z_j \quad\text{or}\quad x^{+,\setminus \S}: = x - \sum_{j\in \S}\M_j z_j, \tag{and similarly for \(x^{++}\)}
    \end{align*}
\end{itemize}

\subsection{The Initial Stage of Training: Initial Feature Decoupling}

The initial stage of our training process is defined as the training iterations \(t \leq T_1\), where \(T_1 = \Theta\left(\frac{d\log d}{\eta \log\log d}\right)\) is the iteration when all \(\|w_i^{(t)}\|_2^2\geq \frac{2(1+c_0)+c_1}{c_1} \|w_i^{(0)}\|_2^2\). Before \(T_1\), the learning of our neurons focus on emphasizing the entire subspace of sparse features (i.e., focus on learning \(\M\)), which is enabled by our augmentations \(\RandomMask\) and \textbf{feature decoupling} principle. 

More formally, we will investigate for each neuron \(i \in [m]\), how the features (the weights) grow at each directions. For the sparse features \(\{\M_j\}_{j \in [d]}\), when there is no bias, we shall prove that:
\begin{displaymath}
     \vbrack{\nabla_{w_i}\Obj(f_t), \M_j} \approx \vbrack{w_i^{(t)},\M_j}\cdot \E[z_j^2]
\end{displaymath}
which give exponential rate of growth for our (subspace of) sparse features. Indeed, such an exponential growth will continue to hold until the bias have been pushed up by the negative samples\footnote{We decide to leave the analysis of how the biases are trained open and based our analysis on manually growing biases. The exact mechanism of bias growth depend on the effects of positive-negative contrast in the second stage, which would significantly complicated our analysis.} or until the end of training where the gradient is cancelled by positive-negative contrast. Meanwhile, for the spurious dense features \(\{\M^{\perp}_j\}_{[d_1]\setminus [d]}\), we will prove for each \(j \in [d_1]\setminus [d]\), at iterations \(t \leq T_1\) (also note that \(\lambda = \frac{\polylog(d)}{\sqrt{d_1}}\)):
\begin{displaymath}
    \vbrack{\nabla_{w_i}\Obj(f_t), \M^{\perp} _j} \approx -\lambda \vbrack{w_i^{(t)},\M^{\perp}_j} + \widetilde{O}\left(\frac{\|w_i^{(t)}\|_2}{d_1}\right)
\end{displaymath}
which is only possible because we have used the augmentation \(\RandomMask\). Without such augmentation, we shall expect the growth rate of \(\vbrack{w_i^{(t)},\M^{\perp}_j}\) to be approximately the same with \(\vbrack{w_i^{(t)},\M_j}\), which would collapse the featrues learned in our neural nets. As the training proceeds, we will prove that:
\begin{align*}
    \|\M \M^{\top}w_i^{(t)}\|_2 &\approx \|\M \M^{\top}w_i^{(0)}\|_2 \cdot\left(1 + O\left(\frac{\eta\log\log d}{d}\right)\right)^{t} \\
    \|\M^{\perp}(\M^{\perp})^{\top}w_i^{(t)}\|_2 &\approx \|\M^{\perp}(\M^{\perp})^{\top}w_i^{(0)}\|_2 + o\left(\frac{\|\M^{\perp}(\M^{\perp})^{\top}w_i^{(0)}\|_2}{\poly(d)}\right)
\end{align*}
Therefore, after sufficient iterations (not many comparing to the total training time), the weights \(\{w_i^{(t)}\}_{i\in[m]}\) of neurons will mostly consist of the sparse features \(\{\M_j\}_{j\in[d]}\) rather than the dense features \(\{\Mperp_j\}_{j\in[d_1]\setminus [d]}\). We can then move into the second stage of training, where we tune the bias to simulate the sparsification process.

\subsection{The Second Stage of Training: Singletons Emerge}

After the initial training stage \(t\leq T_1\), we enter the second stage of training, where we will analyze how the growth of bias drive the neurons to become singletons. However, the crucial challenge here is that as soon as the bias \(b_i^{(t)}\) start to grow above zero. The correlations between the sparse features and dense features will emerge to obfuscate our analysis of gradients. Indeed, mathematically we can formulate the problem as follows: Let \(b_i^{(t)} \geq 0\), how can we obtain a bound of the following term, which cannot exceed \(\vbrack{\nabla_{w_i}\Obj(f_t),\M_j}\) as calulated above:
\begin{align*}
    \vbrack{\nabla_{w_i}\Obj(f_t), \M^{\perp}_j} &\approx \E\left[  h_{i,t}(x_p^{++})\vbrack{\nabla_{w_i}h_{i,t}(x_p^{+})\M^{\perp}_j} \right]\\
    &\approx \E\left[  h_{i,t}(2(\Id-\DD) x_p)\1_{|\vbrack{w_i^{(t)},2\DD(\M z_p +\xi_p) }|\geq b_i^{(t)} } \vbrack{2\DD\xi_p,\M^{\perp}_j} \right]
\end{align*}
The difficulty, as opposed to what we saw in the initial stage, is that now there is a chain of correlations transmitted through the following line, when \(b_{i}^{(t)}> 0\):
\begin{itemize}
    \item the term with the masked sparse signals \(2 (\Id-\DD)\M z_p\) and the term with the masked dense signals \(2(\Id - \DD)\xi_p\) in the activation \(h_{i,t}(x_p^{+})\) are positively correlated;
    \item the term with the masked sparse signals \(2 (\Id-\DD)\M z_p\) in the activation \(h_{i,t}(x_p^{+})\) and the term with the masked sparse signals \(2(\Id - \DD)\M z_p\) in the gradient \(\nabla_{w_i}h_{i,t}(x_{p}^{++})\) are positively correlated;
    \item the term with the masked sparse signals \(2 \DD \M z_p\) and the term with the masked dense signals \(2\DD\xi_p\) in the gradient \(\nabla_{w_i} h_{i,t}(x_p^{+})\) are positively correlated.
\end{itemize}
This chain of correlations will significantly complicated our analysis, we will prove the following lemma that have taken into considerations all the factors affecting the gradients:
\begin{lemma}[sketched]
For neuron \(i \in [m]\) and spurious dense feature \(\M^{\perp}_j\), we have
\begin{displaymath}
    \E[(1 - \ell'_{p,t})h_{i,t}(x_p^{++})\vbrack{\nabla_{w_i}h_{i,t}(x_p^+), \M^{\perp}_j}] \approx \left(\vbrack{w_i^{(t)}, \M^{\perp}_j} \pm \widetilde{O}\left(\frac{\|w_i^{(t)}\|_2}{\sqrt{d_1}}\right)\right)\E[\vbrack{\xi,\Mperp_j}^2\1_{|\vbrack{w_i^{(t)},x_p}|\geq b_i^{(t)}}]
\end{displaymath}    
\end{lemma}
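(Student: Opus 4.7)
The plan is to isolate the $\vbrack{w_i^{(t)}, \M^\perp_j}$-dependence of the LHS via a one-dimensional Gaussian integration by parts (Stein's identity) in the projection $\eta := \vbrack{\xi_p, \M^\perp_j} \sim \N(0, \sigma_\xi^2)$. Write $\xi_p = \eta\,\M^\perp_j + \zeta$ with $\zeta \perp \M^\perp_j$ independent of $\eta$. In this decomposition each of the three inner products appearing inside the expectation is affine in $\eta$:
\begin{align*}
    \vbrack{w_i^{(t)}, x_p^+} &= A^+ + B^+\eta, & A^+ &= 2\vbrack{w_i^{(t)}, \DD(\M z_p + \zeta)}, & B^+ &= 2\vbrack{w_i^{(t)}, \DD\M^\perp_j}, \\
    \vbrack{w_i^{(t)}, x_p^{++}} &= A^{++} + B^{++}\eta, & A^{++} &= 2\vbrack{w_i^{(t)}, (\Id-\DD)(\M z_p + \zeta)}, & B^{++} &= 2\vbrack{w_i^{(t)}, (\Id-\DD)\M^\perp_j}, \\
    \vbrack{x_p^+, \M^\perp_j} &= C^+ + D^+\eta, & C^+ &= 2\vbrack{\DD(\M z_p + \zeta), \M^\perp_j}, & D^+ &= 2\vbrack{\DD\M^\perp_j, \M^\perp_j}.
\end{align*}
The factor $(1-\ell'_{p,t})$ is $1 - O(1/|\Nfr|) = 1 - o(1)$ in the regime considered and is absorbed into the $\approx$.

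Split the outer factor $\vbrack{x_p^+, \M^\perp_j} = C^+ + D^+\eta$ into two pieces. To the $D^+\eta$ piece I apply Stein's identity, $\E_\eta[\eta F(\eta)] = \sigma_\xi^2\,\E_\eta[F'(\eta)]$, with $F(\eta) = h_{i,t}(A^{++}+B^{++}\eta)\cdot \1_{|A^++B^+\eta|\geq b_i^{(t)}}$; this produces a smooth contribution $B^{++}\cdot \1_{|A^{++}+B^{++}\eta|\geq b_i^{(t)}}\cdot \1_{|A^++B^+\eta|\geq b_i^{(t)}}$ from differentiating $h_{i,t}$, together with two boundary (delta-function) terms from differentiating the two indicators. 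The $C^+$ piece is handled by $\E_\DD[C^+\mid z_p,\zeta]=0$ plus a variance bound $\E_\DD[(C^+)^2]\leq \widetilde{O}(1/d_1)$ (from $\|\M^\perp_j\|_\infty \leq \widetilde{O}(1/\sqrt{d_1})$ and $\|\M z_p+\zeta\|_\infty \leq \widetilde{O}(1)$), so Cauchy--Schwarz absorbs it into the error. Concentration of the random diagonal $\DD$ together with $\|\M^\perp_j\|_\infty \leq \widetilde{O}(1/\sqrt{d_1})$ then yields, with high probability,
\begin{displaymath}
    B^\pm = \vbrack{w_i^{(t)}, \M^\perp_j} \pm \widetilde{O}\!\left(\tfrac{\|w_i^{(t)}\|_2}{\sqrt{d_1}}\right),\quad D^+ = 1 \pm \widetilde{O}\!\left(\tfrac{1}{\sqrt{d_1}}\right),\quad A^\pm = \vbrack{w_i^{(t)}, x_p} \pm \widetilde{O}\!\left(\tfrac{\|w_i^{(t)}\|_2}{\sqrt{d_1}}\right).
\end{displaymath}
Substituting these in, the product of the two indicators collapses to $\1_{|\vbrack{w_i^{(t)},x_p}|\geq b_i^{(t)}}$, and the smooth piece alone reconstructs the RHS up to the stated error, using $\sigma_\xi^2\,\E[\1_{|\vbrack{w_i^{(t)},x_p}|\geq b_i^{(t)}}] = \E[\vbrack{\xi,\Mperp_j}^2\,\1_{|\vbrack{w_i^{(t)},x_p}|\geq b_i^{(t)}}]$ up to a correlation correction of the same order as the error.

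The main obstacle is exactly the ``chain of correlations'' flagged in the text: once $b_i^{(t)} > 0$, all three factors depend non-trivially on $\eta$, so the boundary delta-function terms produced by Stein's IBP do not vanish and must be estimated. Bounding them reduces to an anti-concentration estimate for the conditional density of $\vbrack{w_i^{(t)}, x_p^\pm}$ at the thresholds $\pm b_i^{(t)}$, uniform in $(\DD, z_p, \zeta)$: after conditioning on these, the only remaining randomness is the one-dimensional Gaussian $\eta$, so the conditional density is bounded by $(2\pi\sigma_\xi^2(B^\pm)^2)^{-1/2}$, and the resulting boundary contribution is dominated by the smooth one provided $\sigma_\xi\cdot|B^\pm|\cdot\Pr[\1_{\ldots}=1]$ controls $|h_{i,t}|$ at the threshold, which follows from the lower bound on $\Pr[\1_{\ldots}=1]$ established in the preceding lemmas of the proof. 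A secondary technical issue is controlling the cross-talk $\vbrack{\DD\M_k,\M^\perp_j}$ for the $\widetilde{O}(1)$-many active sparse coordinates in $z_p$: since $\M^\perp_j\perp\M_k$ and $\|\M^\perp_j\|_\infty\leq\widetilde{O}(1/\sqrt{d_1})$, each is a mean-zero bounded random variable under $\DD$ with variance $\widetilde{O}(1/d_1)$, and a Bernstein bound gives exactly the advertised $\widetilde{O}(\|w_i^{(t)}\|_2/\sqrt{d_1})$ error.
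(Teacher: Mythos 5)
Your skeleton — Gaussian integration by parts in the one-dimensional projection \(\eta=\vbrack{\xi_p,\Mperp_j}\) — is a genuinely different route from the paper's, which instead symmetrizes over the sign of \(\vbrack{\xi',\M_j}\) (resp. \(\vbrack{\xi',\Mperp_j}\)) and bounds the resulting difference of indicators by a window/anti-concentration argument over the part of the noise orthogonal to \(\Mperp_j\) (see \myref{lem:positive-gd-noise-2}{Lemma}). Your identification of the smooth Stein term \(\sigma_\xi^2\E[D^+B^{++}\1\1]\) as the source of the main term, and your handling of \(C^+\) and of the \(\DD\)-fluctuations of \(A^\pm,B^\pm,D^+\), are all fine.

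The gap is in the boundary terms, which are precisely the hard part of this lemma (the ``chain of correlations''). The quantity you invoke, \((2\pi\sigma_\xi^2(B^\pm)^2)^{-1/2}=1/(\sqrt{2\pi}\sigma_\xi|B^\pm|)\), is the density of \(\vbrack{w_i^{(t)},x_p^\pm}\) obtained by conditioning on \((\DD,z_p,\zeta)\) and using only the residual one-dimensional Gaussian \(\eta\); since \(|B^\pm|\approx|\vbrack{w_i^{(t)},\Mperp_j}|=\widetilde O(\|w_i^{(t)}\|_2/\sqrt{d_1})\), this density is of order \(\sqrt{d_1}/(\sigma_\xi\|w_i^{(t)}\|_2)\) and the resulting bound on the delta-function contribution is \(\sigma_\xi^2\cdot|h_{i,t}|\cdot|B^\pm|\cdot p(b_i^{(t)})\lesssim\sigma_\xi|h_{i,t}|\). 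Even after multiplying by the firing probability \(\widetilde O(1/d)\), with \(|h_{i,t}|\geq\widetilde\Omega(\|w_i^{(t)}\|_2/\sqrt d)\) on the firing event this is \(\widetilde O(\|w_i^{(t)}\|_2/d^2)\), which exceeds the main term \(\vbrack{w_i^{(t)},\Mperp_j}\E[\vbrack{\xi,\Mperp_j}^2\1]\leq\widetilde O(\|w_i^{(t)}\|_2/(d^2\sqrt{d_1}))\) by a factor \(\sqrt{d_1}=\poly(d)\); correspondingly, your sufficient condition that \(\sigma_\xi|B^\pm|\Pr[\1=1]\) control \(|h_{i,t}|\) at the threshold fails by polynomial factors. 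What actually kills the boundary terms is different: (i) anti-concentration of the pre-activation taken over the \((d_1-1)\)-dimensional component \(\zeta\) (not over \(\eta\)), which gives density \(O(1/(\|w_i^{(t)}\|_2\sigma_\xi))\) for \(A^\pm\) near \(\pm b_i^{(t)}\) and hence probability \(\widetilde O(|B^\pm|/\|w_i^{(t)}\|_2)=\widetilde O(1/\sqrt{d_1})\) for the jump location to be reachable by \(\eta\); and (ii) the structural fact, from \myref{induct-2}{Induction Hypothesis}, that ``\(A^+\) within \(\widetilde O(|B^+|\sigma_\xi)\) of \(\pm b_i^{(t)}\)'' and ``\(h_{i,t}(x_p^{++})\) large'' are nearly disjoint events (if a feature in \(\N_i\) fires, the pre-activation sits at distance \(\Omega(c_0 b_i^{(t)})\gg|B^+|\sigma_\xi\) from the threshold; if none fires, \(h_{i,t}(x_p^{++})=0\) w.h.p.). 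Without these two ingredients the boundary estimate does not close, so the argument has a genuine hole at its crux.
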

However, we can prove that after the initial stage, both \(\vbrack{w_i^{(t)}, 2\DD\M^{\perp}_j}\) and \(\widetilde{O}\left(\|w_i^{(t)}\|_2/\sqrt{d_1}\right)\) are very small compared to the sparse features \(\vbrack{w_i^{(t)},\M_j}\). Thus \(\E[\vbrack{\xi, \M^{\perp}_j}^2\1_{|\vbrack{w_i^{(t)},x_p}|\geq b_i^{(t)}} ]\) shall be somehow small since the correlation between \(\vbrack{\xi, \M^{\perp}_j}^2\) and \(\1_{|\vbrack{w_i^{(t)},x_p}|\geq b_i^{(t)}}\) is small.

On the contrary, for some of the features \(\vbrack{w_i,\M_j}\) that is ``lucky'' in the sense that at initialization \(\vbrack{w_i^{(0)} ,\M_j}^2 \geq (2.02)\sigma_0^2\log d\), we can maintain such ``luckiness'' till the stage II and obtain similar gradient approximation as:
\begin{lemma}[sketched]
    For neuron \(i \in [m]\) and the ``lucky'' sparse feature \(\M_j\), we have
    \begin{displaymath}
        \E[h_{i,t}(x_p^{++})\vbrack{\nabla_{w_i}h_{i,t}(x_p^+), \M_j}] \approx \left(\vbrack{w_i^{(t)}, \M_j} \pm \widetilde{O}\left(\frac{\|w_i^{(t)}\|_2}{\sqrt{d_1}}\right)\right)\E[z_j^2\1_{|\vbrack{w_i^{(t)},x_p}|\geq b_i^{(t)}} ]
    \end{displaymath}
\end{lemma}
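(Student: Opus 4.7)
The plan is to start from the identity
\begin{align*}
\E\bigl[h_{i,t}(x_p^{++})\vbrack{\nabla_{w_i}h_{i,t}(x_p^+), \M_j}\bigr] = \E\bigl[h_{i,t}(x_p^{++}) \1_{|\vbrack{w_i^{(t)}, x_p^+}|\geq b_i^{(t)}} \vbrack{x_p^+, \M_j}\bigr]
\end{align*}
and to expand $\vbrack{x_p^+, \M_j} = 2\M_j^\top\DD\M_j\cdot z_j + 2\sum_{k\neq j}\M_j^\top\DD\M_k\cdot z_k + 2\M_j^\top\DD\xi_p$. A Bernstein tail bound over the Bernoulli mask $\DD$, together with $\|\M_j\|_\infty\leq\widetilde{O}(1/\sqrt{d_1})$ and the orthonormality of $\M$, gives $2\M_j^\top\DD\M_j = 1\pm\widetilde{O}(1/\sqrt{d_1})$ and $|2\M_j^\top\DD\M_k|\leq\widetilde{O}(1/\sqrt{d_1})$ for every $k\neq j$ with high probability over $\DD$. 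This splits the expectation into a dominant signal term from $z_j$, a sparse cross-talk term from $z_p^{\setminus j}$, and a dense-noise term from $\xi_p$, and I would bound each in turn.

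For the dominant signal term I would argue as follows. By the ``lucky'' inductive hypothesis carried through the initial stage, $|\vbrack{w_i^{(t)},\M_j}|$ is the leading sparse coordinate of $w_i^{(t)}$, so conditional on $z_j=\pm 1$ one has $\vbrack{w_i^{(t)},x_p^{++}} = \vbrack{w_i^{(t)},\M_j}z_j \pm \widetilde{O}(\|w_i^{(t)}\|_2/\sqrt{d_1})$ with high probability, and consequently $\sign(\vbrack{w_i^{(t)},x_p^{++}}) = \sign(\vbrack{w_i^{(t)},\M_j})\sign(z_j)$. Plugging this into the antisymmetric $\ReLU$-difference formula for $h_{i,t}$ and using $|z_j|=z_j^2$ on $\{-1,0,1\}$, the product $z_j\, h_{i,t}(x_p^{++})$ reduces to $\bigl(\vbrack{w_i^{(t)},\M_j} - b_i^{(t)}\sign(\vbrack{w_i^{(t)},\M_j})\bigr) z_j^2\1_{|\vbrack{w_i^{(t)},x_p^{++}}|\geq b_i^{(t)}}$, and the $b_i^{(t)}$ correction merges into the $\widetilde{O}(\|w_i^{(t)}\|_2/\sqrt{d_1})$ slack once the stage-II comparison between $b_i^{(t)}$ and $\vbrack{w_i^{(t)},\M_j}$ is invoked. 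To close this step I would also replace $\1_{|\vbrack{w_i^{(t)},x_p^+}|\geq b_i^{(t)}}$ and $\1_{|\vbrack{w_i^{(t)},x_p^{++}}|\geq b_i^{(t)}}$ by the single unaugmented indicator $\1_{|\vbrack{w_i^{(t)},x_p}|\geq b_i^{(t)}}$, by Bernstein on $\vbrack{w_i^{(t)},2\DD x_p}$ around $\vbrack{w_i^{(t)},x_p}$, legitimate because $\|w_i^{(t)}\|_\infty\leq\widetilde{O}(\|w_i^{(t)}\|_2/\sqrt{d_1})$ from the stage-I structural guarantee.

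The main obstacle is the dense-noise term $2\E\bigl[h_{i,t}(x_p^{++})\1_{|\vbrack{w_i^{(t)},x_p^+}|\geq b_i^{(t)}}\cdot \M_j^\top\DD\xi_p\bigr]$, because the Gaussian projection $\M_j^\top\DD\xi_p$ is coupled with the activation indicator through the same random mask $\DD$, so a crude Cauchy--Schwarz is too loose. Here the feature-decoupling gain is essential: since $\DD$ and $\Id-\DD$ have disjoint support, the Gaussian vectors $\DD\xi_p$ and $(\Id-\DD)\xi_p$ are independent, so $h_{i,t}(x_p^{++})$ is conditionally independent of $\M_j^\top\DD\xi_p$ given $(\M z_p,\DD,(\Id-\DD)\xi_p)$. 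Conditioning on this triple, the inner expectation is a one-dimensional Gaussian integral of $\M_j^\top\DD\xi_p$ against the indicator of $\{|\vbrack{w_i^{(t)},2\DD\M z_p+2\DD\xi_p}|\geq b_i^{(t)}\}$; Stein's lemma extracts a factor $\sigma_\xi^2 \vbrack{\DD w_i^{(t)},\DD\M_j}$, which is bounded by $\widetilde{O}(\|w_i^{(t)}\|_2/\sqrt{d_1})\cdot\sigma_\xi^2$ and further by $\widetilde{O}(\|w_i^{(t)}\|_2/\sqrt{d_1})$ times the reference expectation $\E[z_j^2\1_{|\vbrack{w_i^{(t)},x_p}|\geq b_i^{(t)}}]$, using $\sigma_\xi^2 = \widetilde{\Theta}(1/d)$ and $\Pr(z_j\neq 0) = \Theta(\log\log d/d)$. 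The sparse cross-talk term is handled analogously but more easily, since the coefficients $\M_j^\top\DD\M_k$ are $\DD$-measurable with variance $\widetilde{O}(1/d_1)$ and are independent of $\xi_p$. Combining the three contributions and absorbing every lower-order error into $\widetilde{O}(\|w_i^{(t)}\|_2/\sqrt{d_1})\cdot\E[z_j^2\1_{|\vbrack{w_i^{(t)},x_p}|\geq b_i^{(t)}}]$ yields the claimed approximation.
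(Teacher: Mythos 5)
Your overall architecture matches the paper's: you split $\vbrack{x_p^+,\M_j}$ into the $z_j$ signal term, the mask-induced sparse cross-talk, and the dense-noise term, and you handle the noise term by exploiting that the complementary masks make $\DD\xi_p$ and $(\Id-\DD)\xi_p$ independent, so that only the coupling through the activation indicator survives and can be killed by a Gaussian integration-by-parts / anti-concentration step. This is exactly the route of \myref{lem:positive-gd-sparse-2}{Lemma} and \myref{lem:positive-gd-noise-2}{Lemma}. However, your treatment of the signal term has a genuine gap. The claim that, conditional on $z_j=\pm1$, one has $\vbrack{w_i^{(t)},x_p^{++}} = \vbrack{w_i^{(t)},\M_j}z_j \pm \widetilde{O}(\|w_i^{(t)}\|_2/\sqrt{d_1})$ with high probability is false: the residual $S^{\setminus j}=\vbrack{w_i^{(t)},2(\Id-\DD)x_p^{\setminus j}}$ is of order $\widetilde{O}(\|w_i^{(t)}\|_2/\sqrt{d})$, dominated by the dense noise (standard deviation $\Theta(\|w_i^{(t)}\|_2\sigma_\xi)$) and the other active sparse coordinates; this is precisely why the bias must sit at $b_i^{(t)}\approx\sqrt{2\log d/d}\,\|w_i^{(t)}\|_2$. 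Since this residual is comparable to $b_i^{(t)}$ and only polylogarithmically smaller than $\vbrack{w_i^{(t)},\M_j}$ for a lucky feature, you cannot absorb it pointwise into the stated error. The term $\E[S^{\setminus j}\1_{\text{active}}\,z_j]$ has to be controlled by using that $S^{\setminus j}$ is independent of $z_j$ and symmetric, so that only a ``difference of indicators'' event contributes, whose probability is then bounded by anti-concentration of $\vbrack{w_i^{(t)},\xi_p}$; this is the $J_2$/$J_3$ analysis with the events $A_1$--$A_4$ and quantities $L_1$--$L_4$ in the paper, and it is the technical core of the lemma. The same issue undermines your proposed replacement of the two augmented indicators by the single indicator $\1_{|\vbrack{w_i^{(t)},x_p}|\geq b_i^{(t)}}$ via a Bernstein bound, since the mask fluctuation $\vbrack{w_i^{(t)},(2\DD-\Id)x_p}$ is again of order $\widetilde{O}(\|w_i^{(t)}\|_2/\sqrt{d})$, comparable to $b_i^{(t)}$; the paper instead uses the exact identity $\1_{\vbrack{w_i,x_p^+}\geq b_i}\1_{\vbrack{w_i,x_p^{++}}\geq b_i}=\1_{\vbrack{w_i,x_p}\geq b_i+|\vbrack{w_i,x_p^+-x_p}|}$ and accounts for the mismatched-sign event separately.

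A second concrete error sits in your Stein's-lemma step: the factor $\vbrack{\DD w_i^{(t)},\DD\M_j}$ is \emph{not} $\widetilde{O}(\|w_i^{(t)}\|_2/\sqrt{d_1})$. Since $\DD^2=\DD$, its mean over $\DD$ is $\tfrac12\vbrack{w_i^{(t)},\M_j}$, which for a lucky feature is at least $\Omega(\sqrt{\log d/d})\|w_i^{(t)}\|_2\gg \|w_i^{(t)}\|_2/\sqrt{d_1}$. The dense-noise contribution is small not because this inner product is small, but because of the product of $\sigma_\xi^2$, the near-threshold density of the pre-activation, and the small activation probability, which is what \myref{lem:positive-gd-noise-2}{Lemma} actually quantifies. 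So while your decomposition and the feature-decoupling idea are the right ones, both error estimates that you use to close the argument are off by roughly a $\sqrt{d_1/d}$ factor, and the cancellation mechanisms that actually make the cross terms small are missing from the sketch.
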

And we also have two observations: (1) \(\vbrack{w_i^{(t)},\M_j}\) is almost as large as \(\Theta(\|w_i^{(t)}\|_2/\sqrt{d}) \), which is much larger than \(\widetilde{O}(\|w_i^{(t)}\|_2/\sqrt{d_1})\); (2) we know \(\E[z_j^2\1_{|\vbrack{w_i^{(t)},x_p}|\geq b_i^{(t)}} ] \gg o(1/d)\) since when the sparse feature \(\vbrack{w_i^{(t)},\M_j}z_j\) is active, it would be much larger than the dense feature \(\vbrack{w_i^{(t)},\M^{\perp}_{j'}}\). This pave the way for our feature growth till stage III. Our theory indeed matches what happens in practice, where one can observe the slow emergence of features (in the first layer of AlexNet) during the training process comparing to supervised learning.

\subsection{The Final Stage of Training: Convergence to Sparse Features}

We assume our training proceeds until we reach at least \(T_3 = \frac{\poly(d)}{\eta}\), but the stage III start at some \(T_2 = \Theta\left(\frac{d\log d}{\eta \log\log d}\right) \) when there exist a neuron \(i \in [m]\) such that \(\|w_i^{(T_2)} \|_2 \geq d\|w_i^{(t)}\|_2\). At iterations \(t \geq T_2\), the negative term will begin to cancel the positive gradient, which drives the learning process to converge. We now sketch the proof here.

When the training process reach \(t \geq T_2\), we have the following properties for all the neurons:
\begin{itemize}
    \item For each \(j \in [d]\), there is a set \(\mathcal{M}_j = \{i \in [m], |\vbrack{w_i^{(t)},\M_j}| \geq \Theta(1) \max_{j'\in[d]} |\vbrack{w_i^{(t)},\M_{j'}}| \}\) such that this set has cardinality \(|\mathcal{M}_j| \geq \omega(1)\). 
    \item For all the neurons \(i\in[m]\) such that \(i \notin \mathcal{M}_j\), we have \(|\vbrack{w_i^{(t)},\M_j}| \leq o(\frac{\|w_i^{(t)}\|_2}{\sqrt{d}})\);
    \item the neuron activations can be written as follows with high probability, which is because now the neurons are truly sparse and can be written as decompositions of \(O(1)\) many signals (plus some small mixture): \(h_{i,t}(x) \approx \vbrack{w_i^{(t)},x}\1_{z_j\neq 0 \text{ for } j\in[d]: i\in\mathcal{M}_j} + \widetilde{O}(\|w_i^{(t)}\|/d^2)\).
\end{itemize}

At this stage, for each \(j\in[d]\), the gradient \(\vbrack{\nabla_{w_i}L_{\pos}(f_t),\M_j}\) does not change much compared to the previous stage, but the negative term changed essentially, which we elaborate as follows:
\begin{align*}
    \sum_{x_{n,s} \in \Nfr}\ell'_{s,t} h_{i,t}(x_{n,s}) \approx \sum_{x_{n,s} \in \Nfr}\ell'_{s,t}\sum_{j\in[d]: i\in\mathcal{M}_j}\vbrack{w_i^{(t)},\M_j}z_{n,s,j}
\end{align*}
For the simplest case where there is only one \(j\in[d]\) such that \(i \in \Mcal_j\), we can see that 
\begin{displaymath}
    \sum_{x_{n,s} \in \Nfr}\ell'_{s,t} h_{i,t}(x_{n,s}) \approx \sum_{x_{n,s} \in \Nfr}\ell'_{s,t} \vbrack{w_i^{(t)},\M_j}z_{n,s,j}\1_{z_{n,s,j}\neq 0}
\end{displaymath}
The critical question here is that: The problem at head is extremely non-convex, how does our algorithm find the minimal of the loss without being trapped in some undesired solutions. We argue that as long as the trajectory of weights following SGD is good in the sense that only ``good'' features are picked up. The SGD in the final stage will point to the desired solution, then the singletons of our sparse feature \(\M_J\) will converge as follows:
\begin{displaymath}
    \vbrack{\nabla_{w_i}L(f_t),\M_j } \approx \E\left[ \vbrack{w_i^{(t)},\M_j}z_{p,j}^2 - \sum_{x_{n,s} \in \Nfr}\ell'_{s,t} \vbrack{w_i^{(t)},\M_j}z_{n,s,j}z_{p,j}\1_{z_{p,j}\neq 0, z_{n,s,j}\neq 0} \right] \approx 0
\end{displaymath}
While the graident of other features (including sparse features not favored by the specific neuron \(i\in[m]\), and the spurious dense features \(\Mperp_j\)) will be smaller to ensure sparse representations. More formal arguments will be presented in the later sections.

\subsection{Without Augmentations, Dense Features Are Preferred}

Now we turn to the case where no augmentations are used. In this scenario, for any ense feature \(\M^{\perp}_j\), we always have:
\begin{displaymath}
    \vbrack{\nabla_{w_i}\Obj(f_t), \M^{\perp}_j} \approx \vbrack{w_i^{(t)},\Mperp_j}\cdot \Theta\left(\frac{\sqrt{\log d}}{d}\cdot \Pr(|h_{i,t}(x_p)|\geq 0)\right)
\end{displaymath}
where \(\Pr(|h_{i,t}(x_p)|\geq 0) \approx \Pr(|h_{i,t}(\xi_p)|\geq 0)\)
which is approximately equal to the growth rate of sparse signals when no augmentations are used. In this case the sparse signals will not be emphasized during any stage of training. And more over, we can easily verify the following condition: at each iteration \(t \geq 0\), we have \(\|\Mperp(\Mperp)^{\top}w_i^{(t)}\|_2^2 \approx (1 - \frac{1}{\poly(d)})\|w_i^{(t)}\|_2^2\) based on similar careful characterization of the learning process. Moreover, such learning process can easily converge to low loss: when \(\sum_{i\in[m]}\|w_i^{(t)}\|_2^2 \geq \Omega(\tau \log d)\), we simply have 
\begin{align*}
    \vbrack{f_t(x_p^+),f_t(x_p^{++})} &\approx \Omega(\tau \log d)  \quad \text{and}\quad \vbrack{f_t(x_p^+),f_t(x_{n,s})} &\approx \widetilde{O}(\frac{1}{d})\tag{with high probability}
\end{align*}
Using this characterization, we immediately obtain the loss (and gradient) convergence.

\section{Some Technical Lemmas}

\subsection{Characterization of Neurons}

In this section we give some definitions and lemmas that characterize the neurons at initialization and during the training process. We choose \(c_1=2 + 2(1-\gamma)c_0 , c_2=c_1 - \gamma c_0\) be two constants. (which we choose \(\gamma \in (0,\frac{1}{100})\), similar to the choice in \cite{allen-zhu2020feature}).

\begin{definition}\label{def:neuron-sets}
    We define several sets of neurons that will be useful for the characterization of the stochastic gradient descent trajectory in later sections.
    \begin{itemize}
        \item For each \(j \in [d]\), we define the set \(\Mcal_j \subseteq [m] \) of neurons as:
        \begin{displaymath}
            \Mcal_j := \left\{ i \in [m] : \vbrack{w_i^{(0)},\M_j}^2 \geq \frac{c_2\log d}{d}\| \M\M^{\top}w_i^{(0)}\|_2^2  \right\}
        \end{displaymath}
        \item For each \(j \in [d]\), we define the set \(\Mcal^{\star}_{j} \subseteq [m] \) of neurons as:
        \begin{align*}
            \Mcal^{\star}_{j}  := &\left\{ i \in [m] : \vbrack{w_i^{(0)},\M_j}^2 \geq \frac{c_1\log d}{d}\| \M\M^{\top}w_i^{(0)}\|_2^2,  \right. \\
            &\left. \qquad\qquad\ \vbrack{w_i^{(0)},\M_{j'}}^2 \leq \frac{c_2\log d}{d}\| \M\M^{\top}w_i^{(0)}\|_2^2,\quad  \forall j' \in [d], j' \neq j \right\}
        \end{align*}
    \end{itemize}

\end{definition}

\paragraph{Properties at initialization:} At initialization, where \(t = 0\), we need to give several facts concerning our neurons, which will later be useful for the analysis of SGD trajectory.

\begin{lemma}\label{lem:property-init}
    At iteration \(t = 0\), the following properties hold:
    \begin{enumerate}
        \item[(a)] With high probability, for every \(i \in [m]\), we have
        \begin{displaymath}
            \|w_i^{(0)}\|_2^2 \in \left[ \sigma_0^2 d_1\left(1 - \widetilde{O}(\frac{1}{\sqrt{d_1}}) \right) , \sigma_0^2 d_1\left(1 + \widetilde{O}(\frac{1}{\sqrt{d_1}})  \right)\right]
        \end{displaymath}
        \item[(b)] With high probability, for every \(i \in [m]\), we have
        \begin{displaymath}
            \|\M\M^{\top} w_i^{(0)}\|_2^2 \in \left[ \sigma_0^2 \left(1 - \widetilde{O}(\frac{1}{\sqrt{d}}) \right) , \sigma_0^2d \left(1 + \widetilde{O}(\frac{1}{\sqrt{d}}) \right)\right]
        \end{displaymath}
        \item[(c)] With probability at least \(1 - o(\frac{1}{d^4})\), we have for each \(j \in [d]\):
        \begin{displaymath}
            |\Mcal^\star_{j}| \geq \Omega(d^{\gamma c_0/4}) =: \Xi_1,\qquad |\Mcal_j| \leq O(d^{2\gamma c_0}) =: \Xi_2;
        \end{displaymath}
        \item[(d)] For each \(i \in [m]\), let  \(\Lambda_i := \left\{ j \in [d]:\ |\vbrack{w_i^{(0)},\M_j}| \leq \sigma_0/d \right\}\subseteq [d]\), then \(|\Lambda_i| = O(\frac{d}{\polylog(d)})\);
        \item[(e)] For any \(j' \neq j\), \(|\Mcal_{j'}\cap \Mcal_j| \leq O(\log d) \), with probability at least \(1 - o(1/d^4)\).
        \item[(f)] For each \(i \in [m]\), there are at most \(O(1)\) many \(j \in [d]\) such that \(i \in \Mcal_j\), and at most \(O(2^{-\sqrt{\log d}}d)\) many \(j \in [d]\) such that \(|\vbrack{w_i^{(0)},\M_j}| \geq \Omega(\sigma_0\log^{1/4} d)\).
    \end{enumerate}
\end{lemma}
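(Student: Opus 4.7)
The plan is to exploit the rotational symmetry of \(w_i^{(0)} \sim \mathcal{N}(\mathbf{0}, \sigma_0^2 \mathbf{I}_{d_1})\): since \(\M\) is column-orthonormal, the \(d\) projections \(g_{i,j} := \vbrack{w_i^{(0)}, \M_j}\) together with their \(d_1 - d\) complementary coordinates form \(d_1\) i.i.d.\ \(\mathcal{N}(0, \sigma_0^2)\) variables. Independence both across \(j\) for fixed \(i\) and across \(i\) is what drives every estimate; none of parts (a)--(f) uses any property of \(w_i^{(0)}\) beyond isotropy and this decomposition.

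For (a), \(\|w_i^{(0)}\|_2^2/\sigma_0^2 \sim \chi^2_{d_1}\), so the Laurent--Massart tail bound gives relative deviation \(\widetilde{O}(1/\sqrt{d_1})\) with probability at least \(1 - e^{-\Omega(\log^2 d)}\); a union bound over \(m \leq \poly(d)\) neurons preserves this. For (b), \(\|\M\M^{\top} w_i^{(0)}\|_2^2 = \sum_{j\in[d]} g_{i,j}^2 \sim \sigma_0^2 \chi^2_d\), and the identical argument delivers the \(\widetilde{O}(1/\sqrt{d})\) deviation around the mean \(\sigma_0^2 d\) (the stated lower bound appears to omit a factor of \(d\) typographically).

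For (c) and (e), I condition on (b) so that \(\|\M\M^\top w_i^{(0)}\|_2^2 = \sigma_0^2 d(1 \pm o(1))\). Then \(i \in \Mcal_j\) reduces to \(|g_{i,j}| \geq \sigma_0\sqrt{c_2 \log d}(1 + o(1))\), a Gaussian tail event of probability \(p_2 = \widetilde\Theta(d^{-c_2/2})\); membership in \(\Mcal^\star_j\) further demands all other \(|g_{i,j'}|\) to lie below \(\sigma_0\sqrt{c_2 \log d}\), which by independence contributes the factor \((1 - p_2)^{d-1} = 1 - o(1)\) whenever \(c_2 > 2\), while the larger threshold at \(\sqrt{c_1 \log d}\) contributes \(p_1 = \widetilde\Theta(d^{-c_1/2})\). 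With \(m = d^{1.01}\), this yields \(\E|\Mcal^\star_j| = \widetilde\Theta(d^{1.01 - c_1/2})\) and \(\E|\Mcal_j| = \widetilde\Theta(d^{1.01 - c_2/2})\); substituting \(c_1 = 2 + 2(1-\gamma)c_0\), \(c_2 = c_1 - \gamma c_0\) makes these match the claimed \(\Xi_1 = \Omega(d^{\gamma c_0/4})\) lower and \(\Xi_2 = O(d^{2\gamma c_0})\) upper scalings with slack to absorb the \(\widetilde\Theta\) prefactors. A multiplicative Chernoff bound across the independent-in-\(i\) indicators, followed by a union bound over \(j \in [d]\), converts these into \(1 - o(1/d^4)\) guarantees. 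For (e), \(i \in \Mcal_j \cap \Mcal_{j'}\) requires two independent tail events at scale \(\sqrt{c_2\log d}\), giving per-neuron probability \(\widetilde O(d^{-c_2})\) and expected intersection \(o(1)\); Chernoff caps it at \(O(\log d)\).

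For (d), Gaussian anticoncentration gives \(\Pr[|g_{i,j}| \leq \sigma_0/d] = \Theta(1/d)\), so \(\E|\Lambda_i| = \Theta(1)\) and a Chernoff bound delivers \(|\Lambda_i| = O(\log d)\) w.h.p., comfortably within the stated \(O(d/\polylog(d))\) slack. For (f), the same per-neuron upper tail from (c) gives \(\E[\#\{j : i \in \Mcal_j\}] = d \cdot \widetilde O(d^{-c_2/2}) = o(1)\) since \(c_2 > 2\), so Chernoff yields \(O(1)\); for the weaker threshold, \(\Pr[|g_{i,j}| \geq \Omega(\sigma_0 \log^{1/4} d)] \leq O(e^{-\frac{1}{2}\sqrt{\log d}}) = O(2^{-\sqrt{\log d}})\), producing the claimed \(O(2^{-\sqrt{\log d}} d)\) count in expectation and (via Chernoff) with high probability. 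The one step that requires genuine care is part (c): the constants in \myref{def:neuron-sets}{Definition} must be chosen so the polynomial exponents \(1.01 - c_1/2\) and \(1.01 - c_2/2\) exactly line up with the claimed \(\gamma c_0/4\) and \(2\gamma c_0\) after absorbing both the Chernoff slack and the \(1/\sqrt{\log d}\) prefactors from the Gaussian tail. Everything else is routine concentration once the isotropy of the initialization and the resulting independence of the projections onto \(\M\) are recognized.
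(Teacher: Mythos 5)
Your proposal is correct and follows essentially the same route as the paper, which simply invokes chi-squared concentration for (a)--(b), Gaussian tail/anti-concentration plus Chernoff for (c), (d), (f) (deferring the details of (c) and (f) to Lemma B.2 of the cited Allen-Zhu--Li paper), and Bernoulli concentration for (e); you carry out those standard computations directly, and your observations that the lower bound in (b) is apparently missing a factor of \(d\) and that (c) hinges on the exponents \(1.01 - c_1/2\) and \(1.01 - c_2/2\) lining up with \(\gamma c_0/4\) and \(2\gamma c_0\) (which does work out, e.g.\ for \(c_0 \approx 0.01\) and \(\gamma\) small) are both accurate.
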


\begin{proof}
    The proof of (a)--(b) can be derived from simple concentration of chi-squared concentration. The proof of (c) and (f) follow from \cite[Lemma B.2]{allen-zhu2020feature}.  For (d) it suffices to use basic Gaussian anti-concentration around the mean. For (e) it suffices to use a simple Bernoulli concentration.
\end{proof}

\subsection{Activation Size and Probability}

\begin{lemma}[correlation from augmentation]\label{lem:corr-aug}
    Let \(\Mperp \in \R^{d_1\times(d_1-d)}\) be an orthonormal complement of \(\M\), \(\DD \sim\D_{\DD}\), it holds:
    \begin{enumerate}
        \item for each \(j,j'\in[d]\), with high probability we have
        \begin{displaymath}
            \vbrack{\M_j,(\Id - 2\DD)\M_{j'}} \lesssim \widetilde{O}\left(\frac{1}{\sqrt{d_1}}\right)
        \end{displaymath}
        \item for each \(j\in[d] ,j'\in[d_1]\setminus [d]\), with high probability we have 
        \begin{displaymath}
            \vbrack{\M_j,(\Id - 2\DD)\Mperp_{j'}} \lesssim \widetilde{O}\left(\frac{1}{\sqrt{d_1}}\right)
        \end{displaymath}
        Since \(\vbrack{\M_j,\Mperp_{j'}} = 0\), this bound also hold for variables \(\vbrack{\M_j,(\Id - \DD)\Mperp_{j'}}\) and \(\vbrack{\M_j,\DD\Mperp_{j'}}\).
    \end{enumerate}
\end{lemma}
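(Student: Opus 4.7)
The plan is to recognize that $\Id - 2\DD$ is a diagonal matrix of independent Rademacher signs. Writing $s_\ell := 1 - 2\DD_{\ell\ell}$ for $\ell \in [d_1]$, we have $s_\ell \in \{-1,+1\}$ i.i.d.\ uniform, and for any fixed vector $v \in \R^{d_1}$,
\[
\vbrack{\M_j,(\Id-2\DD)v} \;=\; \sum_{\ell=1}^{d_1} s_\ell \,(\M_j)_\ell\, v_\ell,
\]
which is a mean-zero Rademacher sum. This turns both parts of the lemma into a single concentration question, with $v = \M_{j'}$ for part (1) and $v = \Mperp_{j'}$ for part (2).

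The key step is Hoeffding's inequality applied to this Rademacher sum. We get
\[
\Pr\!\left(\,\Big|\textstyle\sum_\ell s_\ell (\M_j)_\ell v_\ell\Big| \geq t\,\right) \;\leq\; 2\exp\!\left(-\frac{t^2}{2\sum_\ell (\M_j)_\ell^2 v_\ell^2}\right).
\]
To control the variance proxy, I would pull out the coordinate bound on the dictionary: by assumption $\|\M_j\|_\infty \leq \widetilde{O}(1/\sqrt{d_1})$, and in both cases $\|v\|_2 = 1$ (since $\M$ and $\Mperp$ together form an orthonormal basis). Hence $\sum_\ell (\M_j)_\ell^2 v_\ell^2 \leq \|\M_j\|_\infty^2 \|v\|_2^2 \leq \widetilde{O}(1/d_1)$. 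Choosing $t = \widetilde{O}(1/\sqrt{d_1})$ with an appropriate polylogarithmic factor makes the exponent $-\Omega(\log^2 d)$, which gives the required bound with high probability in the paper's sense.

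For the tail assertion about $\vbrack{\M_j, \DD \Mperp_{j'}}$ and $\vbrack{\M_j,(\Id-\DD)\Mperp_{j'}}$, I would simply use $\DD = \tfrac{1}{2}(\Id - (\Id-2\DD))$ and $\Id-\DD = \tfrac{1}{2}(\Id + (\Id-2\DD))$ together with the orthogonality relation $\vbrack{\M_j,\Mperp_{j'}}=0$, which yields $\vbrack{\M_j,\DD\Mperp_{j'}} = -\tfrac12\vbrack{\M_j,(\Id-2\DD)\Mperp_{j'}}$ and $\vbrack{\M_j,(\Id-\DD)\Mperp_{j'}} = \tfrac12\vbrack{\M_j,(\Id-2\DD)\Mperp_{j'}}$, both inheriting the same $\widetilde{O}(1/\sqrt{d_1})$ bound.

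This is essentially a routine concentration calculation, so I do not anticipate a real obstacle. The only subtlety worth calling out is that we only possess a coordinate-wise bound on $\M_j$ and not on the complement vector $\Mperp_{j'}$ (whose $\ell_\infty$ norm could in principle be as large as $1$); the variance estimate must therefore be organized to pay the $\|\cdot\|_\infty$ cost on the $\M_j$ side and the $\|\cdot\|_2$ cost on the other side, rather than trying to pair the two vectors symmetrically.
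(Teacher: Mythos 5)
Your proposal is correct and follows essentially the same route as the paper: both expand the inner product as a mean-zero sum $\sum_\ell (1-2\DD_{\ell\ell})(\M_j)_\ell v_\ell$, bound the variance proxy by $\|\M_j\|_\infty^2\|v\|_2^2 \leq \widetilde{O}(1/d_1)$, and conclude by Hoeffding/Chernoff; the reduction of the $\DD$ and $\Id-\DD$ variants to the $\Id-2\DD$ case via orthogonality is also the paper's argument. Your closing remark about pairing the $\ell_\infty$ cost with $\M_j$ and the $\ell_2$ cost with the complement vector is exactly the right (and necessary) bookkeeping.
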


\begin{proof}
    \begin{enumerate}
        \item For \(j\in[d]\) and \(r \in [d_1]\), we denote \((\M_j)_r\) to be the \(r\)-th coordinate of \(\M_j\), Now we expand 
        \begin{displaymath}
            \vbrack{\M_j,(\Id-2\DD)\M_{j'}} = \sum_{r\in[d_1]}(\M_{j})_r(\M_{j'})_r(1 - 2\DD_{r,r})
        \end{displaymath}
        which can be view as a sub-Gaussian variables with variance parameter \(\widetilde{O}(1/d_1)\). Applying Chernoff bound concludes the proof.
        \item Proof is similar to (1). Since the mean of \(\vbrack{\M_j,(\Id - 2\DD)\Mperp_{j'}}\) is zero, we can compute its variance as
        \begin{align*}
            \E\left[\vbrack{\M_j,(\Id - 2\DD)\Mperp_{j'}}^2\right] &= \sum_{r\in[d_1]}(\M_{j})^2_r(\Mperp_{j'})^2_r\E[(1 - 2\DD_{r,r})^2] \\
            &\leq \widetilde{O}\left(\frac{1}{d_1}\right) \sum_{r\in[d_1]}(\Mperp_{j'})^2_r \leq \widetilde{O}\left(\frac{1}{d_1}\right)
        \end{align*}
        then again from Chernoff bound we conclude the proof.
    \end{enumerate}
\end{proof}

we will give several probability tail bounds for the so defined variables, which will be used in the computations of the training process throughout our analysis.
\begin{lemma}[pre-activation size]\label{lem:activation-size}
    Let \(x  = \M z + \xi \sim \D_x\), \(w_i \in \R^{d_1}\) and \(\DD\sim\D_{\DD}\). Denoting \(x^{\setminus j} = \sum_{j'\neq j, j'\in[d]}\M_{j'}z_{j'} + \xi\), we have the following results:
    \begin{enumerate}
        \item For any \(\lambda > 0\), we have
        \begin{displaymath}
            \Pr_{\DD\sim\D_{\DD}}\left(|\vbrack{w_i,(\Id-2\DD)\M_j} | > \lambda\|w_i\|_2\|\M_j\|_{\infty}\right) \leq 2e^{-\Omega(\lambda^2)}
        \end{displaymath}
        \item (naive Chebychev bound) For any \(\lambda > 0\) and \(z \in [-1,1]\), we have 
        \begin{displaymath}
            \Pr_{z^{\setminus j},\xi,\DD} \left((\vbrack{w_i,(\Id-2\DD)x^{\setminus j}} + \vbrack{w_i,(\Id-2\DD)\M_j} |z| )^2 > \frac{\lambda\|w_i\|_2^2\sqrt{\log d}}{d} \right) \leq O\left(\frac{1}{\lambda}\right) 
        \end{displaymath}
        The same tail bound holds for variables \(\vbrack{w_i,x}\), \(\vbrack{w_i,(\Id-2\DD)x}\) and \(\vbrack{w_i,\xi}\) as well.
        \item (high probability bound for sparse signal)
        \begin{align*}
            \Pr\Big( \vbrack{w_i,(\Id-2\DD)\M z}^2 >   \|w_i\|_2^2\cdot \max_{j\in[d]}\|\M_j\|_{\infty}^2 \log^4 d \Big) \lesssim e^{-\Omega(\log^2 d)}
        \end{align*}
        \item (high probability bound for dense signal) Let \(Z = \vbrack{w_i,(\Id-2\DD)\xi}\) or \(Z = \vbrack{w_i,\xi}\), we have
        \begin{displaymath}
            \Pr\left( Z^2 \geq \frac{\|w_i\|_2^2\log^4 d}{d} \right) \lesssim e^{-\Omega(\log^2 d)}
        \end{displaymath}
    \end{enumerate}
\end{lemma}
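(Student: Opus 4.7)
The lemma bundles four tail bounds for inner products between \(w_i\) and various random vectors from our data-augmentation pipeline; the structural fact I will exploit throughout is that \(\Id - 2\DD\) has i.i.d.\ Rademacher diagonal entries \(\epsilon_r := 1 - 2\DD_{r,r}\in\{\pm 1\}\) independent of \((z,\xi)\), so I can integrate out \(\DD\) and \((z,\xi)\) in whichever order is convenient. For part (1), I will expand \(\vbrack{w_i,(\Id - 2\DD)\M_j} = \sum_r \epsilon_r (w_i)_r (\M_j)_r\) as a Rademacher sum and invoke Hoeffding with variance proxy \(\sum_r (w_i)_r^2 (\M_j)_r^2 \leq \|\M_j\|_{\infty}^2 \|w_i\|_2^2\), which delivers the \(2e^{-\Omega(\lambda^2)}\) tail immediately.

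For part (2), I will compute a second moment and apply Markov. Setting \(Y = \vbrack{w_i, (\Id - 2\DD)\tilde{x}}\) with \(\tilde{x} = x^{\setminus j} + \M_j|z|\), integrating out \(\DD\) first gives \(\E_\DD[Y^2 \mid z^{\setminus j},\xi] = \sum_r (w_i)_r^2 \tilde{x}_r^2\). Taking the outer expectation over \((z^{\setminus j},\xi)\), all cross terms vanish by \(\E[z_{j'}] = \E[\xi_r] = 0\); the dominant surviving contribution is the Gaussian variance \(\sigma_\xi^2 \|w_i\|_2^2 = \Theta(\sqrt{\log d}/d)\|w_i\|_2^2\), with the sparse piece \(\Theta(\log\log d/d)\sum_r(w_i)_r^2(\M\M^\top)_{rr}\) being strictly smaller since \(\log\log d \ll \sqrt{\log d}\) and \((\M\M^\top)_{rr}\leq 1\). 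Markov on \(Y^2\) then closes the argument, and the same second-moment computation handles \(\vbrack{w_i, x}\), \(\vbrack{w_i, (\Id - 2\DD)x}\), and \(\vbrack{w_i,\xi}\).

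For part (3) I will first condition on \(z\) and apply Hoeffding to the Rademacher sum \(\sum_r \epsilon_r (w_i)_r (\M z)_r\), obtaining \(|\vbrack{w_i,(\Id - 2\DD)\M z}| \lesssim \log d \cdot \|w_i\|_2 \|\M z\|_\infty\) w.h.p. The remaining step is a w.h.p.\ bound on \(\|\M z\|_\infty\), which I will get by applying Bernstein coordinate-wise to \((\M z)_r = \sum_j (\M_j)_r z_j\) (variance \(\Theta(\log\log d/d)(\M\M^\top)_{rr}\leq O(\log\log d/d)\), per-term bound \(\max_j\|\M_j\|_\infty\)) and union-bounding over \(r\in[d_1]\), yielding \(\|\M z\|_\infty\lesssim \polylog(d)\max_j\|\M_j\|_\infty\). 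Combining the two bounds produces the stated factor after collecting log powers.

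Part (4) is direct. For \(\vbrack{w_i,\xi}\), the variable is exactly \(\N(0,\sigma_\xi^2\|w_i\|_2^2)\); since \(\sigma_\xi^2\log^2 d = \Theta(\log^{2.5} d/d) \ll \log^4 d/d\), the standard Gaussian tail gives the bound with slack. For \(\vbrack{w_i,(\Id-2\DD)\xi}\), spherical symmetry of \(\N(0,\sigma_\xi^2\Id)\) under the \(\pm 1\) sign flips of \(\Id-2\DD\) implies that, conditional on \(\DD\), the variable has the same distribution as \(\vbrack{w_i,\xi}\), so the same bound applies. The main obstacle among the four parts will be the log-factor book-keeping in part (3): a naive triangle inequality \(\|\M z\|_\infty \leq \|z\|_0\max_j\|\M_j\|_\infty\) combined with the sparsity bound \(\|z\|_0\leq \polylog(d)\) already gives some polylog bound, but landing exactly at \(\log^4 d\) requires exploiting the Bernstein cancellation inside \((\M z)_r\) (using that each \(z_j\) is nonzero only with probability \(\Theta(\log\log d/d)\)) rather than a worst-case union over the support of \(z\).
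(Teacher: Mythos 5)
Your proposal is correct, and for parts (1), (2) and (4) it follows essentially the same route as the paper (for (4) your sign-flip symmetry argument, that \((\Id-2\DD)\xi\overset{d}{=}\xi\) so the inner product is exactly \(\N(0,\sigma_\xi^2\|w_i\|_2^2)\), is in fact a little cleaner than the paper's coordinate-wise Gaussian bound followed by a Chernoff bound over \(\DD\)). The only genuine divergence is part (3), where you integrate in the opposite order: you condition on \(z\), apply Hoeffding to the Rademacher sum \(\sum_r \epsilon_r (w_i)_r(\M z)_r\), and then control \(\|\M z\|_\infty\) separately, whereas the paper first applies Hoeffding to each \(Z_j=\vbrack{w_i,(2\DD-\Id)\M_j}\) to get \(|Z_j|\leq \widetilde O(\|w_i\|_2/\sqrt{d_1})\) uniformly over \(j\), and then applies Bernstein over the randomness of \(z\) to \(\sum_j Z_j z_j\). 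Both decompositions are valid and, contrary to your closing worry, neither lands exactly on \(\log^4 d\): the paper's own chain (a \(\widetilde O(\cdot)\) in the per-feature bound plus the additive \(B\log^2 d\) term in Bernstein) accumulates the same order of polylog slack as your \(\|\M z\|_\infty \lesssim \log^2 d\cdot\max_j\|\M_j\|_\infty\) bound does, and the paper explicitly adopts the convention that explicitly written \(\polylog(d)\) factors dominate those hidden in \(\widetilde O(\cdot)\), so the precise exponent is immaterial. One small point worth making explicit in your write-up: since \(\|\M_j\|_2=1\) forces \(\|\M_j\|_\infty\geq 1/\sqrt{d_1}\), the paper's form of the bound, \(\widetilde O(\|w_i\|_2^2/d_1)\), and the stated form, \(\|w_i\|_2^2\max_j\|\M_j\|_\infty^2\,\polylog(d)\), are interchangeable.
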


\begin{proof}
    \begin{enumerate}
        \item From the fact that \(\DD = (\DD_{k,k})_{k=1}^{d_1}\), where \(\DD_{k,k}\sim\mathrm{Bernoilli}(\frac{1}{2}) \) are subgaussian variables, we can use the subgaussian tail coupled with Hoeffding's bound to conclude.
        \item Since the mean of \(\vbrack{w_i,(\Id-2\DD)x}\) is zero, we can simply compute the variance as
        \begin{align*}
            &\E\left[(\vbrack{w_i,(\Id-2\DD)x^{\setminus j}}+\vbrack{w_i,(\Id-2\DD)\M_j}|z|)^2\right] \\
            \leq \ & \sum_{s = 1}^{d_1}(w_i)_s^2(\M_{j'})_s^2 \E\left[\left(\sum\nolimits_{ j'\neq j}z_{j'}^2+\xi_s^2\right)(2\DD_{s,s} - 1)^2\right] + \sum_{s = 1}^{d_1}(w_i)_s^2(\M_j)_s^2 \E[z^2(2\DD_{s,s} - 1)^2]\\
            \leq \ & \widetilde{O}\left(\|w_i\|_2^2/d\right)
        \end{align*}
        Now we can use Chebychev's inequality to conclude. As to the tail bounds for other variables, it suffices to go through some similar calculations.
        \item First we consider for each \(j \in [d]\), the variable \(Z_j := \vbrack{w_i,(2\DD-\Id)\M_j }\). Note that 
        \begin{displaymath}
            Z_j = \sum_{s = 1}^{d_1}(w_i)_s(\M_j)_s(2\DD_{s,s}-1)
        \end{displaymath}
        is a sum of subgaussian variables, each with variances \(O((w_i)_s^2/d_1)\), therefore by using Hoeffding's bound we have with prob \(\geq 1-e^{-\Omega(\log^2 d)}\), we have \(|Z_j|\leq \widetilde{O} \left(\frac{\|w_i\|_2}{\sqrt{d_1}}\right)\), Now by using a union bound, we conditioned on \(\{|Z_j|\leq \widetilde{O} \left(\frac{\|w_i\|_2}{\sqrt{d_1}}\right) \text{ for all } j \in [d]\}\) happending, which is still with high probability. We use Bernstein's inequality to show that with high probability over \(z\) and \(\DD\) it holds that \(\left|\sum_{j\in[d]}Z_jz_j\right|^2 \lesssim \widetilde{O} \left(\frac{\|w_i\|_2^2}{d_1}\right) \). 
        \item We can first obtain for high probability bounds for each coordinates \(\xi_j, j\in [d_1]\) based on the concentration of Gaussian variables, and then use Chernoff bound via the randomness of \(\DD\) to conclude (when there is no \(\DD\) involved, the claim is obvious). 
    \end{enumerate}
\end{proof}

\begin{lemma}[pre-activation size, II]\label{lem:activation-size-2}
    Let \(i \in [m]\). Suppose the following holds: 
    \begin{itemize}
        \item \(\vbrack{w_i^{(t)},\M_j}^2 \geq \Omega((b_i^{(t)})^2)\) for no more than \(O(1)\) many \(j \in [d]\);
        \item \(\vbrack{w_i^{(t)},\M_j}^2 \geq \Omega(\frac{(b_i^{(t)})^2}{\sqrt{\log d}})\) for no more than \(O(e^{-\Omega(\sqrt{\log d})}d)\) many \(j \in [d]\);
        \item \(\|w_i^{(t)}\|_2^2\leq O(\frac{d(b_i^{(t)})^2}{\log d})\).
    \end{itemize} 
    Then for any \(\lambda \geq 0.0001\):
    \begin{align*}
        \Pr( |\vbrack{w_i^{(t)},x_p^+}| \geq \lambda  b_i^{(t)} ) \lesssim e^{-\Omega(\log^{1/4}d)},\qquad \Pr( |\vbrack{w_i^{(t)},x_p}| \geq \lambda  b_i^{(t)} ) \lesssim e^{-\Omega(\log^{1/4}d)}
    \end{align*}
\end{lemma}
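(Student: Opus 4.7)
The approach is to split the pre-activation into a sparse-signal piece $\sum_j \alpha_j z_j$ (where $\alpha_j := \vbrack{w_i^{(t)}, \M_j}$) and a dense-noise piece, and bound each separately; for $x_p^+$ the random mask contributes one extra Hoeffding term in the sparse piece. The dense-noise piece $\vbrack{w_i^{(t)}, \xi}$ is a centered Gaussian with variance $\sigma_\xi^2 \|w_i^{(t)}\|_2^2 \leq \Theta(\sqrt{\log d}/d)\cdot O(d(b_i^{(t)})^2/\log d) = O((b_i^{(t)})^2/\sqrt{\log d})$ by the third hypothesis, so standard Gaussian tails give $\Pr(|\vbrack{w_i^{(t)},\xi}| > \lambda b_i^{(t)}/3) \leq 2e^{-\Omega(\lambda^2 \sqrt{\log d})} \leq e^{-\Omega(\log^{1/4} d)}$ for any $\lambda \geq 0.0001$ and $d$ large.

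For the sparse piece I would partition $[d]$ according to $|\alpha_j|$ into $S_1 := \{j : \alpha_j^2 \geq c_\star (b_i^{(t)})^2\}$, $S_2 := \{j : c_\star(b_i^{(t)})^2/\sqrt{\log d} \leq \alpha_j^2 < c_\star(b_i^{(t)})^2\}$, and $S_3 := [d]\setminus(S_1\cup S_2)$, with $c_\star$ calibrated to the implicit constants of the hypotheses. The first two hypotheses give $|S_1| \leq O(1)$ and $|S_1 \cup S_2| \leq O(e^{-\Omega(\sqrt{\log d})} d)$, so the expected number of active coordinates ($z_j \neq 0$) in $S_1 \cup S_2$ is at most $O(e^{-\Omega(\sqrt{\log d})} \log\log d) = o(e^{-\Omega(\log^{1/4} d)})$; by Markov, all $z_j$ with $j \in S_1 \cup S_2$ vanish except with this negligible probability. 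On that event only $S_3$ contributes, where $|\alpha_j z_j| \leq M := O(b_i^{(t)}/\log^{1/4} d)$ and the total variance is $\sum_j \alpha_j^2 \cdot \Theta(\log\log d/d) \leq \|w_i^{(t)}\|_2^2 \cdot \Theta(\log\log d/d) \leq O((b_i^{(t)})^2 \log\log d/\log d)$. For $\lambda \geq 0.0001$ and $d$ large, the max-term $Mt/3$ dominates the variance term in Bernstein's denominator, yielding an exponent of order $\lambda \log^{1/4} d = \Omega(\log^{1/4} d)$ (and in the opposite regime the variance-driven exponent $\Omega(\lambda^2 \log d/\log\log d)$ is an even stronger bound). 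A union bound with the Gaussian step proves the first inequality.

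For $x_p^+ = 2\DD x_p$, I would use the identity $\vbrack{w_i^{(t)}, x_p^+} = \vbrack{w_i^{(t)}, \M z} + \vbrack{w_i^{(t)}, (2\DD - \Id)\M z} + 2\vbrack{\DD w_i^{(t)}, \xi}$ and bound each piece by $\lambda b_i^{(t)}/3$. The first piece is exactly the sparse sum handled above. The third piece, conditional on $\DD$, is a centered Gaussian of variance $\leq 4\sigma_\xi^2 \|w_i^{(t)}\|_2^2 = O((b_i^{(t)})^2/\sqrt{\log d})$, so the same Gaussian tail applies. For the middle piece, first condition on the high-probability event $\|z\|_0 \leq \polylog(d)$, which forces $\|\M z\|_\infty \leq \|z\|_0 \cdot \max_j \|\M_j\|_\infty \leq \widetilde{O}(1/\sqrt{d_1})$; then conditional on $z$, the quantity $\sum_s (w_i^{(t)})_s (2\DD_{s,s} - 1) (\M z)_s$ is a sum of independent $\pm 1$-subgaussian variables with total variance $\leq \|w_i^{(t)}\|_2^2 \|\M z\|_\infty^2 \leq \widetilde{O}((b_i^{(t)})^2/\log d)$ (using $d_1 \geq d$ and the third hypothesis), so Hoeffding yields tail $\exp(-\widetilde{\Omega}(\lambda^2 \log d)) \leq e^{-\Omega(\log^{1/4} d)}$. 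Combining the three bounds completes the second inequality. The main obstacle, as the above analysis makes clear, is that $\lambda$ can be as small as $0.0001$, which forces all constants in the Bernstein/Hoeffding denominators to be controlled down to this scale — this is precisely why the partition threshold separating $S_2$ from $S_3$ must be $\log^{-1/4} d$ (so that $M$ is small enough for the max-term to dominate at $\lambda = 0.0001$) and why the second hypothesis is stated with the $e^{-\Omega(\sqrt{\log d})} d$ cap on $|S_2|$.
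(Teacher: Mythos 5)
Your proof is correct and follows essentially the same route as the paper's: split the pre-activation into the sparse sum, the mask-correction term $\vbrack{w_i^{(t)},(2\DD-\Id)\M z}$, and the (masked) Gaussian noise, bound the latter two via Hoeffding/Gaussian tails exactly as in \myref{lem:activation-size}{Lemma}, and handle the sparse sum by the coordinate-partition argument (Markov on the rare heavy coordinates, Bernstein on the light ones) that the paper delegates to the cited reference of \cite{allen-zhu2020feature}. Your write-up merely fills in the details the paper leaves implicit, and your calibration of the $\log^{-1/4}d$ threshold against $\lambda\geq 0.0001$ matches the role of the lemma's hypotheses.
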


\begin{proof}
    Our proof follows from similar arguments in \cite{allen-zhu2020feature}. The only difference here is that we have applied \(\RandomMask\) augmentation to our data \(x_p^+\) and \(x_p^{++}\). We only need to consider two terms:
    \begin{itemize}
        \item The augmented noise \(\vbrack{w_i^{(t)},2\DD\xi_p}\), which follows from Gaussian distribution with variance \(O(\|w_i^{(t)}\|_2^2\sigma_{\xi}^2)\). We have that for some small constant \(c \ll 0.0001\): \(|\vbrack{w_i^{(t)},2\DD\xi_p}| \leq c b_i^{(t)}\);
        \item The augmented sparse signals \(\vbrack{w_i^{(t)},2\DD\M z_p} = \vbrack{w_i^{(t)},\M z_p} + \vbrack{w_i^{(t)},(2\DD-\Id)\M z_p} \). Here the bound for the first term on RHS can be obtained via similar approach in \cite{allen-zhu2020feature}, the bound for the second term follows from \myref{lem:activation-size}{Lemma}.
    \end{itemize}
    They conclude the proof.
\end{proof}

\begin{lemma}[pre-activation size, III]\label{lem:activation-size-3}
    Let \(i \in [m]\). Suppose the following holds: there exist a set \(\N_i \subseteq [d]\) such that \(|\N_i| = O(1)\), and
    \begin{itemize}
        \item \(\vbrack{w_i^{(t)},\M_j}^2 \leq O(\frac{(b_i^{(t)})^2}{\polylog(d)})\) for \(j \notin \N_i\);
        \item \(\|w_i^{(t)}\|_2^2\leq O(\frac{d(b_i^{(t)})^2}{\polylog (d)})\).
    \end{itemize} 
    Then for any \(\lambda \in [0.01,0.99]\), 
    \begin{align*}
        \Pr\left[ | \sum_{j\notin\N_i}\vbrack{w_i^{(t)},2\DD\M_j}z_j + \vbrack{w_i,2\DD\xi}| \geq \lambda  b_i^{(t)} \right] \lesssim e^{-\Omega(\log^2d)}
    \end{align*}
\end{lemma}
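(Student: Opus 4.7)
The quantity to control is $S := Y + Z$, where $Y := \sum_{j \notin \N_i} \vbrack{w_i^{(t)}, 2\DD \M_j} z_j$ and $Z := \vbrack{w_i^{(t)}, 2\DD \xi}$, with $z \sim \D_z$, $\xi \sim \D_{\xi}$, $\DD \sim \D_{\DD}$ independent. My plan is to split $Y = Y_1 + Y_2$ via $2\DD = \Id + (2\DD - \Id)$, with $Y_1 := \sum_{j \notin \N_i} \vbrack{w_i^{(t)}, \M_j} z_j$ a function of $z$ alone and $Y_2 := \sum_{j \notin \N_i} \vbrack{w_i^{(t)}, (2\DD - \Id)\M_j} z_j$ mean-zero conditional on $z$; the dense piece $Z$ is handled using its Gaussian structure conditional on $\DD$. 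I will show each of $|Y_1|, |Y_2|, |Z|$ is at most $0.003\, b_i^{(t)}$ with probability $1 - e^{-\Omega(\log^2 d)}$ and take a union bound, which suffices since $\lambda \geq 0.01$.

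For $Y_1$, the hypothesis forces $\vbrack{w_i^{(t)}, \M_j}^2 \leq O((b_i^{(t)})^2/\polylog(d))$ for every $j \notin \N_i$, so $\E[Y_1^2] = \sum_{j\notin \N_i} \vbrack{w_i^{(t)}, \M_j}^2 \E[z_j^2] \leq \Theta(\log\log d / d) \cdot \|w_i^{(t)}\|_2^2 \leq O((b_i^{(t)})^2 \log\log d / \polylog(d))$, and the maximal summand is bounded by $O(b_i^{(t)}/\sqrt{\polylog(d)})$. Bernstein's inequality for bounded independent summands then gives the desired subexponential tail. For $Y_2$, I condition on the high-probability event $\|z\|_0 \leq O(\log\log d)$ (standard Chernoff on the independent indicators $\1_{z_j \neq 0}$) and rewrite $Y_2 = \sum_{\ell \in [d_1]} (w_i^{(t)})_\ell (2\DD_{\ell,\ell}-1) a_\ell$ with $a_\ell := \sum_{j \notin \N_i} (\M_j)_\ell z_j$. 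The incoherence bound $\|\M_j\|_\infty \leq \widetilde{O}(1/\sqrt{d_1})$ and sparsity of $z$ give $|a_\ell| \leq \widetilde{O}(\log\log d / \sqrt{d_1})$, hence $\sum_\ell (w_i^{(t)})_\ell^2 a_\ell^2 \leq \widetilde{O}((\log\log d)^2 / d_1) \cdot \|w_i^{(t)}\|_2^2 \leq \widetilde{O}((b_i^{(t)})^2 (\log\log d)^2 / \polylog(d))$ using $d_1 = \poly(d) \geq d$; Hoeffding's inequality over the independent Rademacher signs $2\DD_{\ell,\ell}-1$ then yields the tail.

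For $Z$, condition on $\DD$: the conditional law is $Z \mid \DD \sim \N(0, 4\sigma_{\xi}^2 \|\DD w_i^{(t)}\|_2^2)$ with $\|\DD w_i^{(t)}\|_2^2 \leq \|w_i^{(t)}\|_2^2$, so the conditional variance is at most $4\sigma_{\xi}^2 \cdot O(d (b_i^{(t)})^2/\polylog(d)) = O((b_i^{(t)})^2 \sqrt{\log d}/\polylog(d))$, and a standard Gaussian tail bound yields $\Pr[|Z| \geq 0.003 b_i^{(t)}] \leq e^{-\Omega(\polylog(d)/\sqrt{\log d})}$. The only substantive obstacle is bookkeeping: I must ensure the hidden $\polylog(d)$ in the hypothesis dominates $\log^2 d$ after division by $\log\log d$ (for $Y_1$), by $(\log\log d)^2$ (for $Y_2$), and by $\sqrt{\log d}$ (for $Z$) simultaneously. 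This holds once $\polylog(d) \geq \log^C d$ for a sufficiently large constant $C$, consistent with the paper's global convention that the explicit $\polylog$ factors dominate the implicit ones hidden inside $\widetilde{O}(\cdot)$.
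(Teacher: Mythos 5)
Your decomposition into $Y_1+Y_2+Z$ (sparse signal without the mask, mask fluctuation, Gaussian noise) followed by Bernstein, Hoeffding over the Rademacher signs $2\DD_{\ell,\ell}-1$, and a Gaussian tail is exactly the argument the paper intends --- its own ``proof'' is a one-line pointer to \myref{lem:activation-size-2}{Lemma} and to Allen-Zhu--Li, and that sketch uses the same split --- so you have matched the approach and actually supplied the details. One quantitative slip needs repair: the event $\|z\|_0 \le O(\log\log d)$ is \emph{not} high probability in this paper's sense. Since $\E\|z\|_0 = \Theta(\log\log d)$, the Chernoff upper tail at threshold $K = C\log\log d$ is only of order $(e\mu/K)^K = e^{-\Omega(\log\log d)}$, far weaker than the target $e^{-\Omega(\log^2 d)}$, so conditioning on that event would cap your final bound at $e^{-\Omega(\log\log d)}$. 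You should instead condition on $\|z\|_0 \le O(\log^2 d)$, whose failure probability is $e^{-\Omega(\log^2 d\cdot\log\log d)}$; this only weakens $|a_\ell|$ to $\widetilde{O}(\log^2 d/\sqrt{d_1})$, which is absorbed by the same $\polylog(d)$ bookkeeping you already invoke. With that substitution, and the convention (which the lemma's hypotheses implicitly require) that the explicit $\polylog(d)$ exceeds a sufficiently large power of $\log d$ so that each of your three exponents dominates $\log^2 d$, the proof is complete.
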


\begin{proof}
    The proof is similar to those of \myref{lem:activation-size-2}{Lemma} above and the proof of \cite[Lemma C.3]{allen-zhu2020feature}.
\end{proof} 

\section{Stage I: Initial Feature Growth}

In this section we analyze the training process at the initial stage. Here we define the stage transition time \(T_1 = \Theta(\frac{d\log d}{\eta \log\log d})\) to be the iteration when \(\|w_i^{(t)}\|_2^2\geq \frac{2(1+\gamma c_0)+c_1}{c_1} \|w_i^{(0)}\|_2^2 \) for all the neurons \(i \in [m]\) (where \(c_0\) is a small constant defined in \myref{lem:property-init}{Lemma}). Indeed, we will characterize the trajectory of weights \(\{w_i^{(t)}\}_{i\in[m]}\) by calculating the growth of \(w_i^{(t)}\)s for all the features \(\{\M_j\}_{j\in[d]}\cup\{\Mperp_j\}_{j\in[d_1]\setminus [d]}\). And also, we keep the bias \(b_i^{(t)} = 0\) at this stage to simplify our analysis. 

We present our theorem of the initial stage below:

\begin{theorem}[Initial feature decoupling]\label{thm:initial-stage}
    At iteration \(t = T_1\), we have the following results:
    \begin{enumerate}
        \item[(a)] \(\|\M\M^{\top}w_i^{(T_1)}\|_2^2 \geq \|w_i^{(T_1)}\|_2^2/2\) for all \(i \in [m]\);
        \item[(b)] For each \(j \in [d]\), and each \(i\in\Mcal^{\star}_j\), we have \(|\vbrack{w_i^{(T_1)},\M_j}| \geq (1 + \gamma c_0)\frac{\sqrt{2\log d}}{\sqrt{d}}\|w_i^{(T_1)}\|_2\);
        \item[(c)] For each \(j \in [d]\), and each \(i\notin\Mcal_j\), we have \(|\vbrack{w_i^{(T_1)},\M_j}| \leq (1 - \gamma c_0)\frac{\sqrt{2\log d}}{\sqrt{d}}\|w_i^{(T_1)}\|_2\);
        \item[(d)] For each \(i \in [m]\), \(|\vbrack{w_i^{(T_1)},\M_j}| \geq \frac{\log^{1/4} d}{\sqrt{d}}\|w_i^{(T_1)}\|_2\), for at most \(O(\frac{d}{2^{\sqrt{\log d}}})\) many \(j \in [d]\).
        \item[(e)] For each \(i \in [m]\) and \(j \in [d_1]\setminus [d]\), we have \(|\vbrack{w_i^{(T_1)},\Mperp_j}| \leq O(\sqrt{\frac{\log d}{d_1}})\|w_i^{(T_1)}\|_2\).
    \end{enumerate}
\end{theorem}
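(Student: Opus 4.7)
The plan is to exploit a crucial simplification available at stage I: because $b_i^{(t)}=0$ for all $t \leq T_1$, the symmetrized ReLU collapses to the identity $h_{i,t}(x) = \vbrack{w_i^{(t)},x}$, so the network is purely linear. Combined with the tiny scale $\sigma_0^2 = \Theta(1/(d_1\poly(d)))$, all similarities $\Sim_{f_t}$ remain much smaller than $\tau$ throughout this stage, so the softmax logits $\ell'_{p,t}, \ell'_{s,t}$ are essentially uniform ($\approx 1/|\Bfr|$) and the negative-sample contributions to the gradient have expectation zero by $\E[x]=0$ combined with the independence of $x_{n,s}$ from $x_p^+$. The population gradient therefore reduces, to leading order, to
\[
\nabla_{w_i}\Obj(f_t) \approx -\E_{\DD,z,\xi}\bigl[\,\vbrack{w_i^{(t)},\,2(\Id-\DD)(\M z+\xi)}\cdot 2\DD(\M z+\xi)\,\bigr] + \lambda w_i^{(t)},
\]
and independence of $z,\xi$ together with $\E[z]=\E[\xi]=0$ kill the $z$-$\xi$ cross terms, leaving only a ``sparse-sparse'' piece (with $u=v=\M z$) and a ``dense-dense'' piece (with $u=v=\xi$).

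The feature-decoupling identity driving the proof is $\E[(2-2\DD_{k,k})(2\DD_{\ell,\ell})] = \mathbf{1}[k\neq \ell]$, which holds because $\DD_{kk}(1-\DD_{kk})\equiv 0$ for Bernoulli variables and the $\DD_{kk}$ are independent across $k$. Applying this identity: projecting onto $\M_j$, the sparse-sparse piece yields $\vbrack{w_i^{(t)},\M_j}\,\E[z_j^2]$ up to a Hadamard residual of size $\widetilde O(\|w_i^{(t)}\|_2/d_1)$ controlled by $\|\M_j\|_\infty^2 \leq \widetilde O(1/d_1)$; projecting onto $\Mperp_j$, the dense-dense piece vanishes \emph{exactly} in expectation because $\E[\xi_k\xi_\ell]=\sigma_\xi^2\delta_{k\ell}$ forces $k=\ell$ while the augmentation identity zeros out exactly the $k=\ell$ diagonal. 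Bernstein concentration over the $K=\poly(d)$ stochastic samples contributes only $\widetilde O(\|w_i^{(t)}\|_2/\poly(d))$. Adding regularization, the per-step evolutions take the form
\begin{align*}
\vbrack{w_i^{(t+1)},\M_j} &\approx (1+\eta\E[z_j^2]-\eta\lambda)\vbrack{w_i^{(t)},\M_j},\\
\vbrack{w_i^{(t+1)},\Mperp_j} &\approx (1-\eta\lambda)\vbrack{w_i^{(t)},\Mperp_j},
\end{align*}
modulo additive errors of lower order. This is the precise form of feature decoupling at stage I: augmentation erases the dense-feature signal in expectation, and only the sparse directions accumulate exponentially.

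The last step is an induction over $t$. Iterating the above updates for $T_1 = \Theta(d\log d/(\eta\log\log d))$ steps produces a multiplicative factor of $\Theta(\sqrt{d_1/d})$ on each sparse component (using $\E[z_j^2] = \Theta(\log\log d/d)$ and $\lambda \ll \E[z_j^2]$), which exactly matches the initial gap between $\|\M\M^\top w_i^{(0)}\|_2^2 \approx \sigma_0^2 d$ and $\|(\Id-\M\M^\top)w_i^{(0)}\|_2^2 \approx \sigma_0^2 d_1$; this gives (a). Since every sparse direction grows at essentially the same rate, the ratio $\vbrack{w_i^{(t)},\M_j}/\|\M\M^\top w_i^{(t)}\|_2$ shifts by only $o(1)$, so the sets $\Mcal_j^\star, \Mcal_j$ from \myref{def:neuron-sets}{Definition} still correctly discriminate the winners from the losers at $t=T_1$, yielding (b)--(c); part (d) follows from the same uniform-growth argument applied to the cardinality bound in \myref{lem:property-init}{Lemma}(f); and (e) follows from the near-identity evolution of each $\vbrack{w_i^{(t)},\Mperp_j}$ combined with \myref{lem:property-init}{Lemma}(a)-(b). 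The main obstacle is a simultaneous bootstrap: we must maintain, uniformly in $i \in [m]$, $j \in [d_1]$ and $t \leq T_1$, that (i) the softmax logits remain nearly uniform (equivalently, $\sum_i\|w_i^{(t)}\|_2^2$ stays polynomially small so no $\Sim_{f_t}$ exceeds $\tau$), (ii) per-iteration stochastic and Hadamard residuals do not accumulate over $T_1 = \poly(d)/\eta$ iterations to flip any winner/loser assignment, and (iii) the lower-order cross-residuals stay subordinate to the $\Theta(\log\log d/d)$ main growth rate and the $\lambda$ decay rate, respectively — these three invariants must be closed on each other via a joint induction.
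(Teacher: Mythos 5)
Your proposal is correct and follows essentially the same route as the paper: linearity of the network at zero bias, the mask-decoupling identity killing the dense–dense term while preserving $\vbrack{w_i,\M_j}\E[z_j^2]$, near-uniform logits neutralizing the negative-sample term, and a joint induction closing the growth/residual/logit invariants before transferring the initialization properties of $\Mcal_j,\Mcal_j^\star$ to $t=T_1$. The only difference is presentational — you organize the decoupling via the explicit diagonal identity $\E[(2-2\DD_{kk})(2\DD_{\ell\ell})]=\1[k\neq\ell]$, whereas the paper splits $2(\Id-\DD)=\Id+(\Id-2\DD)$ and invokes mean-zero/independence plus \myref{lem:corr-aug}{Lemma} — which is equivalent.
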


\subsection{Gradient Computations}

Since the each bias \(b_i^{(t)}\) remains at zero during this stage, it is easy to compute the positive gradient for each \(t \leq T_1\) as the following: 

\begin{lemma}[positive gradient, stage I]\label{lem:grad-positive-1}
    Let \(h_{i,t}(\cdot)\) be the \(i\)-th neuron at iteration \(t \leq T_1\) (so that \(b_i^{(t)} = 0\)), then
    \begin{itemize}
        \item[(a)] For each \(j\in[d]\), we have 
        \begin{displaymath}
            \E\left[h_{i,t}(x_p^{++}) \vbrack{\nabla_{w_i} h_{i,t}(x_p^+),\M_j} \right] = \vbrack{w_i^{(t)},\M_j}\E[z_j^2] \pm \widetilde{O}\left(\frac{\|w_i^{(t)} \|_2}{d_1}\right)
        \end{displaymath}
        \item[(b)] For each \(j \in [d_1]\setminus [d]\), we have 
        \begin{displaymath}
            \E\left[h_{i,t}(x_p^{++}) \vbrack{\nabla_{w_i} h_{i,t}(x_p^+),\Mperp_j} \right] = \pm \widetilde{O}\left(\frac{\|w_i^{(t)} \|_2}{d_1}\right)
        \end{displaymath}
    \end{itemize}

\end{lemma}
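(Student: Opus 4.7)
The plan is to exploit the fact that at stage I the bias is kept at $b_i^{(t)} = 0$, which makes the symmetrized $\ReLU$ \emph{linear}: for any $u\in\R$, $\ReLU(u)-\ReLU(-u)=u$, so $h_{i,t}(x)=\langle w_i^{(t)},x\rangle$ and $\nabla_{w_i}h_{i,t}(x)=x$ almost surely. Thus the quantity of interest collapses to
\begin{displaymath}
    \E\big[\langle w_i^{(t)},x_p^{++}\rangle\langle x_p^+,v\rangle\big]
    = \E\big[\langle w_i^{(t)},2(\Id-\DD)x_p\rangle\langle 2\DD x_p,v\rangle\big]
\end{displaymath}
with $v=\M_j$ or $v=\Mperp_j$. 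I first integrate out the mask $\DD$, which is independent of $x_p$. Expanding coordinate-wise yields $4\sum_{r,s}w_{i,r}(1-\DD_{rr})\DD_{ss}x_{p,r}x_{p,s}v_s$. Since each $\DD_{rr}\sim \mathrm{Bernoulli}(1/2)$ satisfies $\DD_{rr}(1-\DD_{rr})=0$ a.s., we get $\E_{\DD}[4(1-\DD_{rr})\DD_{ss}]=\mathbf{1}[r\neq s]$. This collapses the double sum to
\begin{displaymath}
    \langle w_i^{(t)},x_p\rangle\langle x_p,v\rangle \;-\; \sum_{r\in[d_1]}w_{i,r}\,x_{p,r}^2\,v_r.
\end{displaymath}

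Next I integrate out $x_p=\M z+\xi$, using independence of $z,\xi$ and of the $z_k$'s. The cross term gives $w_i^\top(\M\,\E[zz^\top]\M^\top + \sigma_\xi^2 \Id)v = \sum_{k}\langle w_i^{(t)},\M_k\rangle\langle\M_k,v\rangle\,\E[z_k^2]+\sigma_\xi^2\langle w_i^{(t)},v\rangle$, which collapses to $\E[z_j^2]\langle w_i^{(t)},\M_j\rangle+\sigma_\xi^2\langle w_i^{(t)},\M_j\rangle$ when $v=\M_j$, and to $\sigma_\xi^2\langle w_i^{(t)},\Mperp_j\rangle$ when $v=\Mperp_j$. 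For the diagonal correction, $\E[x_{p,r}^2]=\sum_k(\M_k)_r^2\E[z_k^2]+\sigma_\xi^2$, so the $\sigma_\xi^2$ part produces exactly $\sigma_\xi^2\langle w_i^{(t)},v\rangle$. This is the crucial cancellation: the $\sigma_\xi^2$ contributions from the cross term and the diagonal correction annihilate each other, leaving only the contribution from the sparse dictionary in the diagonal correction, namely $\sum_r w_{i,r}v_r c_r$ with $c_r:=\sum_k(\M_k)_r^2\E[z_k^2]$.

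Finally I bound this residual. Using the incoherence assumption $\|\M_k\|_\infty\leq \widetilde O(1/\sqrt{d_1})$ and $\E[z_k^2]=\Theta(\log\log d /d)$, summing over the $d$ atoms gives $\|c\|_\infty\leq \widetilde O(\log\log d/d_1)$. Then Cauchy--Schwarz yields
\begin{displaymath}
    \Big|\sum_{r}w_{i,r}v_r c_r\Big| \;\leq\; \|c\|_\infty\,\|w_i^{(t)}\|_2\,\|v\|_2 \;=\; \widetilde O\!\left(\frac{\|w_i^{(t)}\|_2}{d_1}\right),
\end{displaymath}
which supplies the claimed error term in both parts. Assembling everything proves part (a) since only $\E[z_j^2]\langle w_i^{(t)},\M_j\rangle$ survives, and part (b) since after the $\sigma_\xi^2$-cancellation nothing but the $\widetilde O(\|w_i^{(t)}\|_2/d_1)$ residual remains.

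The main obstacle is not any single step but the careful bookkeeping: one must verify that the two independent appearances of $\DD$ produce the precise indicator $\mathbf{1}[r\neq s]$, that the $\sigma_\xi^2$ terms cancel exactly (this is the mathematical embodiment of \emph{feature decoupling}---the randomness of $\DD$ destroys the dense-noise correlation between positives while preserving sparse correlations), and that the leftover sparse-induced correction is at the $\widetilde O(\|w_i^{(t)}\|_2/d_1)$ scale rather than at the much larger $\widetilde O(\sigma_\xi^2\|w_i^{(t)}\|_2)=\widetilde O(\sqrt{\log d}\,\|w_i^{(t)}\|_2/d)$ scale one would see without exploiting incoherence. It is precisely this order-of-magnitude gap between $1/d_1$ and $\sigma_\xi^2$ that allows the sparse direction $\M_j$ to grow exponentially in stage I while $\Mperp_j$ stays essentially frozen (modulo the regularization decay), as needed by Theorem~\ref{thm:initial-stage}.
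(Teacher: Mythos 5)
Your proof is correct, and it takes a genuinely different route from the paper's. The paper keeps the decomposition $x_p = \M z_p + \xi_p$, kills the noise--noise and noise--signal cross terms by observing that (conditioned on $\DD$) $\DD\xi_p$ and $(\Id-\DD)\xi_p$ are independent and mean zero, and then bounds the remaining sparse--sparse interference term $\E[\vbrack{w_i,2(\Id-\DD)\M z_p}\sum_{j'}\vbrack{(\Id-2\DD)\M_{j'},\M_j}z_{j'}]$ using the high-probability concentration bounds of \myref{lem:corr-aug}{Lemma} and \myref{lem:activation-size}{Lemma}~(3). You instead integrate out $\DD$ first via the exact coordinate identity $\E_{\DD}[4(1-\DD_{rr})\DD_{ss}]=\1[r\neq s]$, reducing the whole expression to $\E[\vbrack{w_i,x_p}\vbrack{x_p,v}]-\sum_r w_{i,r}\E[x_{p,r}^2]v_r$, after which the second-moment computation is exact: the $\sigma_\xi^2$ contributions cancel identically and the only residual is the deterministic incoherence term $\sum_r w_{i,r}v_r c_r$ with $\|c\|_\infty\leq\widetilde O(1/d_1)$. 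What your approach buys is an exact, concentration-free computation (and it makes the feature-decoupling cancellation a literal algebraic identity rather than an independence argument); what the paper's decomposition buys is that the same bookkeeping of terms ($z_j$ piece, $(\Id-2\DD)$ piece, $\xi$ piece) carries over to Stages II--III where the bias is nonzero, the activation is genuinely nonlinear, and exact moment computations are no longer available. For this lemma both yield the same $\vbrack{w_i^{(t)},\M_j}\E[z_j^2]\pm\widetilde O(\|w_i^{(t)}\|_2/d_1)$.
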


\begin{proof}
    \begin{enumerate}
        \item[(a)] For each \(j \in [d]\) and \(t \leq T_1\), since \(b_i^{(t)} = 0\) for all \(i \in [m]\), we can calculate
        \begin{align*}
            & \E\left[h_i(x_p^{++})\1_{|\vbrack{w_i^{(t)},x_p^+}|\geq 0}\vbrack{x_p^+,\M_j} \right] \\
            = \ &\E\left[(\vbrack{w_i^{(t)},2(\Id-\DD)(\M z_p +\xi_p)})\1_{|\vbrack{w_i^{(t)},x_p^{++}}|\geq 0}\1_{|\vbrack{w_i^{(t)},x_p^+}|\geq 0}\vbrack{x_p^+,\M_j}\right]
        \end{align*}
        Conditioned on each fixed \(z_p \sim \D_z\) and \(\DD\sim\D_{\DD}\) and use the randomness of \(\xi_p\), we know that events \(\{\vbrack{w_i^{(t)},x_p^+} = 0\}\) and \(\vbrack{w_i^{(t)},x_p^{++}}\) has probability zero. Thus we can get rid of the indicator functions and compute as follows:
        \begin{align*}
            &\E\left[(\vbrack{w_i^{(t)},2(\Id-\DD)(\M z_p +\xi_p)})\1_{|\vbrack{w_i^{(t)},x_p^{++}}|\geq 0}\1_{|\vbrack{w_i^{(t)},x_p^+}|\geq 0}\vbrack{x_p^+,\M_j}\right]\\
            = \ & \E\left[\vbrack{w_i^{(t)},2(\Id-\DD)(\M z_p +\xi_p)} \vbrack{x_p^+,\M_j}\right] \\
            = \ &\E\left[\vbrack{w_i^{(t)},2(\Id-\DD)(\M z_p +\xi_p)} \vbrack{2\DD(\M z_p +\xi_p),\M_j}\right]
        \end{align*}
        From simple observation, conditioned on fixed \(\DD\), we know that \(\DD\xi_p\) and \((\Id-\DD)\xi_p\) are independent and both mean zero, and also \(z_p\) is independent w.r.t. \(\xi_p\) and is mean zero, so we can proceed to compute as
        \begin{align*}
            &\E\left[\vbrack{w_i^{(t)},2(\Id-\DD)(\M z_p +\xi_p)} \vbrack{2\DD(\M z_p +\xi_p),\M_j}\right] \\
            = \ & \E\left[\vbrack{w_i^{(t)},2(\Id-\DD)\M z_p } \vbrack{\DD\M z_p ,\M_j}\right]\\
            = \ &  \E\left[\vbrack{w_i^{(t)},2(\Id-\DD)\M z_p }\left(z_j + \sum_{j'\in[d]}\vbrack{(\Id-2\DD)\M_{j'} ,\M_j}z_{j'}\right)\right]\\
            = \ & \E\left[\sum_{j''\in[d]}\vbrack{w_i^{(t)},2(\Id-\DD)\M_{j''}}z_{p,j''}z_{p,j} \right] \\
            & + \E\left[\vbrack{w_i^{(t)},2(\Id-\DD)\M z_p }\cdot\sum_{j'\in[d]}\vbrack{(\Id-2\DD)\M_{j'} ,\M_j}z_{p,j'}\right]
        \end{align*}
        Now notice that \(z_{p,j''}\) and \(z_{p,j}\) are independent to each other if \(j''\neq j\), we have 
        \begin{align*}
            \E\left[\sum_{j''\in[d]}\vbrack{w_i^{(t)},2(\Id-\DD)\M_{j''}}z_{p,j''}z_{p,j} \right] &= \E\left[\vbrack{w_i,2(\Id-\DD)\M_j}z_{p,j}^2\right]\\
            & = \vbrack{w_i^{(t)},\M_j}\E\left[ z_j^2\right] + \E\left[\vbrack{w_i,(\Id-2\DD)\M_j}z_{p,j}^2\right]\\
            & = \vbrack{w_i^{(t)},\M_j}\E\left[ z_j^2\right]
        \end{align*}
        where in the last equality we have used the fact that \(\Id-2\DD\) is independent to \(z_{p,j}\) and has mean zero. Next by using \myref{lem:corr-aug}{Lemma} and \myref{lem:activation-size}{Lemma} (3), we have the bound
        \begin{align*}
            &\left|\E\left[\vbrack{w_i^{(t)},2(\Id-\DD)\M z_p }\cdot\sum_{j'\in[d]}\vbrack{(\Id-2\DD)\M_{j'} ,\M_j}z_{p,j'}\right]\right|\leq \widetilde{O}\left(\frac{\|w_i^{(t)}\|_2}{d_1}\right)
        \end{align*}
        Combining all results above, we obtain the desired approximation.
        \item[(b)] It is easy to notice that the only difference of this proof with that of (a) is we have
        \begin{displaymath}
            \vbrack{x_p^+,\Mperp_j} = \sum_{j'\in[d]}\vbrack{(\Id - 2\DD)\M_{j'} z_{p,j'}, \Mperp_j} + \vbrack{(\Id-2\DD)\xi_p,\Mperp_j}
        \end{displaymath}
        Now following the same argument as in (1), we can obtain the desired bound.
    \end{enumerate}
    
\end{proof}

In order to analyze the 

\begin{lemma}[logits near initialization]\label{lem:grad-negative-1}
    Letting \(w_i \in \R^{d_1}\) for each \(i\in[m]\), suppose we have \(\sum_{i\in[m]}\|w_i^{(t)}\|_2^2 \leq o(\tau/d)\), then with high probability over the randomness of \(x_p^+,x_p^{++},\Nfr\), it holds:
    \begin{displaymath}
        \left|\ell'_{p,t}(x_p^+,\Bfr) - \frac{1}{|\Bfr|} \right|, \ \left|\ell'_{s,t}(x_p^+,\Bfr) - \frac{1}{|\Bfr|} \right| \leq \widetilde{O}\left(\frac{\sum_{i\in[m]}\|w_i^{(t)}\|_2^2}{\tau|\Bfr|}\right)
    \end{displaymath}
\end{lemma}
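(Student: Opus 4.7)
The plan is to reduce everything to a uniform high-probability bound $\max_{x\in\Bfr}|\Sim_{f_t}(x_p^+,x)|\leq \widetilde{O}\big(\sum_{i\in[m]}\|w_i^{(t)}\|_2^2\big)$. Once this is in hand, the hypothesis $\sum_i\|w_i^{(t)}\|_2^2\leq o(\tau/d)$ makes every logit exponent $\Sim_{f_t}(x_p^+,x)/\tau$ an $o(1)$ quantity, so each $e^{\Sim/\tau}$ is close to $1$, and the softmax is close to uniform at a rate matching the claim. The rest is just first-order algebra on the ratio $a_x/\sum_{x'}a_{x'}$.

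The crucial step is the ``stacked activation bound'': for any sample $x$ appearing in $\Bfr\cup\{x_p^+\}$ (a raw draw $\M z+\xi\sim\D_x$ or one of its $\RandomMask$ augmentations $2\DD x$, $2(\Id-\DD)x$), with high probability
\[
\sum_{i\in[m]}\vbrack{w_i^{(t)},x}^2\leq \widetilde{O}\Big(\textstyle\sum_{i\in[m]}\|w_i^{(t)}\|_2^2\Big).
\]
To prove this, let $W$ be the matrix stacking the $w_i^{(t)}$ and split $x$ into sparse and dense components. For the sparse part, $\sum_i\vbrack{w_i^{(t)},\M z}^2=\|W\M z\|_2^2\leq \|W\|_F^2\|\M z\|_2^2=\|W\|_F^2\cdot\|z\|_0$, and a Chernoff bound on the $d$ independent Bernoullis $\1[|z_j|=1]$ (each of mean $\Theta(\log\log d/d)$) gives $\|z\|_0\leq \polylog(d)$ with probability $1-e^{-\Omega(\log^2 d)}$; the masked variants $2\DD\M z$ and $2(\Id-\DD)\M z$ obey the same bound since $\|2\DD u\|_2,\|2(\Id-\DD)u\|_2\leq 2\|u\|_2$. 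For the dense part, $\sum_i\vbrack{w_i^{(t)},\xi}^2=\xi^{\top}W^{\top}W\xi$ is a Gaussian quadratic form (conditionally Gaussian given $\DD$ in the augmented case); Hanson--Wright concentrates it around $\sigma_\xi^2\,\mathrm{tr}(W^{\top}W)=\sigma_\xi^2\|W\|_F^2=\widetilde{O}(\|W\|_F^2/d)$, with the tail exponent controlled by $\|W^{\top}W\|_F,\|W^{\top}W\|_{\mathrm{op}}\leq \|W\|_F^2$.

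Given the stacked bound, $|h_{i,t}(x)|\leq |\vbrack{w_i^{(t)},x}|$ follows immediately from the definition of $h_{i,t}$ together with $b_i^{(t)}\geq 0$ (a two-case split on $\mathrm{sign}(\vbrack{w_i^{(t)},x})$), and Cauchy--Schwarz yields
\[
|\Sim_{f_t}(x_p^+,x)|\leq \Big(\textstyle\sum_i\vbrack{w_i^{(t)},x_p^+}^2\Big)^{1/2}\Big(\sum_i\vbrack{w_i^{(t)},x}^2\Big)^{1/2}\leq \widetilde{O}\Big(\textstyle\sum_i\|w_i^{(t)}\|_2^2\Big),
\]
uniformly in $x\in\Bfr$ after a union bound over the $|\Bfr|=\poly(d)$ samples (each per-sample failure is $e^{-\Omega(\log^2 d)}$). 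Set $\delta:=\widetilde{O}(\sum_i\|w_i^{(t)}\|_2^2/\tau)=o(1)$. Then simultaneously for every $x$, $a_x:=e^{\Sim_{f_t}(x_p^+,x)/\tau}\in[1-2\delta,1+2\delta]$, so $\sum_{x\in\Bfr}a_x\in |\Bfr|[1-2\delta,1+2\delta]$, and dividing places both $\ell'_{p,t}(x_p^+,\Bfr)$ and $\ell'_{s,t}(x_p^+,\Bfr)$ in $\frac{1}{|\Bfr|}\big[\frac{1-2\delta}{1+2\delta},\frac{1+2\delta}{1-2\delta}\big]$; first-order expansion in $\delta$ produces exactly the $\widetilde{O}(\delta/|\Bfr|)=\widetilde{O}(\sum_i\|w_i^{(t)}\|_2^2/(\tau|\Bfr|))$ deviation from $1/|\Bfr|$.

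The only real obstacle is the dense contribution. A naive pointwise $\vbrack{w_i,\xi}^2\leq \|w_i\|_2^2\|\xi\|_2^2$ would pick up $\|\xi\|_2^2=\Theta(\sigma_\xi^2 d_1)=\poly(d)$ and destroy the bound entirely; the fix is to handle $\xi^{\top}W^{\top}W\xi$ collectively so that the governing scale is the much smaller $\sigma_\xi^2\|W\|_F^2$, which is exactly what Hanson--Wright delivers. Everything else is routine bookkeeping against \myref{lem:activation-size}{Lemma} and the Bernoulli/Gaussian concentrations already invoked.
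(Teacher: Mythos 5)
Your proposal is correct and follows essentially the same route as the paper: a uniform high-probability bound $\max_{x\in\Bfr}|\Sim_{f_t}(x_p^+,x)|\leq\widetilde{O}(\sum_i\|w_i^{(t)}\|_2^2)$ followed by a first-order perturbation of the softmax around the uniform distribution, which is exactly the paper's $|1-e^a|\leq|a|$ step. The only difference is cosmetic: you derive the similarity bound explicitly via Chernoff on $\|z\|_0$ and Hanson--Wright on $\xi^{\top}W^{\top}W\xi$, whereas the paper simply asserts it with high probability (it is equally obtainable by union-bounding its per-neuron tail bounds in the pre-activation-size lemma), so no further comparison is needed.
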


\begin{proof}
    For the logit \(\ell_{s,t}(x_p^+,\Bfr)\) of negative sample \(x_{n,s}\), we can simply calculate
    \begin{align*}
        \left|\ell'_{s,t}(x_p^+,\Bfr) - \frac{1}{|\Nfr|} \right| & = \left|\frac{e^{ \Sim_{f_t}(x_p^+,x_{n,s})/\tau}}{\sum_{x \in \Bfr} e^{ \Sim_{f_t}(x_p^+,x)/\tau}} - \frac{1}{|\Nfr|} \right| \\ 
        & = \left| \left(\sum_{x \in\Bfr } e^{  \vbrack{f_t(x_p^+), f_t(x) - f_t(x_{n,s}) }/\tau }\right)^{-1} - \frac{1}{|\Bfr|}\right|\\
        & =  \left| |\Bfr| -  \sum_{x \in \Bfr} e^{ \vbrack{f_t(x_p^+), f_t(x) - f_t(x_{n,s}) }/\tau} \right|\cdot\left( |\Bfr| \cdot \sum_{x\in\Bfr} e^{ \vbrack{f_t(x_p^+), f_t(x) - f_t(x_{n,s}) }/\tau}\right)^{-1}\\
        & \leq \sum_{x \in \Bfr} \left| 1 - e^{ \vbrack{f_t(x_p^+), f_t(x) - f_t(x_{n,s}) }/\tau} \right| \cdot \left( |\Bfr| \cdot \sum_{x \in\Bfr} e^{  \vbrack{f_t(x_p^+), f_t(x) - f_t(x_{n,s}) }/\tau}\right)^{-1} \\
        & \lesssim \max_{x \in \Bfr}\left| \vbrack{f_t(x_p^+), f_t(x) - f_t(x_{n,s}) } \right| \cdot \left( \tau \sum_{x \in\Bfr} e^{  \vbrack{f_t(x_p^+), f_t(x) - f_t(x_{n,s}) }/\tau}\right)^{-1}\\
        & \stackrel{\text{\ding{172}}}{\leq}  \widetilde{O}\left(\frac{\sum_{i\in[m]}\|w_i^{(t)}\|_2^2}{\tau |\Bfr|}\cdot \exp\Bigg(\frac{\sum_{i\in[m]}\|w_i^{(t)}\|_2^2}{\tau}\Bigg) \right)\\
        & \stackrel{\text{\ding{173}}}{\leq} \widetilde{O}\left(\frac{\sum_{i\in[m]}\|w_i^{(t)}\|_2^2}{\tau |\Bfr|}\right)
    \end{align*}
    where \ding{172} is becausewe have \(|1 - e^a| \leq |a|\) for \(a\leq 0.1\), and also with high probability 
    \begin{displaymath}
        |\vbrack{f_t(x_p^+), f_t(x_{n, u}) - f_t(x_{n,s}) }| \leq \widetilde{O}\left(\sum_{i\in[m]}\|w_i^{(t)}\|_2^2\right)
    \end{displaymath}
    and \ding{173} is because \(e^x \leq O(1)\) for \(x\leq 1/2\). The approximation for logit \(\ell'_{p,t}\) of positive sample can be similarly obtained.
\end{proof}

\subsection{The Learning Process at Initial Stage}

In this subsection we will prove, for every neuron \(i \in [m]\), the weights \(w_i\) will mostly ignore the spurious features \(\M^{\perp}\) and learn to emphasize the features \(\M\). Recall that \(T_1 = \Theta(\frac{d\log d}{\eta\log\log d})\) is set to be the time when \(\|w_i^{(t)}\|_2^2\geq \frac{2(1+2\gamma c_0)}{c_1} \|w_i^{(0)}\|_2^2\) for all the neurons \(i \in [m]\), and that such a \(T_1\) is indeed of order $\Theta(\frac{d\log d}{\eta\log\log d})$. 

In order to prove the above theorem, we need the following :

\begin{induct}\label{induct-1}
    The following properties hold for all \(t \leq T_1\):
    \begin{itemize}
        \item[(a)] \(\|\M^{\perp}(\M^{\perp})^{\top}w_i^{(t)}\|_2^2 \leq (1 + O(1/\poly(d)))\|\M^{\perp}(\M^{\perp})^{\top}w_i^{(0)}\|_2^2 \);
        \item[(b)] \(\|\M\M^{\top}w_i^{(t)}\|_2^2 \leq \|\M\M^{\top}w_i^{(0)}\|_2^2\left(1 - \eta\lambda + \frac{\eta C_z\log\log d}{d}\right)^{2t} + O(\frac{1}{d})\|\M\M^{\top}w_i^{(0)}\|_2^2\), moreover, we have \(\|\M\M^{\top}w_i^{(t)}\|_2^2 \leq O(\|w_i^{(0)}\|_2^2)\);
        \item[(c)] \(\|\M\M^{\top}w_i^{(t)}\|_2^2 \geq \|\M\M^{\top}w_i^{(0)}\|_2^2\left(1 - \eta\lambda + \frac{\eta C_z\log\log d}{d}\right)^{2t} - O(\frac{1}{d})\|\M\M^{\top}w_i^{(0)}\|_2^2\)
    \end{itemize}
\end{induct}

\begin{proof}[Proof of \myref{induct-1}{Induction Hypothesis}]
    First we need to work out the exact form of gradient for each feature \(\M_j\) and \(\M^{\perp}_j\). Fix a neuron \(i \in [m]\), for the sparse feature \(\M_j\), \(j \in [d]\), we can write down the SGD iteration as follows:
    \begin{align*}
        \vbrack{w_i^{(t+1)},\M_j} & = \vbrack{w_i^{(t)},\M_j} -\vbrack{\nabla_{w_i}\Obj(f_t), \M_j} \pm \frac{\|w_i^{(t)}\|_2}{\poly(d_1)} \\
        &= (1 - \lambda)\vbrack{w_i^{(t)},\M_j} + \underset{x_p^+,x_p^{++}}\E \left[(1-\ell'_{p,t}(x_p^+,\Bfr))\cdot h_{i,t}(x_p^{++}) \vbrack{\nabla_{w_i} h_{i,t}(x_p^+),\M_j} \right] \\
        & \quad - \sum_{x_{n,s} \in \Nfr}\E\left[\ell'_{s,t}(x_p,\Bfr) h_{i,t}(x_{n,s})\vbrack{\nabla_{w_i}h(x_p^+),\M_j}\right] \pm \frac{\|w_i^{(t)}\|_2}{\poly(d_1)} 
    \end{align*}
    For the positive term \(\E \left[ (1 - \ell'_{p,t}(x_p^+,\Bfr)) h_{i,t}(x_p^{++}) \vbrack{\nabla_{w_i} h_{i,t}(x_p^+),\M_j} \right]\), we can use \myref{lem:grad-positive-1}{Lemma} and \myref{lem:grad-negative-1}{Lemma} to obtain that:
    \begin{displaymath}
        \E \left[(1 - \ell'_{p,t}(x_p^+,\Bfr))\cdot h_{i,t}(x_p^{++}) \vbrack{\nabla_{w_i} h_{i,t}(x_p^+),\M_j} \right] = \vbrack{w_i^{(t)},\M_j}\Pr(|z_j|=1) \pm \widetilde{O}\left(\frac{\|w_i^{(t)}\|_2}{d_1}\right)
    \end{displaymath}
    And for the negative term \(\E\left[\sum_{x_{n,s} \in \Nfr}\ell'_{s,t}(x_p^+,\Bfr) h_{i,t}(x_{n,s})\vbrack{\nabla_{w_i}h(x_p^+),\M_j}\right]\), we can use \myref{lem:grad-negative-1}{Lemma} to bound it as:
    \begin{align*}
        \E\left[\sum_{x_{n,s} \in \Nfr}\ell'_{s,t} h_{i,t}(x_{n,s})\vbrack{\nabla_{w_i}h(x_p^+),\M_j}\right]& \stackrel{\ding{172}}{=} \sum_{x_{n,s} \in \Nfr}\E\left[ \left(\ell'_{s,t} - \frac{1}{|\Bfr|} \right)h_{i,t}(x_{n,s})\vbrack{\nabla_{w_i}h(x_p^+),\M_j}\right]\\
        \leq&  \sum_{x_{n,s} \in \Nfr}\E\left[ \left|\ell'_{s,t} - \frac{1}{|\Bfr|} \right|\cdot|h_{i,t}(x_{n,s})|\cdot |\vbrack{\nabla_{w_i}h(x_p^+),\M_j}|\right]\\
        \stackrel{\ding{173}}\leq&  \widetilde{O}\left(\frac{\sum_{i\in[m]}\|w_i^{(t)}\|_2^2}{\tau d}\cdot \|w_i^{(t)}\|_2\right)
    \end{align*}
    where \ding{173} has applied \myref{lem:activation-size}{Lemma} to \(h_{i,t}(x_{n,s})\) and \(|\vbrack{\nabla_{w_i} h_{i,t}(x_{p}^+),\M_j}| = |\vbrack{\M_j,x_p^{+}}| \). Putting all the above calculations together, we have 
    \begin{align*}
        \vbrack{w_i^{(t+1)},\M_j} =  \left(1 - \lambda + \Pr(|z_j|=1)\right)\vbrack{w_i^{(t)},\M_j} \pm \widetilde{O}\left(\frac{\sum_{i\in[m]}\|w_i^{(t)}\|_2^2}{\tau d}\cdot \|w_i^{(t)}\|_2\right)\pm \widetilde{O}\left(\frac{\|w_i^{(t)}\|_2}{d_1}\right)
    \end{align*}
    Before we perform induction, we obtain from similar approach the (stochastic) gradient step of \(w_i\) toward the direction of dense feature \(\M^{\perp}_j\) as
    \begin{align*}
        \vbrack{w_i^{(t+1)},\M^{\perp}_j} & = \vbrack{w_i^{(t)},\M^{\perp}_j} -\vbrack{\nabla_{w_i}\Obj(f_t),\M^{\perp}_j} \\
        &= (1 - \lambda)\vbrack{w_i^{(t)},\M^{\perp}_j} + \E \left[(1 - \ell'_{p,t})h_{i,t}(x_p^{++}) \vbrack{\nabla_{w_i} h_{i,t}(x_p^+),\M^\perp_j} \right]\\
        & \quad - \sum_{x_{n,s} \in \Nfr}\E\left[\ell'_{s,t} h_{i,t}(x_{n,s})\vbrack{\nabla_{w_i}h(x_p^+),\Mperp_j}\right] + \frac{\|w_i^{(t)}\|_2}{\poly(d_1)} \\
        & = (1 - \lambda)\vbrack{w_i^{(t)},\M^{\perp}_j} \pm \widetilde{O}\left(\frac{\sum_{i\in[m]}\|w_i^{(t)}\|_2^2}{\tau d}\cdot \|w_i^{(t)}\|_2\right) \pm \widetilde{O}\left(\frac{\|w_i^{(t)}\|_2}{d_1}\right) 
    \end{align*}
    Then we can begin to perform our induction: at \(t=0\), our properties holds trivially. Now suppose before iteration \(t = t_1\), the claimed properties holds, then we can easily obtain that for all \(t \leq t_1\):
    \begin{align*}
        \|w_i^{(t)}\|_2^2 = \|\M\M^{\top}w_i^{(t)}\|_2^2 + \|\M^{\perp}(\Mperp)^{\top}w_i^{(t)}\|_2^2 \leq O(d) \|w_i^{(0)}\|_2^2 \leq \frac{1}{\poly(d_1)}
    \end{align*}
    Thus we have \(\sum_{i\in[m]}\|w_i^{(t)}\|_2^2 \leq \frac{1}{\poly(d_1)}\). We now begin to verify all the properties for \(t = t_1 + 1\), until \(t_1\) reaches \(T_1\).
    \begin{itemize}
        \item We first derive an upper bound for \(\|\M\M^{\top}w_i^{(t+1)}\|_2^2\) at iterations \(t\leq t_1\). For each \(j \in [d]\), as long as \(\vbrack{w_i^{(t)},\M_j}\geq \Omega(\|w_i^{(t)}\|_2/d\sqrt{d_1})\), then
        \begin{align*}
            |\vbrack{w_i^{(t)},\M_j}| & \leq \left(1 - \eta\lambda + \frac{\eta C_z\log\log d}{d}\right)\vbrack{w_i^{(t)},\M_j}+ \widetilde{O}\left(\frac{\eta\|w_i^{(t)}\|_2}{d_1}\right)  \\
            & \leq \left(1 + \frac{\eta C_z\log\log d}{d} + \widetilde{O}(\frac{\eta}{d^2})\right)|\vbrack{w_i^{(t)},\M_j}|
        \end{align*}
        Define set of features: \(\mathcal{E}^{(t)} := \{ j\in[d]:\,\vbrack{w_i^{(t)},\M_j}<O(\|w_i^{(t)}\|_2/d\sqrt{d_1}) \}\), note that \(\mathcal{E}^{(t+1)}\subseteq \mathcal{E}^{(t)} \subseteq \Lambda_i\) (where the set \(\Lambda_i\) is defined in \myref{lem:property-init}{Lemma}) in the sense that if \(j \notin \mathcal{E}^{(t)}\), then  but \(\frac{C_z\log\log d}{d}|\vbrack{w_i^{(t)},\M_j}| \geq \widetilde{O}\big(\frac{\|w_i^{(t)}\|_2}{d_1}\big)\) in the above calculations. Therefore:
        \begin{align*}
            \|\M\M^{\top} w_{i}^{(t+1)} \|_2^2 & = \sum_{j\in[d]}\left[\left(1 - \eta\lambda + \frac{\eta C_z\log\log d}{d}\right)\vbrack{w_i^{(t)},\M_j}\pm \widetilde{O}\left(\frac{\eta\|w_i^{(t)}\|_2}{d_1}\right)\right]^2 \\
            &\leq \sum_{j\in[d]}\vbrack{w_i^{(0)},\M_j}^2\left( 1 + \frac{\eta C_z\log\log d}{d} + \widetilde{O}(\frac{\eta}{d^2})\right)^{2t} \\
            &\quad + \sum_{j\in[d]: j\in\mathcal{E}^{(0)}}\widetilde{O}\left(\frac{ (t+1)^2\eta^2 \max_{t\leq t_1} \|w_i^{(t)}\|_2^2}{d_1^2}\right)\\
            &\leq  \|\M\M^{\top} w_{i}^{(0)} \|_2^2\left( 1 + \frac{\eta C_z\log\log d}{d} + \widetilde{O}(\frac{\eta}{d^2})\right)^{2t} + O(1/d)\|\M\M^{\top} w_{i}^{(0)} \|_2^2
        \end{align*}
        which holds for all \(t\leq t_1\leq T_1 = \Theta(\frac{d\log d}{\eta \log\log d})\), the last inequality is due to the following calculations:
        \begin{align*}
            \sum_{j\in[d]: j\in\mathcal{E}^{(0)}}\widetilde{O}\left(\frac{ (t+1)^2\eta^2 \max_{t\leq t_1} \|w_i^{(t)}\|_2^2}{d_1^2}\right) &\leq \widetilde{O} \left(\frac{d^3\max_{t\leq t_1}\|w_i^{(t)}\|_2^2 }{d_1^2}\right) \ll \frac{\max_{t\leq t_1}\|w_i^{(t)}\|_2^2}{d^2d_1} \\
            &\ll \frac{\|w_i^{(0)}\|_2^2}{dd_1} \ll O(1/d)\|\M\M^{\top}w_i^{(0)}\|_2^2
        \end{align*}
        \item Secondly we give an lower bound for \(\|\M\M^{\top}w_i^{(t+1)}\|_2^2\) for iterations \(t \leq t_1\). From the above calculations, we have 
        \begin{align*}
            \|\M\M^{\top} w_{i}^{(t+1)} \|_2^2 & = \sum_{j\in[d]}\left[\left(1 - \eta\lambda + \frac{\eta C_z\log\log d}{d}\right)\vbrack{w_i^{(t)},\M_j} \pm \widetilde{O}\left(\frac{\eta\|w_i^{(t)}\|_2}{d_1}\right)\right]^2 \\
            &\geq \sum_{j\in\mathcal{E}^{(0)}}\vbrack{w_i^{(0)},\M_j}^2\left( 1 - \eta\lambda + \frac{\eta C_z\log\log d}{d}\right)^{2t}  - \widetilde{O}\left(\frac{ (t+1)^2\eta^2d \max_{t\leq t_1} \|w_i^{(t)}\|_2^2}{d_1^2}\right)\\
            &\geq  \|\M\M^{\top} w_{i}^{(0)} \|_2^2\left( 1 - \eta\lambda + \frac{\eta C_z\log\log d}{d} \right)^{2t} - O(1/d)\|\M\M^{\top} w_{i}^{(0)} \|_2^2
        \end{align*}
        where the last inequality follows from our computations of the upper bound.
        \item Finally we give an upper bound of \(\|\M^{\perp}(\M^{\perp})^{\top}w_i^{(t+1)}\|_2^2\) for iterations \(t \leq t_1\). We can calculate similarly, by
        \begin{align*}
            &\qquad\|\Mperp (\Mperp)^{\top} w_{i}^{(t+1)} \|_2^2 \\
            & = \sum_{j\in[d_1]\setminus [d]} \left[(1 - \eta\lambda)\vbrack{w_i^{(t)},\M^{\perp}_j} \pm \widetilde{O}\left(\frac{\eta\|w_i^{(t)}\|_2}{d_1}\right)\right]^2\\
            & \leq \|\Mperp (\Mperp)^{\top} w_{i}^{(0)} \|_2^2 + \max_{j\in[d_1]\setminus [d]}|\vbrack{w_i^{(0)},\M^{\perp}_j}| \widetilde{O}(\max_{t\leq t_1}\|w_i^{(t)}\|_2) +\widetilde{O}\left(\frac{\eta^2 (t+1)^2\max_{t\leq t_1} \|w_i^{(t)}\|_2^2}{d_1^2}\right)\\
            & \stackrel{\text{\ding{172}}}{\leq}  (1 + \widetilde{O}(d/\sqrt{d_1}))\|\Mperp (\Mperp)^{\top} w_{i}^{(0)} \|_2^2 + O(d^4/d_1^2)\|w_i^{(0)}\|_2^2\\
            & \stackrel{\text{\ding{173}}}{\leq} \left(1 + \frac{1}{\poly(d)}\right)\|\Mperp (\Mperp)^{\top} w_{i}^{(0)} \|_2^2
        \end{align*}
        where \ding{172} and \ding{173} have used several facts: (1) at initialization, we have \(|\vbrack{w_i^{(0)},\M^{\perp}_j}| \leq \widetilde{O}(\|w_i^{(0)}/\sqrt{d_1})\) with high probability; (2) from our induction hypothesis, \(\widetilde{O}(\max_{t\leq t_1}\|w_i^{(t)}\|_2) \leq O(d)\|w_i^{(0)}\|_2^2\); (3) at initialization we have \(\|w_i^{(0)}\|_2^2 \leq O(\|\Mperp (\Mperp)^{\top} w_{i}^{(0)} \|_2^2)\) with high probability.
    \end{itemize}
    Note that for each neuron \(i \in [m]\), from \myref{lem:property-init}{Lemma} combined with our upper bound and lower bound, we know when all the weights \(\|w_i^{(t)}\|_2^2\) reach \(\Theta(1)\|w_i^{(0)}\|_2^2\), the maximum \(\max_{i\in[m]}\|w_i^{(t+1)}\|_2^2 \leq O(1)\|w_i^{(t+1)}\|_2^2\) for all $t\leq t_1$. Thus we have obtained all the results for \(t = t_1 + 1\), and are able to proceed induction.
\end{proof}

\begin{proof}[Proof of \myref{thm:initial-stage}{Theorem}]
    The result (a) is easy to verify using \myref{induct-1}{Induction Hypothesis}. This we only verify (b) and (c). Note that from similar gradient calculations to those in the proof of \myref{induct-1}{Induction Hypothesis}, we have, for \(j \in [d]\) and \(i \in \Mcal_j\):
    \begin{align*}
        |\vbrack{w_i^{(T_1)},\M_j}| &\geq |\vbrack{w_i^{(0)},\M_j}|  \left( 1 - \eta\lambda + \frac{\eta C_z\log\log d}{d} \right)^{T_1} - \widetilde{O}\left(\frac{\eta T_1\|w_i^{(T_1)}\|_2}{d_1}\right)\\
        & \stackrel{\text{\ding{172}}}{\geq} \frac{\sqrt{c_1\log d} }{\sqrt{d}} \|\M\M^{\top} w_{i}^{(0)} \|_2\left( 1 - \eta\lambda + \frac{\eta C_z\log\log d}{d} \right)^{T_1} - \frac{\|\M\M^{\top} w_{i}^{(0)} \|_2}{\poly(d)}\\
        & \stackrel{\text{\ding{172}}}{\geq}\frac{ (1 + c_0)\sqrt{2\log d} }{\sqrt{d}}\| w_{i}^{(T_1)} \|_2
    \end{align*}
    where in \ding{172} we have used \myref{lem:property-init}{Lemma} and the fact that \(\frac{\eta T_1}{d_1} \leq \frac{1}{\poly(d)}\). And in the last inequality \ding{173} we argue: when all \(\|w_{i'}^{(t)}\|_2, i' \in [m]\) reach \((1 + \frac{2(1+2c_0)}{c_1})\|w_{i'}^{(0)}\|_2\), by using \myref{induct-1}{Induction Hypothesis} and our definition of \(T_1\), combined with the concentrations of initial weight norm \(\|\M\M^{\top}w_i^{(0)}\|_2\) in \myref{lem:property-init}{Lemma} it holds that
    \begin{align*}
        |\vbrack{w_i^{T_1},\M_j}|^2&\geq \frac{c_1\log d}{d}\cdot\|\M\M^{\top}w_i^{(T_1)}\|_2^2 \\
        &\geq \frac{(2 + 4 c_0)^2\log d}{d} \cdot\|w_i^{(T_1)}\|_2^2 -\|\Mperp(\Mperp)^{\top}w_i^{(T_1)} \|_2^2 \\
        &\geq 2(1+c_0)^2\frac{\|w_i^{(T_1)}\|_2^2\log d}{d}
    \end{align*}
    since all neuron weights \(\{w_i^{(t)}\}_{i\in[m]}\) grow in the speed of \(\|w_i^{(t)}\|_2 = (1 + \frac{\eta C_z\log\log d}{d} + o(\frac{1}{\poly(d)}))^{t}\|w_i^{(0)}\|_2\). The property (c) and (d) can be verified via exactly the same approach, combined with \myref{lem:property-init}{Lemma}. For (e), noticing that at initialization \(|\vbrack{w_i^{(0)},\Mperp_j}| \leq O(\sqrt{\frac{\log d}{d_1}})\|w_i^{(0)}\|_2\), we have 
    \begin{align*}
        |\vbrack{w_i^{(T_1)} ,\Mperp_j}| \leq |\vbrack{w_i^{(0)} ,\Mperp_j}|  + O(T_1 \eta)\cdot\max_{t \leq T_1}O\left(\frac{\|w_i^{(t)}\|_2}{d_1}\right) \leq O(\sqrt{\frac{\log d}{d_1}})\|w_i^{(t)}\|_2
    \end{align*}

\end{proof}

\section{Stage II: Singleton Emerge}

In this section we will present an analysis of how each feature \(\M_j\) can be ``won'' by some subsets of the neurons, which depends on the randomness of random initialization. In this stage, we will prove that the following induction hypothesis holds for all iterations.

\begin{induct}\label{induct-2}
    For all iterations \(t \in (T_1,T_2]\), our neurons \(i\in[m]\) satisfies: 
    \begin{itemize}
        \item[(a)] For \(j \in [d]\), if \(i \in \Mcal_j^{\star}\), then \(|\vbrack{w_i^{(t)},\M_j}| \geq (1+c_0)b_i^{(t)}\);
        \item[(b)] For \(j \in [d]\), if \(i \notin \Mcal_j\), then \(|\vbrack{w_i^{(t)},\M_j}| \leq (1-c_0)b_i^{(t)}\), Furthermore, \(|\vbrack{w_i^{(t)},\M_j}| \leq \widetilde{O}(\frac{\|w_i^{(t)}\|_2}{\sqrt{d}})\); 
        \item[(c)] For each \(i \in [m]\), there are at most \(O(2^{-\sqrt{\log d}}d)\) many \(j \in [d]\) such that \(\vbrack{w_i^{(t)},\M_j}^2 \geq \frac{(b_i^{(t)})^2}{\sqrt{\log d}}\);
        \item[(d)] For each \(i\in[m]\), we have \(|\vbrack{w_i^{(t)},\Mperp_j}| \leq \widetilde{O}(\frac{\|w_i^{(t)}\|_2}{\sqrt{d_1}})\) for all \(j\in [d_1]\setminus [d]\);
        \item[(e)] \(\|w_i^{(t)}\|_2^2 \leq \frac{d(b_i^{(t)})^2}{\log d}\) for all \(i\in[m]\).
    \end{itemize}
\end{induct}

\subsection{Gradient Computations}

\begin{definition}[notations]\label{def:notations-proof}
    For simpler presentation, we define the following notations: given \(x = \M z + \xi \sim \D_x\) as in \myref{def:sparse-coding}{Definition}, and \(\DD\sim\D_{\DD}\) as in \myref{def:random-mask}{Definition}, we let (for each \(j \in [d]\))
    \begin{align}
        x^{\setminus j} &:= \sum_{j'\neq j,j'\in[d]}\M_{j'} z_{j'} + \xi & S^{\setminus j}_{i,t} &:= \vbrack{w_i^{(t)}, x^{\setminus j}} & \bar{S}^{\setminus j}_{i,t} &:= \vbrack{w_i^{(t)}, (\Id-2\DD)x^{\setminus j}} \\ 
        &  &\alpha_{i, j}^{(t)} &:= \vbrack{w_i^{(t)},\M_j} &\bar{\alpha}_{i,j}^{(t)} &:= \vbrack{w_i^{(t)},(\Id - 2\DD)\M_j} 
    \end{align}
    whenever the neuron index \(i \in [m]\) is clear from context, we omit the subscript of neuron index \(i\) and time \(t\) for simplicity. 
\end{definition}
First we present our lemma for the gradient of features associated with the sparse signals.

\begin{lemma}[Gradient for sparse features]\label{lem:positive-gd-sparse-2} 
    Suppose \myref{induct-2}{Induction Hypothesis} holds at iteration \(t \geq 0\), for \(j \in [d]\), we denote events 
    \begin{align*}
        A_1 &:=  \{S^{\setminus j}_{i,t}\geq b_i^{(t)} - \alpha_{i,j}^{(t)}\}, & A_2 &:= \{\bar{S}^{\setminus j}_{i,t}\geq b_i^{(t)} - \bar{\alpha}_{i,j}^{(t)}\}; \\
        A_3 &:= \{|\bar{S}^{\setminus j}_{i,t} + \bar{\alpha}_{i,j}^{(t)}| \geq (\alpha_{i,j}^{(t)} - b_i^{(t)})/2\}, & A_4 &:= \{S^{\setminus j}_{i,t}\geq (\alpha_{i,j}^{(t)} - b_i^{(t)})/2\};
    \end{align*}
    and quantities \(L_1,L_2,L_3,L_4\) as
    \begin{align*}
        L_1 &:= \sqrt{\frac{\E[|\bar{S}^{\setminus j}_{i,t}|^2(\1_{A_1} + \1_{A_2})]}{\E[\vbrack{w_i^{(t)},\xi_p}^2]}}, &L_2 &:= \Pr(A_1), &L_3 &:= \sqrt{\frac{\E[|\bar{S}^{\setminus j}_{i,t}|^2(\1_{A_1} + \1_{A_2})]}{\E[\vbrack{w_i^{(t)},\xi_p}^2]}}, & L_4 &:= \Pr(A_3)
    \end{align*}
    then we have the following results:
    \begin{itemize}
        \item[(a)] (all features) For all \(i \in [m]\), if \(\alpha_{i,j}^{(t)} \geq 0\), we have (when \(\alpha_{i,j}^{(t)} \leq 0\) the opposite inequality holds)
        \begin{align*}
            \E\left[h_{i,t}(x_p^{++})\1_{|\vbrack{w_i^{(t)},x_p^+}|\geq b_i^{(t)} }  z_j \right] &\leq \alpha_{i,j}^{(t)} \cdot \E[z_j^2\1_{|\vbrack{w_i^{(t)} ,x_p}|\geq b_i^{(t)} + |\vbrack{w_i^{(t)},x_p^+-x_p}|} ] \\
            &\quad \pm  (\alpha_{i,j}^{(t)} + O(\sqrt{\E|\bar{\alpha}_{i,j}^{(t)}|^2}))\cdot \E[z_j^2]\cdot O(L_1 + L_2) 
        \end{align*} 
        \item[(b)] (lucky features) If \(\alpha_{i,j}^{(t)} > b_i^{(t)} \), we have
        \begin{align*}
            \E\left[h_{i,t}(x_p^{++})\1_{|\vbrack{w_i^{(t)},x_p^+}|\geq b_i^{(t)} }  z_j \right] &= (\alpha_{i,j}^{(t)} - b_i^{(t)})\cdot\E[z_j^2\1_{|\vbrack{w_i^{(t)} ,x_p}|\geq b_i^{(t)} + |\vbrack{w_i^{(t)},x_p^+-x_p}|} ] \\
            &\quad \pm (\alpha_{i,j}^{(t)} + O(\sqrt{\E|\bar{\alpha}_{i,j}^{(t)}|^2}))\cdot \E[z_j^2]\cdot O(L_3 + L_4) 
        \end{align*} 
        If \(\alpha_{i,j}^{(t)}<-b_i^{(t)}\), then the opposite inequality holds with \( (\alpha_{i,j}^+ - b_i^{(t)})\) changing to \( (\alpha_{i,j}^+ + b_i^{(t)})\).
    \end{itemize}
\end{lemma}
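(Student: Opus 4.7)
The plan is to begin by decomposing the pre-activations along the $\M_j$-direction and the residual, which naturally exposes the role of the augmentation randomness $\DD$. Using $2\DD = \Id - (\Id - 2\DD)$ and $2(\Id-\DD) = \Id + (\Id - 2\DD)$ together with the notation of \myref{def:notations-proof}{Definition}, I would first write
\begin{align*}
\vbrack{w_i^{(t)}, x_p^+} &= (\alpha_{i,j}^{(t)} - \bar{\alpha}_{i,j}^{(t)})\, z_j + (S^{\setminus j}_{i,t} - \bar{S}^{\setminus j}_{i,t}), \\
\vbrack{w_i^{(t)}, x_p^{++}} &= (\alpha_{i,j}^{(t)} + \bar{\alpha}_{i,j}^{(t)})\, z_j + (S^{\setminus j}_{i,t} + \bar{S}^{\setminus j}_{i,t}),
\end{align*}
so that the activation indicator on $x_p^+$ and the value $h_{i,t}(x_p^{++})$ are each split into a ``$\M_j$-driven'' part of size $\Theta(\alpha_{i,j}^{(t)})$ and a residual part whose behavior under $\DD$ is controlled by \myref{lem:corr-aug}{Lemma} and \myref{lem:activation-size}{Lemma}. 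Since $z_j \in \{-1,0,1\}$ is symmetric around zero and multiplies the whole integrand, the $z_j=0$ case drops out and I may WLOG take $\alpha_{i,j}^{(t)} \geq 0$ and condition on $z_j = \pm 1$; the case $\alpha_{i,j}^{(t)} \leq 0$ is symmetric.

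For part (a), I want a generic bound that does not assume $\alpha_{i,j}^{(t)}$ dominates $b_i^{(t)}$. Write the integrand as the ``clean'' piece in which the activation is triggered by the $\M_j$-signal (i.e.\ $|\vbrack{w_i^{(t)},x_p}| \geq b_i^{(t)} + |\vbrack{w_i^{(t)},x_p^+-x_p}|$) plus an error piece on which the activation is triggered by the residual $S^{\setminus j}_{i,t},\bar S^{\setminus j}_{i,t}$ alone. On the clean event the indicator factors cleanly, the sign of $\vbrack{w_i^{(t)},x_p^{++}}$ is not needed (since we do not yet carve out the $-\mathrm{sign}(\cdot)b_i^{(t)}$ term), and the dominant term is $\alpha_{i,j}^{(t)}\E[z_j^2\,\1_{\{|\vbrack{w_i^{(t)},x_p}|\geq b_i^{(t)}+|\vbrack{w_i^{(t)},x_p^+-x_p}|\}}]$. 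The residual-triggered part is controlled by $\1_{A_1}+\1_{A_2}$; applying Cauchy--Schwarz against $\bar S^{\setminus j}_{i,t}$ produces the factor $L_1$ (the $\xi_p$-based normalization is valid because $\E[\vbrack{w_i^{(t)},\xi_p}^2]$ bounds the total variance of the residual from below by \myref{induct-2}{Induction Hypothesis}(d,e)), and the probability $L_2 = \Pr(A_1)$ captures the crude union bound contribution. Using $|\bar\alpha_{i,j}^{(t)}|\leq O(\sqrt{\E|\bar\alpha_{i,j}^{(t)}|^2})$ on average then absorbs the coefficient in front of the error into $\alpha_{i,j}^{(t)} + O(\sqrt{\E|\bar\alpha_{i,j}^{(t)}|^2})$.

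For part (b), the lucky regime $\alpha_{i,j}^{(t)} > b_i^{(t)}$ lets me do sharper bookkeeping. When $z_j=+1$ and the residuals are bounded (complement of $A_3,A_4$), we actually have $\vbrack{w_i^{(t)},x_p^+}\geq b_i^{(t)}$ automatically and the sign of $\vbrack{w_i^{(t)},x_p^{++}}$ agrees with $\mathrm{sign}(\alpha_{i,j}^{(t)})=+1$; the same works symmetrically for $z_j=-1$. On this good event I can expand $h_{i,t}(x_p^{++}) = \vbrack{w_i^{(t)},x_p^{++}} - \mathrm{sign}(\cdot)\, b_i^{(t)}$ explicitly, and the $-b_i^{(t)}$ contribution combines with $\alpha_{i,j}^{(t)} z_j^2$ to give the sharper factor $(\alpha_{i,j}^{(t)}-b_i^{(t)})\E[z_j^2\1_{\{|\vbrack{w_i^{(t)},x_p}|\geq b_i^{(t)}+|\vbrack{w_i^{(t)},x_p^+-x_p}|\}}]$. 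The complementary event is controlled by $A_3$ and $A_4$, which is exactly why $L_3$ (a Cauchy--Schwarz factor, analogous to $L_1$) and $L_4=\Pr(A_3)$ appear in the error term.

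The main obstacle will be the coupling between $\DD$ appearing in the indicator $\1_{|\vbrack{w_i^{(t)},x_p^+}|\geq b_i^{(t)}}$ and in the coefficient $\bar\alpha_{i,j}^{(t)}$: these are not independent, so I cannot simply factor expectations. I would handle this by first conditioning on $\DD$ and $z^{\setminus j},\xi_p$, integrating out the randomness of $z_j$ first, then taking expectations over $\DD$ (using \myref{lem:corr-aug}{Lemma} and the bounds in \myref{lem:activation-size}{Lemma}) to show that $\bar\alpha_{i,j}^{(t)}$ and $\bar S^{\setminus j}_{i,t}$ are small with high probability, so the ``$\DD$-fluctuations'' of the indicator only matter on the rare events $A_1$--$A_4$. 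The remaining work is careful bookkeeping of these error terms; the definitions of $L_1,\dots,L_4$ are chosen precisely so that these bookkeeping steps close up cleanly.
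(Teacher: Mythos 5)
Your overall strategy is the same as the paper's: split the pre-activations into the $\M_j$-component and the residual, exploit the symmetry of $z_j$ and of $\DD$, extract the main term $\alpha_{i,j}^{(t)}$ (resp.\ $\alpha_{i,j}^{(t)}-b_i^{(t)}$) times the signal-triggered activation probability, and push the rest into error terms indexed by $A_1$--$A_4$. However, there is one step where your plan, as written, would not close: the mechanism that gives the error terms the \emph{small} prefactor $(\alpha_{i,j}^{(t)} + O(\sqrt{\E|\bar{\alpha}_{i,j}^{(t)}|^2}))$ rather than something of order $b_i^{(t)}$ or $\|w_i^{(t)}\|_2/\sqrt{d}$. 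A direct bound of the residual-triggered piece by $\E[|S^{\setminus j}_{i,t}-b_i^{(t)}|\,\1_{A_1}|z_j|]$ plus Cauchy--Schwarz is too lossy: on $A_1$ the residual is typically of order $b_i^{(t)}$, so you would only get an error of size $b_i^{(t)}\E[z_j^2]\Pr(A_1)$, which does not match the claimed form. The paper's fix is to integrate out $z_j=\pm 1$ \emph{first}, so that each residual term appears multiplied by a \emph{difference} of two activation indicators whose thresholds differ by $O((\alpha_{i,j}^{(t)}+|\bar{\alpha}_{i,j}^{(t)}|)|z_j|)$; that difference is supported on a slab of that width, and Chebyshev anti-concentration of the Gaussian part $\vbrack{w_i^{(t)},\xi_p}$ bounds the slab's probability by $O((\alpha_{i,j}^{(t)}+|\bar{\alpha}_{i,j}^{(t)}|)^2)/\E[\vbrack{w_i^{(t)},\xi_p}^2]$. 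This is where both the prefactor and the denominator in $L_1, L_3$ actually come from. Your stated justification for that denominator --- that $\E[\vbrack{w_i^{(t)},\xi_p}^2]$ ``bounds the total variance of the residual from below'' --- points at the wrong mechanism; the noise variance enters through anti-concentration of the thin-slab event, not through a variance comparison.

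Two smaller omissions in the same vein. First, the paper kills the $\bar{S}^{\setminus j}_{i,t}+\bar{\alpha}_{i,j}^{(t)}z_j$ part of $h_{i,t}(x_p^{++})$ in the leading linear term by observing that $\vbrack{w_i^{(t)},(\Id-2\DD)x_p}\,\1_{|\vbrack{w_i^{(t)},x_p^{++}}|\geq b_i^{(t)}}\1_{|\vbrack{w_i^{(t)},x_p^{+}}|\geq b_i^{(t)}}$ is symmetric about zero under $\DD\mapsto\Id-\DD$; without this symmetrization your ``clean piece'' still carries an uncontrolled $\bar{S}$-term. Second, in part (b) the events $A_3,A_4$ are not only for the ``complementary'' (opposite-sign) event: they are needed to control the residual contribution $\E[S^{\setminus j}_{i,t}\1_{|\vbrack{w_i^{(t)},x_p}|\geq b_i^{(t)}+|\vbrack{w_i^{(t)},x_p^+-x_p}|}z_j]$, which lives on the main same-sign event, via the bound $\1\leq \1_{A_3}+\1_{A_4}$. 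With these three points supplied, your outline matches the paper's argument.
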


\begin{proof}[Proof of \myref{lem:positive-gd-sparse-2}{Lemma} (a)]
    In the proof we will make the following simplification of notations: we drop the time superscript \(^{(t)}\), and also the subscript for neuron index \(i\) in \eqref{def:notations-proof}. We start with the case when \(0< \alpha_j < b_i^{(t)}\) and rewrite the expectation as follows:
    \begin{align*}
        & \E\left[h_i(x_p^{++})\1_{|\vbrack{w_i,x_p^+}|\geq b_i }  z_j\right] \\
        = \ &  \E\left[ \left(  \ReLU\left( \vbrack{w_i,x_p^{++}} - b_i \right) - \ReLU\left( -\vbrack{w_i,x_p^{++}} - b_i \right)\right)\1_{|\vbrack{w_i,x_p^+}|\geq b_i }z_j \right]\\
        = \ & \E\left[ \left(  \left( \vbrack{w_i,x_p^{++}} - b_i \right)\1_{\vbrack{w_i,x_p^{++}}\geq b_i} - \left( -\vbrack{w_i,x_p^{++}} - b_i \right)\1_{-\vbrack{w_i,x_p^{++}}\geq b_i}\right)\1_{|\vbrack{w_i,x_p^+}|\geq b_i }z_j \right]\\
        = \ & \E\left[ \vbrack{w_i,x_{p}^{++}}\1_{|\vbrack{w_i,x_{p}^{++}}|\geq b_i}\1_{|\vbrack{w_i,x_{p}^{+}}|\geq b_i}z_j\right] - \E\left[b_i\left(\1_{\vbrack{w_i,x_p^{++}}\geq b_i} - \1_{-\vbrack{w_i,x_p^{++}}\geq b_i}\right)\1_{|\vbrack{w_i,x_{p}^{+}}|\geq b_i}z_j\right]
    \end{align*}
    Notice that the first term on the RHS can be simplified as:
    \begin{align*}
        &\E\left[ \vbrack{w_i,x_{p}^{++}}\1_{|\vbrack{w_i,x_{p}^{++}}|\geq b_i}\1_{|\vbrack{w_i,x_{p}^{+}}|\geq b_i}z_j\right] \\
        = \ & \E\left[ (\vbrack{w_i,x_{p}} + \vbrack{w_i,(\Id-2\DD)x_{p}})\1_{|\vbrack{w_i,2(\Id-\DD)x_{p}}|\geq b_i}\1_{|\vbrack{w_i,2\DD x_{p}}|\geq b_i}z_j\right]\\
        \stackrel{\text{\ding{172}}}{=} \ & \E\left[ \vbrack{w_i,x_{p}}\1_{|\vbrack{w_i,x_{p}^{++}}|\geq b_i}\1_{|\vbrack{w_i,x_{p}^{+}}|\geq b_i}z_j\right]
    \end{align*}
    where \ding{172} is due the fact that \(\vbrack{w_i,(\Id-2\DD)x_{p}}\1_{|\vbrack{w_i,x_{p}^{++}}|\geq b_i}\1_{|\vbrack{w_i,x_{p}^{+}}|\geq b_i}\) is symmetric with respect to zero due to the randomness of \(\DD \sim \D_{\DD}\). Thus the expectation can be expanded as:
    \begin{align*}
        & \E\left[h_i(x_p^{++})\1_{|\vbrack{w_i,x_p^+}|\geq b_i }  z_j\right] \\
        = \ & \E\left[ \vbrack{w_i,x_{p}}\1_{|\vbrack{w_i,x_{p}^{++}}|\geq b_i}\1_{|\vbrack{w_i,x_{p}^{+}}|\geq b_i}z_j\right] - \E\left[b_i\left(\1_{\vbrack{w_i,x_p^{++}}\geq b_i} - \1_{-\vbrack{w_i,x_p^{++}}\geq b_i}\right)\1_{|\vbrack{w_i,x_{p}^{+}}|\geq b_i}z_j\right] \\
        = \ & \E\left[ \alpha_j z_j^2\1_{|\vbrack{w_i,x_{p}^{++}}|\geq b_i + |\vbrack{w_i,x_p^+ - x_p}|}\right] \\
        & + \E\left[ (S^{\setminus j}- b_i)\1_{\vbrack{w_i,x_{p}^{++}}\geq b_i}\1_{\vbrack{w_i,x_{p}^{+}}\geq b_i}z_j\right]+ \E\left[ (S^{\setminus j} + b_i)\1_{\vbrack{w_i,x_{p}^{++}}\leq -b_i}\1_{\vbrack{w_i,x_{p}^{+}}\leq -b_i}z_j\right]\\
        & + \E\left[ (\alpha_j z_j + S^{\setminus j} - b_i)\1_{\vbrack{w_i,x_{p}^{++}}\geq b_i}\1_{\vbrack{w_i,x_{p}^{+}}\leq -b_i}z_j\right]+ \E\left[ (\alpha_j z_j + S^{\setminus j} + b_i)\1_{\vbrack{w_i,x_{p}^{++}}\leq -b_i}\1_{\vbrack{w_i,x_{p}^{+}}\geq b_i}z_j\right]\\
        & = J_1 + J_2 + J_3
    \end{align*}
    Now we need to obtain absolute bounds for both \(J_2\) and \(J_3\). We start with \(J_2\), where 
    \begin{align*}
        J_2 & = \E\left[ (S^{\setminus j} - b_i)\1_{\vbrack{w_i,x_{p}^{++}}\geq b_i}\1_{\vbrack{w_i,x_{p}^{+}}\geq b_i}z_j\right]+ \E\left[ (S^{\setminus j} + b_i)\1_{\vbrack{w_i,x_{p}^{++}}\leq -b_i}\1_{\vbrack{w_i,x_{p}^{+}}\leq -b_i}z_j\right]\\
        & = \E\left[ (S^{\setminus j} - b_i)\1_{\vbrack{w_i,x_{p}}\geq b_i + |\vbrack{w_i,x_p^+-x_p}|}z_j\right]+ \E\left[ (S^{\setminus j} + b_i)\1_{\vbrack{w_i,x_{p}}\leq -b_i - |\vbrack{w_i,x_p^+-x_p}|}z_j\right]
    \end{align*}
    We proceed with the first term \(\E\left[ (S^{\setminus j} - b_i)\1_{\vbrack{w_i,x_{p}}\geq b_i + |\vbrack{w_i,x_p^+-x_p}|}z_j\right]\). First from a trivial calculation conditioned on the randomness of \(z_j\) we have:
    \begin{align*}
        &\quad \, \E\left[ (S^{\setminus j}- b_i)\1_{\vbrack{w_i,x_{p}}\geq b_i + |\vbrack{w_i,x_p^+-x_p}|}z_j\right]\\
        & = \E\left[ (S^{\setminus j}- b_i)|z_j|\left(\1_{ S^{\setminus j}\geq b_i - \alpha_jz_j  + |\bar{S}^{\setminus j} + \bar{\alpha}_jz_j |} -\1_{S^{\setminus j}\geq b_i + \alpha_jz_j  + |\bar{S}^{\setminus j} - \bar{\alpha}_jz_j|}\right)\right]
    \end{align*}
    Now define 
    \begin{displaymath}
        Z = \frac{1}{2}(|\bar{S}^{\setminus j} + \bar{\alpha}_jz_j | + |\bar{S}^{\setminus j} - \bar{\alpha}_jz_j |)\qquad  Z' = \frac{1}{2}(|\bar{S}^{\setminus j} + \bar{\alpha}_jz_j| - |\bar{S}^{\setminus j} - \bar{\alpha}_j z_j|)
    \end{displaymath}
    In this case, we always have \(|Z'|\leq |\bar{\alpha}_j||z_j|\), and 
    \begin{displaymath}
        \left| \1_{ S^{\setminus j}\geq b_i - \alpha_jz_j  + |\bar{S}^{\setminus j} + \bar{\alpha}_jz_j |} -\1_{S^{\setminus j}\geq b_i + \alpha_jz_j  + |\bar{S}^{\setminus j} - \bar{\alpha}_jz_j|}\right| = \1_{S^{\setminus j} - b_i - Z \in [ -|\alpha_jz_j - Z'|,  |\alpha_jz_j - Z'|]}
    \end{displaymath}
    which allows us to proceed as follows:
    \begin{align*}
        &\left|\E\left[ (S^{\setminus j}- b_i)\1_{\vbrack{w_i,x_{p}}\geq b_i + |\vbrack{w_i,x_p^+-x_p}|}z_j\right]\right|\\
        = \ & \left|\E\left[ (S^{\setminus j}- b_i - Z + Z)|z_j|\left(\1_{ S^{\setminus j}\geq b_i - \alpha_jz_j  + |\bar{S}^{\setminus j} + \bar{\alpha}_jz_j |} -\1_{S^{\setminus j}\geq b_i + \alpha_jz_j  + |\bar{S}^{\setminus j} - \bar{\alpha}_jz_j|}\right)\1_{S^{\setminus j}\geq b_i -\alpha_jz_j}\right]\right|\\
        \leq \ & \E\left[ |S^{\setminus j}- b_i - Z||z_j|\1_{S^{\setminus j} - b_i - Z \in [ -|\alpha_jz_j - Z'|,  |\alpha_jz_j - Z'|]}\1_{S^{\setminus j}\geq b_i -\alpha_j}\right] \\
        & + \E\left[ (|\bar{S}^{\setminus j}| + |\bar{\alpha}_j||z_j|)|z_j|\1_{S^{\setminus j} - b_i - Z \in [ -|\alpha_j - Z'|,  |\alpha_j - Z'|]}\1_{S^{\setminus j}\geq b_i -\alpha_j}\right]  \\
        \leq \ & \E\left[(\alpha_j + 2|\alpha'_j|)|z_j|^2\1_{S^{\setminus j}\geq b_i -\alpha_j} \right] +\E\left[ |\bar{S}^{\setminus j}||z_j|\1_{S^{\setminus j} - b_i - Z \in [ -|\alpha_j - Z'|,  |\alpha_j - Z'|]}\1_{S^{\setminus j}\geq b_i -\alpha_j}\right]\\
        = \ & \E\left[(\alpha_j + 2|\alpha'_j|)|z_j|^2\1_{S^{\setminus j}\geq b_i -\alpha_j} \right] +\sqrt{\E\left[|\bar{S}^{\setminus j}|^2|z_j^2| \1_{S^{\setminus j}\geq b_i - \alpha_j} \right]} \cdot \sqrt{\E\left[ \1_{S^{\setminus j} - b_i - Z \in [ -|\alpha_jz_j - Z'|,  |\alpha_jz_j - Z'|]}\right]} \\
        \stackrel{\text{\ding{172}}}{\leq} \ & \E\left[(\alpha_j + 2|\alpha'_j|)|z_j|^2\1_{S^{\setminus j}\geq b_i -\alpha_j} \right] + \sqrt{\E\left[|\bar{S}^{\setminus j}|^2|z_j^2| \1_{S^{\setminus j}\geq b_i-\alpha_jz_j}\right]}\cdot \sqrt{\frac{\E[(\alpha_j + |\bar{\alpha}_j|)^2 z_j^2]}{\E\left[ \vbrack{w_i,\xi_p}^2 \right]}}\\
        = \ & 2(\alpha_j + O(\E[|\bar{\alpha}_j|^2]^{1/2}) )\E[z_j^2] \left(\sqrt{\frac{\E[|\bar{S}^{\setminus j}|^2 \1_{S^{\setminus j}\geq b_i - \alpha_j}]}{\E\left[ \vbrack{w_i,\xi_p}^2 \right]}} + \Pr(S^{\setminus j} \geq b_i - \alpha_j)\right)
    \end{align*}
    where in \ding{172} we have used the randomness of \(\xi_p\) in the following manner: Fixing the randomness of \(z\) and \(\DD\), we have \(S^{\setminus j} - Z\) is a random variable depending solely on the randomness of \(\xi_p\), and thus we have 
    \begin{align*}
        \E\left[ \1_{S^{\setminus j} - b_i - Z \in [ -|\alpha_j - Z'|,  |\alpha_j - Z'|]}\right] & \leq \E\left[ \1_{\vbrack{w_i,\xi_p} - |\vbrack{2(\Id-\DD)w_i,\xi_p}|\in [ |\bar{\alpha}_j| -(\alpha_j + |\bar{\alpha}_j|) , \alpha_j + 2|\bar{\alpha}_j|]}\right] \\
        & = \E\left[ \1_{\vbrack{\DD w_i,\xi_p} + \vbrack{(\Id-\DD)w_i,\xi_p} - |\vbrack{2(\Id-\DD)w_i,\xi_p}|\in [ -O(\alpha_j + |\bar{\alpha}_j|) , O (\alpha_j + |\bar{\alpha}_j|)]}\right] \\
        & \leq \E\left[\frac{O(\alpha_j+ |\bar{\alpha}_j|)^2|z_j|}{\vbrack{w_i,\xi_p}^2}\right]  = \frac{\E[O(\alpha_j + |\bar{\alpha}_j|)^2|z_j|]}{\E[\vbrack{w_i,\xi_p}^2]}
    \end{align*}
    Simultaneously, from similar analysis as above, we have for the second term in \(J_2\):
    \begin{align*}
        \left|\E\left[ (S^{\setminus j} + b_i)\1_{\vbrack{w_i,x_{p}}\leq -b_i - |\vbrack{w_i,x_p^+-x_p}|}z_j\right]\right|
        & = \left|\E\left[ (S^{\setminus j}+ b_i)\1_{\vbrack{w_i,x_{p}}\leq -b_i - |\bar{S}^{\setminus j} + \bar{\alpha}_j z_j|}z_j\right] \right|\\
        &\leq O(\alpha_j + \E[|\bar{\alpha}_j|^2]^{1/2})\E[z_j^2]\sqrt{\frac{\E[|\bar{S}^{\setminus j}|^2 \1_{S^{\setminus j}\geq b_i - \alpha_j}]}{\E[ \vbrack{w_i,\xi_p}^2 ]}}
    \end{align*}
    Now we turn to \(J_3\), from the symmetry of \(x_p^+\) and \(x_p^{++}\) over the randomness of \(\DD\sim\D_{\DD}\), we observe
    \begin{align*}
        \E\left[b_i\1_{\vbrack{w_i,x_p^+}\geq b_i}\1_{\vbrack{w_i,x_p^++}\leq -b_i}z_j\right] = \E\left[b_i\1_{\vbrack{w_i,x_p^+}\leq -b_i}\1_{\vbrack{w_i,x_p^{++}}\geq b_i}z_j\right]
    \end{align*}
    which allows us to drop the \(b_i\) terms in \(J_3\). The analysis of the rest of \(J_3\) is somewhat similar. First we observe that whenever \(\1_{\vbrack{w_i,x_{p}^{++}}\geq b_i}\1_{\vbrack{w_i,x_{p}^{+}}\leq -b_i} \neq 0\), we have 
    \begin{align*}
        \vbrack{w_i,x_{p}^{++}}\geq b_i \text{ and }
        \vbrack{w_i,x_{p}^{+}}\leq -b_i \implies \bar{S}^{\setminus j} + \bar{\alpha}_j z_j \geq b_i + |S^{\setminus j} + \alpha_j z_j|
    \end{align*}
    When this inequality holds, we always have \(\bar{S}^{\setminus j} \geq b_i - \bar{\alpha}_j z_j\). Together with all the above observations, we proceed to compute as:
    \begin{align*}
        & \quad\, \left|\E\left[ (\alpha_j z_j + S^{\setminus j})\1_{\vbrack{w_i,x_{p}^{++}}\geq b_i}\1_{\vbrack{w_i,x_{p}^{+}}\leq -b_i}z_j\right]\right|\\
        & =\left|\E\left[ (\alpha_j z_j + S^{\setminus j})\1_{\bar{\alpha}_j z_j + \bar{S}^{\setminus j} \geq b_i + |\alpha_jz_j + S^{\setminus j}|}z_j\right]\right| \\
        & \stackrel{\text{\ding{172}}}{\leq} \E\left[ |\bar{S}^{\setminus j} + \bar{\alpha}_j z_j||z_j| \left|\1_{ \bar{S}^{\setminus j} \geq b_i -\bar{\alpha}_j z_j + |\alpha_jz_j + S^{\setminus j}|} - \1_{ \bar{S}^{\setminus j} \geq b_i + \bar{\alpha}_j z_j  + |-\alpha_jz_j + S^{\setminus j}|}\right|\right]\\
        & \leq \E\left[ |\bar{S}^{\setminus j} + \bar{\alpha}_jz_j||z_j|\1_{\bar{S}^{\setminus j} \geq b_i - \bar{\alpha}_j z_j}\cdot \1_{\bar{S}^{\setminus j} \in [ b_i - \alpha_j|z_j| + |S^{\setminus j} + \bar{\alpha}_j |z_j| |, b_i +\alpha_j|z_j| + |S^{\setminus j} -\bar{\alpha}_j |z_j||]} \right]\\
        & \leq \sqrt{\E\left[|\bar{S}^{\setminus j}|^2 |z_j|^2\1_{\bar{S}^{\setminus j} \geq b_i - \bar{\alpha}_j z_j}\right]}\sqrt{\E\left[\1_{\bar{S}^{\setminus j} \in [ b_i - \alpha_j |z_j| + |S^{\setminus j} + \bar{\alpha}_j |z_j| |, b_i +\alpha_j |z_j| + |S^{\setminus j} -\bar{\alpha}_j |z_j||]}\right]}\\
        &\quad + \E\left[\bar{\alpha}_j|z_j^2|\1_{\bar{S}^{\setminus j} \geq b_i - \bar{\alpha}_j z_j}\right] \\
        &\leq \sqrt{\E\left[|\bar{S}^{\setminus j}|^2 |z_j|^2\1_{\bar{S}^{\setminus j} \geq b_i - \bar{\alpha}_j z_j}\right]}\sqrt{\E\left[\1_{\vbrack{w_i,(2\DD-\Id)\xi_p}\in [ - |\alpha_j z_j + |\bar{\alpha}_j|z_j |, |\alpha_j z_j + |\bar{\alpha}_j|z_j |})\right]}+ \E\left[\bar{\alpha}_j|z_j^2|\1_{\bar{S}^{\setminus j} \geq b_i - \bar{\alpha}_j z_j}\right] \\
        &\leq O(\alpha_j + \E[|\bar{\alpha}_j|^2]^{1/2})\E[z_j^2]\cdot \sqrt{\frac{\E[|\bar{S}^{\setminus j}|^2\1_{\bar{S}^{\setminus j} \geq b_i - \bar{\alpha}_j}]}{\E[\vbrack{w_i,\xi_p}^2]}} + (\E|\bar{\alpha}_j|^2)^{1/2}\E[z_j^2]\Pr(\bar{S}^{\setminus j} \geq b_i - \bar{\alpha}_j)
    \end{align*}
    where in the last inequality, we have use the following reasoning: conditioned on fixed \(\DD\sim\D_{\DD}\), we know that \(\vbrack{w_i,(\Id-2\DD)\xi_p}\) has the same distribution with \(\vbrack{w_i,\xi_p}\). We use the randomness to obtain that 
    \begin{align*}
        \E_{\xi_p} \left[\1_{\vbrack{w_i,(2\DD-\Id)\xi_p}\in [ - |\alpha_j z_j + |\bar{\alpha}_j|z_j |, |\alpha_j z_j + |\bar{\alpha}_j|z_j |}\right] & \leq \frac{|\alpha_j + \bar{\alpha}_j|^2\cdot|z_j|^2}{\E_{\xi}[\vbrack{w_i,(\Id-2\DD)\xi_p}^2]} \\
        &= \frac{|\alpha_j + \bar{\alpha}_j|^2\cdot|z_j|^2}{\E_{\xi}[\vbrack{w_i,\xi_p}^2]}
    \end{align*}
    The second term of \(J_3\) can be similarly bounded by the same quantity. Now by combining all the results of \(J_1, J_2,J_3\) above, we have the desired result for (a).
\end{proof}

\begin{proof}[Proof of \myref{lem:positive-gd-sparse-2}{Lemma} (b)]
    This proof is extremely similar to the above proof of \myref{lem:positive-gd-sparse-2}{Lemma} (a), we describe the differences here and sketch the remaining. First we need to decompose the expectation as follows:
        \begin{align*}
            \E\left[h_i(x_p^{++})\1_{|\vbrack{w_i,x_p^+}|\geq b_i }  z_j\right] & =  \E\left[ (\alpha_j - b_i ) z_j^2\1_{|\vbrack{w_i,x_{p}}|\geq b_i + |\vbrack{w_i,x_p^+ - x_p}|}\right] \\
            & + \E\left[ S^{\setminus j}\1_{|\vbrack{w_i,x_{p}}|\geq b_i + |\vbrack{w_i,x_p^+ - x_p}|}z_j\right]\\
            & + \E\left[ (\alpha_j z_j + S^{\setminus j})\1_{|\vbrack{w_i,x_{p}^{+} - x_p}|\geq b_i + |\vbrack{w_i,x_p}|}z_j\right]\\
            & = J_1 + J_2 + J_3
        \end{align*}
    where we have used the following facts:
    \begin{itemize}
        \item \(\E[b_i\1_{\vbrack{w_i,x_{p}^+}\geq b_i}\1_{\vbrack{w_i,x_{p}^{++}}\geq b_i}z_j] = -\E[b_i\1_{\vbrack{w_i,x_{p}^+}\leq -b_i}\1_{\vbrack{w_i,x_{p}^{++}}\leq -b_i}z_j]\);
        \item \(\E[b_i\1_{\vbrack{w_i,x_{p}^+}\geq b_i}\1_{\vbrack{w_i,x_{p}^{++}}\leq -b_i}z_j] = \E[b_i\1_{\vbrack{w_i,x_{p}^{++}}\geq b_i}\1_{\vbrack{w_i,x_{p}^{+}}\leq -b_i}z_j]\);
        \item \(\1_{\vbrack{w_i,x_{p}^+}\geq b_i}\1_{\vbrack{w_i,x_{p}^{++}}\geq b_i} = \1_{\vbrack{w_i,x_{p}}\geq b_i + |\vbrack{w_i,x_p^+ - x_p}|}\);
        \item \(\1_{\vbrack{w_i,x_{p}^+}\leq -b_i}\1_{\vbrack{w_i,x_{p}^{++}}\leq -b_i} = \1_{\vbrack{w_i,x_{p}}\leq -b_i - |\vbrack{w_i,x_p^+ - x_p}|}\);
        \item \(\1_{\vbrack{w_i,x_{p}^+}\geq b_i}\1_{\vbrack{w_i,x_{p}^{++}}\leq -b_i} = \1_{\vbrack{w_i,x_{p}^+}\leq -b_i}\1_{\vbrack{w_i,x_{p}^{++}}\geq b_i} = \1_{|\vbrack{w_i,x_{p}^{+} - x_p}|\geq b_i + |\vbrack{w_i,x_p}|}\).
    \end{itemize}
    Now observe that \(J_2\) can be deal with as follows: define events \(A_3 := \{|\bar{S}^{\setminus j} + \bar{\alpha}_j| \geq (\alpha_j - b_i)/2\}\) and \(A_4 := \{S^{\setminus j}\geq (\alpha_j - b_i)/2\}\) and notice that \(\1\leq \1_{A_3} + \1_{A_4}\), we can compute
    \begin{align*}
        J_2 &= \E\left[ S^{\setminus j}\1_{|\vbrack{w_i,x_{p}}|\geq b_i + |\vbrack{w_i,x_p^+ - x_p}|}z_j\right]\\
        & = \E\left[ S^{\setminus j}|z_j| (\1_{S^{\setminus j}  \in [b_i - \alpha |z_j| + |\bar{S}^{\setminus j} + \bar{\alpha}_j |z_j||, b_i + \alpha |z_j| + |\bar{S}^{\setminus j} - \bar{\alpha}_j |z_j||]})\right]\\
        &= \E\left[ S^{\setminus j}|z_j| (\1_{A_3} + \1_{A_4}\1_{A_3^c}) \1_{S^{\setminus j}  \in [b_i - \alpha |z_j| + |\bar{S}^{\setminus j} + \bar{\alpha}_j |z_j||, b_i + \alpha |z_j| + |\bar{S}^{\setminus j} - \bar{\alpha}_j |z_j||]}\right]\\
        & \stackrel{\text{\ding{172}}}{\leq} \E\left[\left( O(\alpha_j + |\bar{\alpha}_j|)|z_j|^2\1_{A_3}  + |\bar{S}^{\setminus j}||z_j| (\1_{A_3} + \1_{A_4}) \right)\1_{S^{\setminus j}  \in [ \alpha |z_j| - b_i + |\bar{S}^{\setminus j} + \bar{\alpha}_j |z_j||, b_i + \alpha |z_j| + |\bar{S}^{\setminus j} - \bar{\alpha}_j |z_j||]}\right] \\
        & \leq \E[z_j^2]\cdot O(\alpha_j + \E[|\bar{\alpha}_j|^2]^{1/2})\cdot\left(\Pr(A_3) + \sqrt{\E[|\bar{S}^{\setminus j}|^2(\1_{A_3} + \1_{A_4} )]/ \E[\vbrack{w_i,\xi_p}^2]}\right)
    \end{align*}
    where \ding{172} relies on the fact that whenever \(\1_{|\vbrack{w_i,x_{p}}|\geq b_i + |\vbrack{w_i,x_p^+ - x_p}|}\neq 0\), we have \(|S^{\setminus j}| \leq b_i + (\alpha_j+|\bar{\alpha}_j|)|z_j| + |\bar{S}^{\setminus j}|\), and also that we have assumed \(b_i<\alpha_j\). The term \(J_3\) can be bounded from similar analysis as in the proof of \myref{lem:positive-gd-sparse-2}{Lem.} (a), but changing the factor from \(O(L_1+L_2)\) to \(O(L_3+L_4)\). Combining these calculations give the desired results.
\end{proof}

\begin{lemma}[Gradient from dense signals]\label{lem:positive-gd-noise-2} 
    Let \(i \in [m]\) and \(j \in [d]\), suppose \myref{induct-2}{Induction Hypothesis} holds for the current iteration \(t\), we have
    \begin{align*}
        \left|\E\left[h_{i,t}(x_p^{++})\1_{|\vbrack{w_i^{(t)},x_p^+}|\geq b_i^{(t)} }  \vbrack{2\DD\xi_p,\M_j} \right]\right| \leq \widetilde{O}\left(\frac{\|w_i^{(t)} \|_2}{d^{2}}\right)\cdot \max\{\Pr(|\vbrack{w_i^{(t)},x_p^{++}}|\geq b_i^{(t)} ), \widetilde{O}(1/\sqrt{d})\}
    \end{align*}
    For dense features \(\Mperp_j\), \(j \in [d_1]\setminus [d]\), we have similar results:
    \begin{align*}
        \left|\E\left[h_{i,t}(x_p^{++})\1_{|\vbrack{w_i,x_p^+}|\geq b_i }  \vbrack{2\DD\xi_p,\Mperp_j} \right] \right| \leq \widetilde{O}\left(\frac{\|w_i^{(t)} \|_2}{d\sqrt{d_1}}\right)\cdot \max\{\Pr(|\vbrack{w_i^{(t)} ,x_p^{++}}|\geq b_i^{(t)} ), \widetilde{O}(1/\sqrt{d})\}
    \end{align*}
\end{lemma}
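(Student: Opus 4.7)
The plan is to exploit the $\RandomMask$ structure: since $\DD\xi_p$ and $(\Id-\DD)\xi_p$ are conditionally independent Gaussians given $\DD$, and $h_{i,t}(x_p^{++})$ depends on $\xi_p$ only through $(\Id-\DD)\xi_p$ while both the indicator $\1_{|\vbrack{w_i^{(t)}, x_p^+}|\geq b_i^{(t)}}$ and the target $\vbrack{2\DD\xi_p,\M_j}$ depend on $\xi_p$ only through $\DD\xi_p$, the expectation factorizes as
\begin{displaymath}
\E_{\DD, z_p}\Bigl[\,\E[h_{i,t}(x_p^{++})\mid \DD, z_p]\cdot \E[\1_{|A+c|\geq b_i^{(t)}}\, Y\mid \DD, z_p]\,\Bigr],
\end{displaymath}
where $A := \vbrack{w_i^{(t)}, 2\DD\xi_p}$, $c := \vbrack{w_i^{(t)}, 2\DD\M z_p}$, and $Y := \vbrack{2\DD\xi_p, \M_j}$.

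For the inner noise factor I decompose $Y = \gamma(\DD)A + R$ in the conditional Gaussian law, with $\gamma(\DD):= \vbrack{w_i^{(t)},\DD\M_j}/\|\DD w_i^{(t)}\|_2^2$ and $R$ independent of $A$ with zero conditional mean. Since $R$ is independent of the indicator, only the $\gamma A$ piece survives, giving $\E[\1 Y\mid \DD, z_p] = \gamma(\DD)\,\E[A\1_{|A+c|\geq b_i^{(t)}}\mid \DD, z_p]$. A Hoeffding bound over the Bernoulli entries of $\DD$, combined with $\|\M_j\|_\infty\leq \widetilde{O}(1/\sqrt{d_1})$ from \myref{def:sparse-coding}{Definition}, yields $|\vbrack{w_i^{(t)},\DD\M_j} - \tfrac{1}{2}\vbrack{w_i^{(t)},\M_j}| \leq \widetilde{O}(\|w_i^{(t)}\|_2/\sqrt{d_1})$ together with $\|\DD w_i^{(t)}\|_2^2\geq \frac{1}{2}(1-o(1))\|w_i^{(t)}\|_2^2$ w.h.p. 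Combined with the inductive bound $|\vbrack{w_i^{(t)},\M_j}|\leq \widetilde{O}(\|w_i^{(t)}\|_2/\sqrt{d})$ from \myref{induct-2}{Induction Hypothesis}(b) (the ``lucky'' directions $i\in\Mcal_j$ being absorbed separately via \myref{induct-2}{Induction Hypothesis}(c)), this gives $|\gamma(\DD)|\leq \widetilde{O}(1/(\|w_i^{(t)}\|_2\sqrt{d}))$.

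For the two surviving conditional factors I use the explicit Gaussian partial-expectation identity $\E[A\1_{|A+c|\geq b_i^{(t)}}\mid \DD, z_p] = \sigma_A^2 \bigl(\phi_{\sigma_A}(b_i^{(t)}-c)-\phi_{\sigma_A}(b_i^{(t)}+c)\bigr)$, with $\phi_\sigma$ the centered Gaussian density of standard deviation $\sigma$ and $\sigma_A = 2\sigma_\xi\|\DD w_i^{(t)}\|_2 = \widetilde{O}(\|w_i^{(t)}\|_2/\sqrt{d})$; for $|c|\ll b_i^{(t)}$ this has magnitude $\widetilde{O}(|c|b_i^{(t)}/\sigma_A^2)\cdot \Pr(|A+c|\geq b_i^{(t)}\mid \DD, z_p)$. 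Similarly, the antisymmetry of $h_{i,t}$ together with a direct Gaussian-density calculation gives $\tfrac{d}{d\mu}\E[h_{i,t}(x_p^{++})\mid \DD, z_p] = \Pr(|\mu + G|\geq b_i^{(t)}\mid \DD, z_p)$ with $\mu := \vbrack{w_i^{(t)},2(\Id-\DD)\M z_p}$ and $G$ the conditional Gaussian part of $\vbrack{w_i^{(t)}, x_p^{++}}$, so $|\E[h_{i,t}(x_p^{++})\mid \DD, z_p]|\leq |\mu|\,\Pr(|\vbrack{w_i^{(t)},x_p^{++}}|\geq b_i^{(t)}\mid \DD, z_p)$; sparsity of $z_p$ combined with concentration over $\DD$ gives $|\mu|\leq \widetilde{O}(\|w_i^{(t)}\|_2/\sqrt{d})$. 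Assembling the three factors, multiplying by $\gamma(\DD)$, and applying an outer Cauchy–Schwarz in $(\DD, z_p)$—using that the conditional activation events of $x_p^+$ and $x_p^{++}$ are conditionally independent so their joint probability collapses into $\Pr(|\vbrack{w_i^{(t)},x_p^+}|\geq b_i^{(t)})$ by the symmetry of the two augmentations—produces the claimed $\widetilde{O}(\|w_i^{(t)}\|_2/d^2)\cdot\max\{\Pr, \widetilde{O}(1/\sqrt{d})\}$ bound. The dense case is identical except that \myref{induct-2}{Induction Hypothesis}(d) sharpens the feature bound to $|\vbrack{w_i^{(t)},\Mperp_j}|\leq \widetilde{O}(\|w_i^{(t)}\|_2/\sqrt{d_1})$, improving the rate to $1/(d\sqrt{d_1})$.

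The main obstacle is securing the correct two-regime dependence $\max\{\Pr, \widetilde{O}(1/\sqrt{d})\}$: the naive Cauchy–Schwarz bound $\sigma_A\sqrt{\Pr}$ is too weak when $\Pr \ll 1/\sqrt{d}$, so one must use the exact Gaussian partial-expectation formula above to extract a $\Pr$-linear dependence whenever $|c|$ is moderate, and otherwise accept the $\widetilde{O}(1/\sqrt{d})$ floor that arises from the unavoidable $\sigma_\xi\|\DD w_i^{(t)}\|_2$ scale of $A$. A secondary delicacy is absorbing the ``lucky'' directions $j$ with $i\in\Mcal_j$—where $|\vbrack{w_i^{(t)},\M_j}|$ may substantially exceed $\widetilde{O}(\|w_i^{(t)}\|_2/\sqrt{d})$—into the $\widetilde{O}(1/\sqrt{d})$ floor via \myref{induct-2}{Induction Hypothesis}(c), which caps the number of such directions at $O(2^{-\sqrt{\log d}}d)$.
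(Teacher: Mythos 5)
Your structural observations are sound and your route is genuinely different from the paper's: you factorize the expectation through the conditional independence of $\DD\xi_p$ and $(\Id-\DD)\xi_p$ given $(\DD,z_p)$, project $Y=\vbrack{2\DD\xi_p,\M_j}$ onto $A=\vbrack{w_i^{(t)},2\DD\xi_p}$, and use exact Gaussian partial-expectation identities, whereas the paper symmetrizes the indicator (a difference-of-indicators supported on a narrow interval) and bounds the interval probability. The problem is in the final assembly. Your three factors multiply to $\widetilde{O}(\|w_i^{(t)}\|_2/d^{3/2})\cdot\Pr^+(\DD,z_p)\Pr^{++}(\DD,z_p)$, where $\Pr^{\pm}$ denote the conditional activation probabilities, and your Cauchy--Schwarz step only yields $\E[\Pr^+\Pr^{++}]\leq\Pr(|\vbrack{w_i^{(t)},x_p^{++}}|\geq b_i^{(t)})$. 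That gives $\widetilde{O}(\|w_i^{(t)}\|_2/d^{3/2})\cdot\Pr$, which is short of the claimed $\widetilde{O}(\|w_i^{(t)}\|_2/d^{2})\cdot\max\{\Pr,\widetilde{O}(1/\sqrt{d})\}$ by a factor of $\widetilde{O}(1/\sqrt{d})$: under \myref{induct-2}{Induction Hypothesis} the conditional activation probability is only controlled by $e^{-\Omega(\log^{1/4}d)}$ (\myref{lem:activation-size-2}{Lemma}), not by $\widetilde{O}(1/\sqrt{d})$, so you cannot convert one of the two probability factors into the missing $1/\sqrt{d}$. The paper's argument obtains this extra factor because it charges the smallness of the feature direction \emph{quadratically} --- its indicator-difference event is bounded by $(\alpha_j+|\bar{\alpha}_j|)^2\vbrack{\xi',\M_j}^2/\E[\vbrack{w_i,\xi'}^2]$, and the target contributes a third power $\vbrack{\xi',\M_j}^3$ --- whereas your $\gamma(\DD)$ uses $\vbrack{w_i^{(t)},\DD\M_j}$ only to the first power. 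To close the gap you would need to show that the residual correlation between the activation event of $x_p^+$ and that of $x_p^{++}$, after conditioning, contributes the additional $\widetilde{O}(1/\sqrt{d})$; nothing in your write-up does this.

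A second concrete issue is the treatment of ``lucky'' pairs. The lemma is stated for all $i\in[m]$ and $j\in[d]$, including $i\in\Mcal_j$, where \myref{induct-2}{Induction Hypothesis}(a) permits $|\vbrack{w_i^{(t)},\M_j}|\geq(1+c_0)b_i^{(t)}$ and, late in Stage II, $|\vbrack{w_i^{(t)},\M_j}|=\Omega(\|w_i^{(t)}\|_2)$. Hypothesis (c) only caps the \emph{number} of directions with $\vbrack{w_i^{(t)},\M_j}^2\geq(b_i^{(t)})^2/\sqrt{\log d}$; it does not bound the magnitude for the $O(1)$ lucky ones, so for those $j$ your $|\gamma(\DD)|$ can be as large as $\widetilde{O}(1/\|w_i^{(t)}\|_2)$, inflating your bound by another $\sqrt{d}$. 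The paper handles this regime by pairing the larger $\alpha_j$ against the fact that the activation event is then essentially $\{z_{p,j}\neq 0\}$, of probability $\widetilde{O}(1/d)$ --- this is precisely where the $\max\{\Pr(\cdot),\widetilde{O}(1/\sqrt{d})\}$ structure (via the $\clubsuit,\spadesuit,\heartsuit,\diamondsuit$ decomposition of $[\vbrack{w_i,x_p^{++}}-b_i]_+$) originates. Your proposal needs an analogous case split rather than an appeal to hypothesis (c).
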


\begin{proof}
    Again in this proof we omit the time superscript \(^{(t)}\). First we deal with the case where the features under consideration is \(\M_j, j\in [d]\). Since after the \(\RandomMask\) augmentations, \(2\DD\xi_p\) and \(2(\Id-\DD)\xi_p\) are independent (conditioned on fixed \(\DD\)), we denote \(2\DD\xi_p = 2\DD\xi'\) and \(2(\Id-\DD)\xi_p = 2(\Id-\DD)\xi''\), where \(\xi'\) and \(\xi''\) are independent. Now we can write as follows:
    \begin{align*}
        &\quad \,\E\left[h_i(x_p^{++})\1_{|\vbrack{w_i,x_p^+}|\geq b_i }  \vbrack{2\DD\xi_p,\M_j} \right]\\
        & =\E\left[ \left( \vbrack{w_i,x_p^{++}} - b_i \right)\1_{\vbrack{w_i,x_p^{++}}\geq b_i}\1_{|\vbrack{w_i,x_{p}^{+}}|\geq b_i}\vbrack{2\DD\xi',\M_j}\right] \\
        &\quad +\E\left[\left( \vbrack{w_i,x_p^{++}} + b_i \right)\1_{\vbrack{w_i,x_p^{++}}\leq -b_i}\1_{|\vbrack{w_i,x_{p}^{+}}|\geq b_i}\vbrack{2\DD\xi',\M_j}\right]
    \end{align*}
    For the first term on the RHS, we have 
    \begin{align*}
        &\quad \, \E\left[ \left( \vbrack{w_i,x_p^{++}} - b_i \right)\1_{\vbrack{w_i,x_p^{++}}\geq b_i}\1_{|\vbrack{w_i,x_{p}^{+}}|\geq b_i}\vbrack{2\DD\xi',\M_j}\right]\\
        & =\E\left[ \left( \vbrack{w_i,x_p^{++}} - b_i \right)\1_{\vbrack{w_i,x_p^{++}}\geq b_i}\1_{|\vbrack{w_i,x_{p}^{+}}|\geq b_i}(\vbrack{\xi',\M_j}+\vbrack{(2\DD-\Id)\xi',\M_j})\right]\\
        & = \E\left[ \left( \vbrack{w_i,x_p^{++}} - b_i \right)\1_{\vbrack{w_i,x_p^{++}}\geq b_i}\1_{|\vbrack{w_i,x_{p}^{+}}|\geq b_i}\vbrack{\xi',\M_j}\right]\\
        & \quad + \E\left[ \left( \vbrack{w_i,x_p^{++}} - b_i \right)\1_{\vbrack{w_i,x_p^{++}}\geq b_i}\1_{|\vbrack{w_i,x_{p}^{+}}|\geq b_i}\sum_{j'\in [d_1]}\vbrack{(2\DD-\Id)\widehat{\M}_{j'},\M_j}\vbrack{\xi',\widehat{\M}_{j'}} \right]\\
        & = I_1 + I_2
    \end{align*}
    where \(\{\widehat{\M}_j\}_{j\in[d_1]}\) is a basis for \(\R^{d_1}\) satisfying \(\|\widehat{\M}_{j}\|_{\infty}\leq O(1/\sqrt{d_1})\). For \(I_1\), notice that we can use approach similar to the proof of \myref{lem:grad-positive-1}{Lemma} as (denoting \([x]_+:= x\1_{x\geq 0}\))
    \begin{align*}
        & \E\left[ \left( \vbrack{w_i,x_p^{++}} - b_i \right)\1_{\vbrack{w_i,x_p^{++}}\geq b_i}\1_{|\vbrack{w_i,x_{p}^{+}}|\geq b_i}\vbrack{\xi',\M_j}\right] \\
        \leq \ &\E\left[ \left[ \vbrack{w_i,x_p^{++}} - b_i \right]_+\cdot\1_{h_i(x_p^{++})\neq 0}\left|\1_{|\vbrack{w_i,x_{p}^{+}}+ 2(\alpha-\alpha')|\vbrack{\xi',\M_j}||\geq b_i} - \1_{|\vbrack{w_i,x_{p}^{+}}- 2(\alpha-\alpha')|\vbrack{\xi',\M_j}| |\geq b_i}\right||\vbrack{\xi',\M_j}|\right]\\
        \leq \ &\E_{|\Id - \M_j\M_j^{\top})\xi'} \Big[ [\vbrack{w_i,x_p^{++}} - b_i]_+ |\vbrack{\xi',\M_j}|\1_{\vbrack{w_i,x_p^{++}}\geq b_i}\times \\
        &\qquad \times\E_{\Id - \M_j\M_j^{\top})\xi'}\left[\left|\1_{|\vbrack{w_i,x_{p}^{+}}+ 2(\alpha-\alpha')|\vbrack{\xi',\M_j}||\geq b_i} - \1_{|\vbrack{w_i,x_{p}^{+}} - 2(\alpha-\alpha')|\vbrack{\xi',\M_j}| |\geq b_i}\right|\right]\Big]\\
        \leq\ & O(1)\cdot\E\left[(\alpha_j + |\bar{\alpha}_j|)^2\vbrack{\xi',\M_j}^3\cdot \frac{[\vbrack{w_i,x_{p}^{++}}-b_i]_+ }{\E[\vbrack{w_i,\xi'}^2]}\1_{\vbrack{w_i,x_p^{++}}\geq b_i}\right]
    \end{align*}
    where in the last inequality we have used the randomness of \( (\Id - \M_j\M_j^{\top})\xi'\), which allow us to obtain the denominator \(\Omega(1)\E|\vbrack{w_i,\xi}^2|\). For \(I_2\), notice that w.h.p., we have 
    \begin{align*}
        |\vbrack{(2\DD-\Id)\widehat{\M}_{j'},\M_j}|\leq \widetilde{O}\left(\frac{1}{\sqrt{d_1}}\right)
    \end{align*}
    Denote \(\alpha'_{j'} = \vbrack{w_i,\widehat{\M}_{j'}}\) \(j'\in[d_1]\) and \(\{\bar{\alpha}'_{j'} = \vbrack{w_i,(\Id-2\DD)\widehat{\M}_{j'}}\) \(j'\in[d_1]\). Noticing that \(\sum_{j'\in[d_1]}(|\alpha'_{j'}|^2 +|\bar{\alpha}'_j|^2) = O\left(\|w_i\|_2^2\right)\) coupled with Cauchy-Schwarz inequality, we can similarly obtain:
    \begin{align*}
        I_2 &\leq \widetilde{O}\left(\frac{1}{\sqrt{d_1}}\right)\E\left[\sum_{j'\in[d_1]}(\alpha'_{j'} + |\bar{\alpha}'_{j'}|)^2|\vbrack{\xi',\M_j}|\vbrack{\xi',\widehat{\M}_{j'}}^2\cdot \frac{[\vbrack{w_i,x_{p}^{++}}-b_i]_+ }{\E[\vbrack{w_i,\xi'}^2]}\1_{\vbrack{w_i,x_p^{++}}\geq b_i}\right] \\
        &\leq \widetilde{O}\left(\frac{1}{\sqrt{d_1}}\right) \E\left[O(\|w_i\|_2^2)\cdot |\vbrack{\xi',\M_j}|\vbrack{\xi',\widehat{\M}_{j'}}^2\right]\cdot\frac{1}{\|w_i\|_2^2/d}\times \\
        &\qquad\times \max\left\{ \widetilde{O}\left(\frac{1}{\sqrt{d}}\Pr(|\vbrack{w_i,x_p^{++}}|\geq b_i)\right),\frac{1}{d} \right\} \\
        &\leq \widetilde{O}\left(\frac{\|w_i\|_2}{d\sqrt{d_1}}\right)\cdot \max\left\{\Pr(|\vbrack{w_i,x_p^{++}}|\geq b_i),\frac{1}{\sqrt{d}} \right\}
    \end{align*}
    where in the second inequality we have used the following arguments: first we can compute 
    \begin{align*}
        &\quad\ [\vbrack{w_i,x_{p}^{++}}-b_i]_+ \\
        &\leq \sum_{j\in\N_i}|\vbrack{w_i,\M_j}z_{p,j}| + \left|\sum_{j\notin\N_i}\vbrack{w_i,\M_j}z_{p,j}\right| + |\vbrack{w_i,(2\DD-\Id)\M z_p}| + |\vbrack{w_i,2(\DD-\Id)\xi_p}| \\
        & \leq \clubsuit + \spadesuit + \heartsuit + \diamondsuit
    \end{align*}
    And from \myref{induct-2}{Induction Hypothesis}, \myref{lem:activation-size}{Lemma} and \myref{lem:activation-size-2}{Lemma}, we have 
    \begin{align*}
        |\clubsuit| &\geq \Omega(\|w_i^{(t)}\|_2) \quad \text{with prob \(\leq \widetilde{O}(\frac{1}{d})\)}\qquad & |\spadesuit|, |\heartsuit|, |\diamondsuit| &\leq \widetilde{O}(\frac{\|w_i\|_2}{\sqrt{d}}) \quad \text{w.h.p.}
    \end{align*}
    Summing up over \(I_1,I_2\), we have the desired bound. For the dense feature \(\Mperp_j\), the analysis is similar and we omit for brevity.
\end{proof}

\subsection{The Learning Process at the Second Stage}

The second stage is defined as the iterations \( t \geq T_1\) but \(t\leq T_2\), where \(T_2 = \Theta\left(\frac{d\log d}{\eta\log\log d}\right)\) is defined as the iteration when one of the neuron \(i\in[m]\) satisfies \(\|w_i^{(T_2)}\|_2^2 \geq d\|w_i^{(T_1)}\|_2^2\). Our theorem for the training process in this stage is presented below:

\begin{theorem}[Emergence of singletons]\label{thm:2nd-stage}
    For each neuron \(i \in [m]\), not only \myref{induct-2}{Induction Hypothesis} but also the following conditions holds at iteration \(t = T_2\):
    \begin{enumerate}
        \item[(a)] For each \(j \in [d]\), if \(i \in \Mcal^{\star}_j\), then \(|\vbrack{w_i^{(T_2)} ,\M_j}| \geq \Omega(1)\|w_i^{(T_2)}\|_2\);
        \item[(b)] \(b_i^{(T_2)} \geq \frac{\polylog(d)}{\sqrt{d}}\|w_i^{(T_2)}\|_2\);
        \item[(c)] Let \(\alpha_{j}^* = \max_{i\in\Mcal^{\star}_j}|\vbrack{w_i^{(T_2)},\M_j}|\), then there is a constant \(C_j = \Theta(1)\) such that \(|\vbrack{w_i^{(t)},\M_j}| \leq C_j\alpha^*_j\) for all \(i \in \Mcal_j\).
    \end{enumerate}
\end{theorem}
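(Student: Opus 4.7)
The plan is to prove the three assertions at $t=T_2$ together with \myref{induct-2}{Induction Hypothesis} by joint induction on $t\in[T_1,T_2]$. The base case at $t=T_1$ is immediate from \myref{thm:initial-stage}{Theorem}: parts (b) and (c) there establish the lucky/unlucky separation at threshold $b_i^{(T_1)}=\sqrt{2\log d/d}\,\|w_i^{(T_1)}\|_2$, part (d) gives the sparsity tail for 4.2(c), and part (e) controls the dense features for 4.2(d). The bias reset at $T_1$ also yields part (b) of the target theorem and hypothesis 4.2(e) at the base.

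For the inductive step I would compute the one-step updates of $\alpha_{i,j}^{(t)}:=\vbrack{w_i^{(t)},\M_j}$ and $\beta_{i,j}^{(t)}:=\vbrack{w_i^{(t)},\Mperp_j}$ by combining \myref{lem:positive-gd-sparse-2}{Lemma} for the positive-pair contribution, \myref{lem:positive-gd-noise-2}{Lemma} for the dense-noise contribution, and the logit expansion of \myref{lem:grad-negative-1}{Lemma} for the negative term (valid throughout Stage II because $\sum_i\|w_i^{(t)}\|_2^2=o(\tau/d)$). The crucial qualitative fact is the activation-probability dichotomy implied by the inductive version of 4.2. For a \emph{lucky} coordinate $i\in\Mcal^{\star}_j$, the inequality $|\alpha_{i,j}^{(t)}|>(1+c_0)b_i^{(t)}$ ensures that whenever $z_{p,j}\neq 0$ the neuron fires (since $\alpha_{i,j}^{(t)}z_{p,j}$ alone dominates $b_i^{(t)}$ plus the augmentation perturbation bounded via \myref{lem:activation-size}{Lemma}), so $\E[z_j^2\1_{\text{active}}]\geq\Omega(\log\log d/d)$ and \myref{lem:positive-gd-sparse-2}{Lemma} (b) yields $\alpha_{i,j}^{(t+1)}\approx(1-\eta\lambda)\alpha_{i,j}^{(t)}+\eta(\alpha_{i,j}^{(t)}-b_i^{(t)})\cdot\Theta(\log\log d/d)$. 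For an \emph{unlucky} coordinate with $|\alpha_{i,j}^{(t)}|\leq(1-c_0)b_i^{(t)}$, the single-feature contribution never triggers activation, so by 4.2(c) combined with \myref{lem:activation-size-2}{Lemma} the total activation probability is $o(1/\polylog(d))$, and \myref{lem:positive-gd-sparse-2}{Lemma} (a) contributes only a negligible multiplicative term on top of the $(1-\eta\lambda)$ regularization. \myref{lem:positive-gd-noise-2}{Lemma} shows analogously that $|\beta_{i,j}^{(t)}|$ changes by at most $\widetilde{O}(\eta\|w_i^{(t)}\|_2/(d\sqrt{d_1}))$ per step, preserving 4.2(d).

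Given these per-step estimates, I would track ratios rather than magnitudes. The bias rule $b_i^{(t+1)}/b_i^{(t)}=1+\max(\eta/d,\|w_i^{(t+1)}\|_2/\|w_i^{(t)}\|_2-1)$ forces $b_i^{(t)}/\|w_i^{(t)}\|_2$ to be nondecreasing, which delivers part (b) of the target theorem and hypothesis 4.2(e). Decomposing $\|w_i^{(t)}\|_2^2=\sum_j(\alpha_{i,j}^{(t)})^2+\sum_j(\beta_{i,j}^{(t)})^2$, the $O(1)$ lucky coordinates per neuron (by \myref{lem:property-init}{Lemma} (f)) grow at rate $(1+\Theta(\eta\log\log d/d))$ while the remaining coordinates grow at most at rate $(1+o(\eta\log\log d/d))$; since $T_2-T_1=\Theta(d\log d/(\eta\log\log d))$ spans $\Theta(\log d/\log\log d)$ time-scales of the lucky rate, at $t=T_2$ the lucky mass accounts for a $(1-o(1))$ fraction of $\|w_i^{(t)}\|_2^2$, yielding part (a). Part (c) follows from the same multiplicative-rate computation: any $i,i'\in\Mcal_j$ see essentially the same per-step factor, so the ratio $\alpha_{i,j}^{(t)}/\alpha_{i',j}^{(t)}$ stays within $O(1)$ of its value at $t=0$, which by the definitions of $\Mcal_j$ and $\Mcal^{\star}_j$ combined with the norm concentration in \myref{lem:property-init}{Lemma} (b) is already $O(1)$.

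The main obstacle is closing the induction on hypothesis 4.2(c), since both the threshold $b_i^{(t)}$ and each $|\alpha_{i,j}^{(t)}|$ move with $t$, so the tail set could in principle grow. The key observation I would exploit is that for any unlucky $j$, the multiplicative rate of $|\alpha_{i,j}^{(t)}|$ is strictly below that of $\|w_i^{(t)}\|_2$: norm growth is driven by the lucky features while unlucky ones only accrue a $(1-\eta\lambda)$ decay plus a $o(\eta\log\log d/d)$ term. Consequently the normalized quantity $|\alpha_{i,j}^{(t)}|/b_i^{(t)}$ is non-increasing up to lower-order error, so the sparsity tail set $\{j:(\alpha_{i,j}^{(t)})^2\geq (b_i^{(t)})^2/\sqrt{\log d}\}$ only shrinks over time. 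This preserves 4.2(c) throughout $[T_1,T_2]$, which in turn keeps the activation-probability bound used to control the unlucky updates valid, closing the induction.
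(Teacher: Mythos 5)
Your approach matches the paper's: both establish \myref{induct-2}{Induction Hypothesis} by induction over $[T_1,T_2]$ with base case \myref{thm:initial-stage}{Theorem}, combine \myref{lem:positive-gd-sparse-2}{Lemma}, \myref{lem:positive-gd-noise-2}{Lemma} and \myref{lem:grad-negative-1}{Lemma} for the per-step updates, exploit the lucky/unlucky activation dichotomy to separate the $(1+\Theta(\eta\log\log d/d))$ growth of coordinates in $\Mcal_j^{\star}$ from the essentially static behavior of the rest, and then obtain (a) from the norm decomposition over the time scale $T_2-T_1=\Theta(\frac{d\log d}{\eta\log\log d})$ and (c) by comparing per-step multiplicative factors across neurons in $\Mcal_j$. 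The paper organizes (a) as a two-substage argument (before and after the norm doubles) rather than a single rate comparison, but the content is the same.

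The one place your stated reasoning does not deliver the claim is part (b). Monotonicity of $b_i^{(t)}/\|w_i^{(t)}\|_2$ starting from the reset value at $T_1$ gives only $b_i^{(T_2)}\geq\sqrt{2\log d/d}\,\|w_i^{(T_2)}\|_2$, which is strictly weaker than the required $\frac{\polylog(d)}{\sqrt{d}}\|w_i^{(T_2)}\|_2$ under the paper's convention that explicitly written $\polylog(d)$ factors dominate the implicit ones. What is actually needed, and what the paper uses, is that during the first substage of Stage II the norm $\|w_i^{(t)}\|_2$ stays within a constant factor of $\|w_i^{(T_1)}\|_2$ for $\Omega(\frac{d\log d}{\eta\log\log d})$ iterations while the bias compounds at the floor rate $(1+\eta/d)$, so the ratio gains a factor $(1+\eta/d)^{\Omega(d\log d/(\eta\log\log d))}=e^{\Omega(\log d/\log\log d)}\gg\polylog(d)$. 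You already have every ingredient for this (the $\eta/d$ floor in the update rule and the near-static norm before doubling), so the fix is one line, but as written the inference from nondecreasingness alone is not valid.
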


Before proving this theorem, we prove \myref{induct-2}{Induction Hypothesis} as a preliminary step.

\begin{proof}[Proof of \myref{induct-2}{Induction Hypothesis}]
    At iteration \(t = T_1\), we have verified all the above properties in \myref{thm:initial-stage}{Theorem}. Now suppose all the properties hold for \(t < T_2\), we will verify that it still hold for \(t+1\). In order to calculate the gradient \(\nabla_{w_i}\Obj\) along each feature \(\M_j\) or \(\Mperp_j\), we have to apply \myref{lem:positive-gd-sparse-2}{Lemma}, \myref{lem:positive-gd-noise-2}{Lemma} and \myref{lem:grad-positive-1}{Lemma}. First we calculate parameters in \myref{lem:positive-gd-sparse-2}{Lemma} (a) and (b). In order to using \myref{lem:activation-size}{Lemma}, we have the followings
    \begin{itemize}
        \item \(|\bar{S}_{i,t}^{\setminus j}|^2 = | \vbrack{w_i^{(t)},(\Id-2\DD)x^{\setminus j}}|^2 \leq \widetilde{O}(\frac{\|w_i^{(t)}\|_2^2}{d})\);
        \item \(\Pr(A_1),\Pr(A_2) \leq  e^{-\Omega(\log^{1/4}d)}\) \hfill when \(|\alpha_{i,j}^{(t)}|\leq (1 - c_0/2)b_i^{(t)}\);
        \item \(\Pr(A_3),\Pr(A_4) \leq e^{-\Omega(\log^{1/4}d)}\) \hfill when \(|\alpha_{i,j}^{(t)}|\geq (1 + c_0/2)b_i^{(t)}\).
    \end{itemize}
    Which further implies that
    \begin{align*}
        \sqrt{\E[|\bar{S}^{\setminus j}|^2(\1_{A_1} + \1_{A_2})]} &\leq  \sqrt{\widetilde{O}\bigg(\frac{\|w_i^{(t)}\|_2^2}{d}\bigg)(\Pr(A_1) + \Pr(A_2))}  \leq \frac{\|w_i^{(t)}\|_2}{\sqrt{d}\polylog(d)}\\
        \implies \ & L_1, L_2,  \leq \frac{1}{\polylog(d)} \tag{ when \(|\alpha_{i,j}^{(t)}|\leq (1 - c_0/2)b_i^{(t)}\)}
    \end{align*}
    And similarly, we also have \(L_3, L_4,  \leq \frac{1}{\polylog(d)}\) when \(|\alpha_{i,j}^{(t)}|\geq (1 + c_0/2)b_i^{(t)}\). Now we separately discuss three cases:
    \begin{itemize}
        \item[(a)] When \(i \in \Mcal_j^{\star}\), if \(z_j\neq 0\), say \(z_j = 1\), we simply have
        \begin{displaymath}
            \Pr\left(|\vbrack{w_i^{(t)},x_p}|\geq b_i^{(t)} +|\vbrack{w_i^{(t)},x_p^+-x_p}| \right) \geq 1 - \Pr\left(|\vbrack{w_i^{(t)},x_p^{\setminus j}}|\geq b_i^{(t)} - \alpha_{i,j}^{(t)} + |\vbrack{w_i^{(t)},x_p^+-x_p}|\right)
        \end{displaymath}
        from the observations that: (1) \(|\vbrack{w_i^{(t)},x_p^{\setminus j}}|\leq \frac{c_0}{2}b_i^{(t)} \) with probability \(\geq 1 - e^{-\Omega(\log^{1/4}d)}\); (2) \(|\vbrack{w_i^{(t)},x_p^+-x_p}|\leq O(\|w_i^{(t)}\|_2\sigma_{\xi})\) with prob \(\geq 1 - e^{-\Omega(\log^{1/2}d)}\). So it can be easily verified that
        \begin{align*}
            \E\left[z_j^2\1_{|\vbrack{w_i^{(t)},x_p}|\geq b_i^{(t)} +|\vbrack{w_i^{(t)},x_p^+-x_p}|}\right] = \frac{C_z\log\log d}{d}\left(1 - \frac{1}{\polylog(d)}\right)
        \end{align*}
        Now we can compute as follows: for \(\M_j\) such that \(i\in\Mcal_j^{\star}\), at iteration \(t+1\):
        \begin{align*}
            &\quad\ \vbrack{w_i^{(t+1)},\M_j} \\
            & = \vbrack{w_i^{(t)},\M_j} - \eta\vbrack{\nabla_{w_i}\Obj(f_t),\M_j} \pm \frac{\eta\|w_i^{(t)} \|_2}{\poly(d_1)}\\
            & = \vbrack{w_i^{(t)},\M_j}(1 - \eta\lambda) \pm \frac{\eta\|w_i^{(t)} \|_2}{\poly(d_1)}\\
            &\quad + \eta \E\left[ (1 - \ell'_{p,t})h_{i,t}(x_p^{++})\1_{|\vbrack{w_i^{(t)},x_p^+}|\geq b_i^{(t)}}\left(z_{p,j} + \vbrack{ (2\DD-\Id)\M z_p,\M_j} + \vbrack{2\DD\xi_p,\M_j} \right) \right] \\
            &\quad -  \eta \E\left[ \sum_{x_{n,s}\in\Nfr} \ell'_{s,t}h_{i,t}(x_{n,s})\1_{|\vbrack{w_i^{(t)},x_p^+}|\geq b_i^{(t)}}\left(z_{p,j} + \vbrack{ (2\DD-\Id)\M z_{p},\M_j} + \vbrack{2\DD\xi_p,\M_j} \right) \right] \\
            & \geq \left(\vbrack{w_i^{(t)},\M_j} - \sign(\vbrack{w_i^{(t)},\M_j})\cdot b_i^{(t)}\right) \left(1 -\eta\lambda + \frac{\eta C_z\log\log d}{d} \left(1 - \frac{1}{\polylog(d)}\right)\right) \tag{By \myref{lem:positive-gd-sparse-2}{Lemma}} \\
            &\quad - O\left(\frac{\eta |\vbrack{w_i^{(t)},\M_j}|}{d\polylog(d)}\right) \pm O\left(\frac{\eta\sum_{i'\in[m]}\|w_{i'}^{(t)}\|_2^2\|w_i^{(t)}\|_2}{d\tau}\right) \pm \widetilde{O}\left(\frac{\eta\|w_i^{(t)}\|_2}{d\sqrt{d_1}}\right)\\
            & \geq \left(\vbrack{w_i^{(t)},\M_j} - \sign(\vbrack{w_i^{(t)},\M_j})\cdot b_i^{(t)}\right)  \left(1 + \frac{\eta C_z\log\log d}{d}\left(1- \frac{\eta}{\polylog(d)}\right) \right)
        \end{align*}
        where in the last inequality we have taken into consideration \(\sum_{i\in[m]}\|w_i^{(t)}\|_2^2 \leq \frac{1}{\poly(d_1)}\),which follows from our definition of iteration \(T_2\) and the properties at iteration \(T_1\) in \myref{thm:initial-stage}{Theorem}, and also that \(\vbrack{w_i^{(t)},\M_j}/d\geq b_i^{(t)}/d\gg \frac{\|w_i^{(t)}\|_2}{\sqrt{d_1}}\). Next we compare this growth to the growth of bias \(b_i^{(t+1)}\). Since we raise our bias by \(b_i^{(t+1)} =  \max\{b_i^{(t)}(1 +\frac{\eta}{d}), b_i^{(t)}\frac{\|w_i^{(t+1)}\|_2}{\|w_i^{(t)}\|_2}\}\), as long as \(\frac{\|w_i^{(t+1)}\|_2}{\|w_i^{(t)}\|_2} \leq \frac{|\vbrack{w_i^{(t+1)},\M_j}|}{|\vbrack{w_i^{(t)},\M_j}|}\), we can obtain the desired result (\(\frac{\|w_i^{(t+1)}\|_2}{\|w_i^{(t)}\|_2} \leq \frac{|\vbrack{w_i^{(t+1)},\M_j}|}{|\vbrack{w_i^{(t)},\M_j}|}\) will be proved later when we prove (d)).
        \item[(b)] When \(i \notin \Mcal_j\), we can similarly obtain that
        \begin{align*}
            \E\left[z_j^2\1_{|\vbrack{w_i^{(t)},x_p^+}|\geq b_i^{(t)} + |\vbrack{w_i^{(t)},x_p^+ - x_p}|}\right] \leq O\left(\frac{1}{d\polylog(d)}\right)
        \end{align*}
        And similarly we can compute the gradient descent dynamics as follows: For \(j \in [d]\) such that \(|\vbrack{w_i^{(t)},\M_j}| \geq \frac{\|w_i^{(t)}\|_2d}{\sqrt{d_1}}\), we have (assume here \(\vbrack{w_i^{(t)},\M_j}>0\), the opposite is similar)
        \begin{align*}
            &\quad \ \vbrack{w_i^{(t+1)},\M_j} \\
            & = \vbrack{w_i^{(t)},\M_j} - \eta\vbrack{\nabla_{w_i}\Obj(f_t),\M_j} + \frac{\eta\|w_i^{(t)} \|_2}{\poly(d_1)}\\
            & \leq \vbrack{w_i^{(t)},\M_j} \left(1 -\eta\lambda + \frac{O(\eta)}{d\polylog(d)} \right) \pm O\left(\frac{\eta\sum_{i'\in[m]}\|w_{i'}^{(t)}\|_2^2\|w_i^{(t)}\|_2}{d\tau}\right) \pm \widetilde{O}\left(\eta\frac{\|w_i^{(t)}\|_2}{d^2}\right)\\
            & \leq \vbrack{w_i^{(t)},\M_j} \left(1 + \frac{O(\eta)}{d\polylog(d)} \right) + \widetilde{O}\left(\eta\frac{\|w_i^{(t)}\|_2}{d^2}\right)
        \end{align*}
        Since from our update rule \(b_i^{(t+1)} \geq b_i^{(t)}(1 +  \frac{\eta}{d})\), we know that \(\frac{|\vbrack{w_i^{(t+1)},\M_j}|}{|\vbrack{w_i^{(t)},\M_j}|} \leq \frac{b_i^{(t+1)}}{b_i^{(t)}}\). Thus, if \(|\vbrack{w_i^{(t)},\M_j}| \leq (1-c_0)b_i^{(t)}\) at iteration \(t\), we have 
        \begin{itemize}
            \item \(|\vbrack{w_i^{(t+1)},\M_j}| \leq (1-c_0)b_i^{(t+1)}\) if \(|\vbrack{w_i^{(t)},\M_j}|\geq \frac{\|w_i^{(t)}\|_2d}{\sqrt{d_1}}\) at iteration \(t\);
            \item \(|\vbrack{w_i^{(t+1)},\M_j}| \leq \frac{\|w_i^{(t+1)}\|_2}{\sqrt{d}} \leq (1-c_0)b_i^{(t+1)}\) if \(|\vbrack{w_i^{(t)},\M_j}|\leq \frac{\|w_i^{(t)}\|_2d}{\sqrt{d_1}}\) at iteration \(t\).
        \end{itemize}
        It is also worth noting that similar calculations also leads to a lower bound 
        \begin{align}\label{eqref:lb-sparse}
            |\vbrack{w_i^{(t+1)},\M_j}| \geq |\vbrack{w_i^{(t)},\M_j}|(1 - \eta\lambda) - \widetilde{O}\left(\eta\frac{\|w_i^{(t)}\|_2}{d^{2}}\right)
        \end{align}
        We leave the part of proving \(|\vbrack{w_i^{(t+1)},\M_j}| \leq \widetilde{O}(\frac{\|w_i^{(t)}\|_2}{\sqrt{d}})\) to later.
        \item[(c)] The result (c) that there exist at most \(O(2^{-\sqrt{\log d}}d)\) many \(j\in[d]\) such that  \(|\vbrack{w_i^{(t)},\M_j}|^2\geq \frac{(b_i^{(t)})^2}{\log^{1/2} d}\) can be similarly proved.
        \item[(d)] Next we consider the learning dynamics for the dense features. We can use \myref{lem:positive-gd-noise-2}{Lemma} to calculate its dynamics by
        \begin{align*}
            &\quad\ \vbrack{w_i^{(t+1)},\Mperp_j} \\
            & = \vbrack{w_i^{(t)},\Mperp_j}(1 - \eta\lambda) \pm \frac{\eta\|w_i^{(t)} \|_2}{\poly(d_1)}\\
            &\quad + \eta \E\left[ (1 - \ell'_{p,t})h_{i,t}(x_p^{++})\1_{|\vbrack{w_i^{(t)},x_p^+}|\geq b_i^{(t)}}\left(\vbrack{ (2\DD-\Id)\M z_p,\Mperp_j} + \vbrack{2\DD\xi_p,\Mperp_j} \right) \right] \\
            &\quad -  \eta \sum_{x_{n,s}\in\Nfr} \E\left[  \ell'_{s,t}h_{i,t}(x_{n,s})\1_{|\vbrack{w_i^{(t)},x_p^+}|\geq b_i^{(t)}}\left(\vbrack{ (2\DD-\Id)\M z_{p},\Mperp_j} + \vbrack{2\DD\xi_p,\Mperp_j} \right) \right] \\
            & = \vbrack{w_i^{(t)},\Mperp_j}(1 - \eta\lambda) + \widetilde{O}\left(\frac{\eta\|w_i^{(t)}\|_2}{d\sqrt{d_1}}\right)\cdot\Pr(h_{i,t}(x_p^{++})\neq 0) \\
            &\leq \vbrack{w_i^{(t)},\Mperp_j} + O\left(\frac{\eta\|w_i^{(t)}\|_2}{d\sqrt{d_1}}e^{-\Omega(\log^{1/4}d)}\right)
        \end{align*}
    \end{itemize}
    After establishing the bounds of growth speed for each features, we now calculate the propotions they contribute to each neuron weight \(i \in [m]\). Namely, we need to prove that when \myref{induct-2}{Induction Hypothesis} holds at iteration \(t \in [T_1,T_2]\), we have
    \begin{itemize}
        \item To prove \( \frac{|\vbrack{w_i^{(t+1)},\M_j}|}{|\vbrack{w_i^{(t)},\M_j}|}\geq \frac{\|w_i^{(t+1)}\|_2}{\|w_i^{(t)}\|_2}\) for \(i \in \Mcal_j^{\star}\), we argue as follows: from previous calculations we have 
        \begin{align*}
            &\sum_{j'\in [d], j'\neq j}\vbrack{w_i^{(t+1)},\M_{j'}}^2 + \sum_{j'\in[d_1]\setminus [d]}\vbrack{w_i^{(t+1)},\Mperp_{j'}}^2\\
            &\leq \sum_{j'\in [d], j'\neq j}\vbrack{w_i^{(t)},\M_{j'}}^2(1 + \frac{O(\eta)}{\polylog(d)}) + \sum_{j'\in[d_1]\setminus [d]}\vbrack{w_i^{(t)},\Mperp_{j'}}^2 + \widetilde{O}(\frac{\eta}{d})e^{-\Omega(\log^{1/4}d)}\|w_i^{(t)}\|_2^2 
        \end{align*}
        Therefore by adding \(\vbrack{w_i^{(t+1)},\M_j}^2\) to the LHS we have
        \begin{align*}
            \|w_i^{(t+1)}\|_2^2 \leq \|w_i^{(t)}\|_2^2(1 + \frac{O(\eta)}{d\polylog(d)})^2 + (\frac{|\vbrack{w_i^{(t+1)},\M_j}|}{|\vbrack{w_i^{(t)},\M_j}|} - \frac{O(\eta)}{d\polylog(d)})|\vbrack{w_i^{(t)},\M_j}|^2
        \end{align*}
        which implies \( \frac{|\vbrack{w_i^{(t+1)},\M_j}|}{|\vbrack{w_i^{(t)},\M_j}|}\geq \frac{\|w_i^{(t+1)}\|_2}{\|w_i^{(t)}\|_2}\) as desired.
        \item To prove \(|\vbrack{w_i^{(t+1)},\M_j}|\leq \widetilde{O}(\frac{\|w_i^{(t+1)}\|_2}{\sqrt{d}})\) if \(i \notin \Mcal_j\), we first use inequality \eqref{eqref:lb-sparse} to compute 
        \begin{align*}
            \|\M\M^{\top}w_i^{(t+1)}\|_2 & \geq \|\M\M^{\top}w_i^{(T_1)}\|_2(1 - \eta\lambda)^{t-T_1} - O(\frac{\eta(t-T_1+1) \max_{t'\in [T_1,t+1]}\|w_i^{(t')}\|_2}{\sqrt{d d_1}}) \\
            &\geq \|\M\M^{\top}w_i^{(T_1)}\|_2(1 - \eta\lambda)^{t-T_1+1} - O(\frac{\eta(t-T_1) \|w_i^{(T_1)}\|_2\sqrt{d}}{\sqrt{d_1}})\\
            &\geq  \|\M\M^{\top}w_i^{(T_1)}\|_2( 1 - o(1)) \tag*{for \( t \leq \Theta(\frac{d\log d}{\eta \log\log d})\).}
        \end{align*}
        Notice that \(|\vbrack{w_i^{(T_1)},\M_j}| \leq O(\sqrt{\frac{\log d}{d}})\|\M\M^{\top}w_i^{(T_1)}\|_2\) from \myref{thm:initial-stage}{Theorem}. Suppose it also holds for iteration \(t\), we have 
        \begin{align*}
            |\vbrack{w_i^{(t+1)},\M_j}| &\leq |\vbrack{w_i^{(t)},\M_j}|( 1 + \frac{O(\eta)}{d\polylog(d)}) + \widetilde{O}(\frac{\eta\|w_i^{(t)}\|_2}{d^2})\\
            & \leq  |\vbrack{w_i^{(T_1)},\M_j}|( 1 + \frac{O(\eta)}{d\polylog(d)})^{t-T_1} +\widetilde{O}(\frac{\eta(t-T_1) \|w_i^{(T_1)}\|_2}{d^{3/2}}) \tag{because \(\|w_i^{(t)}\|_2\leq \sqrt{d}\|w_i^{(T_1)}\|_2\) by the definition of \(T_2\)}\\
            & \leq |\vbrack{w_i^{(T_1)},\M_j}|(1 + o(1)) + \widetilde{O}(\frac{ \|w_i^{(T_1)}\|_2}{d}) \\
            &\leq O(\sqrt{\frac{\log d}{d}}) \|\M\M^{\top}w_i^{(T_1)}\|_2 \leq O(\sqrt{\frac{\log d}{d}}) \|\M\M^{\top}w_i^{(t+1)}\|_2 \\
            &\leq O(\sqrt{\frac{\log d}{d}}) \|w_i^{(t+1)}\|_2
        \end{align*}
        \item For the dense features, we can compute as follows:
        \begin{align*}
            &\quad\, |\vbrack{w_i^{(t + 1)},\Mperp_j}| \\
            &\leq  |\vbrack{w_i^{(t)},\Mperp_j}| + O\left(\frac{\eta\|w_i^{(t)}\|_2}{d\sqrt{d_1}}e^{-\Omega(\log^{1/4}d)}\right) \\
            &\leq |\vbrack{w_i^{(T_1)},\Mperp_j}| + \sum_{t'=T_1}^{t}O\left(\frac{\eta\|w_i^{(t')}\|_2}{d\sqrt{d_1}}e^{-\Omega(\log^{1/4}d)}\right)\\
            &\leq   O\left(\sqrt{\frac{\log d}{d_1}}\right)\|w_i^{(T_1+1)}\|_2 \left(1 + \frac{O(\eta)}{d\polylog(d)}\right)^2 + \sum_{t'=T_1+1}^{t}O\left(\frac{\eta\|w_i^{(t')}\|_2}{d\sqrt{d_1}}e^{-\Omega(\log^{1/4}d)}\right)\\
            &\leq O\left(\sqrt{\frac{\log d}{d_1}}\right)\|w_i^{(t+1)}\|_2\left(1 + \frac{O(\eta)}{d\polylog(d)}\right)^{2(t-T_1+1)}\\
            &\leq O\left(\sqrt{\frac{\log d}{d_1}}\right)\|w_i^{(t+1)}\|_2
        \end{align*}
        where we have used the assumption that \(\|w_i^{(t)}\|_2 \leq \|w_i^{(t+1)}\|_2(1 + \frac{O(\eta)}{d\polylog(d)})\) for all \(i \in [m]\) and all \( t \leq T_2 = \Theta(\frac{d\log d}{\eta \log\log d})\), which we prove here: First of all, from previous calculations we have
        \begin{align*}
            \|\M\M^{\top}w_i^{(t+1)}\|_2 & \geq \|\M\M^{\top}w_i^{(t)}\|_2(1 - \eta\lambda) - O\left(\frac{\eta\|w_i^{(t)}\|_2}{d^{3/2}}e^{-\Omega(\log^{1/4}d)}\right)
        \end{align*}
        also the trajectory of \(\|\Mperp(\Mperp)^{\top}w_i^{(t+1)}\|_2\) can be lower bounded as
        \begin{align*}
            \|\Mperp(\Mperp)^{\top}w_i^{(t+1)}\|_2 &\geq \|\Mperp(\Mperp)^{\top}w_i^{(t)}\|_2(1 - \eta\lambda) - O(\eta/d)e^{-\Omega(\log^{1/4}d)}\|w_i^{(t)}\|_2 
        \end{align*}
        thus by combining the change of \(w_i^{(t)}\) over two subspaces, we have 
        \begin{align*}
            \|w_i^{(t+1)}\|_2^2 &\geq \|w_i^{(t)}\|_2^2(1 - \eta \lambda)^2 - O(\eta/d)e^{-\Omega(\log^{1/4}d)}\|w_i^{(t)}\|_2^2 \geq \|w_i^{(t)}\|_2^2(1 - \frac{O(\eta)}{d\polylog(d)})
        \end{align*}
        which gives the desired bound.
    \end{itemize}
    In the proof above, we have depend on the crucial assumption that \(T_2:=\min\{t \in \mathbb{N}: \exists i\in[m] \text{ s.t. } \|w_i^{(t)}\|_2^2 \geq d\|w_i^{(T_1)}\|_2^2 \}\) is of order \(\Theta(\frac{d \log d}{\eta \log\log d})\). Now we verify it as follows. If \( i \in \Mcal_j^{\star}\) for some \(j \in [d]\) (which also means \(j' \notin \N_i\) for \(j'\neq j\)), we have 
    \begin{align*}
        |\vbrack{w_i^{(t)} ,\M_j}| \geq |\vbrack{w_i^{(T_1)},\M_j}|\left(1 + \Omega(\frac{\eta \log\log d}{d})\right)^{t-T_1}
    \end{align*}
    Thus for some \(t = O(\frac{d\log d}{\eta \log\log d})\), we have \(|\vbrack{w_i^{(t)} ,\M_j}|^2 \geq d\|w_i^{(T_1)}\|_2^2\), which proves that \(T_2 \leq O(\frac{d\log d}{\eta \log\log d})\). Conversely, we also have for all \(t\leq O(\frac{d\log d}{\eta \log\log d})\)
    \begin{align*}
        &\quad\, \sum_{j'\in[d]:j'\neq j}\vbrack{w_i^{(t)},\M_{j'}}^2 + \sum_{j'\in[d_1]\setminus [d]}\vbrack{w_i^{(t)},\Mperp_{j'}}^2 \\
        &\leq \|w_i^{(T_1)}\|_2^2(1 + \frac{O(\eta)}{d\polylog(d)})^{t-T_1} + \max_{t'\leq t}O(\eta(t - T_1)/d)e^{-\Omega(\log^{1/4}d)}\|w_i^{(t')}\|_2^2 \\
        &\leq o(d\|w_i^{(T_1)}\|_2^2)
    \end{align*}
    And also 
    \begin{align*}
        |\vbrack{w_i^{(t)} ,\M_j}| &\leq |\vbrack{w_i^{(T_1)},\M_j}|\left(1 + \frac{C_z\eta \log\log d}{d}(1 - \frac{1}{\polylog(d)})\right)^{t-T_1} \\
        &\leq O(\sqrt{\frac{\log d}{d}}\|w_i^{(T_1)})\|_2\left(1 + \frac{C_z\eta \log\log d}{d}(1 - \frac{1}{\polylog(d)})\right)^{t-T_1}
    \end{align*} 
    Therefore we at least need \(\frac{d\log (\Omega(\sqrt{d/\log d}))}{\eta C_z\log\log d}(1 - o(1))\) iteration to let any neuron \(i\in [m]\) reach \(\|w_i^{(t)}\|_2^2\geq d\|w_i^{(T_1)}\|_2\), which proves that \(T_2 = \Theta(\frac{d\log d}{\eta \log\log d})\).
\end{proof}

\begin{proof}[Proof of \myref{thm:2nd-stage}{Theorem}]
    We follow similar analysis as in the proof of \myref{induct-2}{Induction Hypothesis}. In order to prove (a) -- (d), we have to discuss the two substages of the learning process below.
    \begin{itemize}
        \item \textcolor{blue}{When all \(\|w_i^{(t)}\|_2\leq 2\|w_i^{(T_1)}\|_2\):} From similar analysis in the proof of \myref{induct-2}{Induction Hypothesis}, the iteration complexity for a neuron \(i \in [m]\) to reach \(\|w_i^{(t)}\|_2 \geq 2\|w_i^{(T_1)}\|_2\) is no smaller than \(T'_{i,1} :=\max\{\Omega(\frac{d\log d}{\eta \log\log d}), T_2\}\). At this substage, we have 
        \begin{itemize}
            \item the bias growth is large, i.e., 
            \begin{align*}
                b_i^{(T'_{i,1})} \geq b_i^{(T_1)}(1 + \eta/d)^{T'_{i,1}-T_1} \geq b_i^{(T_1)}\polylog(d) \geq \frac{\polylog(d)}{\sqrt{d}}\|w_i^{(T_1)}\|_2\geq \frac{\polylog(d)}{\sqrt{d}}\|w_i^{(T'_{i,1})}\|_2
            \end{align*}
            \item For \(j \notin \N_i\) we have 
            \begin{align*}
                \sum_{j\in[d],j\notin\N_i}\vbrack{w_i^{(T'_{i,1})},\M_j}^2 &\leq \sum_{j\in[d],j\notin\N_i}\vbrack{w_i^{(T_{1})},\M_j}^2\left(1 + \frac{O(\eta)}{d\polylog(d)}\right)^{T_2} + \widetilde{O}\left(\frac{\eta\|w_i^{(T_1)}\|_2^2}{d^{3/2}}\right)\\
                &\leq (1 + o(1))\|\M\M^{\top}w_i^{(T_1)}\|_2^2 \tag{since \(\|w_i^{(T_1)}\|_2 \lesssim \|\M\M^{\top}w_i^{(T_1)}\|_2\)}
            \end{align*}
            \item For \(j \in [d_1]\setminus[d]\) we have 
            \begin{align*}
                \sum_{j\in[d_1]\setminus [d]}\vbrack{w_i^{(T'_{i,1})},\Mperp_j}^2 &\leq \sum_{j\in[d_1]\setminus [d]}\vbrack{w_i^{(T_{1})},\Mperp_j}^2 + O(\eta(T'_{i,1}-T_1)/d)e^{-\Omega(\log^{1/4}d)}\max_{t'\in [T_1,T'_{i,1}]}\|w_i^{(t')}\|_2^2\\
                &\leq (1 + o(1))\|\Mperp(\Mperp)^{\top}w_i^{(T_1)}\|_2^2
            \end{align*}
            \item If \(i\in\Mcal_j^{\star}\), there exist \(t \leq T_2\) such that \(\|w_i^{(t)}\|_2 \geq 2 \|w_i^{(T_2)}\|_2\), as we have argued in the proof of \myref{induct-2}{Induction Hypothesis}. Thus we have 
            \begin{align*}
                |\vbrack{w_i^{(T'_{i,1})},\M_j}|^2 &\geq \|w_i^{(T'_{i,1})}\|_2^2 - \sum_{j\in[d],j\notin\N_i}\vbrack{w_i^{(T'_{i,1})},\M_j}^2 - \sum_{j\in[d_1]\setminus [d]}\vbrack{w_i^{(T'_{i,1})},\Mperp_j}^2 \\
                &\geq 2\|w_i^{(T_1)}\|_2^2 - (1 + o(1))\|w_i^{(T_1)}\|_2^2 \geq (1 - o(1))\|w_i^{(T_1)}\|_2^2
            \end{align*}
            which proves the claim.
        \end{itemize}
        \item \textcolor{blue}{When some \(\|w_i^{(t)}\|_2\geq 2\|w_i^{(T_1)}\|_2\):} At this substage, we have 
        \begin{itemize}
            \item The bias is large consistently, i.e.,
            \begin{align*}
                b_i^{(t+1)} \geq b_i^{(t)}\cdot\frac{\|w_i^{(t+1)}\|_2}{\|w_i^{(t)}\|_2} \geq  \frac{\polylog(d)}{\sqrt{d}}\|w_i^{(t+1)}\|_2 \geq \frac{1}{4}\|w_i^{(T'_{i,1})}\|_2
            \end{align*}
            \item If \(i \in \Mcal_j^{\star}\), then from similar calculations as above, we can prove by induction that starting from \(t = T'_{i,1}\), it holds:
            \begin{align*}
                |\vbrack{w_i^{(t+1)},\M_j}| &\geq |\vbrack{w_i^{(t)},\M_j}|\left(1 + \Omega(\frac{\eta\log\log d}{d})\right) \geq \|w_i^{(t)}\|_2\left(1 + \Omega(\frac{\eta\log\log d}{d})\right)\\
                \sum_{j'\in[d],j'\neq j}\vbrack{w_i^{(t+1)},\M_{j'}}^2 & \leq \sum_{j'\in[d],j'\neq j}\vbrack{w_i^{(t)},\M_{j'}}^2(1 + \frac{O(\eta)}{d\polylog(d)})^2\\
                \sum_{j\in[d_1]\setminus [d]}\vbrack{w_i^{(t+1)},\Mperp_j}^2 & \leq \sum_{j\in[d_1]\setminus [d]}\vbrack{w_i^{(t)},\Mperp_j}^2(1 + \frac{O(\eta)}{d\polylog(d)})^2
            \end{align*}
            which implies
            \begin{align*}
                |\vbrack{w_i^{(t+1)},\M_j}| \geq |\vbrack{w_i^{(t)},\M_j}|  \frac{\|w_i^{(t+1)}\|_2}{\|w_i^{(t)}\|_2} \geq (1 - o(1))\|w_i^{(t+1)}\|_2
            \end{align*}
        \end{itemize}
    \end{itemize}
    Now we only need to prove (c). Assuming \(\vbrack{w_i^{(t)},\M_j} > 0\) (the opposite case is similar), from \(t = T_1\), for \(i \in \Mcal_j^{\star}\), we have 
    \begin{align*}
        \vbrack{w_i^{(t+1)},\M_j} &= (\vbrack{w_i^{(t)},\M_j} - b_i^{(t)})\left(1 + \frac{\eta C_z\log\log d}{d}  \right) \pm O(\frac{\eta|\vbrack{w_i^{(t)},\M_j}|}{d\polylog(d)}) \\
        &\geq \Omega(1)\vbrack{w_i^{(t)},\M_j}\left(1 + \frac{\eta C_z\log\log d}{d} \left(1 - \frac{1}{\polylog(d)}\right) \right)\\
        &\geq \Omega(1)\vbrack{w_i^{(T_1)},\M_j}\left(1 + \frac{\eta C_z\log\log d}{d} \left(1 -\frac{1}{\polylog(d)}\right) \right)^{t-T_1}
    \end{align*}
    which implies that after certain iteration \(t = T_1 + T'\), where \(T' = \Theta( \frac{d}{\eta})\), we shall have
    \begin{align*}
        |\vbrack{w_i^{(T_1 + T')},\M_j}| \geq |\vbrack{w_i^{(T_1)},\M_j}| \geq \polylog(d)|\vbrack{w_i^{(T_1)},\M_j}| \geq b_i^{(T_1)}\polylog(d)
    \end{align*}
    However, at iteration \(t = T_1 + \Theta(\frac{d}{\eta})\), we can see from previous analysis that \(\|w_i^{(t)}\|_2 \leq (1 + o(1))\|w_i^{(T_1)}\|_2\), so the bias growth can be bounded as
    \begin{align*}
        b_i^{(t)} \leq b_i^{(T_1)}(1 + \frac{\eta}{d})^{\Theta(\frac{d}{\eta})}\cdot \max\left\{\frac{\|w_i^{(t)}\|_2}{\|w_i^{(T_1)}\|_2} , 1 \right\}\leq O(b_i^{(T_1)})
    \end{align*}
    Now from our initialzaition properties in \myref{lem:property-init}{Lemma}, we have that \(\vbrack{w_{i'}^{(0)},\M_j}^2 \leq O(\sigma_0^2\log d)\) for all \(i \in [m]\). Thus via similar arguments, we also have 
    \begin{align*}
        |\vbrack{w_{i'}^{(t)},\M_j}| \leq \vbrack{w_{i'}^{(0)},\M_j}\left(1 + \frac{\eta C_z\log\log d}{d} \left(1\pm \frac{1}{\polylog(d)}\right) \right)^t
    \end{align*}
    holds for all \(i'\in[m]\). Now it is easy to see that for \(t \leq T_2 = \Theta(\frac{d\log d}{\eta \log\log d})\), we have 
    \begin{align*}
        \frac{|\vbrack{w_i^{(t)},\M_j}|}{|\vbrack{w_{i'}^{(t)},\M_j}|} \geq \Omega(1) \frac{|\vbrack{w_{i}^{(0)},\M_j}|\cdot\left(1 + \frac{\eta C_z\log\log d}{d}\left(1- \frac{\eta}{\polylog(d)}\right) \right)^t}{|\vbrack{w_{i'}^{(0)},\M_j}|\left(1 + \frac{\eta C_z\log\log d}{d} + \frac{\eta}{\polylog(d)}\right)^t} \geq \left(1 - O(\frac{\eta \log\log d}{d\polylog(d)})\right)^t \geq \Omega(1)
    \end{align*}
    Thus the last claim is proved.
\end{proof}

\section{Stage III: Convergence to Sparse Features}

At the final stage, we are going to prove that as long as the neurons are sparsely activated, they will indeed converge to sparse solutions, which ensures sparse representations. We present the statement of our convergence theorem below.

\begin{theorem}[Convergence]\label{thm:convergence}
    At iteration for \(t \in [\Omega(\frac{d^{1.01}}{\eta}), O(\frac{d^{1.99}}{\eta})]\), we have the following results:
    \begin{enumerate}
        \item[(a)] If \(i \in \Mcal^{\star}_j\), then \(|\vbrack{w_i^{(t)},\M_j}| \in \left[\frac{\tau}{\Xi_2},O(1) \right]\);
        \item[(b)] If \(i \notin \Mcal_j\), then \(|\vbrack{w_i^{(t)},\M_j}| \leq O\left(\frac{1}{d^{2}\lambda}\right)\);
        \item[(c)] For all dense feature \(\Mperp_j,\, j\in [d_1]\setminus [d]\), we have \(|\vbrack{w_i^{(t)},\Mperp_j}|\leq O(\frac{1}{\sqrt{d_1}d^{1.5}\lambda})\).
        \item[(d)] We have the loss convergence guarantees: let \(T_3 = \Omega(\frac{d^{1.01}}{\eta})\), for any \(T \leq O(\frac{d^{1.99}}{\eta})\), we have 
        \begin{align*}
            \frac{1}{T}\sum_{t = T_3}^{T_3 + T - 1}\E[\mathcal{L}(f_t,x_p^+,x_p^{++},\Nfr)]  \leq O(\frac{1}{\log d})
        \end{align*} 
    \end{enumerate}
\end{theorem}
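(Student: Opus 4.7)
The plan is to extend the Stage II induction into Stage III with a strengthened invariant that simultaneously controls four quantities per neuron: the winning sparse coordinates $\alpha_{i,j}^{(t)} := \vbrack{w_i^{(t)},\M_j}$ for $i \in \Mcal_j^{\star}$ inside the interval $[\tau/\Xi_2 - o(1),\, O(1)]$; the losing sparse coordinates for $i \notin \Mcal_j$ inside $[-O(1/(\lambda d^2)),\, O(1/(\lambda d^2))]$; the dense coordinates $\vbrack{w_i^{(t)},\Mperp_j}$ inside the analogous $O(1/(\lambda d^{1.5}\sqrt{d_1}))$ band; and the bias $b_i^{(t)}$ pinned near $\polylog(d)/\sqrt{d}$. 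By Theorem~5.2 each neuron is already a true singleton at $t=T_2$, so \myref{lem:activation-size-3}{Lemma} forces a high-probability sparse activation pattern $h_{i,t}(x) \approx \alpha_{i,j}^{(t)} z_j \cdot \1\{i \in \Mcal_j^\star,\, z_j \neq 0\}$ with error $\widetilde{O}(\|w_i^{(t)}\|_2/d^2)$, and this is the structural fact that drives the rest of the argument.

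Given singleton activation, I would recompute the positive and negative gradient projections via \myref{lem:positive-gd-sparse-2}{Lemma} and \myref{lem:positive-gd-noise-2}{Lemma}, now carefully tracking the logits $\ell'_{p,t}$ and $\ell'_{s,t}$, which are no longer close to uniform. For $j$ with $i \in \Mcal_j^\star$, the positive projection becomes $\vbrack{\nabla_{w_i}L_{\mathsf{pos}}(f_t),\M_j} \approx -(\alpha_{i,j}^{(t)} - b_i^{(t)})(1 - \bar{\ell}'_p)\,\Pr(z_j\neq 0)$, while the negative projection $\vbrack{\nabla_{w_i}L_{\mathsf{neg}}(f_t),\M_j} \approx \sum_{x_{n,s}\in\Nfr} \E[\ell'_{s,t}\, \alpha_{i,j}^{(t)}\, z_{p,j} z_{n,s,j} \1\{\text{both nonzero}\}]$ is no longer negligible precisely because $\ell'_{s,t}$ is inflated on pairs $(x_p,x_{n,s})$ that share a winning singleton. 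Setting the regularized gradient to zero yields a scalar fixed-point equation for $\alpha_{i,j}^{(t)}$ whose unique root lies in the interval claimed in~(a): the $O(1)$ ceiling comes from softmax saturation once the positive similarity reaches $\Omega(\tau)$, and the $\tau/\Xi_2$ floor comes from requiring the positive contribution to dominate the regularization $\lambda\,\alpha_{i,j}^{(t)}$ given that at most $\Xi_2$ neurons share feature $j$.

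Parts~(b) and~(c) are more mechanical. For non-winning sparse and all dense coordinates, \myref{lem:positive-gd-sparse-2}{Lemma}(a) and \myref{lem:positive-gd-noise-2}{Lemma} give positive-side bounds of $\widetilde{O}(\|w_i^{(t)}\|_2/d^2)$ and $\widetilde{O}(\|w_i^{(t)}\|_2/(d\sqrt{d_1}))$ respectively, while the negative term contributes only an additional $\widetilde{O}(\|w_i^{(t)}\|_2/d^2)$ because $\ell'_{s,t}$ remains within $O(1)$ of uniform for those samples that do not share a winning feature with $x_p$. The resulting recursion $\alpha_{i,j}^{(t+1)} = (1-\eta\lambda)\alpha_{i,j}^{(t)} \pm \widetilde{O}(\eta\|w_i^{(t)}\|_2/d^2)$ contracts to the geometric fixed point $O(1/(\lambda d^2))$ after $O(\log d/(\eta\lambda))$ iterations, comfortably inside $[T_3,T_4]$; the dense case is identical modulo the $\sqrt{d_1}$ factor. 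Throughout, the invariant $\|w_i^{(t+1)}\|_2/\|w_i^{(t)}\|_2 = 1 + O(\eta/d)$ in the balanced regime keeps the bias inside the window required by \myref{lem:activation-size-3}{Lemma}, closing the induction.

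The step I expect to be the main obstacle is~(d). I would proceed by a descent argument on $\Obj(f_t)$: establish $\Obj(f_t) - \Obj(f_{t+1}) \geq \tfrac{\eta}{2}\|\nabla_w \Obj(f_t)\|_F^2 - \eta^2\cdot\poly(d)$ via a local smoothness estimate, and then bound $\mathcal{L}(f_t,x_p^+,x_p^{++},\Nfr)$ pointwise in terms of $\|\nabla_w \Obj(f_t)\|_F$ up to an $O(1/\log d)$ floor coming from the asymptotic shape of the representations. Telescoping yields $\tfrac{1}{T}\sum_{t=T_3}^{T_3+T-1}\E[\mathcal{L}(f_t,\cdot)] \leq \tfrac{\Obj(f_{T_3})}{\eta T} + O(1/\log d)$, and with $T = \Omega(d^{0.98}/\eta)$ the first term is already $o(1/\log d)$. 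The quantitative floor is identified from the fixed-point structure: when $z_{p,j}\neq 0$ for the $\Theta(\log\log d)$ active coordinates of $z_p$, the positive similarity satisfies $\Sim_{f_t}(x_p^+,x_p^{++}) \approx \sum_{j:\,z_{p,j}\neq 0} |\Mcal_j^\star|\,(\alpha^*_j)^2 = \Theta(\tau\log d)$, whereas a generic negative sees only $O(1)$ co-occurring features by Lemma~3.1(e)--(f) and has $\Sim_{f_t}(x_p^+,x_{n,s}) = O(1)$, giving a softmax ratio of $O(1/\log d)$ after the $\tau$ rescaling in \eqref{eqdef:contrastive-loss}. The delicate technical step is excluding the rare events where an atypical $x_{n,s}$ shares many singletons with $x_p$; I would bound their contribution with a Chernoff-style estimate on the latent $z$, together with the uniform bound $\ell'_{s,t} \leq 1$ and the polynomial negative batch size $|\Nfr|=\poly(d)$.
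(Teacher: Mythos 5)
Your treatment of parts (a)--(c) follows the same general route as the paper: maintain a Stage-III induction hypothesis, exploit the singleton activation pattern from \myref{lem:activation-size-3}{Lemma}, and balance the positive/negative gradient projections against the $\eta\lambda$ decay to get the fixed-point interval for winning coordinates and the $O(1/(\lambda d^2))$, $O(1/(\lambda d^{1.5}\sqrt{d_1}))$ contractions for losing and dense coordinates. One substantive omission there: you account for the $\Psi$-type (own-feature) and error-type ($\Ecal_1,\Ecal_2$) contributions, but not for the cross-feature term $\Phi_{i,j}^{(t)}$, i.e.\ the correlation of $\vbrack{w_i^{(t)},x^{\setminus j}}$ with $z_{p,j}$ transmitted through the shared activation indicator and the logits. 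The paper needs a dedicated argument (\myref{lem:reduction-Phi-to-Psi}{Lemma}) to reduce $\Phi^{(t)}$ to the $\Psi^{(t)}$ bounds of the \emph{other} coordinates, and this reduction itself requires an auxiliary induction showing $|\Psi^{(t)}_{i',j'}|/|\vbrack{w_{i'}^{(t)},\M_{j'}}|$ decays like $\Xi_2/(\sqrt{d}\,t\eta)$; without it your recursion for (b) does not close at the claimed $\widetilde{O}(\|w_i\|_2/d^2)$ rate for $t$ up to $d^{1.99}/\eta$.

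The genuine gap is in part (d). Your plan is a descent-lemma argument: smoothness gives $\Obj(f_t)-\Obj(f_{t+1})\geq\frac{\eta}{2}\|\nabla_w\Obj(f_t)\|_F^2-\eta^2\poly(d)$, and telescoping then controls the \emph{average squared gradient norm}, not the average loss. To get the loss you assert that $\mathcal{L}(f_t,\cdot)$ can be "bounded pointwise in terms of $\|\nabla_w\Obj(f_t)\|_F$ up to an $O(1/\log d)$ floor," but for a non-convex objective small gradient does not imply small loss value, and no PL-type inequality is available here. This missing implication is precisely what the paper manufactures: it freezes the activation pattern of $f_t$ to define a pseudo-objective $\widetilde{\Obj}_t(\theta)$ that is \emph{linear, hence convex, in} $\theta$, satisfies $\widetilde{\Obj}_t(w^{(t)})=\Obj(f_t)$ and $\nabla_\theta\widetilde{\Obj}_t(w^{(t)})=\nabla_w\Obj(f_t)$, and admits an explicit comparator $\theta^\star$ (the "optimal learner" of \myref{def:optimal}{Definition}) with $\widetilde{L}(f_{t,\theta^\star},f_t)\leq O(1/\log d)$ by \myref{lem:optimal}{Lemma}. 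Convexity converts $\eta\vbrack{\nabla_w\Obj(f_t),w^{(t)}-\theta^\star}$ into $\Obj(f_t)-O(1/\log d)$, and the standard online-learning telescoping on $\|w^{(t)}-\theta^\star\|_F^2$ gives the average-loss bound. Your closing "quantitative floor" computation (positive similarity $\Theta(\tau\log d)$ versus $O(1)$ for negatives) is essentially the content of \myref{lem:optimal}{Lemma}, but you apply it to $f_t$ itself rather than to a comparator; making that direct pointwise argument rigorous would require showing $\F_j^{(t)}\geq\Omega(\tau\log d)$ uniformly over all $j$ and all $t\in[T_3,T_4]$, which is stronger than what the Stage-III lemmas give you (they only show growth below one threshold and stalling above another). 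Either adopt the regret-against-a-comparator mechanism or prove the uniform lower bound on $\F_j^{(t)}$; as written, neither route is completed.
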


To prove this theorem, we need the following induction hypothesis, which we shall show to hold throughout the final stage.

\begin{induct}[Induction hypothesis at final stage]\label{induct-3}
    For all \(t \geq T_2\):
    \begin{enumerate}
        \item If \(i \in \Mcal_j^{\star}\), then \(|\vbrack{w_i^{(t)},\M_j}| \geq \Omega(1) \|w_i^{(t)}\|_2\);
        \item  For \(i \in [m]\), we have \(\|w_i^{(t)}\|_2 \leq O(1)\)
        \item For each \(j \in [d]\), \(\F_{j}^{(t)} := \sum_{i\in\Mcal_{j}}\vbrack{w_{i}^{(t)},\M_{j}}^2\leq O(\tau \log^3 d)\);
        \item Let \(j \in [d]\) and \(i \in \Mcal_{j}^{\star}\), there exist \(C = \Theta(1)\) such that \(|\vbrack{w_i^{(t)},\M_{j}}| \geq C\max_{i'\in\Mcal_j}|\vbrack{w_{i'}^{(t)},\M_{j}}|\);
        \item For \({i} \notin \Mcal_j\), it holds \(|\vbrack{w_{i}^{(t)},\M_j}|\leq O(\frac{1}{\sqrt{d}\Xi_2^5})\|w_i^{(t)}\|_2\);
        \item For any \(i \in [m]\) and any \(j\in[d_1]\setminus [d]\), it holds \(|\vbrack{w_{i}^{(t)},\Mperp_j}|\leq O(\frac{1}{\sqrt{d_1}\Xi_2^5})\|w_i^{(t)}\|_2\);
        \item The bias \(b_i^{(t)} \geq \frac{\polylog(d)}{\sqrt{d}}\|w_i^{(t)}\|_2\).
    \end{enumerate}
\end{induct}
When all the conditions in \myref{induct-3}{Induction Hypothesis} hold for some iteration \(t \geq T_2\), we have the following fact, which is a simple corollary of \myref{lem:activation-size-3}{Lemma}.

\begin{fact}\label{fact:activation-3}
    For any \(i\in [m]\), we denote \(\N_i = \{j\in[d]:i\in\Mcal_j\}\). Suppose \myref{induct-3}{Induction Hypothesis} hold at iteration \(t\geq T_2\), then with high probability over \(x \in \D_x\) and \(\DD\in\D_{\DD}\):
    \begin{align*}
        \max_{x \in \{x_p,x_p^+,x_p^{++}\}} \1_{h_{i,t}(x)\neq 0} \leq \sum_{j\in\N_i}\1_{z_{p,j}\neq 0} 
    \end{align*}
    which implies that \(\max_{x \in \{x_p,x_p^+,x_p^{++}\}\cup\Nfr}\Pr(h_{i,t}(x)\neq 0) \leq O(\frac{\log\log d}{d})\).
\end{fact}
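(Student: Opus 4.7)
The plan is to argue contrapositively: whenever $\sum_{j\in\N_i}\1_{z_{p,j}\neq 0}=0$, the pre-activation $\vbrack{w_i^{(t)},x}$ on any of $x\in\{x_p,x_p^+,x_p^{++}\}$ stays strictly below the threshold $b_i^{(t)}$ in absolute value with high probability, so the symmetrized ReLU fires only via coordinates in $\N_i$. Decompose
\begin{displaymath}
\vbrack{w_i^{(t)},x_p^+}=\sum_{j\in\N_i}\vbrack{w_i^{(t)},2\DD\M_j}z_{p,j}+\sum_{j\notin\N_i}\vbrack{w_i^{(t)},2\DD\M_j}z_{p,j}+\vbrack{w_i^{(t)},2\DD\xi_p},
\end{displaymath}
and identically for $x_p^{++}$ with $\Id-\DD$, and for $x_p$ with $\DD\equiv\tfrac{1}{2}\Id$ (i.e.\ no augmentation). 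Under the event $\{z_{p,j}=0\ \forall j\in\N_i\}$, the first sum vanishes.

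Next I would verify that the hypotheses of \myref{lem:activation-size-3}{Lemma} hold under \myref{induct-3}{Induction Hypothesis}. The $O(1)$-cardinality of $\N_i$ follows from \myref{lem:property-init}{Lemma}(f). The bound $|\vbrack{w_i^{(t)},\M_j}|\leq O(\|w_i^{(t)}\|_2/(\sqrt d\,\Xi_2^5))$ for $j\notin\N_i$ combined with $b_i^{(t)}\geq (\polylog d/\sqrt d)\|w_i^{(t)}\|_2$ gives $|\vbrack{w_i^{(t)},\M_j}|^2\leq O((b_i^{(t)})^2/\polylog d)$ as required. The same bias relation yields $\|w_i^{(t)}\|_2^2\leq O(d(b_i^{(t)})^2/\polylog d)$, the other hypothesis of the lemma. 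Applying \myref{lem:activation-size-3}{Lemma} with $\lambda=0.99$ then gives
\begin{displaymath}
\Pr\!\left[\,\Big|\sum_{j\notin\N_i}\vbrack{w_i^{(t)},2\DD\M_j}z_{p,j}+\vbrack{w_i^{(t)},2\DD\xi_p}\Big|\geq 0.99\,b_i^{(t)}\right]\leq e^{-\Omega(\log^2 d)},
\end{displaymath}
and symmetrically for $x_p^{++}$. For the un-augmented input $x_p$, the analogous tail bound follows from \myref{lem:activation-size}{Lemma}(3)--(4) together with a Bernstein/Gaussian concentration on $\sum_{j\notin\N_i}\vbrack{w_i^{(t)},\M_j}z_j+\vbrack{w_i^{(t)},\xi_p}$, whose variance is dominated by the same suppressed factor $\|w_i^{(t)}\|_2^2/(\Xi_2^{10})+\sigma_\xi^2\|w_i^{(t)}\|_2^2\ll (b_i^{(t)})^2/\polylog d$. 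A union bound over the three choices of $x$ establishes the pointwise indicator inequality with probability $1-e^{-\Omega(\log^{2}d)}$.

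For the second claim, I would combine this with $|\N_i|=O(1)$ and the per-coordinate probability $\Pr(z_{p,j}\neq 0)=\Theta(\log\log d/d)$ to obtain
\begin{displaymath}
\Pr(h_{i,t}(x)\neq 0)\leq\Pr\!\Big(\textstyle\sum_{j\in\N_i}\1_{z_{p,j}\neq 0}\geq 1\Big)+e^{-\Omega(\log^2 d)}\leq O\!\Big(\tfrac{\log\log d}{d}\Big),
\end{displaymath}
which for negative samples $x_{n,s}\in\Nfr$ reduces to the $x_p$ case since the negatives are drawn i.i.d.\ from $\D_x$ without augmentation.

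The main obstacle I anticipate is bookkeeping the correlation between $\DD$ and the two activation indicators when handling $x_p^+$ and $x_p^{++}$ simultaneously (they share the same $\DD$), but since the statement only requires a pointwise bound rather than joint independence, a direct union bound over the two events suffices and the pre-existing \myref{lem:activation-size-3}{Lemma} already absorbs all of the nontrivial concentration work.
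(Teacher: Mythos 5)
Your proposal is correct and follows exactly the route the paper intends: the paper gives no separate argument for this Fact beyond declaring it ``a simple corollary of \myref{lem:activation-size-3}{Lemma}'', and your write-up supplies precisely the missing bookkeeping — checking that \myref{induct-3}{Induction Hypothesis} (5) and (7) yield the hypotheses of that lemma, handling the unaugmented $x_p$ and the negatives by the same concentration, and converting $|\N_i|=O(1)$ plus $\Pr(z_j\neq 0)=\Theta(\log\log d/d)$ into the final probability bound. No gaps.
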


Now for the simplicity of calculations, we define the following notations which are used throughout this section:

\begin{definition}[expansion of gradient]\label{def:expand-grad}
    For each \(i\in[m]\), \(j \in [d]\), we expand \(\vbrack{\nabla_{w_i}L(f_t),\M_j}\) as:
    \begin{align*}
        \vbrack{\nabla_{w_i}L(f_t),\M_j} & = \E\left[\Bigg((1 - \ell'_{p,t})h_{i,t}(x_p^{++}) + \sum_{x_{n,s} \in \Nfr}\ell'_{s,t} h_{i,t}(x_{n,s})\Bigg)\1_{|\vbrack{w_i^{(t)},x_{p}^+}|\geq b_i^{(t)}}\vbrack{x_p^+,\M_j}\right]  \\
        &= \Psi_{i,j}^{(t)} + \Phi_{i,j}^{(t)}+ \Ecal_{1,i,j}^{(t)} +\Ecal_{2,i,j}^{(t)}
    \end{align*}
    where the \(\Psi^{(t)}, \Phi^{(t)}, \Ecal_1^{(t)},\Ecal_2^{(t)} \) are defined as follows: for each \(x = \sum_j\M_j z_j + \xi \sim \D_x\) (or augmented instance \(x^+, x^{++}\)), we write
    \begin{displaymath}
        \psi_{i,j}^{(t)} (x) = \left(\vbrack{w_i^{(t)},\M_j}z_j - b_i^{(t)}\right)\1_{\vbrack{w_i^{(t)},x} > b_i^{(t)}} - \left(\vbrack{w_i^{(t)},\M_j}z_j + b_i^{(t)}\right)\1_{\vbrack{w_i^{(t)},x} < -b_i^{(t)}}
    \end{displaymath}
    and 
    \begin{displaymath}
        \phi_{i,j}^{(t)} (x) = \vbrack{w_i^{(t)},x^{\setminus j}} \1_{\vbrack{w_i^{(t)},x} > b_i^{(t)}} - \vbrack{w_i^{(t)},x^{\setminus j}}\1_{\vbrack{w_i^{(t)},x} < -b_i^{(t)}}
    \end{displaymath}
    Now we define 
    \begin{align}\label{eqdef:expand-grad-1}
        \begin{split}
            \Psi^{(t)}_{i,j} &:= \E\left[\Bigg(  (1 - \ell'_{p,t})\cdot\psi_{i,j}^{(t)}(x_{p}^{++}) + \sum_{x_{n,s} \in \Nfr}\ell'_{s,t}\cdot \psi_{i,j}^{(t)}(x_{n,s}) \Bigg)\1_{|\vbrack{w_i^{(t)},x_{p}^+}|\geq b_i^{(t)}}z_{p,j}\right]\\
            \Phi_{i,j}^{(t)} &:= \E\left[\Bigg(  (1 - \ell'_{p,t})\cdot\phi_{i,j}^{(t)}(x_{p}^{++}) + \sum_{x_{n,s} \in \Nfr}\ell'_{s,t}\cdot \phi_{i,j}^{(t)}(x_{n,s}) \Bigg)\1_{|\vbrack{w_i^{(t)},x_{p}^+}|\geq b_i^{(t)}}z_{p,j}\right]\\
            \Ecal_{1,i,j}^{(t)} &:= \E\left[\Bigg(  (1 - \ell'_{p,t})\cdot h_{i,t}(x_{p}^{++}) + \sum_{x_{n,s} \in \Nfr} \ell'_{s,t}\cdot h_{i,t}(x_{n,s}) \Bigg)\1_{|\vbrack{w_i^{(t)},x_{p}^+}|\geq b_i^{(t)}}\vbrack{\M_j,(2\DD-\Id)\M z_{p}}\right]\\
            \Ecal_{2,i,j}^{(t)} &:= \E\left[\Bigg(  (1 - \ell'_{p,t})\cdot h_{i,t}(x_{p}^{++}) + \sum_{x_{n,s} \in \Nfr}\ell'_{s,t}\cdot h_{i,t}(x_{n,s}) \Bigg)\1_{|\vbrack{w_i^{(t)},x_{p}^+}|\geq b_i^{(t)}}\vbrack{\M_j,2\DD\xi_p}\right]
        \end{split}
    \end{align}
    Moreover, for \( j\in [d_1]\setminus [d]\), we can similarly define the following notations:
    \begin{align}\label{eqdef:expand-grad-2}
        \begin{split}
            &\Psi_{i,j}^{(t)},\ \Phi_{i,j}^{(t)}  \equiv 0, \\
            &\Ecal_{1,i,j}^{(t)}:= \E\left[\Bigg(  (1 - \ell'_{p,t})\cdot h_{i,t}(x_{p}^{++}) + \sum_{x_{n,s} \in \Nfr}\ell'_{s,t}\cdot h_{i,t}(x_{n,s}) \Bigg)\1_{|\vbrack{w_i^{(t)},x_{p}^+}|\geq b_i^{(t)}}\vbrack{\M_j,(2\DD-\Id)\M z_{p}}\right]\\
            &\Ecal_{1,i,j}^{(t)} := \E\left[\Bigg(  (1 - \ell'_{p,t})\cdot h_{i,t}(x_{p}^{++}) + \sum_{x_{n,s} \in \Nfr}\ell'_{s,t}\cdot h_{i,t}(x_{n,s}) \Bigg)\1_{|\vbrack{w_i^{(t)},x_{p}^+}|\geq b_i^{(t)}}\vbrack{\Mperp_j,2\DD\xi_p}\right]
        \end{split}
    \end{align}
\end{definition}

Equipped with the above definition, we are ready to characterize the training process at the final stage.

\subsection{Gradient Upper and Lower Bounds for \(\Psi^{(t)}\)}

\begin{lemma}[lower bound for \(\Psi_1^{(t)}\)]\label{lem:signal-lowerbound}
    Suppose \myref{induct-3}{Induction Hypothesis} holds at iteration \(t\). For \(j \in [d]\) and \(i \in \Mcal_j^{\star}\), there exist \(G_1 = \Theta(1)\) such that if \(\F_j^{(t)}:=\sum_{i'\in\Mcal_j}\vbrack{w_{i'}^{(t)},\M_j}^2\leq G_1\tau \log d\), then we have 
    \begin{align*}
        \Psi_{i,j}^{(t)}\cdot\sign(\vbrack{w_i^{(t)},\M_j}) \geq \frac{\E[|z_j|]}{\polylog(d)}\left(1 - O(\frac{1}{\Xi_2^3})\right) |\vbrack{w_i^{(t)},\M_j}|
    \end{align*}
\end{lemma}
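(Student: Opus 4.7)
The strategy is to split $\Psi_{i,j}^{(t)}$ into its positive-sample contribution $\Psi^+$ (the term involving $\psi_{i,j}^{(t)}(x_p^{++})$ with weight $(1-\ell'_{p,t})$) and its negative-sample contribution $\Psi^-$ (the sum over $x_{n,s}\in\Nfr$ weighted by $\ell'_{s,t}$), and show that $\Psi^+\cdot\sign(\alpha)$ delivers the claimed magnitude while $\Psi^-$ contributes only lower-order corrections. WLOG assume $\alpha:=\alpha_{i,j}^{(t)}>0$. By \myref{induct-3}{Induction Hypothesis}(1) and (7), $\alpha\geq \Omega(1)\|w_i^{(t)}\|_2$ and $b_i^{(t)}\leq \frac{\polylog(d)}{\sqrt d}\|w_i^{(t)}\|_2\ll\alpha$, so $\alpha-b_i^{(t)}\geq (1-o(1))\alpha$. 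Using \myref{lem:activation-size-2}{Lemma}--\myref{lem:activation-size-3}{Lemma} together with Fact~\ref{fact:activation-3}, we can define a high-probability ``clean activation'' event $\mathcal A$ (probability $\geq 1-e^{-\Omega(\log^{1/4} d)}$) on which: when $z_{p,j}\neq 0$, both $\vbrack{w_i^{(t)},x_p^+}$ and $\vbrack{w_i^{(t)},x_p^{++}}$ have sign equal to $\sign(z_{p,j})$ with magnitudes exceeding $b_i^{(t)}$, while when $z_{p,j}=0$, the indicator $\1_{|\vbrack{w_i^{(t)},x_p^+}|\geq b_i^{(t)}}$ vanishes (thanks to $i\in\Mcal_j^\star$ being an essentially singleton neuron by \myref{induct-3}{Induction Hypothesis}(5)). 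Direct evaluation on $\mathcal A\cap\{z_{p,j}\neq 0\}$ gives $\psi_{i,j}^{(t)}(x_p^{++})\cdot z_{p,j}=\alpha-b_i^{(t)}$, reducing $\Psi^+$ to $(\alpha-b_i^{(t)})\cdot\E[z_j^2]\cdot\E[(1-\ell'_{p,t})\mid \mathcal A,\,z_{p,j}\neq 0]\cdot(1-o(1))$.

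The decisive step is bounding $1-\ell'_{p,t}$ from below, and this is precisely where the hypothesis $\F_j^{(t)}\leq G_1\tau\log d$ is used. By Fact~\ref{fact:activation-3} and hypotheses (5)--(6), $h_{i'}(x)\approx\sum_{j'\in\mathcal N_{i'}}\alpha_{i',j'}z_{j'}$ up to $\widetilde O(\|w_{i'}\|_2/\sqrt d)$-noise; applying Cauchy--Schwarz on the $O(1)$-sized sets $\mathcal N_{i'}$ yields
\[
\Sim_{f_t}(x_p^+,x_p^{++})\ \leq\ O(1)\cdot\sum_{j':z_{p,j'}\neq 0}\F_{j'}^{(t)}\ +\ o(\tau).
\]
Combining with the w.h.p.\ sparsity $\|z_p\|_0=O(\log\log d)$ and $\F_{j'}^{(t)}\leq G_1\tau\log d$, we obtain $\Sim_{f_t}(x_p^+,x_p^{++})/\tau \leq O(G_1\log d\cdot\log\log d)$. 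Meanwhile, since $x_p$ and $x_{n,s}$ independently draw sparse supports, w.h.p.\ $\Sim_{f_t}(x_p^+,x_{n,s})\leq O(\sqrt{\tau\log d})$, so $\sum_s e^{\Sim_{f_t}(x_p^+,x_{n,s})/\tau}\geq |\Nfr|(1-o(1))$. Taking $G_1$ as a sufficiently small constant within its $\Theta(1)$ range so that $|\Nfr|\gg e^{\Sim_{f_t}(x_p^+,x_p^{++})/\tau}\cdot\polylog(d)$ on $\mathcal A$, we conclude $1-\ell'_{p,t}\geq 1/\polylog(d)$ on the dominant event. Plugging back: $\Psi^+\geq (\alpha-b_i^{(t)})\cdot\E[z_j^2]/\polylog(d)$.

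For the negative contribution $\Psi^-$, observe that $x_{n,s}$ is independent of both $z_{p,j}$ and $\DD$, and $z_{p,j}$ is symmetric around zero. Writing $\ell'_{s,t}=\frac{1}{|\Bfr|}+\big(\ell'_{s,t}-\frac{1}{|\Bfr|}\big)$ and expanding, the leading $\frac{1}{|\Bfr|}$-term factors into $\frac{1}{|\Bfr|}\E[\psi_{i,j}^{(t)}(x_{n,s})]\cdot\E[z_{p,j}\cdot\1_{|\vbrack{w_i,x_p^+}|\geq b_i^{(t)}}]$; an odd/even symmetry argument (using the joint symmetry of the law of $(z_{p,j},\xi_p,\DD)$ under flipping $z_{p,j}\to -z_{p,j}$) shows the second factor vanishes up to a correction of order $\widetilde O(\|w_i^{(t)}\|_2/\sqrt d)\cdot\E[|z_j|]$. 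The fluctuation $|\ell'_{s,t}-1/|\Bfr||$ is controlled by $\widetilde O(\Sim_{f_t}/(\tau|\Bfr|))$ (as in \myref{lem:grad-negative-1}{Lemma}); summing over $|\Nfr|=\poly(d)$ samples with $|\psi_{i,j}^{(t)}(x_{n,s})|\leq \widetilde O(\|w_i^{(t)}\|_2)$ and $\Pr(h_{i}(x_{n,s})\neq 0)\leq O(\log\log d/d)$ shows $|\Psi^-|\leq O(|\Psi^+|/\Xi_2^3)$. Combining all the pieces and using $\alpha-b_i^{(t)}\geq (1-o(1))\alpha$ together with $\E[z_j^2]=\E[|z_j|]$ (since $z_j\in\{-1,0,1\}$) yields $\Psi_{i,j}^{(t)}\sign(\alpha)\geq \frac{\E[|z_j|]}{\polylog(d)}(1-O(1/\Xi_2^3))|\alpha|$.

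The main obstacle lies in executing Step 2 quantitatively: tying the dual-view similarity $\Sim_{f_t}(x_p^+,x_p^{++})$ to the collective feature norm $\F_j^{(t)}$ requires conditioning on the sparse pattern $z_p$ and invoking the singleton structure of neurons carefully through Fact~\ref{fact:activation-3}, and the constant $G_1$ must be chosen small enough that the exponentiated similarity does not blow past $|\Nfr|$---this is precisely what distinguishes the ``growing'' regime (where $\F_j^{(t)}$ is still below threshold and the gradient drives feature expansion) from the ``converged'' regime analyzed in the subsequent lemmas. A secondary subtlety in Step 3 is that, despite $|\Nfr|=\poly(d)$ summands, the symmetry/independence of $z_{p,j}$ with respect to the negatives makes the leading $|\Bfr|^{-1}$-approximation vanish so that only a subdominant correction survives.
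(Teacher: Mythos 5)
Your overall architecture (splitting $\Psi_{i,j}^{(t)}$ into positive- and negative-sample contributions, arguing the negative part is an $O(1/\Xi_2^3)$ correction, and using $\F_j^{(t)}\leq G_1\tau\log d$ to control the logits) matches the paper's, but the decisive step of your positive-term analysis fails quantitatively. You try to establish $1-\ell'_{p,t}\geq 1/\polylog(d)$ on the whole event $\{z_{p,j}\neq 0\}$ by bounding $\Sim_{f_t}(x_p^+,x_p^{++})/\tau\leq O(G_1\log d\cdot\log\log d)$ and then requiring $|\Nfr|\gg e^{\Sim_{f_t}(x_p^+,x_p^{++})/\tau}\cdot\polylog(d)$. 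But $e^{O(G_1\log d\,\log\log d)} = d^{O(G_1\log\log d)}$ is superpolynomial in $d$ for every constant $G_1>0$, so no admissible choice of $G_1=\Theta(1)$ makes it smaller than $|\Nfr|=\poly(d)$. (Worse, the hypothesis only caps $\F_j^{(t)}$ for the single index $j$; the other active coordinates are controlled only by the final-stage induction hypothesis, $\F_{j'}^{(t)}\leq O(\tau\log^3 d)$, which inflates the exponent further.) On the typical event where $\Theta(\log\log d)$ latents are active, $1-\ell'_{p,t}$ is genuinely of order $|\Nfr|\,e^{-\Sim_{f_t}(x_p^+,x_p^{++})/\tau}$, which decays faster than any $1/\polylog(d)$, so your claimed lower bound is false there. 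The paper's fix is to condition on the rare event $\B_j=\{z_{p,j'}=0\ \forall j'\neq j\}$, of probability $(1-\Theta(\tfrac{\log\log d}{d}))^{d-1}=1/\polylog(d)$: on $\B_j$ the positive-pair similarity is only $\F_j^{(t)}/\tau+o(1)\leq G_1\log d$, so $e^{\Sim/\tau}\leq d^{G_1}\ll|\Nfr|$ and the positive gradient survives, while on $\B_j^c$ one only argues the contribution has the correct sign. That conditioning is precisely the source of the $\frac{\E[|z_j|]}{\polylog(d)}$ prefactor in the statement, which your argument does not actually produce.

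A secondary issue is your treatment of the negative term: you invoke the uniform-logit approximation $\ell'_{s,t}\approx 1/|\Bfr|$ with deviation $\widetilde O(\Sim/(\tau|\Bfr|))$, but that estimate is proved in the paper only under $\sum_{i}\|w_i^{(t)}\|_2^2\leq o(\tau/d)$ and is unavailable at the final stage, where a negative sample sharing feature $j$ with $x_p$ has its logit inflated by a factor up to $e^{\F_j^{(t)}/\tau}=d^{\Theta(G_1)}$ relative to the uniform value. The paper instead compares $\ell'_{s,t}(x_p^+,\Bfr)$ to the logit computed with $\M_j z_{p,j}$ removed from the positive sample (which is independent of $z_{p,j}$, so the corresponding main term vanishes by the symmetry of $z_{p,j}$), and bounds the difference via a Newton--Leibniz integral representation together with the ratio bound $\ell'_{s,t}(x_p^+,\Bfr)/\ell'_{s,t}(x_p^+,\Bfr'_s)\leq o(d/\Xi_2^5)$; some argument of this type is needed to reach the $O(1/\Xi_2^3)$ bound you assert.
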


\begin{proof}
    We begin with the proof of (a).  We first decompose \(\Psi_{i,j}^{(t)} = \Psi_{i,j,1}^{(t)} + \Psi_{i,j,2}^{(t)}\), where 
    \begin{align*}
        \Psi_{i,j,1}^{(t)} & = \E\left[  (1 - \ell'_{p,t})\cdot\psi_{i,j}^{(t)}(x_{p}^{++}) \1_{|\vbrack{w_i^{(t)},x_{p}^+}|\geq b_i^{(t)}}z_{p,j}\right]\\
        \Psi_{i,j,2}^{(t)} & = \sum_{x_{n,s} \in \Nfr}\E\left[ \ell'_{s,t}  \cdot \psi_{i,j}^{(t)} (x_{n,s}) \1_{|\vbrack{w_i^{(t)},x_{p}^+}|\geq b_i^{(t)}}z_{p,j}\right]
    \end{align*}
    We first deal with \(\Psi_{i,j,2}^{(t)}\). Using the notation \(x_{p}^{\setminus j,+} := 2\DD(\sum_{j'\neq j}\M_{j}z_{p,j'} + \xi_p)\), we can rewrite as
    \begin{align}\label{eqdef:lem-signal-lb-1}
        \Psi_{i,j,2}^{(t)} & = \sum_{x_{n,s} \in \Nfr}\E\left[\ell'_{s,t}(x_p^+, \Bfr) \cdot \psi_{i,j}^{(t)} (x_{n,s})\1_{|\vbrack{w_i^{(t)},x_{p}^+ }|\geq b_i^{(t)}}z_{p,j} \right] \nonumber\\
        & = \sum_{x_{n,s} \in \Nfr}\E\left[\left(\ell'_{s,t}(x_p^+,\Bfr) - \ell'_{s,t}(x_p^{\setminus j,+}, \Bfr) \right)\cdot \psi_{i,j}^{(t)}(x_{n,s})\1_{|\vbrack{w_i^{(t)},x_{p}^+ }|\geq b_i^{(t)}}z_{p,j} \right] \nonumber\\
        & \qquad + \sum_{x_{n,s} \in \Nfr}\E\left[\ell'_{s,t}(x_p^{\setminus j,+}, \Bfr)\cdot \psi_{i,j}^{(t)}(x_{n,s})\1_{|\vbrack{w_i^{(t)},x_{p}^+ }|\geq b_i^{(t)}}z_{p,j} \right]\nonumber\\
        & = R_1+ R_2
    \end{align}
    We now deal with the term \(R_1\). Denoting \(\widehat{x}_{p}^+(v) = \widetilde{x}_{p}^+ + 2v\DD^{(s)}\M_jz_{p,j}\) for \(v \in [0,1]\), by Newton-Leibniz formula and the basic fact that \(\frac{\mathrm{d}}{\mathrm{d}r} \frac{e^r}{e^r+\sum_{s\neq r}e^s} =  \frac{e^r}{e^r+\sum_{s\neq r}e^s}(1 - \frac{e^r}{e^r+\sum_{s\neq r}e^s}) \), we can rewrite \(\ell'_{s,t}(\widehat{x}_{p}^+(v), \Bfr)\) and \(\ell'_{p,t}(\widehat{x}_{p}^+(v), \Bfr)\) as 
    \begin{align*}
        \widehat{\ell'_{s,t}}(\nu) &:= \frac{e^{\vbrack{f_t(x_{p}^{\setminus j, +}) + \nu (f_t(x_{p}^{+}) - f_t(x_{p}^{\setminus j, +})),f_t(x_{n,s})}/\tau}}{\sum_{x \in \Bfr}e^{\vbrack{f_t(x_{p}^{\setminus j, +}) + \nu (f_t(x_p^{+}) - f_t(x_{p}^{\setminus j, +})),f_t(x)}/\tau}}  \equiv  \ell'_{s,t}(\widehat{x}_{p}^+(v), \Bfr)\\
        \widehat{\ell'_{p,t}}(\nu) &:= \frac{e^{\vbrack{f_t(x_{p}^{\setminus j, +}) + \nu (f_t(x_{p}^{+}) - f_t(x_{p}^{\setminus j, +})),f_t(x_{p}^+)}/\tau}}{\sum_{x \in \Bfr}e^{\vbrack{f_t(x_{p}^{\setminus j, +}) + \nu (f_t(x_p^{+}) - f_t(x_{p}^{\setminus j, +})),f_t(x)}/\tau}}  \equiv  \ell'_{p,t}(\widehat{x}_{p}^+(v), \Bfr)
    \end{align*}
    and we can then proceed to calculate as follows:
    \begin{align}\label{eqdef:lem-signal-lb-2}
        R_1 = &\sum_{x_{n,s} \in \Nfr}\E\left[\left(\ell'_{s,t}( x_p^+,\Bfr) - \ell'_{s,t}(x_p^{\setminus j,+}, \Bfr) \right)\cdot \psi_{i,j}^{(t)}(x_{n,s})\1_{|\vbrack{w_i^{(t)},x_{p}^+ }|\geq b_i^{(t)}}z_{p,j} \right] \nonumber\\
        = \ & \sum_{x_{n,s} \in \Nfr}\E\Bigg[ \frac{1}{\tau}\Bigg(\int_{0}^{1}\widehat{\ell'_{s,t}}(\nu)(1 - \widehat{\ell'_{s,t}}(\nu))\vbrack{f_t(x_p^{+}) - f_t(x_p^{\setminus j,+}),f_t(x_{n,s})}\mathrm{d}\nu \tag{By Newton-Lebniz} \nonumber\\
        & \qquad- \sum_{x\in \Nfr\setminus \{x_{n,s}\} }\int_{0}^{1} \widehat{\ell'_{s,t}}(\nu ) \widehat{\ell'_{u,t}}(\nu) \vbrack{f_t(x_p^{+}) - f_t(x_p^{\setminus j,+}),f_t(x_{n,u})}\mathrm{d}\nu \nonumber\\
        & \qquad - \int_{0}^{1} \widehat{\ell'_{s,t}}(\nu ) \widehat{\ell'_{p,t}}(\nu) \vbrack{f_t(x_p^{+}) - f_t(x_p^{\setminus j,+}),f_t(x_{p}^{++})}\mathrm{d}\nu\Bigg)  \psi_{i,j}^{(t)} (x_{n,s})\1_{|\vbrack{w_i^{(t)},x_{p}^+ }|\geq b_i^{(t)}}z_{p,j}  \Bigg] \nonumber\\
        \stackrel{\text{\ding{172}}}\leq \ & \E\Bigg[ \frac{|\Nfr|}{\tau}\Bigg(\int_{0}^{1}\widehat{\ell'_{s,t}}(\nu)(1 - \widehat{\ell'_{s,t}}(\nu))\mathrm{d}\nu + \sum_{x_{n,u}\in \Nfr\setminus \{x_{n,s}\} }\int_{0}^{1} \widehat{\ell'_{s,t}}(\nu ) \widehat{\ell'_{u,t}}(\nu)\mathrm{d}\nu\Bigg)\times \nonumber\\
        &\qquad\times \max_{x_{n,u}\in \Nfr\setminus \{x_{n,s}\} } |\vbrack{f_t(x_p^{+}) - f_t(x_p^{\setminus j,+}),f_t(x_{n,u})}||\psi_{i,j}^{(t)} (x_{n,s})|\1_{|\vbrack{w_i^{(t)},x_{p}^+ }|\geq b_i^{(t)}}|z_{p,j}|  \Bigg] \nonumber\\
        & + \E\Bigg[ \frac{|\Nfr|}{\tau}\int_{0}^{1} \widehat{\ell'_{s,t}}(\nu ) \widehat{\ell'_{p,t}}(\nu)\mathrm{d}\nu  |\vbrack{f_t(x_p^{+}) - f_t(x_p^{\setminus j,+}),f_t(x_{p}^{++})}||\psi_{i,j}^{(t)} (x_{n,s})|\1_{|\vbrack{w_i^{(t)},x_{p}^+ }|\geq b_i^{(t)}}|z_{p,j}|  \Bigg]\nonumber \\
        \stackrel{\text{\ding{173}}}\leq \ & \frac{|\Nfr|}{\tau}\E\Bigg[ \int_{0}^{1}\widehat{\ell'_{s,t}}(\nu)\mathrm{d}\nu \max_{x \in\Bfr}\left(\sum_{i\in\Mcal_j}\vbrack{w_{i'}^{(t)},\M_j} |h_{i,t} (x)|\right) |\psi_{i,j}^{(t)} (x_{n,s})|z_{p,j}^2\Bigg]\nonumber\\
        & + \widetilde{O}\left(\Xi_2\right)\max_{i'\notin\Mcal_j}|\vbrack{w_{i'}^{(t)},\M_j}|\E\Bigg[ \sum_{x_{n,s} \in \Nfr}\frac{1}{\tau}\int_{0}^{1}\widehat{\ell'_{s,t}}(\nu)\mathrm{d}\nu  |\psi_{i,j}^{(t)}(x_{n,s})|z_{p,j}^2\Bigg]\nonumber\\
        & + \widetilde{O}\left(\frac{\Xi_2}{\sqrt{d_1}\tau}\right)\E\Bigg[ \sum_{x_{n,s} \in \Nfr}\frac{1}{\tau}\int_{0}^{1}\widehat{\ell'_{s,t}}(\nu)\mathrm{d}\nu |\psi_{i,j}^{(t)}(x_{n,s})|z_{p,j}^2\Bigg] +\frac{1}{d^{\Omega(\log d)}} \nonumber\\
        = \ & R_{1,1} + R_{1,2} + R_{1,3} + \frac{1}{d^{\Omega(\log d)}} 
    \end{align}
    where for \ding{172} and \ding{173}, we argue as follows:
    \begin{itemize}
        \item for \ding{172}, we used the fact that the expectations over \(s \in \Nfr\) in the summation can be view as independently and uniformly selecting from \(s \in \Nfr\). which allow us to equate \(\sum_{x_{n,s}\in\Nfr} = |\Nfr|\).
        \item for \ding{173}, we use \myref{fact:activation-3}{Fact} to ensure that \(\sum_{i\in[m]}\1_{h_{i,t}(x_p^+)\neq 0} \leq \widetilde{O}\left(\Xi_2\right)\) with high prob. Further noticing that \(\max_{i}\|w_i^{(t)}\|_2\leq O(1)\) and \(|\vbrack{w_{i'}^{(t)},(\Id-2\DD)\M_j}|\leq \widetilde{O}(\frac{1}{\sqrt{d_1}}\|w_{i'}^{(t)}\|_2)\) w.h.p, we have for any \(x \in \Bfr\):
        \begin{displaymath}
            |\vbrack{f_t(x_p^+) - f_t(x_p^{\setminus j, +}), f_t(x)}| \leq \sum_{i'\in\Mcal_j}|\vbrack{w_{i'}^{(t)},\M_j}|\cdot |h_{i',t}(x)| + \widetilde{O}(\Xi_2)\max_{i'\in\Mcal_j}|\vbrack{w_{i'}^{(t)},\M_j}| + \widetilde{O}\left(\frac{1}{\sqrt{d_1}}\right)
        \end{displaymath}
        which gives the desired inequality.
    \end{itemize}
     Now we proceed to deal with \(R_{1,1}\), since \(i \in \Mcal_j^{\star}\), we have automatically \(\1_{h_{i,t}(x_{n,s})\neq 0} = \1_{z_{n,s,j}\neq 0}\) w.h.p., so we can transform \(R_{1,1}\) as 
    \begin{align*}
        R_{1,1} & =\E\Bigg[   \frac{|\Nfr|}{\tau}\int_{0}^{1}\widehat{\ell'_{s,t}}(\nu)\mathrm{d}\nu\max_{x\in\Bfr}\left(\sum_{i'\in\Mcal_j}\vbrack{w_{i'}^{(t)},\M_j} |h_{i',t}(x)|\right) |\psi_{i,j}^{(t)}(x_{n,s})|z_{p,j}^2\Bigg] + \frac{1}{d^{\Omega(\log d)}} \nonumber \\
        & = \E\Bigg[\frac{|\Nfr|}{\tau}  \int_{0}^{1}\widehat{\ell'_{s,t}}(\nu)\mathrm{d}\nu \left(\sum_{i'\in\Mcal_j}\vbrack{w_{i'}^{(t)},\M_j}^2 + \Upsilon_j^{(t)} \right)  |\psi_{i,j}^{(t)}(x_{n,s})|z_{p,j}^2\Bigg] + \frac{\Xi_2}{d^{\Omega(\log d)}} 
    \end{align*}
    where \(\Upsilon_j^{(t)} \) is defined as the 
    \begin{align*}
        \Upsilon_j^{(t)} = \max_{x \in \Bfr}\sum_{i' \in \Mcal_j}|h_{i,t}(x)||\vbrack{w_{i'}^{(t)},\M_j}| - \sum_{i' \in \Mcal_j}|\vbrack{w_{i'}^{(t)},\M_j}|^2
    \end{align*}
    We proceed to give a high probability bound for \(\sum_{i \in \Mcal_j}|h_{i,t}(x_{n,s})||\vbrack{w_i^{(t)},\M_j}|\), which lies in the core of our proof. In order to apply \hyperref[lem:activation-size-2]{Lemma~\ref*{lem:activation-size}} to the pre-activation in \(h_{i,t}(x_{n,s})\), one can first expand as
    \begin{align*}
        & \quad \, \sum_{i \in [m]}|h_{i,t}(x_{n,s})| |\vbrack{w_i^{(t)},\M_j}| \\
        &\lesssim \sum_{j' \in [d]}\sum_{i \in \Mcal_{j'}}|\vbrack{w_i^{(t)},\M_{j'}}||z_{n,s,j'}|\cdot|\vbrack{w_i^{(t)},\M_j}| + \sum_{i \in [m]}\widetilde{O}(\frac{\|w_i^{(t)}\|_2}{\sqrt{d}})|\vbrack{w_i^{(t)},\M_j}| \\
        &= \sum_{j' \in [d]}\sum_{i \in \Mcal_j\cap \Mcal_{j'}}\vbrack{w_i^{(t)},\M_{j'}}|z_{p,j'}|\cdot|\vbrack{w_i^{(t)},\M_j}|   + \sum_{j' \in [d]}\sum_{i \in \Mcal_{j'}\setminus \Mcal_j}|\vbrack{w_i^{(t)},\M_{j'}}| |z_{n,s,j'}|\cdot|\vbrack{w_i^{(t)},\M_j}|\\
        &\quad + \sum_{i \in [m]}\widetilde{O}(\frac{\|w_i^{(t)}\|_2}{\sqrt{d}})|\vbrack{w_i^{(t)},\M_j}|\\
        &\stackrel{\text{\ding{172}}}{\leq}\sum_{i \in \Mcal_j}\vbrack{w_i^{(t)},\M_j}^2 |z_{n,s,j}| + \sum_{j' \neq j}\sum_{i \in \Mcal_j\cap \Mcal_{j'}}|\vbrack{w_i^{(t)},\M_{j'}}| |z_{n,s,j'}|\cdot|\vbrack{w_i^{(t)},\M_j}| \\
        &\quad + \sum_{j' \in [d]}\sum_{i \in \Mcal_{j'}\setminus \Mcal_j}|\vbrack{w_i^{(t)},\M_{j'}}|\cdot |z_{n,s,j'}|\cdot|\vbrack{w_i^{(t)},\M_j}|+ \sum_{i \in [m]}\widetilde{O}(\frac{\|w_i^{(t)}\|_2}{\sqrt{d}})|\vbrack{w_i^{(t)},\M_j}|
    \end{align*}
    And we proceed to calculate the last two terms on the RHS as follows: firstly, from \myref{lem:property-init}{Lemma} we know for the set of neurons \( \Gamma_j := \{j'\neq j, j'\in[d]: \Mcal_j\cap \Mcal_{j'} \neq \varnothing\} \), we have \(|\Gamma_j| \leq O(\log d)\), and 
    \begin{align*}
        \left|\sum_{j' \neq j}\sum_{i' \in \Mcal_j\cap \Mcal_{j'}}|\vbrack{w_{i'}^{(t)},\M_{j'}}||z_{n,s,j'}|\cdot|\vbrack{w_i^{(t)},\M_j}|\right|  & \leq O\left(\frac{\log d}{\sqrt{\Xi_2}}\right)\cdot \sum_{j' \in \Gamma_j}O(\tau\log^2 d) |z_{n,s,j'}|\\
        & \leq O\left(\frac{\tau}{\log d}\right) \tag*{w.h.p.}
    \end{align*}
    where in the last inequality we have taken into account the fact that \(\E|z_j| = \widetilde{O}(\frac{1}{d})\) and have used \myref{lem:activation-size-2}{Lemma}. The same techniques also provide the following bound:
    \begin{align*}
        \left|\sum_{j' \in [d]}\sum_{i' \in \Mcal_{j'}\setminus \Mcal_j}|\vbrack{w_{i'}^{(t)},\M_{j'}}||z_{n,s,j'}|\cdot|\vbrack{w_{i'}^{(t)},\M_j}| \right| & \leq O\left(\frac{1}{\sqrt{d}}\right)\sum_{j'\neq j}O(\Xi_2) |z_{n,s,j'}| \leq O\left(\frac{\Xi_2^2}{\sqrt{d}}\right)\tag*{w.h.p.}
    \end{align*}
    Therefore via a union bound, we have
    \begin{align*}
        \sum_{i' \in [m]}|h_{i',t}(x_{n,s})||\vbrack{w_{i'}^{(t)},\M_j}| \leq \sum_{i'\in \Mcal_j}\vbrack{w_{i'}^{(t)},\M_j}^2|z_{n,s,j}| + O\left(\frac{\tau}{\log d}\right) \tag*{w.h.p.}
    \end{align*}
    The same arguments also gives (+ further applying \myref{lem:activation-size}{Lemma})
    \begin{align*}
        \max_{x \in \{x_p^+,x_p^{++}\}} \left\{\sum_{i' \in [m]}|h_{i',t}(x)||\vbrack{w_{i'}^{(t)},\M_j}|\right\} \leq \sum_{ i'\in \Mcal_j}\vbrack{w_{i'}^{(t)},\M_j}^2|z_{p,j}| + O\left(\frac{\tau}{\log d}\right) \tag*{w.h.p.}
    \end{align*}
    which also implies that
    \begin{align*}
        \Upsilon_j^{(t)} \leq \max_{x \in \Bfr}\sum_{i' \in [m]}|h_{i,t}(x)||\vbrack{w_{i'}^{(t)},\M_j}| - \sum_{i' \in \Mcal_j}|\vbrack{w_{i'}^{(t)},\M_j}|^2 \leq  O(\frac{\tau}{\log d})
    \end{align*}
    Now we are ready to control the quantity \(R_{1,1}\). the idea here is to ``decorrelate'' the factor \(\widehat{\ell'_{s,t}} (\nu)\) from the others. Definining \(\Bfr^{\setminus s} := \{x_p^{++}\}\cup\Nfr\setminus \{x_{n,s}\}\) and \(\Bfr'_{s} := \Bfr^{\setminus s} \cup\{x_{n,s}^{\setminus j}\}\), there exist a constant \(G'_1 > 0\) such that, if \(\sum_{ i\in \Mcal_j}\vbrack{w_i^{(t)},\M_j}^2  \leq \tau G'_1 \log d \), we have w.h.p.
    \begin{align}\label{eqdef:lem-signal-lb-3}
        \frac{\ell'_{s,t}(x_p^+,\Bfr)}{\ell'_{s,t}(x_p^+,\Bfr'_{s})} & = \frac{e^{\vbrack{f_t(x_p^+),f_t(x_{n,s})}/\tau}}{e^{\vbrack{f_t(x_p^+),f_t(x_{n,s}^{\setminus j})}/\tau}}\cdot \frac{\sum_{x\in\Bfr'_s} e^{\vbrack{f_t(x_p^+),f_t(x)}/\tau}}{ \sum_{x\in\Bfr} e^{\vbrack{f_t(x_p^+),f_t(x)}/\tau}} \leq \frac{e^{2\vbrack{f_t(x_p^+),f_t(x_{n,s})}/\tau}}{e^{2\vbrack{f_t(x_p^+),f_t(x_{n,s}^{\setminus j})}/\tau}} \nonumber\\
        &\leq e^{2 G'_1 \log d + O(1/\log d)} \leq o\left(d/\Xi_2^5\right) 
    \end{align}
    Now we define \(\Nfr'_s = \{x_{n,s'} \in \Nfr\setminus\{x_{n,s}\} : z_{n,s',j} = 0\}\cup\{x_{n,s}^{\setminus j}\}\). Note that from concentration inequality of Bernoulli variables we know \(|\Nfr'_s| = \Omega(|\Nfr|)\) w.h.p. Thus we have (notice that the outer factor \(|\Nfr|\) can be insert into the expectation by sacrificing some constant factors):
    \begin{align*}
        R_{1,1} & \leq O\left(\frac{|\Nfr|}{\tau}\right)\cdot\E\left[\widehat{\ell'_{s,t}}(1)\left(\sum_{i\in\Mcal_j}\vbrack{w_{i'}^{(t)},\M_j}^2 + \Upsilon_j^{(t)}\right)  |\psi_{i,j}^{(t)}(x_{n,s}) | \cdot z_{p,j}^2 \right] \\
        & = O\left(\frac{|\Nfr|\log\log d}{d\tau}\right)\E\left[\ell'_{s,t}(x_p^+, \Bfr)\left(\sum_{i\in\Mcal_j}\vbrack{w_{i'}^{(t)},\M_j}^2 + \Upsilon_j^{(t)}\right) \cdot   |\psi_{i,j}^{(t)}(x_{n,s}) |  \,  \Bigg|\, |z_{p,j}|\neq 0 \right] \\
        & = \widetilde{O}\left(\frac{|\Nfr|}{d\tau}\right)\E\left[ \frac{\ell'_{s,t}(x_p^+, \Bfr)}{\ell'_{s,t}(x_p^+, \Bfr'_{s})}\times \ell'_{s,t}(x_p^+, \Bfr'_s) \left(\sum_{i\in\Mcal_j}\vbrack{w_{i'}^{(t)},\M_j}^2 + \Upsilon_j^{(t)}\right)  |\psi_{i,j}^{(t)}(x_{n,s})|  \,  \Bigg|\, |z_{p,j}|\neq 0  \right]\\
        & \leq O\left(\frac{1}{\tau \Xi_2^4}\right)\E\left[\sum_{x_{n,s} \in \Nfr'}\ell'_{s,t}(x_p^+, \Bfr'_s)\left(\sum_{i\in\Mcal_j}\vbrack{w_{i'}^{(t)},\M_j}^2 + \Upsilon_j^{(t)}\right) |\psi_{i,j}^{(t)}(x_{n,s})| \  \Bigg|\, |z_{p,j}|\neq 0  \right]\\
        & \stackrel{\text{\ding{172}}}{\leq} |\vbrack{w_i^{(t)},\M_j} - b_i^{(t)}|\cdot O\left(\frac{1}{\Xi_2^3}\right)\cdot\Pr(z_{n,s,j}\neq 0) \\
        & \leq  |\vbrack{w_i^{(t)},\M_j}|\cdot O(\frac{1}{\Xi_2^3}) \Pr(z_j\neq 0)
    \end{align*}
    where in inequality \ding{172} we have used the independence of \(z_{n,s,j}\) with respect to \(\ell'_{s,t}(x_p^+, \Bfr'_s)\), and the fact that \(\sum_{x_{n,s}\in\Nfr}\ell'_{s,t}(x_p^+,\Bfr'_s)\leq 1\). Now turn back to deal with \(R_{1,2}\) and \(R_{1,3}\) in \eqref{eqdef:lem-signal-lb-2}. Indeed ,noticing that \(\max_{i'\notin\Mcal_j}|\vbrack{w_{i'}^{(t)},\M_j}|\leq O(\frac{1}{\sqrt{d}})\) from \myref{induct-3}{Induction Hypothesis}, and that 
    \begin{displaymath}
        \sum_{x\in \Nfr}\E\left[\frac{1}{\tau}\int_{0,1}\widehat{\ell'_{s,t}}(\nu)\mathrm{d} \nu |\psi_{i,j}^{(t)}(x_{n,s})z_{p,j}^2 |\right] \leq \widetilde{O}(\frac{1}{d}) |\vbrack{w_i^{(t)},\M_j} - b_i^{(t)}|
    \end{displaymath}
    we have \(R_{1,2}, R_{1,3} \leq o(R_{1,1}) \). For \(R_2\) in \eqref{eqdef:lem-signal-lb-2}, we can see from the definition of \(\ell'_{s,t}(x_p^{\setminus j,+}, \Bfr)\) that it is independent to \(z_{p,j}\). Notice further that \(\1_{|\vbrack{w_i^{(t)},x_p^+}|\geq b_i^{(t)}} = \1_{z_{p,j}\neq 0}\) with high probability due to our assumption, and also the fact that \(\1_{z_{p,j}\neq 0}z_{p,j}\) has mean zero and is independent to \(\ell'_{s,t}(x_p^{\setminus j, +}, x_{n,s})\) we have 
    \begin{align*}
        R_2 & \leq \poly(d)\cdot e^{-\Omega(\log^2 d)} \lesssim \frac{1}{\poly(d)^{\Omega(\log d)}}
    \end{align*}
    Combining the pieces above together, we can have
    \begin{displaymath}
        \Psi_{i,j,2}^{(t)} \leq O(\frac{1}{\Xi_2^3}\E[z_j^2])|\vbrack{w_i^{(t)},\M_j} - b_i^{(t)}|
    \end{displaymath} 
    
    Now we turn to \(\Phi_{1,1}^{(t)}(j)\), whose calculation is similar. Defining \(\B_j := \{z_{p,j'} = 0, \forall j' \neq j\}\), we separately discuss the cases when events \(\B_j\) or \(\B_j^c\) holds:
    \begin{itemize}
        \item When \(\B_j^c\) happens, \(\sign(\psi_{i,j}^{(t)}(x_p^{++}) ) = \sign(\vbrack{w_i^{(t)},\M_j}z_j)\) with high prob by \myref{fact:activation-3}{Fact} since we assumed \(i \in \Mcal_j^{\star}\). Thus, if \(\vbrack{w_i^{(t)},\M_j} > 0\), we have 
        \begin{align*}
            \E\left[(1 - \ell'_{p,t})\psi_{i,j}^{(t)} (x_p^{++})\1_{|\vbrack{w_i^{(t)},x_p^+}|\geq b_i^{(t)}}z_{p,j} \Big| \B_j^c \right] \geq 0
        \end{align*}
        if \(\vbrack{w_i^{(t)},\M_j} < 0\), the opposite inequality holds as well.
        \item When \(\B_j\) happens, it is easy to derive that 
        \begin{align*}
            |\vbrack{f_t(x_p^+),f_t(x_p^{++})}| \leq \sum_{i'\in\Mcal_j}\vbrack{w_{i'}^{(t)},\M_j}^2z_{p,j}^2 + O(\Xi_2)\cdot O(\frac{\|w_{i'}^{(t)}\|_2}{\sqrt{d}}) \leq \vbrack{w_i^{(t)},\M_j}^2 + O(\frac{\Xi_2}{\sqrt{d}})
        \end{align*}
        and therefore
        \begin{align*}
            |\vbrack{f_t(x_p^+),f_t(x_p^{++})} - \vbrack{f_t(x_p^{\setminus j,+}),f_t(x_{p}^{\setminus j,++})}|  \leq \sum_{i'\in\Mcal_j}\vbrack{w_{i'}^{(t)},\M_j}^2z_{p,j}^2 + O(\frac{\Xi_2}{\sqrt{d}})
        \end{align*}
        from previous analysis, we also have 
        \begin{align*}
            |\vbrack{f_t(x_p^+),f_t(x_{n,s})} - \vbrack{f_t(x_p^{\setminus j,+}),f_t(x_{n,s})}|\leq \sum_{i'\in\Mcal_j}\vbrack{w_{i'}^{(t)},\M_j}^2z_{p,j}^2 + O(\frac{\tau}{\log d}) + O(\frac{\Xi_2}{\sqrt{d}})
        \end{align*}
        These inequalities allow us to apply the same techniques in bounding \(\Phi_{1,2}^{(t)}\) as follows. We define \(\Bfr'_{p} := \Nfr \cup\{x_{p}^{\setminus j,++}\}\). Then, similar to \eqref{eqdef:lem-signal-lb-3}, for some \(G'_2 = \Theta(1)\), we can have 
        \begin{align*}
            \frac{\ell'_{p,t}(x_p^{\setminus j,+},\Bfr)}{\ell'_{p,t}(x_p^{\setminus j,+},\Bfr'_{p})} & = \frac{e^{\vbrack{f_t(x_p^+),f_t(x_{p}^{++})}/\tau}}{e^{\vbrack{f_t(x_p^{\setminus j,+}),f_t(x_{p}^{\setminus j,++})}/\tau}}\cdot \frac{\sum_{x\in\Bfr'_p} e^{\vbrack{f_t(x_p^{\setminus j,+}),f_t(x)}/\tau}}{ \sum_{x\in\Bfr} e^{\vbrack{f_t(x_p^+),f_t(x)}/\tau}} \leq e^{2G'_2\log d + O(\frac{1}{\log d})} \leq O(\frac{d}{\Xi_2^5})
        \end{align*}
        Now we can proceed to compute as follows:
        \begin{align*}
            &\quad\, \E\left[\ell'_{p,t}(x_p^+,\Bfr)\cdot\psi_{i,j}^{(t)} (x_p^{++})\1_{|\vbrack{w_i^{(t)},x_p^+}|\geq b_i^{(t)}}z_{p,j}\Big|\, \B_j\right] \\
            & = \E\left[\frac{\ell'_{p,t}(x_p^{\setminus j,+},\Bfr)}{\ell'_{p,t}(x_p^{\setminus j,+},\Bfr'_{p})}\times \ell'_{p,t}(x_p^{\setminus j,+},\Bfr'_{p})\cdot \psi_{i,j}^{(t)} (x_p^{++})\1_{|\vbrack{w_i^{(t)},x_p^+}|\geq b_i^{(t)}}z_{p,j}\Big|\, \B_j\right] \\
            &\leq |\vbrack{w_i^{(t)},\M_j} - b_i^{(t)}| O\left(\frac{1}{\Xi_2^5 }\right)\Pr(z_j\neq 0)
        \end{align*}
        But from \myref{lem:positive-gd-sparse-2}{Lemma}, and that \myref{induct-2}{Induction Hypothesis} still holds for Stage III, we have 
        \begin{align*}
            \E\left[\psi_{i,j}^{(t)} (x_p^{++})\1_{|\vbrack{w_i^{(t)},x_p^+}|\geq b_i^{(t)}}z_{p,j}\1_{\B_j}\right] = \sign(\vbrack{w_i^{(t)},\M_j}) (|\vbrack{w_i^{(t)},\M_j}| - b_i^{(t)}) \frac{1}{\polylog(d)}\E[|z_j|]
        \end{align*}
    \end{itemize}
    Combining both cases above gives the bound of \(\Psi_{i,j,2}^{(t)}\). Combining results for \(\Psi_{i,j,1}^{(t)}\) and \(\Psi_{i,j,2}^{(t)}\) concludes the proof. The constant \(G_1\) in the statement can be defined as \(G_1 := \min\{G'_1,G'_2\}\).
\end{proof}

\begin{lemma}[upper bound of \(\Psi_{i,j}^{(t)}\)]\label{lem:signal-upper}
    Let \(j \in [d]\) and \(i \in \Mcal_{j}^{\star}\). Suppose \myref{induct-3}{Induction Hypothesis} hold at iteration \(t\), then there exist a constant \(G_2 = \Theta(1)\), if \(\F^{(t)}_j = \sum_{j: i\in \Mcal_j}\vbrack{w_i^{(t)},\M_j}^2 \geq G_2\tau\log d\), we have 
    \begin{align*}
        \Psi_{i,j}^{(t)} \leq \frac{1}{\poly(d)}|\vbrack{w_i^{(t)},\M_j}| 
    \end{align*}
    Similarly, for \(i \in \Mcal_j\), we have
    \begin{align*}
        \Psi_{i,j}^{(t)} \leq \frac{1}{\poly(d)}|\vbrack{w_i^{(t)},\M_j}|  +  O(\frac{1}{d^2})(b_i^{(t)})
    \end{align*}
\end{lemma}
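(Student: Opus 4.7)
The plan is to mirror the proof structure of \myref{lem:signal-lowerbound}{Lemma} and reverse the direction of the crucial ratio inequality for $\ell'_{s,t}$. I decompose $\Psi_{i,j}^{(t)} = \Psi_{i,j,1}^{(t)} + \Psi_{i,j,2}^{(t)}$ into the positive-sample and negative-samples contributions. The positive piece $\Psi_{i,j,1}^{(t)}$ is upper bounded in magnitude by $O(\E[|z_j|])\cdot|\vbrack{w_i^{(t)},\M_j}|$ using the same $\B_j$/$\B_j^c$ case split as in the lower-bound proof; this is the term that drives growth of $|\vbrack{w_i^{(t)},\M_j}|$. The goal is then to show that when $\F_j^{(t)} \geq G_2 \tau \log d$, the opposing contribution $\Psi_{i,j,2}^{(t)}$ exceeds $\Psi_{i,j,1}^{(t)}$ in absolute value by a $\poly(d)$ factor with the opposite sign, so that the total $\Psi_{i,j}^{(t)}$ is non-positive (when $\vbrack{w_i^{(t)},\M_j}>0$, say) and the stated upper bound holds trivially.

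The core of the argument is a reversed version of the ratio computation \eqref{eqdef:lem-signal-lb-3}. I reuse the decomposition $\Psi_{i,j,2}^{(t)} = R_1 + R_2 = R_{1,1} + R_{1,2} + R_{1,3} + R_2$; $R_2$ is negligible by independence of $z_{p,j}$ from the remaining factors modulo the activation indicator, and $R_{1,2}, R_{1,3}$ stay dominated by $R_{1,1}$ under \myref{induct-3}{Induction Hypothesis}. For $R_{1,1}$, I condition on $z_{p,j}z_{n,s,j}>0$: the similarity gap $\vbrack{f_t(x_p^+),f_t(x_{n,s})} - \vbrack{f_t(x_p^+),f_t(x_{n,s}^{\setminus j})}$ equals $\F_j^{(t)} \pm O(\tau/\log d)$, where the error absorbs Gaussian noise together with co-firing contributions of $\Mcal_{j'}$ for $j'\neq j$ (quantified via \myref{lem:activation-size}{Lemma} and \myref{fact:activation-3}{Fact}). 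Hence
\begin{align*}
\frac{\ell'_{s,t}(x_p^+,\Bfr)}{\ell'_{s,t}(x_p^+,\Bfr'_s)} \;\geq\; e^{(\F_j^{(t)} - O(\tau/\log d))/\tau - o(1)} \;\geq\; d^{\,G_2 - o(1)}
\end{align*}
on this conditional event. Substituting this lift back into $R_{1,1}$ and arguing as in the lower-bound proof (using $|\Nfr'_s| = \Omega(|\Nfr|)$ together with $\sum_s \ell'_{s,t}(x_p^+,\Bfr'_s) = \Theta(1)$ conditionally) yields $|R_{1,1}| \geq d^{G_2 - O(1)}\cdot \E[|z_j|]\cdot|\vbrack{w_i^{(t)},\M_j}|$ with the sign opposite to $\Psi_{i,j,1}^{(t)}$. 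Picking $G_2$ a sufficiently large constant then forces $\Psi_{i,j}^{(t)}\cdot\sign(\vbrack{w_i^{(t)},\M_j}) \leq 0$, establishing the first bound.

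For the second claim with $i \in \Mcal_j$ but possibly $i \notin \Mcal_j^{\star}$, the analysis of $\Psi_{i,j,2}^{(t)}$ is unchanged since it depends only on $\F_j^{(t)}$ and ratios of $\ell'$s. The extra $O(b_i^{(t)}/d^2)$ slack comes from $\Psi_{i,j,1}^{(t)}$: without $i \in \Mcal_j^{\star}$, \myref{fact:activation-3}{Fact} no longer forces $\1_{|\vbrack{w_i^{(t)},x_p^{++}}|\geq b_i^{(t)}}$ to coincide with $\1_{z_{p,j}\neq 0}$, so on ``boundary'' events where the neuron fires through some $\M_{j'}$, $j'\in\N_i\setminus\{j\}$, without involvement of $z_{p,j}$, the value $\psi_{i,j}^{(t)}(x_p^{++})$ sits near $\pm b_i^{(t)}$ instead of being comparable to $\vbrack{w_i^{(t)},\M_j}z_{p,j}$. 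Quantifying this boundary probability via \myref{lem:activation-size-3}{Lemma} together with Gaussian anti-concentration of $\vbrack{w_i^{(t)},\xi_p}$ around the threshold (as in the proof of \myref{lem:positive-gd-sparse-2}{Lemma}) produces the stated $O(b_i^{(t)}/d^2)$ correction.

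The main obstacle is the reverse ratio estimate: one must ensure the exponential similarity lift coming from $\F_j^{(t)} \geq G_2 \tau \log d$ is not washed out by cross contributions of the form $\sum_{j'\neq j}\sum_{i'\in\Mcal_j\cap\Mcal_{j'}} |\vbrack{w_{i'}^{(t)},\M_{j'}}|\cdot|\vbrack{w_{i'}^{(t)},\M_j}|$ that sneak into $\vbrack{f_t(x_p^+),f_t(x_{n,s})}$ when both samples co-fire neurons from $\Mcal_{j'}$. Combining \myref{induct-3}{Induction Hypothesis}(c) ($\F_{j'}^{(t)}\leq O(\tau\log^3 d)$ uniformly), \myref{lem:property-init}{Lemma}(e) ($|\Mcal_j\cap\Mcal_{j'}|\leq O(\log d)$), and the activation-count bound from \myref{fact:activation-3}{Fact} confines the aggregate cross contribution to $O(\tau/\log d)$, preserving the $d^{G_2 - o(1)}$ lift. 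The constant $G_2$ must be chosen strictly larger than the combined exponents (including the $\Xi_2^5$ and $|\Nfr|$ factors that appear when converting the lower-bound ratio argument into its reversed form) so that the lifted ratio decisively beats the $\Psi_{i,j,1}^{(t)}$ headroom.
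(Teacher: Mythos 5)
There is a fatal gap in your treatment of $\Psi_{i,j,2}^{(t)}$. You claim that once $\F_j^{(t)}\geq G_2\tau\log d$, the negative-sample contribution satisfies $|R_{1,1}|\geq d^{G_2-O(1)}\cdot\E[|z_j|]\cdot|\vbrack{w_i^{(t)},\M_j}|$ and therefore overwhelms $\Psi_{i,j,1}^{(t)}$ with the opposite sign. This is impossible: the logits satisfy $\ell'_{p,t}+\sum_{x_{n,s}\in\Nfr}\ell'_{s,t}=1$, so the negative term is always capped by $\max_s|\psi_{i,j}^{(t)}(x_{n,s})|\cdot\E[|z_{p,j}|]\lesssim |\vbrack{w_i^{(t)},\M_j}|\,\E[|z_j|]$ — the same scale as the positive term, never a $\poly(d)$ multiple of it. Your reversed ratio estimate $\ell'_{s,t}(x_p^+,\Bfr)/\ell'_{s,t}(x_p^+,\Bfr'_s)\geq d^{G_2-o(1)}$ only tells you that the \emph{reference} logits $\ell'_{s,t}(x_p^+,\Bfr'_s)$ are tiny, not that the actual logits are large; in the lower-bound proof the ratio is used in the opposite direction precisely because the sum $\sum_s\ell'_{s,t}(x_p^+,\Bfr'_s)\leq 1$ is what controls the magnitude. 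Moreover the sign claim is also wrong: writing $1-\ell'_{p,t}=\sum_s\ell'_{s,t}$ and noting $\widetilde\psi_{i,j}^{(t)}(1)=\widetilde\psi_{i,j}^{(t)}(-1)$, the combined positive-minus-negative expression reduces (modulo small errors) to $(|\vbrack{w_i^{(t)},\M_j}|-b_i^{(t)})\sum_{s:z_{n,s,j}=0}\ell'_{s,t}\geq 0$, so $\Psi_{i,j}^{(t)}$ does not flip sign. If your claimed magnitude were true, one SGD step would send $\vbrack{w_i^{(t)},\M_j}$ to $-\infty$, contradicting the stability used everywhere in Stage III.

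The mechanism the paper actually uses is near-exact \emph{cancellation}, not domination. Conditioned on $z_{p,j}\neq 0$, the condition $\F_j^{(t)}\geq G_2\tau\log d$ forces the similarity gap $\vbrack{f_t(x_p^+),f_t(x_n)}-\vbrack{f_t(x_p^+),f_t(x_n^{\setminus j})}\geq G_2\log d\cdot\tau-O(\tau/\log d)$, whence $\E_{x_n}[e^{\vbrack{f_t(x_p^+),f_t(x_n)}/\tau}\1_{z_{n,j}\neq z_{p,j}}]\leq\frac{1}{\poly(d)}\E_{x_n}[e^{\vbrack{f_t(x_p^+),f_t(x_n)}/\tau}\1_{z_{n,j}=z_{p,j}}]$: the softmax mass concentrates on negatives that share feature $j$. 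For those negatives $\widetilde\psi_{i,j}^{(t)}(z_{n,s,j})$ coincides with the positive term's $\widetilde\psi_{i,j}^{(t)}(z_{p,j})$, so after an application of Jensen's inequality (to move $\E_{x_n}[e^{\vbrack{f_t(x_p^+),f_t(x_n)}/\tau}]$ into the denominator of the logit) one gets $I_1/\widetilde\psi_{i,j}^{(t)}(z_{p,j})\geq 1-O(1/\poly(d))$, leaving only a $\frac{1}{\poly(d)}|\vbrack{w_i^{(t)},\M_j}|$ residual. Your sketch for the second claim (boundary activation events contributing $O(b_i^{(t)}/d^2)$ when $i\in\Mcal_j\setminus\Mcal_j^{\star}$) is in the right spirit, but it cannot rescue the argument given the failure of the first part.
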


\begin{proof}
    First we deal with the case of \(i \in \Mcal_{j}^{\star}\), we have \(\1_{|\vbrack{w_i^{(t)},x_{p}}|\geq b_i^{(t)}}z_{p,j} = \1_{z_{p,j}\neq 0}z_{p,j}\) w.h.p. when conditions in \myref{induct-3}{Induction Hypothesis} hold. Now by denoting
    \begin{align*}
        \widetilde{\psi}_{i,j}(z_j) := (\vbrack{w_i^{(t)},\M_j}z_j - b_i^{(t)})\1_{\vbrack{w_i^{(t)},\M_j}z_j > 0} - (\vbrack{w_i^{(t)},\M_j}z_j + b_i^{(t)})\1_{\vbrack{w_i^{(t)},\M_j}z_j < 0}
    \end{align*}
    we can then easily rewrite \(\Psi_{i,j}^{(t)}\) as (by using \myref{fact:activation-3}{Fact})
    \begin{align*}
        \Psi_{i,j}^{(t)} &=\E\left[ \Bigg( (1 - \ell'_{p,t}) \widetilde{\psi}_{i,j}^{(t)}(z_{p,j}) - \sum_{x_{n,s} \in \Nfr}\ell'_{s,t}\cdot \widetilde{\psi}^{(t)}_{i,j} (z_{n,s,j})\Bigg)\1_{z_{p,j}\neq 0} z_{p,j}\right]  + \frac{1}{\poly(d)^{\Omega(\log d)}} \\
        & = \E_{x_p^+,x_p^{++}}[( \widetilde{\psi}_{i,j}^{(t)}(z_{p,j}) - I_1 - I_2)\1_{z_{p,j}\neq 0}z_{p,j}] + \frac{1}{\poly(d)^{\Omega(\log d)}}
    \end{align*}
    where \(I_1\) and \(I_2\) are defined as follows:
    \begin{align*}
        I_1 &:= \E_{\Nfr}\left[ (1 - \ell'_{p,t}) \widetilde{\psi}_{i,j}^{(t)}(z_{p,j}) - \sum_{x_{n,s} \in \Nfr}\ell'_{s,t}\cdot \widetilde{\psi}^{(t)}_{i,j} (z_{n,s,j})\1_{z_{n,s,j} = z_{p,j}} \right] \\
        &\stackrel{\text{\ding{172}}}{=}  \widetilde{\psi}_{i,j}^{(t)}(z_{p,j}) \E_{\Nfr}\left[\frac{|\Nfr|e^{\vbrack{f_t(x_p^+),f_t({x_{n,s}})}/\tau}\1_{z_{n,s,j}=z_{p,j}} + e^{\vbrack{f_t(x_p^+),f_t({x_{p}^{++}})}/\tau}}{e^{\vbrack{f_t(x_p^+),f_t({x_{n,s}})}/\tau} +  \sum_{x \in \Bfr\setminus\{x_{n,s}\}}e^{\vbrack{f_t(x_p^+),f_t(x)}/\tau}}\right]\\
        I_2 &:= \E_{\Nfr}\left[ \sum_{x_{n,s} \in \Nfr}\ell'_{s,t}\cdot \widetilde{\psi}^{(t)}_{i,j} (z_{n,s,j})\1_{z_{n,s,j} \neq z_{p,j}} \right]
    \end{align*}
    where in \ding{172} we used the identification \(z_{p,j}= z_{n,s,j}\). The tricky part here is since all the variables inside the expectation is non-negative we can use Jensen's inequality to move the expectation of \(e^{\vbrack{f_t(x_p^+),f_t({x_{n,u}})}/\tau}\) to the denominator. We let \(V := e^{\vbrack{f_t(x_p^+),f_t({x_{p}^{++}})}/\tau}\) and consider it fix when computing \(I_{1}\) as follows: conditioned on \(z_{p,j}\neq 0\), we have 
    \begin{align*}
        \frac{I_{1}}{\widetilde{\psi}_{i,j}^{(t)}(z_{p,j})}  &= \E_{\Nfr}\left[\frac{|\Nfr|e^{\vbrack{f_t(x_p^+),f_t({x_{n,s}})}/\tau} \1_{z_{n,s,j}=z_{p,j}} + V}{e^{\vbrack{f_t(x_p^+),f_t({x_{n,s}})}/\tau} + V +  \sum_{x\in\Nfr\setminus \{x_{n,s}\}}e^{\vbrack{f_t(x_p^+),f_t({x_{n,u}})}/\tau}}\right]\\
        &\geq\E_{x_{n,s}}\left[\frac{e^{\vbrack{f_t(x_p^+),f_t({x_{n,s}})}/\tau} \1_{z_{n,s,j}=1} + \frac{1}{|\Nfr|} V}{\frac{1}{|\Nfr|}e^{\vbrack{f_t(x_p^+),f_t({x_{n,s}})}/\tau} + \frac{1}{|\Nfr|}V + \frac{|\Nfr|-1}{|\Nfr|}\E_{x_{n}}[e^{\vbrack{f_t(x_p^+),f_t({x_{n}})}/\tau}]}\right] \tag{by Jensen inequality}\\
        & = \E_{x_{n,s}}\left[\frac{e^{\vbrack{f_t(x_p^+),f_t({x_{n,s}})}/\tau}\1_{z_{n,s,j}=z_{p,j}}+ \frac{1}{|\Nfr|} V }{\frac{1}{|\Nfr|}(e^{\vbrack{f_t(x_p^+),f_t({x_{n,s}})}/\tau} +V) + \frac{|\Nfr|-1}{|\Nfr|}\E_{x_{n}}[e^{\vbrack{f_t(x_p^+),f_t({x_{n}})}/\tau}(\1_{z_{n,j}=z_{p,j}}+\1_{z_{n,j}\neq z_{p,j}})]} \right]\\
        & \stackrel{\text{\ding{172}}}{\geq} \E_{x_{n,s}}\left[\frac{X+ \frac{1}{|\Nfr|} V}{\frac{1}{|\Nfr|}(X +V) + \frac{|\Nfr|-1}{|\Nfr|}(1 + \frac{1}{\poly(d)}) \E_{x_n}[X]} \right] \tag{where \(X := e^{\vbrack{f_t(x_p^+),f_t({x_{n}})}/\tau}\1_{z_{n,s,j}=z_{p,j}} \geq 0\)}\\
         & \stackrel{\text{\ding{173}}}{\geq} 1 - O\left(\frac{1}{\poly(d)}\right)
    \end{align*}
    where for the above inequalities, we argue:
    \begin{itemize}
        \item in \ding{172}, we need to go through similar analysis as in the proof of \myref{lem:signal-lowerbound}{Lemma} to obtain that, with high probability over \(x_p^+ \) and \(x_n^{\setminus j}\):
        \begin{align*}
            \vbrack{f_t(x_p^+),f_t(x_{n})} - \vbrack{f_t(x_p^+),f_t(x_{n}^{\setminus j})}/\tau &\geq \frac{1}{\tau}\sum_{i\in\Mcal_j}\vbrack{w_i^{(t)},\M_j}^2 - O\left(\frac{1}{\log d}\right) \\
            &\geq G_2\log d - O\left(\frac{1}{\log d}\right)
        \end{align*}
        for some very large constant \(G_2 = \Theta(1)\), which gives (the \(\frac{1}{\poly(d)}\) here depends on how large \(G_2\) is)
        \begin{align*}
            \E_{x_{n}}[e^{\vbrack{f_t(x_p^+),f_t({x_{n}})}/\tau}\1_{z_{n,s,j}\neq z_{p,j}}] \leq \frac{1}{\poly(d)}\E_{x_{n}}[e^{\vbrack{f_t(x_p^+),f_t({x_{n}})}/\tau}\1_{z_{n,s,j}=z_{p,j}}].
        \end{align*} 
        \item in inequality \ding{173}, we need to argue as follows, where \(\widetilde{\E}[X]\stackrel{\text{abbr.}}{=} \E_{x_{n}}[X]\) is only integrated over the randomness of \(x_{n}\):
        \begin{align*}
            &\quad \, \E_{x_{n,s}}\left[\frac{(X+ \frac{1}{|\Nfr|} V)}{\frac{1}{|\Nfr|}(X +V) + \frac{|\Nfr|-1}{|\Nfr|}(1 + \frac{1}{\poly(d)}) \widetilde{\E}[X]} \right] \\
            & = 1 -\frac{1}{|\Nfr|} \E_{x_{n,s}}\left[\frac{\widetilde{\E}[X]}{\frac{1}{|\Nfr|}(X +V) + \frac{|\Nfr|-1}{|\Nfr|}(1 + \frac{1}{\poly(d)}) \widetilde{\E}[X]} \right]\\
            &\geq 1 - \frac{1}{|\Nfr|}\cdot\frac{\widetilde{\E}[X]}{\frac{|\Nfr|-1}{|\Nfr|}(1 + \frac{1}{\poly(d)}) \widetilde{\E}[X]} \tag{since \(X+V \geq 0\)}\\
            & \geq 1 - \frac{1}{\poly(d)}
        \end{align*}
    \end{itemize}
    The same analysis applies to \(I_2\), which we can bound as 
    \begin{align*}
        |\frac{I_{1}}{\widetilde{\psi}_{i,j}^{(t)}(- z_{p,j})}| \leq \frac{1}{\poly(d)}
    \end{align*}
    Combining both \(I_1\) and \(I_2\), we have 
    \begin{displaymath}
        \Psi_{i,j}^{(t)}\leq \frac{1}{\poly(d)}|\vbrack{w_i^{(t)},\M_j}|
    \end{displaymath}
    In the case of \(i \in \Mcal_j\), we have with prob \(\leq \widehat{O}(\frac{1}{d})\) that \(\1_{\vbrack{w_i^{(t)},x_p^+}\geq b_i^{(t)}}\neq \1_{\vbrack{w_i^{(t)},\M_j}z_{p,j}> 0} \) or \(\1_{\vbrack{w_i^{(t)},x_p^+}\geq b_i^{(t)}}\neq \1_{\vbrack{w_i^{(t)},\M_j}z_{p,j}> 0} \) . When such events happen, we can obtain a bound of \(O(\frac{1}{d})b_i^{(t)}\) over \(\Psi_{i,j}^{(t)}\), which times the prob \(\widetilde{O}(\frac{1}{d})\) leads to our bound. Combining the above observations and the analyses, we can complete the proof.
\end{proof}

\subsection{Gradient Computations II}

In this section, we give finer characterization of \(\Psi_3^{(t)}\) and \(\Psi_4^{(t)}\), which is the contributions of the dense features/noisy correlations to the gradient.

\begin{lemma}[bounds for \(\Ecal_1^{(t)}\)]\label{lem:Err_1}
    At iteration \(t \geq T_2\), let \(j \in [d]\) and \(i \in [m]\), \myref{induct-3}{Induction Hypothesis} holds at \(t\), for each \(j \in [d_1]\), we have 
    \begin{align*}
        |\Ecal_{1,i,j}^{(t)}| \leq O\left(\frac{\Xi_2^{2}\|w_{i}^{(t)}\|_2}{d^{3/2}\sqrt{d_1}}\right)
    \end{align*}
\end{lemma}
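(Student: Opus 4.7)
The strategy is to split $\Ecal_{1,i,j}^{(t)} = \Ecal_{pos}^{(t)} - \Ecal_{neg}^{(t)}$ into the positive contribution (from $(1-\ell'_{p,t})h_{i,t}(x_p^{++})$) and the negative contribution (from $\sum_{s}\ell'_{s,t}h_{i,t}(x_{n,s})$). I focus on the positive piece; the negative piece is handled by combining the same inner mechanism with a Newton--Leibniz expansion of $\ell'_{s,t}(x_p^+,\Bfr)$ around $\ell'_{s,t}(x_p^{\setminus j,+},\Bfr)$, following the proof of \myref{lem:signal-lowerbound}{Lemma}. A direct Cauchy--Schwarz argument using $|\vbrack{\M_j,(2\DD-\Id)\M z_p}| \leq \widetilde{O}(1/\sqrt{d_1})$ and $\Pr(h_{i,t}(x_p^+)\neq 0) \leq \widetilde{O}(1/d)$ from \myref{fact:activation-3}{Fact} only yields $\widetilde{O}(\|w_i^{(t)}\|_2/(d\sqrt{d_1}))$, short of the target by a factor of $\sqrt{d}$. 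Closing this gap requires exploiting the mean-zero structure of $Y := \vbrack{\M_j,(2\DD-\Id)\M z_p}$ in the randomness of $\DD$.

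The key device is a Hoeffding-style expansion conditional on $(z_p, \xi_p)$. Writing $H(\DD) := (1-\ell'_{p,t})\, h_{i,t}(x_p^{++})\, \1_{|\vbrack{w_i^{(t)}, x_p^+}| \geq b_i^{(t)}}$ and noting that $Y = \sum_r (\M_j)_r (\M z_p)_r (2\DD_{r,r}-1)$ is exactly linear in the independent Bernoulli coordinates $\{\DD_{r,r}\}$, a direct computation yields
\[
\E_\DD[H \cdot Y \mid z_p, \xi_p] \;=\; \tfrac{1}{2}\sum_{r \in [d_1]} (\M_j)_r (\M z_p)_r \cdot \partial_r H, \qquad \partial_r H := \E_\DD[H \mid \DD_{r,r}=1] - \E_\DD[H \mid \DD_{r,r}=0].
\]
Flipping $\DD_{r,r}$ perturbs both $\vbrack{w_i^{(t)}, x_p^+}$ and $\vbrack{w_i^{(t)}, x_p^{++}}$ by $\pm 2(w_i^{(t)})_r(x_p)_r$; consequently, on the activation-stable event, $|\partial_r H| \leq O(|(w_i^{(t)})_r (x_p)_r|)$ times the activation indicator, plus a boundary contribution from threshold-crossing. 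Applying H\"older with $\|\M_j\|_\infty, \|\M z_p\|_\infty \leq \widetilde{O}(1/\sqrt{d_1})$ (from the bounded dictionary-column $\infty$-norm and sparsity of $z_p$) and $\|x_p\|_2 \leq O(\sqrt{d_1/d})$ (from $\sigma_\xi^2 = \Theta(\sqrt{\log d}/d)$) then delivers
\[
|\E_\DD[HY \mid z_p, \xi_p]| \;\leq\; \widetilde{O}\bigl(\|w_i^{(t)}\|_2/\sqrt{d\cdot d_1}\bigr) \cdot \1_{\text{activation supported by } z_p}.
\]

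Averaging over $(z_p, \xi_p)$ using $\Pr_{z_p}(\text{activation supported}) \leq \widetilde{O}(1/d)$ from \myref{fact:activation-3}{Fact} then delivers the stated bound $\widetilde{O}(\|w_i^{(t)}\|_2/(d^{3/2}\sqrt{d_1}))$. The $\Xi_2^2$ factor in the claim absorbs the polylogarithmic and combinatorial overhead contributed by the logit prefactors and the boundary estimates. For $j \in [d_1]\setminus[d]$ the argument is verbatim with $\|\Mperp_j\|_\infty \leq O(1/\sqrt{d_1})$ replacing $\|\M_j\|_\infty$.

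The main obstacle is the boundary contribution to $\partial_r H$, where flipping a single $\DD_{r,r}$ causes the activation indicator to switch; on that event $|H|$ may be as large as $O(\|w_i^{(t)}\|_2)$, not small. Controlling it requires showing that $|\vbrack{w_i^{(t)}, x_p^+}|$ does not concentrate within $O(|(w_i^{(t)})_r (x_p)_r|)$ of the threshold $b_i^{(t)}$, an anti-concentration estimate refining \myref{lem:activation-size-2}{Lemma} and \myref{lem:activation-size-3}{Lemma} to one-coordinate perturbations of $\DD$; this is the most delicate technical step, and it is where the $\Xi_2^2$ slack in the target bound gets consumed.
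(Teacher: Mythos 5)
Your decomposition into the positive-sample and negative-sample contributions, and your treatment of the negative piece by a Newton--Leibniz expansion of the logits around $x_p^{\setminus j,+}$, coincide with the paper's proof. For the positive piece, however, you take a genuinely different route. The paper extracts the needed extra $1/\sqrt{d}$ by a \emph{global} symmetrization $\DD \mapsto \Id - \DD$: under this swap $x_p^+ \leftrightarrow x_p^{++}$, the factor $\vbrack{\M_j,(2\DD-\Id)\M z_p}$ flips sign while the joint activation indicator is invariant, so the large symmetric part of $h_{i,t}(x_p^{++})$ (namely $\vbrack{w_i^{(t)},x_p}\mp b_i^{(t)}$) cancels in expectation, leaving only $|\vbrack{w_i^{(t)},(2\DD-\Id)x_p}| \leq \widetilde{O}(\|w_i^{(t)}\|_2/\sqrt{d})$ multiplied by $|\vbrack{\M_j,(2\DD-\Id)\M z_p}| \leq \widetilde{O}(1/\sqrt{d_1})$ and the activation probability $\widetilde{O}(1/d)$. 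You instead use a coordinate-wise discrete-derivative (Efron--Stein/Hoeffding) identity in the Bernoulli variables $\DD_{r,r}$ and close with H\"older; your identity is correct and your bookkeeping ($\|\M_j\|_\infty\|\M z_p\|_\infty\|w_i^{(t)}\|_2\|x_p\|_2 = \widetilde{O}(\|w_i^{(t)}\|_2/\sqrt{dd_1})$, then $\times\,\widetilde{O}(1/d)$ for the activation probability) lands on the same exponent. What the paper's version buys is precisely the absence of the boundary problem you flag: because the activation indicators are \emph{jointly invariant} under the global swap, no threshold-crossing terms arise, whereas your per-coordinate flip perturbs $\vbrack{w_i^{(t)},x_p^+}$ and can toggle the indicator while $|h_{i,t}(x_p^{++})|$ remains $\Theta(\|w_i^{(t)}\|_2)$.

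That boundary term is the one real gap in your write-up, but it is closable, and not by a new anti-concentration estimate near the threshold: note that $\delta_r := 2|(w_i^{(t)})_r(x_p)_r| \leq \widetilde{O}(\|w_i^{(t)}\|_2/\sqrt{dd_1}) \ll c_0 b_i^{(t)}$, so the event that the preactivation lands within $\delta_r$ of $\pm b_i^{(t)}$ is contained in the event that it lands in $[(1-c_0)b_i^{(t)}, (1+c_0)b_i^{(t)}]$ in absolute value. Under \myref{induct-3}{Induction Hypothesis} the preactivation is bimodal: either no $j'\in\N_i$ fires and it is below $(1-c_0)b_i^{(t)}$ up to probability $e^{-\Omega(\log^2 d)}$ (\myref{lem:activation-size-3}{Lemma}), or some $j'\in\N_i$ fires and it sits at distance $\Omega(\|w_i^{(t)}\|_2) \gg b_i^{(t)}$ from the threshold except on a similarly negligible event. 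So the boundary contribution is $e^{-\Omega(\log^{\Omega(1)} d)}\cdot\poly(d)$ and vanishes into the error; no refinement of the activation lemmas to one-coordinate perturbations is actually needed. You should also say a word about the discrete derivative of the logit prefactor $(1-\ell'_{p,t})$, which your product rule silently generates and which multiplies the full $|h_{i,t}(x_p^{++})|$; the same H\"older-plus-activation-probability accounting controls it, but it is not covered by the $\Xi_2^2$ slack without comment.
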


\begin{proof}
    Let \(j\in[d]\), since the case of \(j\in[d_1]\setminus [d]\) can be similarly dealt with. We first look at the following \( \Ecal_{1,1,i,j}^{(t)}\) term in \(\Ecal_{1,i,j}^{(t)}\):
    \begin{align*}
        \Ecal_{1,1,i,j}^{(t)} = \E\left[ h_{i,t}(x_p^{++})\1_{|\vbrack{w_i^{(t)},x_p^+}|\geq b_i^{(t)}}\vbrack{\M_j,(2\DD-\Id)\M z_{p}} \right]
    \end{align*}
    It is easy to observe that using the randomness and symmetry of \(2\DD-\Id\) w.r.t. zero, we have 
    \begin{align*}
        |\Ecal_{1,1,i,j}^{(t)}| \leq \E\left[|\vbrack{w_i^{(t)},(2\DD-\Id)x_p}| \1_{|\vbrack{w_i^{(t)},x_p}|\geq b_i^{(t)} + |\vbrack{w_i^{(t)},x_p- x_p^+}|}|\vbrack{\M_j,(2\DD-\Id)\M z_p}| \right]
    \end{align*}
    When \(\{|\vbrack{w_i^{(t)},x_p}|\geq b_i^{(t)} + |\vbrack{w_i^{(t)},x_p- x_p^+}|\}\) happens (which we know from \myref{fact:activation-3}{Fact} has prob \(\leq \widetilde{O}(\frac{1}{d})\)), using \myref{lem:activation-size}{Lemma}, we have 
    \begin{align*}
        |\Ecal_{1,1,i,j}^{(t)}| \leq \widetilde{O}(\frac{1}{\sqrt{d_1}d^{1.5}})\|w_i^{(t)}\|_2
    \end{align*}
    Now we similarly decompose the sum of expectations as follows: let \(\N_i := \{j\in [d]: i\in \Mcal_j\}\), which from \myref{lem:property-init}{Lemma} we know are of cardinality at most \(O(1)\), then
    \begin{align*}
        \Ecal_{1,2,i,j}^{(t)}& = \E\left[\sum_{x_{n,s} \in \Nfr}\ell'_{s,t} (x_p^+,\Bfr)\cdot h_{i,t}(x_{n,s})\1_{|\vbrack{w_i^{(t)},x_{p}^{+}}|\geq b_i^{(t)}}\sum_{j'\in [d]}\vbrack{\M_j,(2\DD - \Id)\M_{j'}}z_{p,j'}\right] \\
        & = \sum_{j'\in \N_i }\E\left[\sum_{x_{n,s} \in \Nfr}\ell'_{s,t} (x_p^+,\Bfr)\cdot h_{i,t}(x_{n,s})\1_{|\vbrack{w_i^{(t)},x_{p}^{+}}|\geq b_i^{(t)}} \vbrack{\M_j,(2\DD - \Id)\M_{j'}}z_{p,j'}\right] \\
        & \quad + \sum_{j'\notin \N_i } \E\left[\sum_{x_{n,s} \in \Nfr}\ell'_{s,t} (x_p^+,\Bfr)\cdot h_{i,t}(x_{n,s})\1_{|\vbrack{w_i^{(t)},x_{p}^{+}}|\geq b_i^{(t)}}\vbrack{\M_j,(2\DD - \Id)\M_{j'}}z_{p,j'}\right] 
    \end{align*}
    Notice that the major difference between the first and second terms are that the occurence of features \(j'\in\N_i\) has nontrivial probability \(\geq \widetilde{\Omega}(\frac{1}{d})\) to affect the indicator \(\1_{|\vbrack{w_i^{(t)},x_p^+}|\geq b_i^{(t)}}\). However, since \(|\vbrack{\M_j,(2\DD-\Id)\M_{j'}}|\) is w.h.p., small due to \myref{lem:corr-aug}{Lemma}, so for the first term, we can use the symmetry of \(2\DD-\Id\) and \(\Id-2\DD\) to compute as follows: denote \(\Bfr' = \{x_p^{++}\}\cup\{x_{n,s}\}_{\Nfr}\), we have
    \begin{align*}
        &\quad \, \sum_{j'\in \N_i } \E\left[\sum_{x_{n,s} \in \Nfr}\ell'_{s,t} (x_p^+,\Bfr)\cdot h_{i,t}(x_{n,s})\1_{|\vbrack{w_i^{(t)},x_{p}^{+}}|\geq b_i^{(t)}}\vbrack{\M_j,(2\DD - \Id)\M_{j'}}z_{p,j'}\right]\\
        & = \sum_{j'\in \N_i } \E\left[\sum_{x_{n,s} \in \Nfr}\left(\ell'_{s,t} (x_p^+,\Bfr)-\ell'_{s,t} (x_p^{++},\Bfr')\right)\cdot h_{i,t}(x_{n,s}) |\vbrack{\M_j,(2\DD - \Id)\M_{j'}}|z_{p,j'}\right] \\
        & \stackrel{\text{\ding{172}}}{=} \sum_{j'\in \N_i }\E\Bigg[ \sum_{x_{n,s} \in \Nfr}\frac{1}{\tau}\Bigg(\int_{0}^{1}\widehat{\ell'_{s,t}}(\nu)(1 - \widehat{\ell'_{s,t}}(\nu))\vbrack{f_t(x_p^{+}) - f_t(x_p^{++}),f_t(x_{n,s})}\mathrm{d}\nu \\
        & \qquad\qquad\qquad\qquad - \sum_{u \neq s, x_{n,u} \in \Nfr}\int_{0}^{1} \widehat{\ell'_{s,t}}(\nu) \widehat{\ell'_{u,t}}(\nu) \vbrack{f_t(x_p^{+}) - f_t(x_p^{++}),f_t(x_{n,u}^+)}\mathrm{d}\nu\Bigg)\times  \\
        &\qquad\qquad\qquad\qquad\qquad\qquad\qquad \times h_{i,t}(x_{n,s}) \1_{\vbrack{w_i^{(t)},x_p^{+}}\geq b_i^{(t)}}|\vbrack{\M_j,(2\DD-\Id)\M_{j'}}|z_{p,j'} \Bigg]  \\
        & \leq \sum_{j'\in \N_i }\sum_{x_{n,s} \in \Nfr}\E\Bigg[ \Bigg(\int_{0}^{1}\widehat{\ell'_{s,t}}(\nu)(1 - \widehat{\ell'_{s,t}}(\nu))\mathrm{d}\nu + \sum_{u \neq s, x_{n,u} \in \Nfr}\int_{0}^{1} \widehat{\ell'_{s,t}}(\nu ) \widehat{\ell'_{u,t}}(\nu)\mathrm{d}\nu\Bigg)\times \\
        &\qquad\qquad \times \max_{x_{n,u} \in \Nfr} |\vbrack{f_t(x_p^{+}) - f_t(x_p^{++}),f_t(x_{n,u}^+)}||h_{i,t}(x_{n,s})|\1_{|\vbrack{w_i^{(t)},x_{p}^+ }|\geq b_i^{(t)}}z_{p,j'}  \Bigg]\\
        &\stackrel{\text{\ding{173}}}{\leq} \widetilde{O}(\Xi_2) \sum_{j'\in \N_i }\E\left[\sum_{x_{n,s} \in \Nfr}\frac{1}{\tau}\int_{0}^{1} \widehat{\ell'_{s,t}}(\nu)\mathrm{d}\nu\cdot \max_{i'\in[m]}|\vbrack{w_{i'}^{(t)},(\Id-2\DD)x_p}||\vbrack{\M_j,(2\DD-\Id)\M_{j'}}| |h_i(x_{n,s})||z_{p,j'}| \right]\\
        &\stackrel{\text{\ding{174}}}{\leq} \widetilde{O}\left(\frac{\Xi_2 \|w_i^{(t)}\|_2\max_{i'\in[m]}\|w_{i'}^{(t)} \|_2}{d^{3/2}\sqrt{d_1}\tau}\right) + \poly(d)e^{-\Omega(\log^2 d)} \\
        & \leq O(\frac{\Xi_2\|w_i^{(t)}\|_2}{d^{1.5} \sqrt{d_1}\tau})
    \end{align*} 
    where in the above calculations:
    \begin{itemize}
        \item In \ding{172} we have defined \(\widehat{\ell}_{s,t}(\nu)\) as (where \(x_p^{\setminus j, +}: = 2\DD(\sum_{j'\neq j}\M_{j'}z_{p,j'}+\xi_p)\)):
        \begin{displaymath}
            \widehat{\ell'_{s,t}}(\nu) := \frac{e^{\vbrack{f_t(x_{p}^{\setminus j, +}) + \nu (f_t(x_{p}^{+}) - f_t(x_{p}^{\setminus j, +})),f_t(x_{n,s})}}}{\sum_{x_{n,u} \in \Nfr}e^{\vbrack{f_t(x_{p}^{\setminus j, +}) + \nu (f_t(x_p^{+}) - f_t(x_{p}^{\setminus j, +})),f_t(x_{n,u})}}};
        \end{displaymath}
        \item In \ding{173} we have used the fact that at \(t \geq T_2\), it holds \(\sum_{i\in[m]}\1_{h_{i,t}(x_{n,s})\neq 0}\leq \widetilde{O}(\Xi)\) with high probability over all negative samples \(\{x_{n,s}\}_{\Bfr}\), which is from applying \myref{lem:activation-size-3}{Lemma} using the conditions as we assumed in \myref{lem:property-init}{Lemma};
        \item In \ding{174} we have used mainly \myref{lem:activation-size-2}{Lemma} to obtain that \(|\vbrack{w_{i'}^{(t)},(\Id-2\DD)x_p}|\leq \widetilde{O}(\|w_{i'}^{(t)}\|_2/\sqrt{d})\) holds with high probability, combining with the fact that \(\Pr(z_{p,j'}\neq 0) = \widetilde{O}(1/d)\), and \(\sum_{x_{n,s} \in \Nfr}\widehat{\ell'_{s,t}}(\nu) \leq 1\) for all \(\nu \in [0,1]\).
    \end{itemize}
    Combining the results of \(\Ecal_{1,1,i,j}^{(t)} \) and \(\Ecal_{1,2,i,j}^{(t)}\), we can conclude the proof.
\end{proof}

We can also obtain the following lemmas bounding the gradient contributed by the spurious noise via the same approach as in the proof of \myref{lem:Err_1}{Lemma} below. We sketch the proof below.

\begin{lemma}[bounds for \(\Ecal_2^{(t)}\)]\label{lem:Err_2}
    Let \(j \in [d]\) and \(i \in [m]\), suppose \myref{induct-3}{Induction Hypothesis} holds at \(t\), for all \( j\in [d]\), we have
    \begin{align*}
        \Ecal_{2,i,j}^{(t)} \leq O\left(\frac{\|w_{i}^{(t)}\|_2\Xi_2^2}{d^2\tau}\right)\cdot\max_{i'\in [m]}\left(|\vbrack{w_{i'}^{(t)},\M_j}| + \frac{\|w_{i'}^{(t)}\|_2}{\sqrt{d_1}} \right)
    \end{align*}
    The same bound holds for \(j \in [d_1]\setminus [d]\), with \(|\vbrack{w_{i'}^{(t)},\M_j}|\) changing to \(|\vbrack{w_{i'}^{(t)},\Mperp_j}|\).
\end{lemma}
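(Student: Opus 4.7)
The plan is to mimic the strategy used for $\Ecal_1$ in \myref{lem:Err_1}{Lemma}, but exploit a different structural property: conditioned on $\DD$, the masked noise $2\DD\xi_p$ (appearing in $x_p^+$ and in the factor $\vbrack{\M_j, 2\DD\xi_p}$) is independent of $2(\Id-\DD)\xi_p$ (appearing in $x_p^{++}$), and both are mean-zero Gaussian with variance $\widetilde O(\sigma_\xi^2)=\widetilde O(1/d)$ in the $\M_j$-direction. First I will split
$\Ecal_{2,i,j}^{(t)} = \Ecal_{2,1,i,j}^{(t)} + \Ecal_{2,2,i,j}^{(t)}$
according to the positive pair $(1-\ell'_{p,t})h_{i,t}(x_p^{++})$ versus the negative contribution $\sum_{s}\ell'_{s,t}h_{i,t}(x_{n,s})$, and handle the two parts with different reductions.

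For $\Ecal_{2,1,i,j}^{(t)}$, I condition on $(\DD, z_p, 2(\Id-\DD)\xi_p)$, which fixes $h_{i,t}(x_p^{++})$; the residual randomness enters $(1-\ell'_{p,t})$ and $\1_{|\vbrack{w_i^{(t)},x_p^+}|\geq b_i^{(t)}}$ only through $2\DD\xi_p$. Using \myref{fact:activation-3}{Fact} to upper bound the indicator probability by $\widetilde O(\log\log d/d)$, the high-probability bound $|h_{i,t}(x_p^{++})|\lesssim \|w_i^{(t)}\|_2$ from \myref{lem:activation-size}{Lemma}, and $|\vbrack{\M_j,2\DD\xi_p}|\lesssim \sigma_\xi$ from \myref{lem:activation-size}{Lemma}(4), the positive term is already of lower order than the target bound and drops out.

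The main work is the negative term. I will introduce an independent copy $\tilde\xi \sim \D_\xi$ and set $\tilde x_p^+ := 2\DD(\M z_p + \tilde\xi)$, $\tilde x_p^{++}:=2(\Id-\DD)(\M z_p+\tilde\xi)$, $\tilde\Bfr := \{\tilde x_p^{++}\}\cup\Nfr$. Then $\ell'_{s,t}(\tilde x_p^+,\tilde\Bfr)$, $h_{i,t}(x_{n,s})$ and $\1_{|\vbrack{w_i^{(t)},\tilde x_p^+}|\geq b_i^{(t)}}$ are all independent of $\vbrack{\M_j,2\DD\xi_p}$, so replacing $\xi_p$ by $\tilde\xi$ inside the logit and inside the indicator gives an expectation that is exactly zero by the mean-zero property of $\vbrack{\M_j,2\DD\xi_p}$. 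Thus $\Ecal_{2,2,i,j}^{(t)}$ equals the sum of (i) a logit-difference term and (ii) an indicator-difference term. For (i), I apply Newton--Leibniz along the linear interpolation $\xi_\nu := (1-\nu)\tilde\xi + \nu\xi_p$, exactly as in the proof of \myref{lem:Err_1}{Lemma}, which extracts a factor $\frac{1}{\tau}\vbrack{f_t(x_p^+)-f_t(\tilde x_p^+),f_t(x_{n,u})}$; expanding $f_t(x_p^+)-f_t(\tilde x_p^+)$ through active neurons and using the $\Xi_2$-sparsity on $x_{n,s}$ and on $x_p^+$ from \myref{fact:activation-3}{Fact}, together with the Gaussian concentration $|\vbrack{w_{i'}^{(t)},2\DD(\xi_p-\tilde\xi)}\vbrack{\M_j,2\DD\xi_p}|\lesssim \sigma_\xi^2 (|\vbrack{w_{i'}^{(t)},\M_j}|+\|w_{i'}^{(t)}\|_2/\sqrt{d_1})$ after averaging over $\tilde\xi$, gives the claimed bound $O(\|w_i^{(t)}\|_2\Xi_2^2/(d^2\tau))\cdot\max_{i'}(|\vbrack{w_{i'}^{(t)},\M_j}|+\|w_{i'}^{(t)}\|_2/\sqrt{d_1})$. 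For (ii), I bound the indicator swap by the probability that $\vbrack{w_i^{(t)},x_p^+}$ and $\vbrack{w_i^{(t)},\tilde x_p^+}$ lie on opposite sides of $\pm b_i^{(t)}$; their difference is $\vbrack{w_i^{(t)},2\DD(\xi_p-\tilde\xi)}\sim \N(0, O(\|w_i^{(t)}\|_2^2\sigma_\xi^2))$, while on the active event $z_{p,j'}\neq 0$ for some $j'\in\N_i$ the variable $\vbrack{w_i^{(t)},x_p^+}$ is anti-concentrated about the threshold by \myref{lem:activation-size-2}{Lemma}, so this contribution is of lower order than (i). Summing gives the lemma; the $\Mperp_j$ case is identical since $\vbrack{\Mperp_j,2\DD\xi_p}$ has the same variance scaling.

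The main obstacle is the indicator-difference piece of Step~(ii): the indicator is non-smooth, so the elegant Newton--Leibniz used for $\Ecal_1$ does not apply directly. Making Step~(ii) tight requires exploiting the fact that after \emph{conditioning} on the sparse coordinates $\{z_{p,j'}\}_{j'\in\N_i}$, the distribution of $\vbrack{w_i^{(t)},x_p^+}$ is approximately Gaussian with width $\widetilde O(\|w_i^{(t)}\|_2/\sqrt d)$, so the probability it falls in any window of size $O(\|w_i^{(t)}\|_2\sigma_\xi)=\widetilde O(\|w_i^{(t)}\|_2/\sqrt d)$ around $\pm b_i^{(t)}$ can be controlled; this is the only place where the gap between $b_i^{(t)}$ and $\max_{j'\in\N_i}|\vbrack{w_i^{(t)},\M_{j'}}|$ from \myref{induct-3}{Induction Hypothesis}(1) enters, and it is what keeps the final bound from losing an extra factor of $\polylog(d)$.
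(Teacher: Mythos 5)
Your treatment of the negative term is sound and is essentially a legitimate variant of the paper's argument: resampling the noise ($\xi_p\to\tilde\xi$) to kill the expectation by mean-zeroness, then controlling the logit difference by Newton--Leibniz and the indicator difference by anti-concentration, is the same decoupling strategy the paper uses for \(\Ecal_1\) (there implemented by removing the feature \(\M_j\) rather than resampling the noise). The genuine gap is in the positive-pair term. You bound
\(\E[(1-\ell'_{p,t})h_{i,t}(x_p^{++})\1_{|\vbrack{w_i^{(t)},x_p^+}|\geq b_i^{(t)}}\vbrack{\M_j,2\DD\xi_p}]\)
by the product of the three magnitudes, i.e.\ \(\widetilde O(\tfrac1d)\cdot O(\|w_i^{(t)}\|_2)\cdot\widetilde O(\sigma_\xi)=\widetilde O(\|w_i^{(t)}\|_2 d^{-3/2})\), and claim this is lower order. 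It is not: the target is \(O\big(\tfrac{\Xi_2^2\|w_i^{(t)}\|_2}{d^2\tau}\big)\cdot\max_{i'}(|\vbrack{w_{i'}^{(t)},\M_j}|+\|w_{i'}^{(t)}\|_2/\sqrt{d_1})\), which is at most \(\widetilde O(\Xi_2^2\|w_i^{(t)}\|_2/d^2)\) for \(j\in[d]\) (and as small as \(\widetilde O(\Xi_2^2\|w_i^{(t)}\|_2/(d^2\sqrt{d_1}))\) for \(j\in[d_1]\setminus[d]\), using \myref{induct-3}{Induction Hypothesis}(6)), so your estimate overshoots by a factor of order \(\sqrt d\) or worse. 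The conditioning on \((\DD,z_p,2(\Id-\DD)\xi_p)\) that you set up is exactly the right move, but you must then \emph{use} the near-mean-zeroness of \(\vbrack{\M_j,2\DD\xi_p}\) under the residual randomness: since \(h_{i,t}(x_p^{++})\) is fixed by the conditioning, the only correlation channel is through \(\ell'_{p,t}\) and the indicator \(\1_{|\vbrack{w_i^{(t)},x_p^+}|\geq b_i^{(t)}}\), and one must show (via anti-concentration of \(\vbrack{w_i^{(t)},x_p^+}\) around the threshold in the orthogonal-to-\(\M_j\) noise directions) that this indicator flips only on an event of probability \(O((\ldots)^2/\E[\vbrack{w_i^{(t)},\xi_p}^2])\) as \(\vbrack{\xi_p,\M_j}\) varies. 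This is precisely the computation carried out in \myref{lem:positive-gd-noise-2}{Lemma}, which is what the paper's own (one-line) proof invokes for this term; without it, the positive term does not drop out.
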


\begin{proof}
    The proof is extremely similar to those in \myref{lem:Err_1}{Lemma}, which we will omit here, the only differences are: (1) For the first quantity, we do not have a mask applied to \(\vbrack{v ,\xi_p}\); (2) the variable \(\vbrack{\M_j,\xi_p}\) cannot affect the firing probability (prob of being nonzero) of \(\1_{|\vbrack{w_i^{(t)},x_p^+}|\geq b_i^{(t)}}\) w.h.p due to \myref{induct-3}{Induction Hypothesis}; (3) one could use a different basis as in the proof of \myref{lem:positive-gd-noise-2}{Lemma} to obtain the desired \(\widetilde{O}(1/\sqrt{d_1})\) factor in the second bound.
\end{proof}

\subsection{Learning Process at the Final Stage}

Before proving \myref{thm:convergence}{Theorem}, we need prove \myref{induct-3}{Induction Hypothesis}, which characterized the trajectory of gradients at iterations \(t\geq T_2\). We first prove a lemma, which allow us to obtain the full characterization of \(\Phi^{(t)}\) term (defined in \myref{def:expand-grad}{Definition}) in gradient calculations.

\begin{lemma}[reduction of \(\Phi^{(t)}\) to the bounds of \(\Psi^{(t)}\)]\label{lem:reduction-Phi-to-Psi}
    Let \(j \in [d]\) and \(i \in \Mcal_{j}\). Suppose \myref{induct-3}{Induction Hypothesis} hold for all iteration before \(t \in [\frac{d^{1.01}}{\eta},\frac{d^{1.99}}{\eta}]\) and after \(T_2\), and also we suppose for all \(l \in [d]\), \(\F_{l}^{(t')} = \Omega(\tau \log d)\) at some \(t' = \Theta(T_2)\), then
    \begin{itemize}
        \item for iteration \(t \in [\frac{d^{1.01}}{\eta},\frac{d^{1.495}}{\eta}]\):
        \begin{align*}
            \Phi_{i,j}^{(t)} \leq \widetilde{O}(\frac{\Xi_2^2}{d^{3/2}})\|w_i^{(t)}\|_2
        \end{align*}
        \item for iteration \(t \in [\frac{d^{1.495}}{\eta},\frac{d^{1.99}}{\eta}]\):
        \begin{align*}
            \Phi_{i,j}^{(t)} \leq \widetilde{O}(\frac{1}{d^{1.98}})\|w_i^{(t)}\|_2
        \end{align*}
    \end{itemize}
\end{lemma}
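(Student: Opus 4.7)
The plan is to reduce the analysis of $\Phi_{i,j}^{(t)}$ to the machinery already developed for $\Psi_{i,j}^{(t)}$ in \myref{lem:signal-lowerbound}{Lemma} and \myref{lem:signal-upper}{Lemma}, taking advantage of the sparse-activation structure guaranteed by \myref{induct-3}{Induction Hypothesis} and \myref{fact:activation-3}{Fact}. The key observation is that $\phi_{i,j}^{(t)}(x) = \vbrack{w_i^{(t)}, x^{\setminus j}}\,\sign(\vbrack{w_i^{(t)},x})\1_{|\vbrack{w_i^{(t)},x}|\geq b_i^{(t)}}$ only picks up the contribution of features orthogonal to $\M_j$, and on the high-probability event where the neuron fires, this contribution decomposes as a sum over $j' \in \N_i\setminus\{j\}$ (at most $O(1)$ such $j'$ by \myref{lem:property-init}{Lemma}) plus small ``tail'' contributions from $j' \notin \N_i$ and from the dense-noise subspace.

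First, I would split $\Phi_{i,j}^{(t)} = \Phi_{i,j,1}^{(t)} + \Phi_{i,j,2}^{(t)}$ into its positive-logit and negative-logit parts. For the positive part, condition on $\DD$ and on $z_{p,j'}$ for $j' \in \N_i$. Because of \myref{fact:activation-3}{Fact}, the activation indicator $\1_{|\vbrack{w_i^{(t)},x_p^+}|\geq b_i^{(t)}}$ equals (up to an exponentially small error) $\bigvee_{j'\in\N_i}\1_{z_{p,j'}\neq 0}$; hence the $z_{p,j}$ factor enforces $j \in \N_i$ with the active coordinate, and the remaining term $\vbrack{w_i^{(t)},x_p^{++,\setminus j}}$ can be expanded into a ``co-occurring'' part $\sum_{j'\in\N_i, j'\neq j}\vbrack{w_i^{(t)},\M_{j'}}z_{p,j'}$, a ``tail sparse'' part $\sum_{j'\notin\N_i}\vbrack{w_i^{(t)},\M_{j'}}z_{p,j'}$, the mask-mixing term $\vbrack{w_i^{(t)},(\Id-2\DD)\M z_p}$, and the dense-noise term $\vbrack{w_i^{(t)},2(\Id-\DD)\xi_p}$. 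The mean-zero parts (those containing $\Id-2\DD$ or $\xi_p$) vanish after averaging using the symmetry argument already deployed in the proof of \myref{lem:grad-positive-1}{Lemma}; the tail sparse part is handled by Bernstein as in \myref{lem:activation-size-2}{Lemma}; the co-occurring part is $O(\log d)$ many terms, each bounded by $|\vbrack{w_i^{(t)},\M_{j'}}|\cdot\widetilde{O}(1/d)\cdot\Pr(z_{p,j}\neq 0,z_{p,j'}\neq 0)$ and further controlled by the $\Xi_2$ bound on $|\Mcal_{j'}\cap\Mcal_j|$ in \myref{lem:property-init}{Lemma}(e).

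Next, for the negative-logit part $\Phi_{i,j,2}^{(t)}$, I would reuse the Newton--Leibniz change-of-measure trick from \myref{lem:signal-lowerbound}{Lemma}: replace $x_p^+$ by $x_p^{\setminus j,+}$ inside $\ell'_{s,t}$, which decouples $z_{p,j}$ from the logit; the decoupled remainder $\ell'_{s,t}(x_p^{\setminus j,+},\Bfr)\cdot\phi_{i,j}^{(t)}(x_{n,s})\,z_{p,j}\1_{\cdots}$ vanishes in expectation because $z_{p,j}$ is mean-zero and independent; the residual from the Newton--Leibniz integral contributes $|\Nfr|/\tau$ times $\vbrack{f_t(x_p^+)-f_t(x_p^{\setminus j,+}),f_t(\cdot)}$, which under \myref{induct-3}{Induction Hypothesis} is controlled by $\sum_{i'\in\Mcal_j}\vbrack{w_{i'}^{(t)},\M_{j}}^2 z_{p,j}^2 + O(\tau/\log d)$, exactly as in the upper bound argument of the proof of \myref{lem:signal-lowerbound}{Lemma}. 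The crucial use of the hypothesis $\F_l^{(t')}=\Omega(\tau\log d)$ is the same as in \myref{lem:signal-upper}{Lemma}: it provides a large enough multiplicative gap between $\ell'_{s,t}$ with and without the $j$-th feature, so that the ratio $\ell'_{s,t}(x_p^+,\Bfr)/\ell'_{s,t}(x_p^{\setminus j,+},\Bfr'_s)$ is under control.

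Finally, the two regimes are distinguished purely by how much the ``tail sparse'' weights $\vbrack{w_i^{(t)},\M_{j'}}$ for $j'\notin\N_i$ and the dense-direction weights $\vbrack{w_i^{(t)},\Mperp_j}$ have decayed under the $\lambda$-regularization. For $t\in[d^{1.01}/\eta,d^{1.495}/\eta]$ one only knows the stage-II bounds $\widetilde{O}(\|w_i^{(t)}\|_2/\sqrt{d})$ and $\widetilde{O}(\|w_i^{(t)}\|_2/\sqrt{d_1})$ from \myref{induct-3}{Induction Hypothesis}, together with the $\Xi_2$ overlap from \myref{lem:property-init}{Lemma}, giving the first bound $\widetilde{O}(\Xi_2^2/d^{3/2})\|w_i^{(t)}\|_2$. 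For $t\in[d^{1.495}/\eta,d^{1.99}/\eta]$, enough additional regularization iterations have elapsed that these tail weights have shrunk by a factor $(1-\eta\lambda)^{t-T_2}$ to $O(1/(d^{2}\lambda))$ and $O(1/(\sqrt{d_1}d^{1.5}\lambda))$, so propagating this tighter tail into the expansion above yields the improved $\widetilde{O}(1/d^{1.98})\|w_i^{(t)}\|_2$. The main obstacle I anticipate is the chain of correlations among $\phi_{i,j}^{(t)}(x_p^{++})$, the indicator $\1_{|\vbrack{w_i^{(t)},x_p^+}|\geq b_i^{(t)}}$, and $z_{p,j}$, since all three depend on the same sparse coordinate pattern through $\N_i$; disentangling them requires conditioning on $\DD$ and on $\{z_{p,j'}\}_{j'\in\N_i}$ before invoking the mean-zero symmetries, exactly the bookkeeping needed in the proofs of \myref{lem:positive-gd-sparse-2}{Lemma} and \myref{lem:signal-lowerbound}{Lemma}.
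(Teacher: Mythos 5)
Your decomposition and your treatment of the first window essentially match the paper: the paper writes \(\Phi_{i,j}^{(t)} = H_1 + H_2\), where \(H_1\) collects the co-occurrence terms \(\vbrack{w_i^{(t)},\M_{j'}}\), \(j'\neq j\), and \(H_2\) the mask-mixing and dense-noise remainder; it bounds \(H_2\) by \(\widetilde{O}(\Xi_2^2/d^2)\|w_i^{(t)}\|_2\) exactly as in the \(\Ecal_1,\Ecal_2\) lemmas, and obtains the crude \(\widetilde{O}(\|w_i^{(t)}\|_2/d^{3/2})\) bound from \(|\vbrack{w_i^{(t)},\M_{j'}}|\leq O(\|w_i^{(t)}\|_2/(\sqrt{d}\,\Xi_2^5))\). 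Your Newton--Leibniz treatment of the negative logits and the symmetry/mean-zero arguments for the masked terms are also the paper's tools. Up to the end of the first window your plan is sound.

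The gap is in the second window, and it is precisely the idea the lemma is named for. You attribute the improvement to \(\widetilde{O}(1/d^{1.98})\) to the tail weights having decayed under \(\lambda\)-regularization by \((1-\eta\lambda)^{t-T_2}\) down to \(O(1/(d^2\lambda))\). This mechanism cannot deliver the bound. At the start of the second window, \(t = d^{1.495}/\eta\), the accumulated decay is \(e^{-\eta\lambda(t-T_2)} = e^{-O(\lambda d^{1.495})}\), which for \(\lambda\) near the lower end \(d^{-1.499}\) of its allowed range is \(e^{-O(d^{-0.004})}\approx 1\): no decay has occurred, yet the improved bound must already hold. Moreover even the equilibrium value \(O(1/(d^2\lambda))\) can be as large as \(d^{-0.501}\), which is no better than the \(1/\sqrt{d}\) tail bound you already used in the first window, so propagating it reproduces \(1/d^{3/2}\), not \(1/d^{1.98}\). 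The paper's actual source of improvement is different: the bracketed expectation multiplying \(\vbrack{w_i^{(t)},\M_{j'}}\) in \(H_1\) is, up to normalization, the relative gradient \(\widetilde{\Psi}_{j'}^{(t)} = \Psi_{i',j'}^{(t)}/\vbrack{w_{i'}^{(t)},\M_{j'}}\) of the \emph{other} features \(j'\), and the paper shows by a bootstrapping contradiction that \(|\widetilde{\Psi}_{j'}^{(t)}|\leq O(\sqrt{\Xi_2}/(\sqrt{d}\,t\eta))\) throughout the late stage: if it stayed larger for \(\Theta(t/\sqrt{\Xi_2})\) iterations, \(\F_{j'}^{(t)}\) would cross the threshold \(G_2\tau\log d\) of the upper-bound lemma for \(\Psi\), which then forces the gradient down to \(1/\poly(d)\). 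Plugging \(t\eta\geq d^{1.495}\) into \(O(\sqrt{\Xi_2}/(\sqrt{d}\,t\eta))\) is what yields \(\widetilde{O}(1/d^{1.98})\); this dynamic convergence argument (and not weight decay) is also where the hypothesis \(\F_l^{(t')}=\Omega(\tau\log d)\) is actually consumed. This reduction of \(\Phi\) to the converged bounds on \(\Psi\) is the missing idea in your proposal.
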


\begin{proof}
    The proof essentially relies on the condition that \myref{induct-3}{Induction Hypothesis} holds for all \(t' \in [T_3, t]\). We first consider the case where \(i \in \Mcal_j^{\star}\). 
    Similar to how \(\psi_{i,j}^{(t)}(x)\) are defined for each \(x\) in \myref{def:expand-grad}{Definition}, for each \(j' \neq j\), we let
    \begin{align*}
        \rho_{i,j}^{(t)} (x) := (\vbrack{w_i^{(t)},x} - \vbrack{w_i^{(t)},\M z})\1_{z_j\neq 0}
    \end{align*}
    Now it is straightforward to decompose \(\Phi_{i,j}^{(t)}\) as follows:
    \begin{align*}
        \Phi_{i,j}^{(t)} &= \E\left[\Bigg(  (1 - \ell'_{p,t})\cdot\phi_{i,j}^{(t)}(x_{p}^{++}) + \sum_{x_{n,s} \in \Nfr}\ell'_{s,t}\cdot \phi_{i,j}^{(t)}(x_{n,s}) \Bigg)\1_{|\vbrack{w_i^{(t)},x_{p}^+}|\geq b_i^{(t)}}z_{p,j}\right] \\
        & = \sum_{j' \in [d], j' \neq j}\vbrack{w_i^{(t)},\M_{j'}}\E\left[\Bigg(  (1 - \ell'_{p,t})\cdot z_{p,j'} + \sum_{x_{n,s} \in \Nfr}\ell'_{s,t}\cdot z_{n,s,j'}\1_{z_{n,s,j}\neq 0} \Bigg)\1_{|\vbrack{w_i^{(t)},x_{p}^+}|\geq b_i^{(t)}}z_{p,j}\right] \tag{By \myref{fact:activation-3}{Fact}} \\
        & \quad + \E\left[\Bigg(  (1 - \ell'_{p,t})\cdot\rho_{i,j}^{(t)}(x_{p}^{++}) + \sum_{x_{n,s} \in \Nfr}\ell'_{s,t}\cdot \rho_{i,j'}^{(t)}(x_{n,s}) \Bigg)\1_{|\vbrack{w_i^{(t)},x_{p}^+}|\geq b_i^{(t)}}z_{p,j}\right] + \frac{1}{\poly(d)^{\Omega(\log d)}} \\
        & = H_1 + H_2 + \frac{1}{\poly(d)^{\Omega(\log d)}}
    \end{align*}
    Indeed, from similar arguments as in the proof of \myref{lem:Err_1}{Lemma} and \myref{lem:Err_2}{Lemma}, we can trivially obtain \(|H_2| \leq O(\frac{\Xi_2^2}{d^2})\|w_i^{(t)}\|_2\). Now we turn to \(H_1\). Since \(\max_{j'\neq j}|\vbrack{w_i^{(t)},\M_{j'}}| \leq O(\frac{\|w_i^{(t)}\|_2}{\sqrt{d}\Xi_2^5})\), we can simply get (Since w.h.p., \(|\{j'\in[d]:z_{p,j'}\neq 0\}| = \widetilde{O}(1)\), and if \(z_{p,j'}=0\), the negative terms are small from similar analysis in \myref{lem:signal-lowerbound}{Lemma})
    \begin{align*}
        |H_1| &\leq O(\frac{\|w_i^{(t)}\|_2}{\sqrt{d}\Xi_2^5})\sum_{j'\neq j, j'\in [d]}\E\left[\Bigg(  (1 - \ell'_{p,t})\cdot  z_{p,j'} + \sum_{x_{n,s} \in \Nfr}\ell'_{s,t}\cdot z_{n,s,j'}\1_{z_{n,s,j}\neq 0} \Bigg)\1_{|\vbrack{w_i^{(t)},x_{p}^+}|\geq b_i^{(t)}}z_{p,j}\right] \\
        & \leq O(\frac{\|w_i^{(t)}\|_2}{d^{3/2}})
    \end{align*}
    Then we can obtain a crude bound for all \(t \in [\frac{d^{1.01}}{\eta},\frac{d^{1.99}}{\eta}]\) by 
    \begin{align*}
        \Phi^{(t)}_{i,j} \leq (H_1 + H_2) + \frac{1}{\poly(d)^{\Omega(\log d)}} \leq \widetilde{O}(\frac{\|w_i^{(t)}\|_2}{d^{3/2}})
    \end{align*}
    The harder part is to deal with iterations \(t \in [ \frac{d^{1.495}}{\eta}, \frac{d^{1.498}}{\eta}]\). We first establish a connection between \(\Psi^{(t)}\) and \(\Phi^{(t)}\). We first assume that for all \(j'\neq j, j'\in [d]\), it holds that \( |\Psi_{i',j'}^{(t_1)}| / |\vbrack{w_{i'}^{(t_1)},\M_{j'}}| \leq \Omega(\frac{\Xi_2^2}{\sqrt{d} t\eta})\), which is true for all iteration \(t \leq \frac{d\polylog(d)}{\eta}\) from simple calculations. Now suppose at some \(t_1 \geq \frac{d\polylog(d)}{\eta}\), there exist some \(j'\neq j, j'\in [d]\) and \(i' \in \Mcal_j^{\star}\) such that
    \begin{align*}
        |\Psi_{i',j'}^{(t_1)}| / |\vbrack{w_{i'}^{(t_1)},\M_{j'}}| \geq \Omega(\frac{\Xi_2}{\sqrt{d} t\eta})
    \end{align*}
    which means we have the followings:
    \begin{align*}
        \E\left[ \Big((1 - \ell'_{p,t_1})z_{p,j'} + \sum_{x_{n,s} \in \Nfr}\ell'_{s,t_1}z_{n,s,j'} \Big) z_{p,j'}   \right] \geq \Omega(\frac{\Xi_2}{\sqrt{d} t\eta})
    \end{align*}
    Letting \(\Delta > 0\) be defined as the number such that if \(\F_j^{(t)} = \Delta\), we can have \(|\Psi_{i',j'}^{(t)}| / |\vbrack{w_{i'}^{(t)},\M_{j'}}| \leq O(\frac{\sqrt{\Xi_2}\tau\log d}{\sqrt{d} t\eta})\). Then from the calculations in the proof of \myref{lem:signal-lowerbound}{Lemma}, there must be a constant \(\delta > \Omega(1)\) such that \(\F_j^{(t_1)} \geq \Delta - \delta \tau\log d\). However, such growth cannot continue since for some \(t' = \Theta(t/\sqrt{\Xi_2})\), we have for each \(i' \in \Mcal_{j'}\):
    \begin{align*}
        |\vbrack{w_{i'}^{(t_1 + t')},\M_{j'}}| &\geq |\vbrack{w_{i'}^{(t_1 + t')},\M_{j'}}| (1 - \eta\lambda) + \Psi^{(t_1+ t'-1)}_{i',j'} + \Phi^{(t)}_{i',j'} + O(\frac{\Xi_2^2}{d^2}) \\
        &\geq |\vbrack{w_{i'}^{(t )},\M_{j'}}| (1 - \eta\lambda)^{t'} + \sum_{s = t}^{t+t'-1}\Psi^{(s)}_{i',j'} - O(\frac{t' \Xi_2^2}{d^{3/2}})
    \end{align*}
    where the bounds for \(\Phi^{(s)}_{i',j'}\) for each \(s \in [t_1,t_1+t']\) are obtained from induction over iterations \(s' \in [\frac{d^{1.01}}{\eta},s]\). Therefore there must exist \(t''' \in [t,t+t']\) such that \(|\Psi_{i',j'}^{(t)}| \leq O(\frac{\sqrt{\Xi_2}}{\tau\sqrt{d} t\eta})\) or otherwise \(\F_j^{(t+t')} \geq \F_j^{(t)} + t'\cdot O(\frac{\sqrt{\Xi_2}\tau\log d}{\sqrt{d} t\eta}) \geq \Delta + \delta \tau \log d\), which results in that \(|\Psi_{i',j'}^{(t)}| \leq O(\frac{\tau\log d}{\sqrt{d} t\eta})\|w_i^{(t)}\|_2 \), following the same reasoning in \myref{lem:signal-upper}{Lemma}. Above arguments actually proved that \(|\Psi_{i',j'}^{(t)}| \leq \Omega(\frac{\Xi_2}{\sqrt{d} t\eta})\|w_i^{(t)}\|_2 \) at all \(t \in [\frac{d\polylog(d)}{\eta}, \frac{d^{1.498}}{\eta}]\). Therefore we can use the results of all \(\Psi_{i',j'}^{(t)} \), where \(j'\neq j, j'\in[d]\) to get (combined with \myref{fact:activation-3}{Fact})
    \begin{align*}
        |H_1| \leq \widetilde{O}(\max_{j'\neq j, j'\in[d]}\Psi_{i,j}^{(t)} ) \leq \widetilde{O}(\frac{\Xi_2^2}{d^2})\|w_i^{(t)}\|_2)
    \end{align*}
    For iterations \(t \geq \frac{d^{1.498}}{\eta}\), the proof is essentially the same: we only need to notice that the difference \(\Psi_{i,j}^{(t)} - \lambda \vbrack{w_i^{(t)},\M_j}\) here will bounce around zero, while the compensation terms in \(H_1\) are bounded by \(\widetilde{O}(\frac{\|w_i^{(t)}\|_2}{d^{1.98}})\). These observations indeed prove the case \(i \in \Mcal_j^{\star}\). When \(i \in \Mcal_j \setminus \Mcal_j^{\star}\), notice that with prob \(\leq \widetilde{O}(\frac{1}{d})\) it holds \(\1_{|\vbrack{|w_i^{(t)},x}|\geq b_i^{(t)}} = \1_{z_j\neq 0}\) for any \(x \in \Bfr\). Now we expand 
    \begin{align*}
        H_1 &= \sum_{j' \in \N_i , j' \neq j}\vbrack{w_i^{(t)},\M_{j'}}\E\left[\Bigg(  (1 - \ell'_{p,t})\cdot z_{p,j'} + \sum_{x_{n,s} \in \Nfr}\ell'_{s,t}\cdot z_{n,s,j'}\1_{z_{n,s,j}\neq 0} \Bigg)\1_{|\vbrack{w_i^{(t)},x_{p}^+}|\geq b_i^{(t)}}z_{p,j}\right]  \\
        & \quad + \sum_{j' \notin \N_i , j' \neq j}\vbrack{w_i^{(t)},\M_{j'}}\E\left[\Bigg(  (1 - \ell'_{p,t})\cdot z_{p,j'} + \sum_{x_{n,s} \in \Nfr}\ell'_{s,t}\cdot z_{n,s,j'}\1_{z_{n,s,j}\neq 0} \Bigg)\1_{|\vbrack{w_i^{(t)},x_{p}^+}|\geq b_i^{(t)}}z_{p,j}\right] 
    \end{align*}
    Indeed, the event that there are some \(j' \in \N_i\) (which means \(i \in \Mcal_j\)) such that \(z_{p,j'}\neq 0\) has probability \(\leq \widetilde{O}(\frac{1}{d})\), Thus the first term on the RHS is trivially bounded by \(\widetilde{O}(\frac{1}{d^2})\|w_i^{(t)}\|_2\). For the second term of \(H_1\), we can again go through similar procedure as above to obtain that 
    \begin{align*}
        \E\left[\Bigg(  (1 - \ell'_{p,t}) z_{p,j'} + \sum_{x_{n,s} \in \Nfr}\ell'_{s,t} z_{n,s,j'}\1_{z_{n,s,j}\neq 0} \Bigg)\1_{|\vbrack{w_i^{(t)},x_{p}^+}|\geq b_i^{(t)}}z_{p,j}\right] \leq \max\{\widetilde{O}(\frac{\sqrt{\Xi_2}}{\sqrt{d}t\eta}),\frac{1}{d^{1.99}}\}\|w_i^{(t)}\|_2.
    \end{align*}
    Then again we have 
    \begin{align*}
        |H_1| \leq \widetilde{O}(\max_{j'\neq j, j'\in[d]}\Psi_{i,j}^{(t)} ) \leq \widetilde{O}(\max\{ \frac{\Xi_2^2}{d^2}), \frac{\Xi_2}{\sqrt{d}t\eta}\})\|w_i^{(t)}\|_2,
    \end{align*}
    which can be combine with the bound for \(H_2\) to conclude the proof.
\end{proof}

\begin{proof}[Proof of \myref{induct-3}{Induction Hypothesis}]
    First we need to prove all the induction hypothesis hold for \(t = T_2\). Indeed, (1), (4), (5), (6), (7) is valid at \(T_2\) from \myref{lem:activation-size-3}{Lemma} and \myref{thm:2nd-stage}{Theorem}; (2) and (3) holds at \(T_2\) obviously. Now suppose it hold for some \(t \geq T_2\), we will prove that it still hold for \(t+1\). We first deal with the case where \(j \in [d]\) and \(i \notin \Mcal_j\), where it holds that 
    \begin{align*}
        \vbrack{w_i^{(t+1)},\M_j} &= \vbrack{w_i^{(t)},\M_j}(1 - \eta\lambda) + \eta \E[ h_{i,t}(x_p^{++})\1_{\vbrack{w_i^{(t)},x_p^{+}}\geq b_i^{(t)}}\vbrack{x_p^{+},\M_j}] \\
        &\quad - \E\left[\sum_{x_{n,s} \in \Nfr}\ell'_{s,t}\cdot h_{i,t}(x_{n,s})\1_{\vbrack{w_i^{(t)},x_p^+}\geq b_i^{(t)} }\vbrack{x_p^{+},\M_j}\right] \pm \frac{\eta}{\poly(d_1)}
    \end{align*}
    In this case, to calculate the expectation, we need to use \myref{lem:positive-gd-sparse-2}{Lemma}, \myref{lem:Err_1}{Lemma} and \myref{lem:Err_2}{Lemma}. First we compute the probability of events \(A_1--A_4\) by using \myref{lem:activation-size}{Lemma}, \myref{lem:activation-size-2}{Lemma}, \myref{lem:activation-size-3}{Lemma} and our induction hypothesis to obtain
    \begin{displaymath}
        \Pr(A_1), \Pr(A_2)\leq \frac{1}{\poly(d)^{\Omega(\log d)}}
    \end{displaymath}
    which implies
    \begin{align*}
        L_1, L_2 \leq \frac{1}{\poly(d)^{\Omega(\log d)}}
    \end{align*}
    Furthermore, from \myref{fact:activation-3}{Fact}, we also have
    \begin{align*}
        \E[z_j^2\1_{|\vbrack{w_i^{(t)},x_p^+}|\geq b_i^{(t)} + |\vbrack{w_i^{(t)},x_p^+ - x_p}|}] \leq \frac{1}{\poly(d)^{\Omega(\log d)}}
    \end{align*}
    Now we further take into considerations \myref{lem:positive-gd-noise-2}{Lemma}, \myref{lem:Err_1}{Lemma} and \myref{lem:Err_2}{Lemma}. We can obtain 
    \begin{align*}
        |\vbrack{w_i^{(t+1)},\M_j}| &\leq \vbrack{w_i^{(t)},\M_j}(1 - \eta\lambda) + \widetilde{O}(\frac{\Xi_2^2\|w_i^{(t)}\|_2}{d^2})  \pm \frac{\eta}{\poly(d_1)}
    \end{align*}
    Indeed, since we have chosen learning rate \(\eta = \frac{1}{\poly(d)}\) and \(\lambda \in [\frac{1}{d^{1.01}},\frac{1}{d^{1.49}}]\), it is easy to prove (5) as follows:
    \begin{itemize}
        \item For \(i \notin \Mcal_j\), \(|\vbrack{w_i^{(t)},\M_j}|\leq O(\frac{\|w_i^{(t)}\|_2}{\sqrt{d}\Xi_2^5})\): This is easy since by using \myref{lem:positive-gd-sparse-2}{Lemma}, \myref{lem:Err_1}{Lemma} and \myref{lem:Err_2}{Lemma}, we can prove the following inequality by contradiction\footnote{Indeed, if the \(|\vbrack{w_i^{(t)},\M_j}| \geq \Omega(\cdot\frac{\|w_i^{(t)}\|_2}{\sqrt{d}\Xi_2^5})\) at some iteration \(t\), then by our choice of \(\lambda\) and the calculation of \(\Psi_{i,j}^{(t)}\) using \myref{lem:positive-gd-sparse-2}{Lemma}, the gradient sign of \(\vbrack{w_i^{(t)},\M_j} \) will be opposite to itself.}
        \begin{align*}
            |\vbrack{w_i^{(t)},\M_j}| \leq |\vbrack{w_i^{(t-1)},\M_j}|(1 + \frac{\eta}{d^2} - \eta\lambda) +  \widetilde{O}(\frac{\eta \Xi_2^2}{d^2})\|w_i^{(t)}\|_2 \leq \cdots \leq O(\frac{\|w_i^{(t)}\|_2}{\sqrt{d}\Xi_2^5})
        \end{align*} 
    \end{itemize}
    Now we begin to prove (6). For all \(i \in [m]\), we have \(\max_{j\in[d_1]\setminus [d]}|\vbrack{w_i^{(t)},\Mperp_j}|\leq O(\frac{\|w_i^{(t)}\|_2}{\sqrt{d_1}\Xi_2^5})\) at iteration \(t=T_2\); Now, by expanding the gradient updates of \(\vbrack{w_i^{(t)},\Mperp_j}\), we can see that 
    \begin{align*}
        |\vbrack{w_i^{(t+1)},\Mperp_j}| &\leq |\vbrack{w_i^{(t)},\Mperp_j}|(1 - \eta \lambda) + |\Psi_{i,j}^{(t)}| + |\Phi_{i,j}^{(t)}| + |\Ecal_{1,i,j}^{(t)}| + |\Ecal_{2,i,j}^{(t)}|\\
        & \leq |\vbrack{w_i^{(t)},\Mperp_j}|(1 - \eta \lambda) + \widetilde{O}(\frac{\Xi_2^5}{\sqrt{d_1}d^{1.5}})\|w_i^{(t)}\|_2 + \widetilde{O}(\frac{\Xi_2^5}{\tau \sqrt{d_1}d^{2}})\|w_i^{(t)}\|_2
    \end{align*}
    where the last inequality are obtained as follows: first from \myref{lem:Err_1}{Lemma} we simply have \(|\Ecal_{1,i,j}^{(t)}| \leq \widetilde{O}(\frac{\Xi_2^5}{\sqrt{d_1}d^{1.5}})\|w_i^{(t)}\|_2 \), then from \myref{lem:Err_2}{Lemma} we have 
    \begin{align*}
        |\Ecal_{2,i,j}^{(t)}| &\leq O\left(\frac{\|w_{i}^{(t)}\|_2\Xi_2^2}{d^2\tau}\right)\cdot\max_{i'\in [m]}\left(|\vbrack{w_{i'}^{(t)},\Mperp_j}| + \frac{\|w_{i'}^{(t)}\|_2}{\sqrt{d_1}} \right) \\
        &\leq \widetilde{O}(\frac{\Xi_2^5}{\tau \sqrt{d_1}d^{2}})\|w_i^{(t)}\|_2 \tag{since \(\max_{i'\in [m]}|\vbrack{w_{i'}^{(t)},\Mperp_j}| \leq \O(\frac{1}{\sqrt{d_1}\Xi_2^5})\) from induction}
    \end{align*}
    After (5) and (6) are proven, it is easy to observe (1) is true at \(t\). Below we shall prove (2), (3) and (4), after which (7) can be also trivially proven. Indeed, (2) is a corollary of (3) and (4), since if \(\F_j^{(t)} \leq O(\tau\log d)\) and (4) holds, we simply have 
    \begin{align*}
        \|w_i^{(t)}\|_2^2 &= \sum_{j \in \N_i}\vbrack{w_i^{(t)},\M_j}^2 + \sum_{j \notin \N_i, j \in [d]}\vbrack{w_i^{(t)},\M_j}^2 + \sum_{j \in [d_1]\setminus [d]}\vbrack{w_i^{(t)},\Mperp_j}^2 \\
        & \leq \sum_{j \in \N_i}\vbrack{w_i^{(t)},\M_j}^2 + O(d)\cdot O(\frac{\|w_i^{(t)}\|_2^2}{d\Xi_2^{10}}) + O(d_1)\cdot O(\frac{\|w_i^{(t)}\|_2^2}{d_1\Xi_2^{10}})\\
        & \leq \sum_{j \in \N_i}\vbrack{w_i^{(t)},\M_j}^2 + o(\frac{1}{\Xi_2^{10}}\|w_i^{(t)}\|_2^2)
    \end{align*}
    which implies (2). Thus we only need to prove (3) and (4). Indeed, for (3), letting \(i \in \Mcal_j\), we proceed as follows: we first write the updates of \(\vbrack{w_i^{(t)},\M_j}\) as 
    \begin{align*}
        \vbrack{w_i^{(t+1)},\M_j} &= \vbrack{w_i^{(t)},\M_j}(1 - \eta \lambda) + \Psi_{i,j}^{(t)} + \Phi_{i,j}^{(t)} + \Ecal_{1,i,j}^{(t)} + \Ecal_{2,i,j}^{(t)}\\
        & =\vbrack{w_i^{(t)},\M_j}(1 - \eta \lambda) + \Psi_{i,j}^{(t)} + \widetilde{O}(\frac{\Xi_2^2}{d^2})\|w_i^{(t)}\|_2
    \end{align*}
    where the last inequality comes from again from \myref{lem:Err_1}{Lemma} and \myref{lem:Err_2}{Lemma}. Now suppose for some \(t\) we have \(\F_j^{(t)}\geq \Omega(\tau \log^2 d)\), by \myref{lem:signal-upper}{Lemma}, we have 
    \begin{align*}
        \vbrack{w_i^{(t+1)},\M_j} = \vbrack{w_i^{(t)},\M_j}(1 +  \frac{1}{\poly(d)} - \eta \lambda) + \widetilde{O}(\frac{\Xi_2^2}{d^2})\|w_i^{(t)}\|_2 \leq \vbrack{w_i^{(t)},\M_j}(1 +  \frac{1}{\poly(d)} - \eta \lambda/2)
    \end{align*}
    which means that \(\vbrack{w_i^{(t+1)},\M_j} \leq \vbrack{w_i^{(t)},\M_j}\). This in fact gives \(\F_j^{(t+1)}\leq \F_j^{(t)}\), so that (3) is proven. 
    
    Now for (4), we need to induct as follows: for \(t \leq T_{j}':= \frac{d\log d}{\eta\log\log d}\) which is the specific iteration when \(\F_j^{(t)} \geq G_1\tau\log d\), where \(G_1\) is defined in \myref{lem:signal-lowerbound}{Lemma}. The induction of (4) follows from similar proof in \myref{thm:2nd-stage}{Theorem}. After \(T_{j}'\), we discuss as follows
    \begin{itemize}
        \item When \(t \in [T_{j}', \frac{d^{1.49}}{\eta}]\), from above calculations, for each \(i' \in \Mcal_j\), we have 
        \begin{align*}
            \frac{|\vbrack{w_i^{(t+1)},\M_j}|}{|\vbrack{w_{i'}^{(t+1)},\M_j}|} = \frac{|\vbrack{w_i^{(t)},\M_j}|(1 - \eta \lambda) + \eta\Psi_{i,j}^{(t)} \pm O(\frac{\sqrt{\Xi_2}}{t\sqrt{d}})\|w_i^{(t)}\|_2 }{|\vbrack{w_{i'}^{(t)},\M_j}|(1 - \eta \lambda) + \eta \Psi_{i',j}^{(t)} \pm O(\frac{\sqrt{\Xi_2}}{t\sqrt{d}})\|w_{i'}^{(t)}\|_2}
        \end{align*}
        On one hand, for those \(i' \in \Mcal_j\) such that \(|\vbrack{w_{i'}^{(t)},\M_j}| \leq b_i^{(t)}\Xi_2^2\leq O(\frac{\Xi_2^2}{\sqrt{d}}\|w_i^{(t)}\|_2)\), we can safely get \(|\vbrack{w_i^{(t+1)},\M_j}| \gg |\vbrack{w_{i'}^{(t+1)},\M_j}|\). On the other hand, if \(|\vbrack{w_{i'}^{(t)},\M_j}| \geq b_i^{(t)}\Xi_2^2\), then we have 
        \begin{align*}
            \left|\frac{\Psi_{i,j}^{(t)}}{\vbrack{w_{i}^{(t)},\M_j}|} - \frac{\Psi_{i',j}^{(t)}}{\vbrack{w_{i'}^{(t)},\M_j}} \right| \leq O(\frac{\Xi_2}{t\sqrt{d}\eta}b_i^{(t)})
        \end{align*}
        Thus by letting \(\widetilde{\Psi}_j :=  \frac{\Psi_{i,j}^{(t)}}{\vbrack{w_{i}^{(t)},\M_j}}\), then 
        \begin{align*}
            \frac{|\vbrack{w_i^{(t+1)},\M_j}|}{|\vbrack{w_{i'}^{(t+1)},\M_j}|} = \frac{|\vbrack{w_i^{(t)},\M_j}|(1 + \eta \widetilde{\Psi}_j^{(t)} - \eta \lambda) \pm O(\frac{\sqrt{\Xi_2}}{t\sqrt{d}}) \|w_{i}^{(t)}\|_2}{|\vbrack{w_{i'}^{(t)},\M_j}|(1 + \eta \widetilde{\Psi}_j^{(t)} - \eta \lambda) \pm O(\frac{\sqrt{\Xi_2}}{t\sqrt{d}})\|w_{i'}^{(t)}\|_2}
        \end{align*}
        Since at iteration \(t \in [T_{j}', \frac{d^{1.49}}{\eta}]\), it is easy to obtain that \(|\widetilde{\Psi}_j^{(t)} - \lambda| \leq O(\frac{\Xi_2}{\eta t})\).\footnote{The techniques for proving this is extremely similar to the upper bound \(\widetilde{\Psi}_j^{(t)} \leq O(\frac{\Xi_2}{t\eta\sqrt{d}})\) in the proof of \myref{lem:reduction-Phi-to-Psi}{Lemma}. Indeed, one can assume at some iteration \(|\widetilde{\Psi}_j^{(t)} - \lambda| \geq \Omega(\frac{\Xi_2}{\eta t})\), and then proceed to find our that after some iterasions \(\tilde{t} = \Theta(\frac{t}{\sqrt{\Xi_2}})\), \(|\widetilde{\Psi}_j^{(t)} - \lambda|\) will decrease to \(|\widetilde{\Psi}_j^{(t)} -\lambda|\leq \frac{\sqrt{\Xi_2}}{\eta t}\), or otherwise the presumption collapse.} Thus we have 
        \begin{align*}
            \frac{|\vbrack{w_i^{(t+1)},\M_j}|}{|\vbrack{w_{i'}^{(t+1)},\M_j}|} &\geq \frac{|\vbrack{w_i^{(t)},\M_j}|(1 + \eta (\widetilde{\Psi}_j^{(t)} - \lambda)(1 - \frac{\Xi_2^2}{\sqrt{d}}) )}{|\vbrack{w_{i'}^{(t)},\M_j}|(1 + \eta( \widetilde{\Psi}_j^{(t)} - \lambda)(1 + \frac{\Xi_2}{\sqrt{d}}) )} \geq (1 - \frac{\Xi_2}{t\sqrt{d}}) \frac{|\vbrack{w_i^{(t)},\M_j}|}{|\vbrack{w_{i'}^{(t)},\M_j}|}\\
            &\geq \prod_{t'=T'_j}^{t-1} (1 - O(\frac{\Xi_2^2}{t'\sqrt{d}}))\frac{|\vbrack{w_i^{(T'_j)},\M_j}|}{|\vbrack{w_{i'}^{(T'_j)},\M_j}|} \geq \Omega(1)
        \end{align*}
        where in the last inequality we have used our induction hypotheis at \(T'_j\).
        \item The proof for iterations \(t \in [\frac{d^{1.49}}{\eta}, \frac{d^{1.99}}{\eta}]\) is largely similar to the above. The only difference here is that we relies on a slightly different comparison here: Indeed, we have 
        \begin{align*}
            \frac{|\vbrack{w_i^{(t+1)},\M_j}|}{|\vbrack{w_{i'}^{(t+1)},\M_j}|} = \frac{|\vbrack{w_i^{(t)},\M_j}|(1 +\eta\widetilde{\Psi}_{j}^{(t)} - \eta \lambda) \pm O(\frac{\Xi_2}{d^2})\|w_i^{(t)}\|_2 }{|\vbrack{w_{i'}^{(t)},\M_j}|(1 +\eta\widetilde{\Psi}_{j}^{(t)} - \eta \lambda) \pm O(\frac{\Xi_2}{d^2})\|w_{i'}^{(t)}\|_2}
        \end{align*}
        Here we can use similar techniques as above to require \(|\widetilde{\Psi}_{j}^{(t)} - \lambda| \leq \frac{\Xi_2}{t\eta}\). Now the we also have 
        \begin{align*}
            \frac{|\vbrack{w_i^{(t+1)},\M_j}|}{|\vbrack{w_{i'}^{(t+1)},\M_j}|} &\geq \frac{|\vbrack{w_i^{(t)},\M_j}|(1 + \eta (\widetilde{\Psi}_j^{(t)} - \lambda)(1 - \frac{\Xi_2^2}{\sqrt{d}}) )}{|\vbrack{w_{i'}^{(t)},\M_j}|(1 + \eta (\widetilde{\Psi}_j^{(t)} - \lambda)(1 + \frac{\Xi_2^2}{\sqrt{d}}) )} \geq (1 - \frac{\Xi_2^2}{t\sqrt{d}}) \frac{|\vbrack{w_i^{(t)},\M_j}|}{|\vbrack{w_{i'}^{(t)},\M_j}|}\\
            &\geq \prod_{t'= d^{1.49}/\eta}^{t-1} (1 - \frac{\Xi_2^2}{t'd^{0.01}})\frac{|\vbrack{w_i^{(d^{1.49}/\eta)},\M_j}|}{|\vbrack{w_{i'}^{(d^{1.49}/\eta)},\M_j}|} \geq \Omega(1)
        \end{align*}
    \end{itemize}
    Now (4) are proven. (7) is an immediate result of our update scheme.
\end{proof}

\begin{definition}[optimal learner]\label{def:optimal}
    We define a learner network that we deem as the ``optimal'' feature map for this task. Let \(\kappa > 0\), we define \(\theta^{\star} := \{\theta^{\star}_i\}_{i\in[m]}\) as follows: \( \theta_i^{\star} = \frac{\sqrt{\tau}\kappa}{|\Mcal_j^{\star}|}\cdot\M_j\cdot \sign(\vbrack{w_i^{(T_2)},\M_j})\) if \(i \in \Mcal_j^{\star}\), and \(\theta^{\star}_i = 0\) if \(i \notin \cup_{j\in[d]}\Mcal_j^{\star}\). Furthermore, we define the optimal feature map \(f_{t}^{\star}\) as follows: for \(i\in[m]\), the \(i\)-th neuron of \(f_{t,\theta}\) given weight \(\theta_i \in \R^{d_1}\) is
    \begin{align*}
        f_{t,\theta,i}(x) = \left( \vbrack{\theta_i,x} - b_i \right)\1_{\vbrack{w_i^{(t)},x}\geq b_i} - \left( -\vbrack{\theta_i,x} - b_i \right)\1_{-\vbrack{w_i^{(t)},x}\geq b_i}
    \end{align*}
    and \(f_{t,\theta}\) is just \(f_{t,\theta}(\cdot) = (f_{t,\theta_1}(\cdot),\dots,f_{t,\theta,m}(\cdot) )^{\top}\).
\end{definition}

Now in order to obtain the loss convergence result in \myref{thm:convergence}{Theorem}, we need the following lemma, which characterize the how well the optimal learner perform evaluated by a pseudo objective.

\begin{lemma}[optimality]\label{lem:optimal}
    Let \(\{\theta_i^{\star}\}_{i\in[m]}\) and \(f_{t,\theta}\) be defined as in \myref{def:optimal}{Definition}, when \myref{induct-3}{Induction Hypothesis} holds, defining a pseudo loss function by 
    \begin{align*}
        \widetilde{L}(f_{t,\theta^{\star}},f_t) := \E\left[ - \tau \log\left( \frac{e^{\vbrack{f_{t,\theta}(x_p^+),f_t(x_p^{++})}/\tau}}{\sum_{x \in \Bfr}e^{\vbrack{f_{t,\theta}(x_p^+),f_t(x)}/\tau}}\right) \right]
    \end{align*}
    then by choosing \(\kappa = \Theta(\Xi_2)\), and suppose \(\sum_{i\in\Mcal^{\star}_j}|\vbrack{w_i^{(t)},\M_j}|\geq \Omega(\frac{\sqrt{\tau}}{\Xi_2})\), we have loss guarantee:
    \begin{displaymath}
        \widetilde{L}(f_{t,\theta^{\star}},f_t)  \leq O\left(\frac{1}{\log d}\right)
    \end{displaymath} 
\end{lemma}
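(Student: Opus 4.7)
The plan is to lower-bound the positive-pair similarity $P := \vbrack{f_{t,\theta^\star}(x_p^+),f_t(x_p^{++})}$, upper-bound each negative similarity $N_s := \vbrack{f_{t,\theta^\star}(x_p^+),f_t(x_{n,s})}$, and then combine via the identity
\[
\widetilde L(f_{t,\theta^\star},f_t) \;=\; \tau\,\E\log\!\Big(1 + \sum_{x_{n,s}\in\Nfr}e^{(N_s-P)/\tau}\Big).
\]

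\textbf{Step 1 (positive similarity).} For each $i\in\Mcal_j^\star$, \myref{fact:activation-3}{Fact} together with \myref{induct-3}{Induction Hypothesis} identifies the gate $\1_{|\vbrack{w_i^{(t)},x_p^+}|\geq b_i^{(t)}}$ with $\1_{z_{p,j}\cdot\sign(\vbrack{w_i^{(t)},\M_j})>0}$ up to an event of probability $e^{-\Omega(\log^{1/4}d)}$. On the good event, \myref{lem:activation-size}{Lemma} yields $\vbrack{\theta_i^\star,x_p^+}\approx \tfrac{\sqrt\tau\kappa}{|\Mcal_j^\star|}\sign(\vbrack{w_i^{(T_2)},\M_j})z_{p,j}$ and $h_{i,t}(x_p^{++})\approx \vbrack{w_i^{(t)},\M_j}z_{p,j}$, and since the sign of $\vbrack{w_i^{(t)},\M_j}$ is preserved across stages (by \myref{induct-3}{Induction Hypothesis}) the product has the correct sign. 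Summing over $i\in\Mcal_j^\star$ and active $j$ gives
\[
P \;\gtrsim\; \sum_{j:\,z_{p,j}\neq 0}\frac{\sqrt\tau\kappa}{|\Mcal_j^\star|}\,z_{p,j}^2\sum_{i\in\Mcal_j^\star}\bigl|\vbrack{w_i^{(t)},\M_j}\bigr|.
\]
Plugging in the hypothesized lower bound on $\sum_i|\vbrack{w_i^{(t)},\M_j}|$ and using $\kappa=\Theta(\Xi_2)$ together with $|\Mcal_j^\star|\leq\Xi_2$ from \myref{lem:property-init}{Lemma}, the appropriate choice of constants in $\kappa$ makes the contribution of a single active latent $\Omega(\tau\log d)$.

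\textbf{Step 2 (negatives).} Independence of $z_{n,s}$ and $z_p$ implies their active-latent supports are disjoint with probability $1-\widetilde O(1/d)$ per negative. On the disjoint-support event, $f_{t,\theta^\star,i}(x_p^+)h_{i,t}(x_{n,s})$ can be non-zero only via neurons $i\in\Mcal_j^\star\cap\Mcal_l$ with $j\neq l$; the singleton property \myref{induct-3}{Induction Hypothesis}(4--5) then forces the cross-terms to be $\widetilde O(\|w_i^{(t)}\|_2/(\sqrt d\,\Xi_2^5))$, whence $|N_s|=o(P)$. When the supports do overlap, $N_s$ can be comparable to a single $j$-slice of $P$, but a union bound over the $|\Nfr|=\poly(d)$ negatives controls the total overlap probability.

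\textbf{Step 3 (softmax).} Decompose $\widetilde L$ over the event $\mathcal G := \{x_p\ \text{has an active latent, and no}\ x_{n,s}\ \text{shares an active latent with}\ x_p\}$. On $\mathcal G$ Steps 1--2 yield $P-\max_sN_s\geq \Omega(\tau\log d)$, so $\tau\log(1+\sum_s e^{(N_s-P)/\tau})=O(1/\log d)$ pointwise. On $\mathcal G^c$ bound the instantaneous loss crudely by $\tau\log|\Bfr|=O(\tau\log d)$. Using $\Pr(z_p\equiv 0)=(1-C_z\log\log d/d)^d$ being polylogarithmically small (by the choice of $C_z$) and the overlap probability from Step 2, $\Pr(\mathcal G^c)=o(1/(\tau\log^2 d))$, so the $\mathcal G^c$ contribution to $\widetilde L$ is also $O(1/\log d)$; summing finishes the proof.

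\textbf{Main obstacle.} The sharpest point is matching the positive-pair lower bound to the $\tau\log|\Bfr|$ softmax margin; this is what dictates $\kappa=\Theta(\Xi_2)$, so as to cancel the $1/|\Mcal_j^\star|$ factor built into $\theta_i^\star$ using $|\Mcal_j^\star|\leq \Xi_2$ and then exploit the feature-sum hypothesis. A subsidiary delicate point is that the gates are driven by $w_i^{(t)}$ while the readouts go through $\theta_i^\star$; the concentration results \myref{lem:activation-size-2}{Lemma} and \myref{lem:activation-size-3}{Lemma} are needed to keep gate mismatches (for instance from the dense noise $\xi_p$ spuriously triggering or suppressing a gate) from producing large spurious contributions to either $P$ or $N_s$.
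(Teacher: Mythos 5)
Your overall architecture — lower-bound the positive similarity via the shared active latents, kill the negatives via support disjointness, then bound the log-sum-exp, with a crude $\tau\log|\Bfr|$ bound on the bad event — is the same as the paper's. But there is a genuine gap in your Step 2/3. You define $\mathcal G$ to require that \emph{no} negative shares an active latent with $x_p$, and claim $\Pr(\mathcal G^c)$ is controlled by a union bound over the $|\Nfr|=\poly(d)$ negatives. The per-negative overlap probability is only $\widetilde O(1/d)$ (the positive has $\Theta(\log\log d)$ active latents, each matched by a given negative with probability $\Theta(\frac{\log\log d}{d})$), so the union bound gives $|\Nfr|\cdot\widetilde O(1/d)$, which is not small — indeed $\Pr(\mathcal G^c)\to 1$ — once $|\Nfr|\gg d$, and the theorem allows $|\Nfr|$ to be an arbitrary $\poly(d)$. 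The paper does not attempt to exclude overlaps; instead it shows via Bernoulli concentration that, uniformly over all negatives, each $x_{n,s}$ matches at most a constant fraction $C$ of $x_p$'s active latents, i.e.\ $\sum_{j}\1_{z_{n,s,j}=z_{p,j}}\leq C\sum_j\1_{z_{p,j}\neq 0}$, so that every negative (overlapping or not) still pays a margin proportional to the $(1-C)$ fraction of unshared latents. Without some version of this relative bound your argument collapses for large $|\Nfr|$, because conditioning on $\mathcal G^c$ and paying $\tau\log|\Bfr|$ there is no longer a small correction.

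A secondary issue is the arithmetic in your Step 1. The per-active-latent contribution to $P$ is $\frac{\sqrt\tau\kappa}{|\Mcal_j^{\star}|}\sum_{i\in\Mcal_j^{\star}}|\vbrack{w_i^{(t)},\M_j}|$; with $\kappa=\Theta(\Xi_2)$, $|\Mcal_j^{\star}|\leq\Xi_2$, and only the hypothesized bound $\sum_{i\in\Mcal_j^{\star}}|\vbrack{w_i^{(t)},\M_j}|\geq\Omega(\sqrt\tau/\Xi_2)$, you get $\Omega(\tau/\Xi_2)$, not the $\Omega(\tau\log d)$ you assert, and $\Omega(\tau/\Xi_2)$ divided by $\tau$ does not beat the $\log|\Bfr|$ entropy of the softmax. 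Reaching an $\Omega(\tau\log d)$ margin per latent requires the stronger, post-convergence scale $\F_j^{(t)}=\Theta(\tau\log d)$ (so $\sum_{i\in\Mcal_j^{\star}}|\vbrack{w_i^{(t)},\M_j}|\approx\sqrt{|\Mcal_j^{\star}|\tau\log d}$) rather than the literal hypothesis; you should either invoke that or make explicit which bound you are actually using, since as written the constants do not close.
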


\begin{proof}
    The proof of (a) follows from the fact that \myref{induct-3}{Induction Hypothesis} holds at iteration \(t\geq T_3\), and also from \myref{lem:activation-size}{Lemma} and \myref{lem:activation-size-3}{Lemma}. (b) can be proven via the following calculations:
    \begin{align*}
        \widetilde{L}(f_{t,\theta^{\star}},f_t) = \E\left[- \tau \log\left( \frac{e^{\vbrack{f_{t,\theta^{\star}}(x_p^+),f_t(x_p^{++})}/\tau}}{\sum_{x \in \Bfr}e^{\vbrack{f_{t,\theta^{\star}}(x_p^+),f_t(x)}/\tau}}\right) \right]
    \end{align*}
    where the second inequality follows from the high probability bound (a). Now by using Bernoulli concentration, we know that whenever \(\sum_{j\in[d]}\1_{z_{p,j}\neq 0} = \Omega(\log\log d)\) (which happens with constant probability), we have 
    \begin{align*}
        \sum_{j\in[d]}\1_{z_{n,s,j}=z_{p,j}}\leq C\sum_{j\in[d]}\1_{z_{p,j}\neq 0} \tag{with prob \(\geq  1- \frac{1}{d^{\Omega(\log\log d)}}\) for all \(x_{n,s} \in \Nfr\)}
    \end{align*}
    And also from \myref{def:optimal}{Definition} we know that if for some \(j\in[d]\), \(z_{p,j} = z_{n,s,j}\), then
    \begin{align*}
        \sum_{i\in\Mcal_j^{\star}} \left(f_{t,\theta^{\star},i}(x_p^+)h_{i,t}(x_{n,s}) - f_{t,\theta^{\star},i}(x_p^+)h_{i,t}(x_p^{++}) \right) \geq \kappa \tau \sum_{j\in\Mcal_j^{\star}}|\vbrack{w_i^{(t)},\M_j}| + O(\frac{1}{\log d})
    \end{align*}
    which can be obtained by similar calculations in \myref{lem:signal-lowerbound}{Lemma}. Noticing that the event \(z_p \neq 0\) happens with prob \(\geq 1 - \frac{1}{\polylog(d)}\), we have 
    \begin{align*}
        &\quad \,\widetilde{L}(f_{t,\theta^{\star}},f_t) \\
        &  \leq (1 - \frac{1}{\polylog(d)})\E\left[\log\Bigg(\sum_{x \in \Bfr}e^{\vbrack{f_{t,\theta^{\star}}(x_p^+),f_{t}(x)}/\tau - \vbrack{f_{t,\theta^{\star}}(x_p^+),f_{t}(x_{p}^{++})}/\tau}\Bigg)\,\Big|\, z_p\neq 0 \right] \\
        &\quad + \Pr(z_p = 0)\cdot O(\log |\Bfr|) \\
        & \leq (1 - \frac{1}{\polylog(d)})\E\left[\log\Bigg(\sum_{x \in \Bfr}e^{\sum_{j\in[d]}\sum_{i\in\Mcal_j^{\star}}(f_{t,\theta^{\star},i}(x_p^+)h_{i,t}(x) - f_{t,\theta^{\star},i}(x_p^+)h_{i,t}(x_p^{++}))/\tau} \Bigg)\,\Big|\, z_p\neq 0\right]\\
        &\quad + \Pr(z_p = 0)\cdot O(\log |\Bfr|) \\
        & = (1 - \frac{1}{\polylog(d)})\E\left[\log\Bigg(1 + \sum_{x_{n,s} \in \Nfr}e^{- \sum_{j\in\Mcal_j^{\star}}|\vbrack{w_i^{(t)},\M_j}|(z_{p,j}^2 - z_{p,j}z_{n,s,j}) + O(\frac{1}{\log d}))}\Bigg)\,\Big|\, z_p\neq 0\right] + O(\frac{1}{\log d})\\
        &\leq O(\frac{1}{\log d}) 
    \end{align*}
    where the last inequality combines the Bernoilli concentration results of \(\sum_{j\in[d]}z_{p,j}z_{n,s,j}\) and a union bound for all \(s\in[\Nfr]\), and that \(|\vbrack{w_i^{(t)},\M_j}| \geq \Omega(\frac{\sqrt{\tau}}{\Xi_2})\).
\end{proof}

\begin{proof}[Proof of \myref{thm:convergence}{Theorem}] 
    Due to \myref{induct-3}{Induction Hypothesis}, we know that as long as training goes on, the neural network will learn the desired features with sparse representations. As a complement of the conditions in \myref{induct-3}{Induction Hypothesis}, we notice that at some \(t \geq \Theta(\frac{d\log d}{\eta \log\log d})\), we have for all \(j \in [d]\), \(\F_j^{(t)} \geq \Omega(\tau \log d)\), using \myref{lem:signal-lowerbound}{Lemma}. This can be combined with \myref{induct-3}{Induction Hypothesis} (1) and (4) to show (a) \myref{thm:convergence}{Theorem}. Now we prove that it actually converge to the desired solutions, rather than bouncing around. Denote \(w^{(t)} = (w_1^{(t)},\dots,w_m^{(t)})\), since our update is \(w^{(t+1)} = w^{(t)} - \nabla_w\Obj(f_t) + \frac{1}{\poly(d_1)}\), we have 
    \begin{align*}
        \eta \vbrack{\nabla_w\Obj(f_t),w^{(t)} - \theta^{\star}} &= \frac{\eta^2}{2}\|\nabla\Obj(f_t)\|_F^2 + \frac{1}{2}\|w^{(t)} - \theta^{\star}\|_F^2 - \frac{1}{2}\|w^{(t+1)} - \theta^{\star}\|_F^2 + \frac{\eta^2}{\poly(d_1)}\\
        &\leq \eta^2 \poly(d)  + \frac{1}{2}\|w^{(t)} - \theta^{\star}\|_F^2 - \frac{1}{2}\|w^{(t+1)} - \theta^{\star}\|_F^2 + \frac{\eta^2}{\poly(d_1)}
    \end{align*}
    Now we will use the tools from online learning to obtain a loss guarantee: define a pseudo objective for parameter \(\theta\)
    \begin{displaymath}
        \widetilde{\Obj}_t(\theta) := \widetilde{L}(f_{t,\theta},f_t) + \frac{\lambda}{2}\sum_{i\in[m]}\|\theta_i\|_2^2 = \E\left[ - \tau \log\left( \frac{e^{\vbrack{f_{t,\theta}(x_p^+),f_t(x_p^{++})}/\tau}}{\sum_{x \in \Bfr}e^{\vbrack{f_{t,\theta}(x_p^+),f_t(x)}/\tau}}\right) \right] + \frac{\lambda}{2}\sum_{i\in[m]}\|\theta_i\|_2^2
    \end{displaymath}
    Which is a convex function over \(\theta\) since it is linear in \(\theta\). Moreover, we have \(\widetilde{\Obj}_t(w^{(t)}) = \Obj(f_t)\) and \(\nabla_{\theta_i}\widetilde{\Obj}_t(w_i^{(t)}) = \nabla_{w_i}\Obj(f_t)\), thus we have 
    \begin{align*}
        \eta \vbrack{\nabla_w\Obj(f_t),w^{(t)} - \theta^{\star}} &= \eta \vbrack{\nabla_{\theta}\widetilde{\Obj}_t(w^{(t)}),w^{(t)} - \theta^{\star}}\\
        &\geq \widetilde{\Obj}_t(w^{(t)}) - \widetilde{\Obj}_t(\theta^{\star})\\
        &\geq  \widetilde{\Obj}_t(w^{(t)}) - \E\left[ - \tau \log\left( \frac{e^{\vbrack{f_{t,\theta^{\star}}(x_p^+),f_t(x_p^{++})}/\tau}}{\sum_{x \in \Bfr}e^{\vbrack{f_{t,\theta^{\star}}(x_p^+),f_t(x)}/\tau}}\right) \right] + \frac{\lambda}{2}\sum_{i\in[m]}\|\theta_i^{\star}\|_2^2\\
        &\geq \widetilde{\Obj}_t(w^{(t)}) - O(\frac{1}{\log d}) - O\left(\sum_{i\in[m]}O(\lambda \|\theta_i^{\star}\|_2^2) + O(\frac{1}{d^{0.49}})\right) - \frac{O(\log d)}{\polylog(d)}\\ 
        & \geq \Obj(f_t) - O(\frac{1}{\log d})
    \end{align*}
    Now choosing \(\kappa = \Theta(\Xi_2)\leq \lambda/d\), and by a telescoping summation, we have 
    \begin{align*}
        \frac{1}{T}\sum_{t = T_3}^{T_3 + T - 1}\left(\Obj(f_t) - O\left(\frac{1}{\log d}\right)\right) \leq \frac{O(\|w^{(T_3+T)} - \theta^{\star}\|_F^2)}{T\eta} \leq O\left(\frac{m\Xi_2}{T\eta}\right)
    \end{align*}
    Since \(T\eta \geq m\Xi_2^{10}\), this proves the claim.
\end{proof}

The corollary in the main text can be proven via simple application of linear regression analysis. Because with high probability over polynomially many data \(\mathcal{Z}_{\sup} = \{x_i, y_i\}_{i\in [n_{sup}]}\) independently generated according to our definition \myref{def:sparse-coding}{Definition} and \myref{def:downstream-task}{Definition}, \(\{x_i\}_{i \in [n_{sup}]}\) form separable clusters w.r.t. their differences in the latent variables \(z_i\), which dictate their labels.


\section{Results for Learning Without Augmentations}

In this section we will sketch the proof when no augmentation is applied to the input data. Indeed, the analysis is similar but much easier compared to the case when augmentations are used. We present the first lemma below. 

\begin{lemma}[gradient for features, positive]\label{lem:gd-pos-sparse-NA}
    Let \(i\in[m]\) and \(v \in \{\M_j\}_{j\in[d]}\cup\{\Mperp_j\}_{j\in[d_1]\setminus [d]}\), when bias \(b_i = 0\) we have 
    \begin{align*}
        \E\left[h_{i}(x_p)\1_{|\vbrack{w_i,x_p}|\geq b_i}\vbrack{x_p,\M_j}\right] = \vbrack{w_i,\M_j}\E\left[\vbrack{x_p,v}^2\right]
    \end{align*}
    when \(b_i > 0\), we also have 
    \begin{align*}
        \E\left[h_{i}(x_p)\1_{|\vbrack{w_i,x_p}|\geq b_i}\vbrack{x_p,\M_j}\right] = \vbrack{w_i,\M_j}\E\left[\vbrack{x_p,v}^2\1_{|\vbrack{w_i,x_p}|\geq b_i}\right]
    \end{align*}
\end{lemma}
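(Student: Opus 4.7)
The proof splits naturally along the two cases for $b_i$.

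When $b_i = 0$, the symmetrized ReLU degenerates: $h_i(x) = \ReLU(\vbrack{w_i, x}) - \ReLU(-\vbrack{w_i, x}) = \vbrack{w_i, x}$, and the indicator $\1_{|\vbrack{w_i, x_p}| \geq 0}$ is identically one. The left-hand side therefore reduces to $\E[\vbrack{w_i, x_p}\vbrack{x_p, v}] = w_i^{\top} \Sigma_{x_p} v$, where $\Sigma_{x_p} = \E[x_p x_p^{\top}]$. Using $x_p = \M z + \xi$, the independence of the $z_j$'s, and the isotropy of $\xi$, a direct computation gives $\Sigma_{x_p} = \E[z_1^2]\,\M\M^{\top} + \sigma_{\xi}^2 \Id$. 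Both families $\{\M_j\}$ and $\{\Mperp_j\}$ lie in the eigenbasis of $\Sigma_{x_p}$, with eigenvalues $\E[z_j^2] + \sigma_{\xi}^2$ and $\sigma_{\xi}^2$ respectively, and each coincides with $\E[\vbrack{x_p, v}^2]$ for the relevant $v$. This immediately yields the identity.

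For $b_i > 0$, the plan is to decompose $\vbrack{w_i, x_p} = \alpha Y + X$, where $\alpha = \vbrack{w_i, v}$, $Y = \vbrack{x_p, v}$, and $X = \vbrack{w_i - \alpha v, x_p}$, and exploit two structural properties. First, $Y \perp X$: for $v = \Mperp_j$ the scalar $Y = \vbrack{\xi, \Mperp_j}$ is a Gaussian projection onto a direction orthogonal to the span of $\M$ and to all other $\Mperp_{j'}$; for $v = \M_j$ the decomposition $Y = z_j + \vbrack{\xi, \M_j}$ involves only the $j$-th sparse coordinate and the $j$-th projection of $\xi$ onto $\M$, both of which are independent of every summand appearing in $X$. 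Second, $Y$ is symmetric about $0$ in both cases. With these in hand, I would condition on $X$ and apply Stein's identity on the Gaussian part of $Y$ (yielding a factor $\alpha \cdot \sigma_Y^2$ multiplied by the conditional firing probability), and an odd/mean-value argument on the discrete $z_j$ part (using that $s \mapsto \E[h_i(s + \alpha G + X)]$ is odd in $s$, with $G$ the Gaussian piece of $Y$). Re-integrating over $X$ then converts the firing probability back into an expectation involving $\1_{|\vbrack{w_i, x_p}| \geq b_i}$, which together with $\alpha$ and $\E[\vbrack{x_p, v}^2]$ assembles into the claimed right-hand side.

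The main obstacle is that the activation $h_i$ carries a bias piece $-b_i \sign(\vbrack{w_i, x_p}) \1_{|\vbrack{w_i, x_p}| \geq b_i}$ in addition to the linear piece $\vbrack{w_i, x_p}\1$. Expanding $h_i(\alpha Y + X)\,Y$ yields $\alpha Y^2 \1 + XY\1 - b_i \sign(\alpha Y + X)\1 \cdot Y$, and neither of the last two summands is killed by the symmetry of $Y$ alone: the term $\E[XY\1]$ couples $X$ into the integrand through the indicator, and the $b_i$-term is even under $x_p \mapsto -x_p$. The repackaging step — using $Y \perp X$ together with the eigenvector property of $v$ to fold both cross-terms back into $\vbrack{w_i, v} \cdot \E[\vbrack{x_p, v}^2 \1]$ — is the technical crux, and it is where the bulk of the bookkeeping would lie.
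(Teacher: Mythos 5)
Your $b_i=0$ case is complete and correct: with zero bias the symmetrized ReLU degenerates to $h_i(x)=\vbrack{w_i,x}$, the indicator is identically one, and the claim reduces to the eigenvector computation $w_i^{\top}\E[x_px_p^{\top}]v=\vbrack{w_i,v}\,\E[\vbrack{x_p,v}^2]$, using $\E[x_px_p^{\top}]=\E[z_1^2]\M\M^{\top}+\sigma_{\xi}^2\Id$. This is exactly what the paper's one-line proof intends by deferring to \myref{lem:grad-positive-1}{Lemma} ``with no compensation term for augmentation,'' and your version is if anything more explicit.

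The $b_i>0$ case has a genuine gap, and it is precisely the step you label ``the technical crux'' and leave undone. Write $h_i(x_p)=T(\alpha Y+X)$ with $T(u)=(u-b_i\sign(u))\1_{|u|\ge b_i}$, $\alpha=\vbrack{w_i,v}$, $Y=\vbrack{x_p,v}$, and $X$ independent of $Y$. What remains to control is $\E[XY\1]-b_i\E[\sign(\alpha Y+X)\,Y\,\1]$, and independence plus symmetry of $Y$ neither kill these terms nor fold them into $\alpha\E[Y^2\1]$ --- as you yourself observe. Worse, the displayed equality is not an exact identity, so no amount of bookkeeping will close your argument as stated: in the purely Gaussian situation ($v=\Mperp_j$ with $w_i$ supported on $\Mperp$), Stein's identity gives $\E[Y\,T(\alpha Y+X)]=\alpha\,\E[Y^2]\cdot\Pr(|\vbrack{w_i,x_p}|\ge b_i)$, whereas the claimed right-hand side is $\alpha\,\E[Y^2\1_{|\vbrack{w_i,x_p}|\ge b_i}]$; these differ by a positive term of relative order $\vbrack{w_i,v}^2/\|w_i\|_2^2$, since $Y^2$ is correlated with the firing event. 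The lemma must therefore be read as holding only up to corrections of this size (consistent with \myref{lem:grad-positive-1}{Lemma}, whose conclusion carries explicit $\pm\widetilde{O}(\cdot)$ error terms), and a correct proof has to produce and bound that error rather than aim at equality. Your machinery --- conditioning on $X$, Stein on the Gaussian piece, an odd-function argument for the discrete piece of $Y$ --- is the right toolkit for extracting the leading term plus a quantitative remainder, but the target of the argument needs to change from an identity to an approximation before it can succeed.
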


\begin{proof}
    The proof is essentially trivial by following the approach in the proof of \myref{lem:grad-positive-1}{Lemma} and notice that no compensation term for augmentation is needed here.
\end{proof}

Indeed, since \(\E[\vbrack{x_p,\M_j}^2] = (1+\frac{\log\log d}{\sqrt{\log d}})\E[\vbrack{x_p,\Mperp_j}^2] \), for any training phase before close to convergence, the difference does not matter since the growth rate of each feature is exponential, i.e. \(\vbrack{w_i^{(t+1)},v} = \vbrack{w_i^{(t)},v}(1 + \E[\vbrack{x_p,v}^2])\). Indeed, denoting the neural network trained without augmentation by \(f_t^{\mathsf{NA}}\), setting bias \(b_i^{(t)} = 0\), we can have a simple lemma:

\begin{lemma}[The superiority of dense feature without augmentations]
    For each \(i \in [m]\), let \(w_i\) has norm \(\|w_i\|_2^2\) and orthogonal to each other, and such that \(w_i \perp \M_j\) for all \(j \in [d]\), then
    \begin{align*}
        L(f_t^{\mathsf{NA}}) &\leq \widetilde{O}\left(\frac{d}{\sum_{i\in[m]}\|w_i^{(t)}\|_2^2}\right)
    \end{align*}
\end{lemma}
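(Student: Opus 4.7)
The plan is to exploit the fact that, with bias zero and $w_i \perp \M_j$ for all $j\in[d]$, the network collapses to the linear map $f(x) = Wx$ where $W$ is the $m\times d_1$ matrix with orthogonal rows $w_i$. Since $W\M = 0$, the sparse part of the input is killed: $f(x_p) = W\xi_p$. Because the rows of $W$ are mutually orthogonal, the variables $u_{i,p} := \vbrack{w_i,\xi_p}/(\sigma_\xi \|w_i\|_2)$ are i.i.d.\ standard normal (and independent across negative samples as well), so the entire problem reduces to computing a Gaussian moment.

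First I would rewrite the no-augmentation loss, using $x_p^+ = x_p^{++} = x_p$, as
\[
\mathcal{L} \;=\; \tau \log\Bigl(1 + \sum_{x_{n,s}\in\Nfr} \exp\bigl((\vbrack{f(x_p), f(x_{n,s})} - \|f(x_p)\|_2^2)/\tau\bigr)\Bigr).
\]
Taking expectation, applying Jensen to the concave $\log$, and then $\log(1+y)\leq y$, yields
\[
\E\,\mathcal{L} \;\leq\; \tau\, |\Nfr|\cdot \E\bigl[\exp((\vbrack{W\xi_p, W\xi_n} - \|W\xi_p\|_2^2)/\tau)\bigr],
\]
so the whole task becomes evaluating a single MGF.

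Next I would compute that MGF explicitly. Writing $\beta_i := \sigma_\xi^2\|w_i\|_2^2/\tau$ and $u_i, v_i \sim \mathcal{N}(0,1)$ independent, the exponent equals $\sum_i \beta_i(u_iv_i - u_i^2)$. Integrating out $v$ conditional on $u$ gives $\sum_i(\beta_i^2/2 - \beta_i)u_i^2$, and then integrating out $u$ gives $\prod_i(1 + 2\beta_i - \beta_i^2)^{-1/2}$. For $\beta_i \in (0,1)$, $1 + 2\beta_i - \beta_i^2 \geq 1+\beta_i$, so the MGF is at most $\prod_i(1+\beta_i)^{-1/2} \leq (1 + \sum_i \beta_i)^{-1/2} = (1 + \sigma_\xi^2 S/\tau)^{-1/2}$, where $S := \sum_i\|w_i\|_2^2$ and I used $\prod(1+\beta_i) \geq 1 + \sum \beta_i$. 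Plugging in $\sigma_\xi^2 = \widetilde\Theta(1/d)$, $|\Nfr| = \poly(d)$, and $\tau = \polylog(d)$, one immediately obtains $\E\mathcal{L} \leq \widetilde O(\sqrt{d/S})$.

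The main obstacle is sharpening $\sqrt{d/S}$ to $d/S$. I would split into two regimes. On the high-probability event where chi-square concentration gives $\|W\xi_p\|_2^2 \geq (1-o(1))\sigma_\xi^2 S$, and Hanson-Wright applied to the bilinear form $\vbrack{W\xi_p, W\xi_n}$ (between independent Gaussians with covariance $\sigma_\xi^2 W^\top W$) gives $|\vbrack{W\xi_p, W\xi_n}| \leq \widetilde O(\sigma_\xi^2 \sqrt{S\max_i\|w_i\|_2^2})$, each logit gap $\|f(x_p)\|_2^2 - \vbrack{f(x_p), f(x_{n,s})}$ is at least $\Omega(\sigma_\xi^2 S)$, and the loss becomes $\tau|\Nfr| e^{-\Omega(S/(d\tau))}$, which is far smaller than $d/S$ once $S \gtrsim d\tau\log d$. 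The complementary event has probability $e^{-\Omega(\log^2 d)}$ and contributes at most $\tau\log|\Bfr|\cdot e^{-\Omega(\log^2 d)}$, negligible. In the remaining small-$S$ regime ($S \lesssim d$), the target $\widetilde O(d/S)\geq \widetilde\Omega(1)$ is weaker than the trivial uniform bound $\mathcal{L}\leq \tau\log|\Bfr| = \widetilde O(1)$, so the statement holds automatically. The one technical subtlety is that the Hanson-Wright tail scales with $\max_i\|w_i\|_2^2$ rather than the average, but this ratio appears inside $\widetilde O(\cdot)$ and can be absorbed (or one simply assumes the norms are comparable, which is the relevant case since the learning dynamics without augmentation preserves approximate balance across neurons).
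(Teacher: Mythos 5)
Your argument is correct and reaches the same bound, but the final step differs from the paper's. Both proofs share the same backbone: with $b_i=0$ the symmetrized ReLU is the identity, so $f(x)=Wx$ with $W\M=0$ kills the sparse part, and then one shows the positive similarity concentrates at $\Theta(\sigma_\xi^2 S)$ (with $S=\sum_i\|w_i\|_2^2$) while every cross-similarity is $\widetilde{O}(\sigma_\xi^2\sqrt{S\max_i\|w_i\|_2^2})$ --- your chi-square/Hanson--Wright step is the same content as the paper's invocation of Johnson--Lindenstrauss, and both arguments implicitly assume the neuron norms are balanced, which the lemma statement does not make explicit. Where you diverge is in converting the logit gap $\Delta=\Omega(\sigma_\xi^2 S)$ into the claimed bound: the paper uses $\tau\log(1+\sum_s e^{-\Delta_s/\tau})\leq\widetilde{O}(1)\,\E[\Delta^{-1}]$ in one shot, whereas you use $|\Nfr|e^{-\Delta/\tau}$ and split on the size of $S$, getting a strictly stronger (exponentially small) bound when $S\gtrsim d\tau\log d$ and falling back on the trivial bound $\tau\log|\Bfr|=\polylog(d)$ otherwise; the uncovered window $S\in[d,\,d\tau\log d]$ is closed only because $\widetilde{O}(d/S)$ absorbs polylogarithmic factors, which is worth saying explicitly. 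Two smaller points: the Jensen/MGF warm-up does not actually give $\widetilde{O}(\sqrt{d/S})$ as written, because the prefactor $\tau|\Nfr|=\poly(d)$ is not polylogarithmic (harmless, since you discard that branch); and on the low-probability bad event the loss is not uniformly bounded by $\tau\log|\Bfr|$ (a negative logit can exceed the positive one), so the negligibility of that contribution should be argued via Cauchy--Schwarz against polynomial moments of the loss.
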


\begin{proof}
    By Johnson-Lindenstrauss lemma, with high probability we have 
    \begin{align*}
        \vbrack{f_t^{\mathsf{NA}}(x_p),f_t^{\mathsf{NA}}(x_p)} &= \|f_t^{\mathsf{NA}}(x_p)\|_2^2 = \Omega (\sigma_{\xi}^2\sum_{i\in[m]}\|w_i^{(t)}\|_2^2)\\
        \vbrack{f_t^{\mathsf{NA}}(x_p),f_t^{\mathsf{NA}}(x_{n,s})} &= \|f_t^{\mathsf{NA}}(x_p)\|_2^2 = \widetilde{O} (\frac{\sigma_{\xi}^2}{\sqrt{m}}\sum_{i\in[m]}\|w_i^{(t)}\|_2^2)
    \end{align*}
    This leads to:
    \begin{align*}
        L(f_t^{\mathsf{NA}}) = &\E\left[ \tau \log\left(1 + \sum_{x_{n,s} \in \Nfr}e^{\vbrack{f_t^{\mathsf{NA}}(x_p),f_t^{\mathsf{NA}}(x_{n,s})}/\tau -\vbrack{f_t^{\mathsf{NA}}(x_p),f_t^{\mathsf{NA}}(x_p)}/\tau}\right) \right] \\
        &\leq \widetilde{O}(1)\E\left[ \left(\|f_t^{\mathsf{NA}}(x_p)\|_2^2 - \max_{x_{n,s}\in\Nfr} \vbrack{f_t^{\mathsf{NA}}(x_p),f_t^{\mathsf{NA}}(x_{n,s})} \right)^{-1} \right]\\
        &\leq \widetilde{O}\left(\frac{d}{\sum_{i\in[m]}\|w_i^{(t)}\|_2^2}\right) \tag{using above calculations}
    \end{align*}
    which is the claimed result.
\end{proof}
 
Specifically, if one use the full \(\M\) as the feature and raise bias above \(\Omega(\|w_i\|_2)\) in this setting, one would not have this superior loss property.

Now we only need to prove a norm result for all the neurons \(i \in [m]\), this can be done by similar analysis as in the proof of \myref{induct-1}{Induction Hypothesis}, which we skip here:

\begin{lemma}
    For each \(i \in [m]\), for some \(t \geq \widetilde{\Omega}(\frac{d}{\eta})\), we have \(\sum_{i\in[m]}\|w_i^{(t)}\|_2^2 = \Omega(\tau d \polylog (d))\).
\end{lemma}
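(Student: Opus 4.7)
The plan is to mirror the induction used in the proof of \myref{induct-1}{Induction Hypothesis} but adapted to the no-augmentation dynamics, exploiting the fact that without augmentation the dense features now actively grow. By \myref{lem:gd-pos-sparse-NA}{Lemma} the positive gradient in any direction $v$ is $\vbrack{w_i^{(t)},v}\,\E[\vbrack{x_p,v}^2]$; in particular for $v=\Mperp_j$ this equals $\sigma_{\xi}^2\vbrack{w_i^{(t)},v}=\Theta(\sqrt{\log d}/d)\,\vbrack{w_i^{(t)},v}$. Combined with the regularization contribution $-\eta\lambda\vbrack{w_i^{(t)},v}$, and since $\lambda\le O(1/d)\ll \sigma_\xi^2$, this gives an effective per-step multiplicative growth of at least $1+\Omega(\eta\sqrt{\log d}/d)$ in every dense coordinate, as long as the negative-sample contribution remains negligible.

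First I would carry the following three-part invariant for all $t\le T^{\star}:=\widetilde{\Theta}(d/\eta)$: (a)~$\|\Mperp(\Mperp)^{\top}w_i^{(t)}\|_2^2\ge (1+\Omega(\eta\sqrt{\log d}/d))^{2t}\,\|\Mperp(\Mperp)^{\top}w_i^{(0)}\|_2^2\,(1-o(1))$ for every $i$; (b)~$\|\M\M^{\top}w_i^{(t)}\|_2^2 \le (1+O(\eta\sqrt{\log d}/d))^{2t}\,\|\M\M^{\top}w_i^{(0)}\|_2^2\,(1+o(1))$; (c)~$\sum_{i'\in[m]}\|w_{i'}^{(t)}\|_2^2=o(\tau d)$, which keeps the logit-uniformity estimate of \myref{lem:grad-negative-1}{Lemma} applicable and thus bounds the negative term by $\widetilde{O}\big((\sum_{i'}\|w_{i'}^{(t)}\|_2^2)\|w_i^{(t)}\|_2/(\tau d)\big)$, a term absorbed into the positive growth. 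By \myref{lem:property-init}{Lemma}(a)--(b) we have $\|\Mperp(\Mperp)^\top w_i^{(0)}\|_2^2=\Theta(\sigma_0^2 d_1)=\Theta(1/\poly(d))$ w.h.p.; iterating the growth rate up to $T^{\star}$ therefore yields $\|w_i^{(T^{\star})}\|_2^2\ge \Omega(\tau\,\polylog(d))$ for each $i$, and summing over the $m=d^{1.01}\ge d$ neurons gives $\sum_{i}\|w_i^{(T^{\star})}\|_2^2\ge \Omega(\tau d\,\polylog(d))$ as required.

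The main obstacle is sustaining invariant (c) past the regime where $\sum_{i'}\|w_{i'}^{(t)}\|_2^2$ approaches $\Omega(\tau)$, since a naive bound on $\ell'_{s,t}$ via \myref{lem:grad-negative-1}{Lemma} degrades there. To push through this window one needs a refined logit estimate: because the $w_i^{(0)}$'s are approximately orthogonal and the per-step operator $I+\eta(\Sigma_x-\lambda I)$ is applied to every neuron (with SGD cross-talk controlled at scale $1/\poly(d_1)$), near-orthogonality persists, and Johnson--Lindenstrauss yields $\vbrack{f_t^{\mathsf{NA}}(x_p),f_t^{\mathsf{NA}}(x_p)}=\Omega\big(\sigma_\xi^2\sum_{i}\|w_i^{(t)}\|_2^2\big)$ while $\vbrack{f_t^{\mathsf{NA}}(x_p),f_t^{\mathsf{NA}}(x_{n,s})}=\widetilde{O}\big(\sigma_\xi^2\sum_{i}\|w_i^{(t)}\|_2^2/\sqrt{m}\big)$. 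Hence $\ell'_{p,t}\to 1$ and each $\ell'_{s,t}\to 0$ once the sum enters this regime, so the negative gradient is actually even smaller than the crude bound, completing the induction and delivering the target norm at time $t=\widetilde{\Omega}(d/\eta)$.
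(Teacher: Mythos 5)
Your overall strategy is the right one and is essentially what the paper gestures at when it defers this lemma to ``similar analysis as in the proof of the Stage-I induction'': run the Stage-I recursion with the no-augmentation gradient \(\vbrack{w_i^{(t)},v}\E[\vbrack{x_p,v}^2]\), get multiplicative growth \(1+\Theta(\eta\sigma_{\xi}^2)=1+\Theta(\eta\sqrt{\log d}/d)\) in the dense directions, and control the logits first by \myref{lem:grad-negative-1}{Lemma} and later by the Johnson--Lindenstrauss estimates. However, there is a genuine gap in how you handle the positive logit in the intermediate regime. The positive part of the gradient carries the factor \(1-\ell'_{p,t}\), so if \(\ell'_{p,t}\to 1\) the growth \emph{stalls}, it does not accelerate. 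You place the transition ``once the sum approaches \(\Omega(\tau)\)'' and then assert \(\ell'_{p,t}\to 1\) there; taken literally this would kill the positive gradient at \(\sum_i\|w_i^{(t)}\|_2^2=\Theta(\tau)\), three polynomial orders short of the target \(\Omega(\tau d\,\polylog(d))\), and the induction would terminate with a norm far too small. What saves the argument --- and what you must verify explicitly --- is that \(\|f_t^{\mathsf{NA}}(x_p)\|_2^2=\Theta(\sigma_{\xi}^2\sum_i\|w_i^{(t)}\|_2^2)\), so \(1-\ell'_{p,t}\approx |\Nfr|\,e^{-\|f_t^{\mathsf{NA}}(x_p)\|_2^2/\tau}\) remains \(\Omega(1)\) exactly until \(\sigma_{\xi}^2\sum_i\|w_i^{(t)}\|_2^2/\tau\gtrsim\log|\Nfr|\), i.e.\ until \(\sum_i\|w_i^{(t)}\|_2^2=\Theta(\tau d\sqrt{\log d})\). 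In other words, the saturation of \(\ell'_{p,t}\) is precisely the stopping condition that certifies the target norm has been reached; you need to show it does not occur earlier, not merely that it eventually occurs.

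Two smaller points. First, your invariant (c) is stated as \(\sum_{i'}\|w_{i'}^{(t)}\|_2^2=o(\tau d)\), but \myref{lem:grad-negative-1}{Lemma} is proved under the hypothesis \(\sum_{i'}\|w_{i'}^{(t)}\|_2^2\leq o(\tau/d)\); in the window between \(o(\tau/d)\) and the final norm you must rely entirely on the refined Johnson--Lindenstrauss bounds (both for \(\ell'_{s,t}\approx 1/|\Bfr|\) and for \(1-\ell'_{p,t}=\Omega(1)\)), so the crude lemma cannot be the load-bearing estimate for most of the trajectory. Second, when summing over neurons you should state the per-neuron target as \(\Omega(\tau d\,\polylog(d)/m)\) rather than \(\Omega(\tau\,\polylog(d))\) per neuron followed by multiplication by \(m=d^{1.01}\); the latter overshoots the claim (harmlessly here, but the timing \(t=\widetilde{\Theta}(d/\eta)\) should be calibrated to whichever threshold you actually need).
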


Combining the results above, we can obtain the learning process of contrastive learning without data augmentations, in the presence of a large dense signal in the data. It is easy to see that the representations trained by this method has the following properties:

\begin{fact}
    At each iteration \(t\), the learned network without augmentations \(f_t^{\mathsf{NA}}\) satisfies for each \(i \in [m]\), we have \(\|\Mperp(\Mperp)w_i^{(t)}\|_2 = (1 - \frac{1}{\poly(d)})\|w_i^{(t)}\|_2\)
\end{fact}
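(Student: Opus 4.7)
The plan is to track, for each neuron $i\in[m]$, the two quantities $\|\M\M^{\top}w_i^{(t)}\|_2^2$ and $\|\Mperp(\Mperp)^{\top}w_i^{(t)}\|_2^2$ under the no-augmentation update, and show inductively that their ratio stays $1/\poly(d)$ throughout the relevant training window. Since the weights decompose orthogonally as $\|w_i^{(t)}\|_2^2 = \|\M\M^{\top}w_i^{(t)}\|_2^2 + \|\Mperp(\Mperp)^{\top}w_i^{(t)}\|_2^2$, this ratio bound is exactly what the fact asserts.

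First I would establish the base case. At initialization $w_i^{(0)}\sim \N(0,\sigma_0^2\Id_{d_1})$, so by standard chi-squared concentration we have $\|\M\M^{\top}w_i^{(0)}\|_2^2 = (1\pm \widetilde{O}(1/\sqrt d))\sigma_0^2 d$ and $\|w_i^{(0)}\|_2^2 = (1\pm\widetilde{O}(1/\sqrt{d_1}))\sigma_0^2 d_1$ with high probability (as already recorded in \myref{lem:property-init}{Lemma}). Since $d_1 = \poly(d)\gg d$, the initial ratio $\|\M\M^{\top}w_i^{(0)}\|_2^2/\|w_i^{(0)}\|_2^2 = \Theta(d/d_1)= 1/\poly(d)$, giving the base case.

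Next I would analyze one SGD step. Using \myref{lem:gd-pos-sparse-NA}{Lemma} for the positive part and \myref{lem:grad-negative-1}{Lemma} for the negative logits, the gradient projected onto $\M_j$ and $\Mperp_j$ are
\begin{align*}
\vbrack{\nabla_{w_i}\Obj_{\mathsf{NA}}(f_t),\M_j} &\approx \vbrack{w_i^{(t)},\M_j}\left(\lambda - \E[\vbrack{x_p,\M_j}^2 \1_{|\vbrack{w_i^{(t)},x_p}|\geq b_i^{(t)}}]\right) \pm \widetilde{O}\!\left(\tfrac{\sum_{i'}\|w_{i'}^{(t)}\|_2^2\|w_i^{(t)}\|_2}{\tau d}\right),\\
\vbrack{\nabla_{w_i}\Obj_{\mathsf{NA}}(f_t),\Mperp_j} &\approx \vbrack{w_i^{(t)},\Mperp_j}\left(\lambda - \E[\vbrack{x_p,\Mperp_j}^2 \1_{|\vbrack{w_i^{(t)},x_p}|\geq b_i^{(t)}}]\right) \pm \widetilde{O}\!\left(\tfrac{\sum_{i'}\|w_{i'}^{(t)}\|_2^2\|w_i^{(t)}\|_2}{\tau d}\right).
\end{align*}
The key computation is then comparing $\E[\vbrack{x_p,\M_j}^2] = \E[z_j^2] + \sigma_\xi^2 = \Theta(\tfrac{\log\log d}{d}) + \Theta(\tfrac{\sqrt{\log d}}{d})$ to $\E[\vbrack{x_p,\Mperp_j}^2] = \sigma_\xi^2 = \Theta(\tfrac{\sqrt{\log d}}{d})$. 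Both are $\Theta(\sqrt{\log d}/d)$ at leading order and differ only by a multiplicative factor $1 + O(\log\log d/\sqrt{\log d})$. The activation-probability factor $p_i^{(t)} := \Pr(|\vbrack{w_i^{(t)},x_p}|\geq b_i^{(t)})$ is the same in both formulas (up to lower-order corrections obtained by removing the single coordinate $j$, which contributes $\widetilde O(1/d)$ and can be absorbed). Writing $r^{(t)}_i := \|\M\M^{\top}w_i^{(t)}\|_2^2/\|\Mperp(\Mperp)^{\top}w_i^{(t)}\|_2^2$, summing the squared updates over $j$ yields a one-step ratio evolution
\begin{displaymath}
r^{(t+1)}_i \leq r^{(t)}_i\cdot\left(1 + C\,\tfrac{\eta\,\log\log d}{d}\,p_i^{(t)}\right)
\end{displaymath}
for an absolute constant $C$, where the negative-logit error and the regularizer $\lambda$ cancel identically between numerator and denominator because they multiply $w_i^{(t)}$ itself.

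Finally I would iterate the one-step bound up to any $T \leq \poly(d)/\eta$. Because $p_i^{(t)} \leq 1$ and $\sum_t \eta\log\log d\cdot p_i^{(t)}/d \leq T\eta\log\log d/d$, the cumulative multiplicative blowup is at most $\exp(O(T\eta\log\log d/d))$. For the training horizon of interest ($T\eta = \widetilde{O}(d)$, which suffices for the loss to reach $o(1)$ per the discussion in the appendix), this factor is $\polylog(d)$, so $r^{(t)}_i \leq \polylog(d)\cdot d/d_1 = 1/\poly(d)$. Rearranging gives $\|\Mperp(\Mperp)^{\top}w_i^{(t)}\|_2^2 \geq (1 - 1/\poly(d))\|w_i^{(t)}\|_2^2$, which is the claim. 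Throughout the induction I would also use the global norm bound $\sum_{i'}\|w_{i'}^{(t)}\|_2^2\leq \poly(d)$ (maintained by the exponential but bounded growth and the regularizer) to keep the negative-logit error term negligible.

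\paragraph{Main obstacle.} The delicate point is that sparse and dense features have nearly identical leading-order growth rates $\Theta(\sigma_\xi^2 p_i^{(t)})$, so the inequality $r^{(t+1)}_i/r^{(t)}_i \leq 1+O(\eta\log\log d/d)$ requires careful cancellation between the two near-equal factors, together with a uniform-in-$t$ bound on $p_i^{(t)}$ and on $\sum_{i'}\|w_{i'}^{(t)}\|_2^2/\tau$ to make the lower-order perturbations truly lower-order. Once that cancellation is done cleanly (by factoring out the common $\sigma_\xi^2 p_i^{(t)}$ contribution and treating only the $\E[z_j^2]p_i^{(t)}$ gap), the rest of the proof is just an exponential growth bookkeeping argument analogous to \myref{induct-1}{Induction Hypothesis}.
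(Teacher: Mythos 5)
Your proposal is correct and follows essentially the same route the paper intends: the paper's own "proof" of this Fact is just a one-line deferral to the Stage-I induction (\myref{induct-1}{Induction Hypothesis}), and your plan is precisely that argument made explicit for the no-augmentation dynamics — base case $\|\M\M^{\top}w_i^{(0)}\|_2^2/\|w_i^{(0)}\|_2^2 = \Theta(d/d_1)$, per-step growth rates $\E[\vbrack{x_p,\M_j}^2]$ vs.\ $\E[\vbrack{x_p,\Mperp_j}^2]$ differing only by a factor $1+\Theta(\log\log d/\sqrt{\log d})$ via \myref{lem:gd-pos-sparse-NA}{Lemma}, and an exponential bookkeeping over the $\widetilde{O}(d/\eta)$ growth phase before the loss converges. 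The only point to tighten is the final accounting: the cumulative ratio blowup $\exp(O(T\eta\log\log d/d))$ should be bounded by noting the growth phase ends once $\sum_i\|w_i^{(t)}\|_2^2$ reaches $\Omega(\tau d\,\polylog(d))$, which keeps the blowup at $d^{o(1)}$ rather than a generic $\polylog(d)$ claim, and this still leaves the ratio at $1/\poly(d)$ since $d_1\gg d^{1+\Omega(1)}$.
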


\begin{proof}
    Similar to the proof of \myref{induct-1}{Induction Hypothesis}, which we skip.
\end{proof}

This fact directly leads to the final result that with high probability over \(x \sim \D_x\)
\begin{align*} 
    \left\vbrack{\frac{f_t^{\mathsf{NA}}(x)}{\|f_t^{\mathsf{NA}}(x)\|_2},\frac{f_t^{\mathsf{NA}}(\xi)}{\|f_t^{\mathsf{NA}}(\xi)\|_2} \right} \geq 1 - \widetilde{O}\left(\frac{1}{\poly(d)}\right)
\end{align*} 
And trivially, one cannot perform any linear regression or classification over such feature map \(f_t^{\mathsf{NA}}\), to obtain meaningful accuracy in our downstream tasks over the sparse coding data \(\D_x\).

\bibliographystyle{plainnat}
\bibliography{contrastive}

\end{document}